\documentclass[twoside,11pt]{article}

\usepackage{blindtext}

\usepackage[abbrvbib, preprint]{jmlr2e}

\usepackage{amsmath}
\usepackage{booktabs}
\usepackage{threeparttable}
\usepackage{tabularx}
\usepackage{multirow}
\usepackage{subcaption}
\usepackage{caption} 
\usepackage[section]{placeins}

\usepackage{lastpage}
\jmlrheading{23}{2024}{1-\pageref{LastPage}}{1/21; Revised 5/22}{9/22}{21-0000}{Binchuan Qi}

\ShortHeadings{Generalized Convexity and Smoothness}{Qi}
\firstpageno{1}

\begin{document}

\title{Generalized Convexity and Smoothness via Conjugate Duality: Optimization Theory for Deep Neural Networks}

\author{\name Binchuan Qi \email 2080068@tongji.edu.cn \\
 \addr College of Electronics and Information Engineering\\
 Tongji University\\
 Shanghai, 200092, China\
 \AND
 \addr Sihao Street\
 Zhejiang Yuying College of Vocational Technology\
 Zhejiang, 310018, China
 }

\editor{}

\maketitle

\begin{abstract}
Deep neural network (DNN) training with stochastic gradient descent (SGD) and its variants achieves strong empirical performance, yet classical optimization theory does not fully explain this success. This limitation arises because conventional analyses rely on assumptions such as differentiability, convexity, or smoothness, which are often violated by DNN objectives. In this paper, we establish a unified optimization framework for DNN training by generalizing classical convexity and smoothness through Legendre functions and convex conjugation. Specifically, we introduce $\mathcal{H}(\psi)$-convexity and $\mathcal{H}(\Psi)$-smoothness, which unify convex and non-convex as well as smooth and non-smooth objectives within a single formalism and reveal a natural duality between generalized smoothness and convexity. Building on these generalized properties, we introduce generalized gradient descent (GD) and generalized SGD through convex conjugation. We theoretically prove that generalized GD admits an optimal learning rate of exactly $1$, and derive rigorous gradient-energy-based convergence rates for both proposed optimizers. We further reformulate DNN training as a composite optimization problem, demonstrating that its convergence relies on jointly reducing the gradient energy and controlling the induced norm of the network Jacobian. To characterize the practical influences of network architectures and training configurations, we introduce the gradient correlation factor and model capacity risk, and quantitatively analyze how architectural designs, batch size, and model capacity shape training convergence. Extensive experiments across diverse network architectures, datasets, optimizers, and loss functions validate our theoretical bounds and demonstrate precise alignment between our theoretical predictions and empirical training dynamics.
\end{abstract}

\begin{keywords}
non-convex optimization, non-Lipschitz smoothness, convex conjugate duality, deep neural networks
\end{keywords}

\section{Introduction}
\label{sec:introduction}
Training deep neural networks (DNNs) yields highly non-convex, non-smooth empirical risk objectives, yet stochastic gradient descent (SGD) and its variants consistently converge to high-quality solutions with low empirical risk, despite the absence of the convexity or regularity assumptions required for classical theoretical guarantees~\citep{Zhang2021UnderstandingDL}.
This empirical success stands in sharp contrast to classical non-convex optimization theory, which typically only guarantees convergence to stationary points~\citep{Jentzen2018StrongEA}; indeed, even simple non-convex problems are NP-hard in general~\citep{nesterov2008advance}.
This gap between theory and practice motivates the development of new optimization-theoretic frameworks that can characterize DNN training.

To address this problem, current research primarily follows two main directions:
\begin{enumerate}
 \item The first direction exploits special structural properties of DNNs. The Neural Tangent Kernel (NTK) framework~\citep{Jacot2018NeuralTK} analyzes training dynamics in the lazy regime where parameters change minimally, but struggles to capture the optimization dynamics during feature learning in finite-width networks of practical interest. Fenchel--Young-loss-based approaches~\citep{QiGL25} link gradient norms to distribution fitting errors by restricting the loss to the Fenchel--Young loss family, but are limited to the convergence of conditional label distributions under MSE loss for supervised classification and, by construction, lack generality across loss functions. As detailed in Section~\ref{sec:model:related}, both approaches have significant limitations in scope and applicability.
 \item The second direction extends classical strong convexity and Lipschitz smoothness. For convexity, higher-order strong convexity has been explored in various domains~\citep{Lin2003SomeEP,Mohsen2019StronglyCF,Alabdali2019CharacterizationsOU,noor2020higher,Noor2021PropertiesOH}. For smoothness, Zhang et al.~\citep{Zhang2019WhyGC} introduced $(L_0, L_1)$-smoothness, which was subsequently analyzed in convex and non-convex settings~\citep{Zhang2020ImprovedAO,Qian2021UnderstandingGC,Zhao2021OnTC,crawshaw2022robustness} and combined with variance reduction techniques~\citep{Reisizadeh2023VariancereducedCF}. Chen et al.~\citep{Chen2023GeneralizedSmoothNO} proposed $\alpha$-symmetric generalized smoothness, and Li et al.~\citep{Li2023ConvexAN} introduced the more general $\ell$-smoothness condition. Most recently, \citet{qi2025extendedconvexitysmoothnessapplications} extended strong convexity and Lipschitz smoothness using norm power functions, proving DNN trainability under convexity and differentiability of the loss. Nevertheless, restricting to norm‑power functions introduces restrictive assumptions and complicates the analysis of gradient descent (GD) and SGD. 
 Despite this progress, all existing generalizations either still require convexity or differentiability assumptions for SGD to converge to global optima, or yield intractable analytical complexity that limits their practical utility, and these requirements are inconsistent with the non-convex, non-smooth objectives typical of DNN training.
\end{enumerate}

These two directions share a common limitation: they still rely on assumptions such as infinite width, convexity, or differentiability, which are often violated by practical DNN objectives and often lead to substantial analytical complexity. Our starting point is the well-known duality between strong convexity and Lipschitz smoothness (Lemma~\ref{lem:convex_smooth_dual})~\citep{kakade2009duality,zhou2018fenchel,qi2025extendedconvexitysmoothnessapplications}. Both classical properties, at their core, bound how fast a function can vary through the quadratic term $\|\cdot\|_2^2$. This suggests a natural generalization: if the quadratic energy is replaced by a broader family of functions built from Legendre functions and convex conjugates, the same duality may extend far beyond the convex, smooth setting.
Following this line of reasoning, we develop generalized notions of convexity and smoothness, together with the corresponding generalized GD and SGD, and use them to formulate DNN training as a composite optimization problem. The formal definitions, the precise duality, and the resulting convergence guarantees are presented in the following sections, while the main consequences for optimization are summarized in the Contributions below.

\paragraph{Contributions}
The main contributions of this work are summarized as follows:

\begin{itemize}
 \item We generalize classical strong convexity and Lipschitz smoothness to $\mathcal{H}(\psi)$-convexity and $\mathcal{H}(\Psi)$-smoothness using Legendre functions, and investigate their mathematical properties. This framework unifies convex and non-convex, smooth and non-smooth functions and reveals a duality between smoothness and convexity.
 \item We define generalized GD and SGD via convex conjugation and prove their convergence with respect to the gradient energy under the generalized conditions. Notably, the optimal learning rate of generalized GD is identically $1$, thereby eliminating the need to tune the learning rate as in classical GD and SGD.

 \item We construct a composite optimization framework that encompasses DNN training. We prove that such problems can be solved by jointly reducing the gradient energy and controlling the induced norm of the model's Jacobian matrix, which together provide tight bounds on the optimization objective.

 \item We apply the framework to DNN optimization, demonstrating theoretically and empirically that SGD effectively reduces the gradient energy, while architectural techniques such as skip connections, random initialization, and over-parameterization control the Jacobian's induced norm. Comprehensive experiments across multiple datasets and architectures support the generality and robustness of the framework.
\end{itemize}

\paragraph{Organization}

The remainder of this paper is organized as follows:
\begin{itemize}
\item Section~\ref{sec:pre} introduces the foundational concepts and definitions used throughout the paper.
\item Section~\ref{sec:def} presents the key tools introduced in this work, $\mathcal{H}(\psi)$-convexity and $\mathcal{H}(\Psi)$-smoothness, along with their fundamental properties.
\item Section~\ref{sec:methods} reformulates the problem of training DNNs and establishes the theoretical foundation for non-convex optimization in deep learning.
\item Section~\ref{sec:com_opt} extends the framework to composite optimization problems and analyzes their convergence properties.
\item Section~\ref{sec:experiments} details the experimental setup and reports the empirical results.
\item Section~\ref{sec:conclusion} summarizes the main contributions and outlines potential directions for future research.
\item Section~\ref{sec:lemmas} presents supporting lemmas from convex analysis and information theory.
\item Section~\ref{sec:model:related} reviews related work.
\end{itemize}

Supplementary material is provided in the appendix:
\begin{itemize}
\item Appendix~\ref{appendix:proof} contains detailed proofs of all theoretical results.
\end{itemize}

\begin{figure}[t]
\centering
\includegraphics[width=\linewidth]{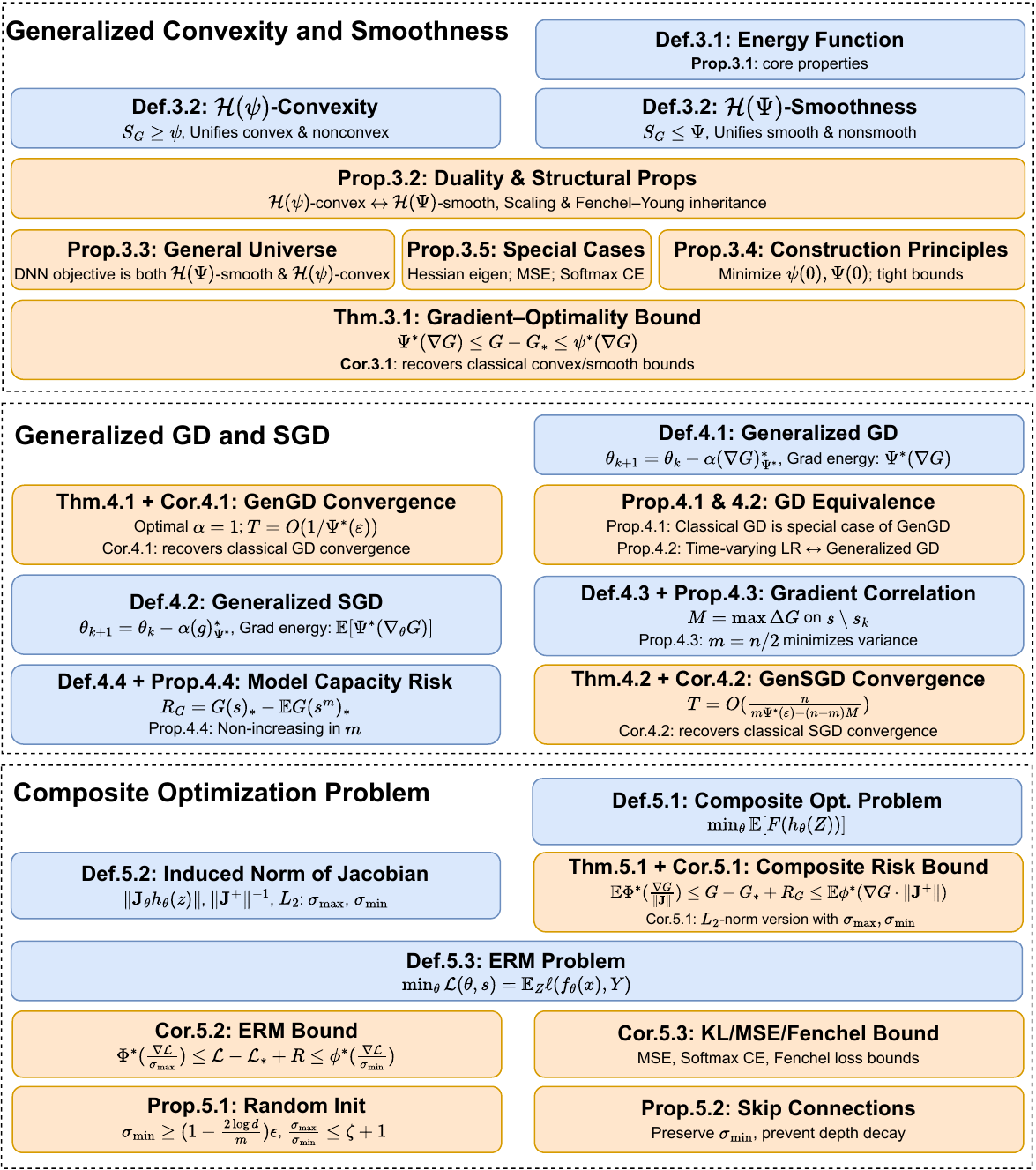}
\caption{An overview of the logical relationships among the key definitions and conclusions.}
\label{fig:paper_outline}
\end{figure}

The logical relationships among the core results are shown in Fig.~\ref{fig:paper_outline}. The energy function (Definition~\ref{def:energy_fun}) is the core tool used by this framework to construct generalized convexity and smoothness, and we investigate its key properties (Proposition~\ref{prop:energy_fun}). 
Using the energy function, this framework defines generalized smoothness ($\mathcal{H}(\Psi)$-smoothness) and generalized convexity ($\mathcal{H}(\psi)$-convexity) in Definition~\ref{def:g_convex_smooth}. It then investigates the generality of these notions (Proposition~\ref{prop:general_universe}), their behavior under conjugation, scaling, translation, and Fenchel--Young loss construction (Proposition~\ref{prop:gen_prop}), the relationship between gradients and optimal solutions (Theorem~\ref{thm:h_bound} and Corollary~\ref{cor:h_bound}), and their construction and determination (Proposition~\ref{prop:generalized_principle} and Proposition~\ref{prop:convex_smooth_case}). 

Based on generalized smoothness and convexity, we define generalized GD and its gradient energy (Definition~\ref{def:general_gd}), as well as generalized SGD and its gradient energy (Definition~\ref{def:general_sgd}). We also investigate their relationships with classical GD and SGD (Proposition~\ref{prop:l2_unique} and Proposition~\ref{prop:gd_convergence}). 
To study the influence of model structure and data properties on optimization, we define the Gradient Correlation Factor (Definition~\ref{def:gradient_coor_factor}) and the Model Capacity Risk (Definition~\ref{def:model_risk}), and investigate their relationships with batch size (Proposition~\ref{prop:opt_batch_size} and Proposition~\ref{prop:model_risk_prop}). 
Building on these definitions, we derive convergence results for generalized GD (Theorem~\ref{thm:general_gd_convergence} and Corollary~\ref{cor:gd_strongc_lips}) and for generalized SGD (Theorem~\ref{thm:general_sgd_convergence} and Corollary~\ref{cor:general_sgd_convergence}).

Using generalized convexity and generalized smoothness, we define a class of composite optimization problems that encompasses DNN training (Definition~\ref{def:d_s_com_opt}), and define the key quantity for optimizing such problems, namely the induced norm of the Jacobian matrix (Definition~\ref{def:induced_matrix_norm}). 
Theorem~\ref{thm:com_general_bound} and its corollary, Corollary~\ref{cor:l2_com_general_bound}, prove that composite optimization problems can be solved by reducing the gradient energy and controlling the induced norm of the model's Jacobian.
For DNN training, Empirical Risk Minimization (ERM) is formulated in Definition~\ref{def:erm_opt}. Corollary~\ref{cor:erm_bound} provides an upper bound constructed from the energy and the extreme singular values of the model's Jacobian matrix, and Corollary~\ref{cor:kl_mse_bound} provides upper and lower bounds on the empirical risk under MSE, Softmax CrossEntropy, and Fenchel--Young loss. Furthermore, Proposition~\ref{prop:number_para} characterizes the influence of random parameter initialization and the number of parameters on the extreme singular values of the Jacobian matrix, while Proposition~\ref{prop:skip_con} characterizes the influence of skip connections. Together with the established result that generalized SGD reduces the gradient energy, these results jointly characterize the non-convex optimization mechanism of deep learning.

\section{Preliminaries}
\label{sec:pre}

This section establishes the foundational concepts, notation and basic assumptions used throughout the paper. 

\subsection{Notation}
\label{subsec:notation}

\begin{itemize}
 \item For a differentiable function \(f_\theta(x)\) with parameters \(\theta \in \mathbb{R}^m\) and outputs in \(\mathbb{R}^d\), we adopt the denominator layout convention. Thus, the Jacobian matrix of \(f\) with respect to \(\theta\) is denoted by
 \[
 \mathbf{J}_\theta f_\theta(x) := \frac{\partial f_\theta(x)}{\partial \theta} \in \mathbb{R}^{m \times d}, \quad 
 \left(\frac{\partial f_\theta(x)}{\partial \theta}\right)_{ij} = \frac{\partial f_j}{\partial \theta_i},
 \]
 where \(i=1,\dots,m\) indexes the parameters and \(j=1,\dots,d\) indexes the outputs.
 In the scalar case (\(d=1\)), this reduces to a row vector, and we denote its transpose (the column gradient) by \(\nabla_\theta f_\theta(x) \in \mathbb{R}^m\).
 \item The convex conjugate (Legendre--Fenchel conjugate) of a function $\Psi$ is defined as
 \begin{equation}
 \Psi^*(\nu) := \sup_{\mu \in \mathrm{dom}(\Psi)} \bigl\{ \langle \mu, \nu \rangle - \Psi(\mu) \bigr\},
 \end{equation}
 where $\langle \cdot, \cdot \rangle$ denotes the standard inner product~\citep{Todd2003ConvexAA}. When $\Psi$ is strictly convex and differentiable, its gradient with respect to $\mu$ is denoted by $\nabla \Psi(\mu)$. We define the \textit{conjugate dual} of $\mu$ with respect to $\Psi$ as $\mu_\Psi^* := \nabla \Psi(\mu)$. When $\Psi(\cdot) = \frac{1}{2}\|\cdot\|_2^2$, we have $\mu_\Psi^* = \mu$.
 \item The Fenchel--Young loss induced by a convex function $G$ is defined as
\begin{equation}\label{def:fy_loss}
 d_{G}(\mu, \nu) := G(\mu) + G^*(\nu) - \langle \mu, \nu \rangle,
\end{equation}
where $\mu \in \mathrm{dom}(G)$ and $\nu \in \mathrm{dom}(G^*)$~\citep{Blondel2019LearningWF}. This formulation plays a central role in our analysis.
\item The maximum and minimum singular values of a matrix \( A \) are denoted by \( \sigma_{\max}(A) \) and \( \sigma_{\min}(A) \), respectively. The maximum and minimum eigenvalues of a matrix \( A \) are denoted by \( \lambda_{\max}(A) \) and \( \lambda_{\min}(A) \), respectively. 
\item The random pair $Z = (X, Y)$ follows the distribution $q_{\mathcal{X}\mathcal{Y}}$ (abbreviated as $q$) and takes values in the product space $\mathcal{Z} = \mathcal{X} \times \mathcal{Y}$, where $\mathcal{X}$ denotes the input feature space and $\mathcal{Y}$ is a finite set of targets (labels). The cardinality of $\mathcal{Z}$ is denoted by $|\mathcal{Z}|$. Throughout this paper, we assume that $|\mathcal{X}|$ is finite, a condition that aligns with practical machine learning scenarios. Since $|\mathcal{Y}|$ is finite, $|\mathcal{Z}|$ is finite whenever $|\mathcal{X}|$ is finite.

\item Let $\|\cdot\|$ be a vector norm on $\mathbb{R}^n$, and let $\|A\| := \sup_{x\neq 0} \|Ax\|/\|x\|$ denote its upper induced norm (abbreviated as the induced matrix norm). Define the lower induced norm of $A\in\mathbb{R}^{m\times n}$ as \(
 m(A) := \inf_{x\neq 0} \frac{\|Ax\|}{\|x\|}.
\)
\item When $A$ is not square or is singular, the Moore--Penrose pseudoinverse $A^+$ provides a natural generalization of the inverse. 
\item $\|\cdot\|_F$: The Frobenius norm of a matrix $A\in\mathbb{R}^{m\times n}$, defined as \(
\|A\|_F = \sqrt{\sum_{i=1}^m\sum_{j=1}^n A_{ij}^2}
=\sqrt{\operatorname{tr}(A^\top A)} 
\), where $\operatorname{tr}(\cdot)$ denotes the trace. 
\end{itemize}

\subsection{Basic Assumptions}
Since maximizing a function is equivalent to minimizing its negative counterpart, we exclusively consider the minimization of real-valued functions without loss of generality. All functions investigated in this work are continuous, proper, and defined over convex domains. \textbf{At any non-differentiable point, we set the surrogate gradient to zero. }

\section{Generalized Convexity and Smoothness}
\label{sec:def}

In this section, we introduce the concept of an energy function and use it to generalize the classical notions of convexity and Lipschitz smoothness. This generalization gives rise to the concepts of $\mathcal{H}(\psi)$-convexity and $\mathcal{H}(\Psi)$-smoothness. We further analyze their properties and implications in optimization theory.

\subsection{Energy Functions}
We begin with the formal definition of an energy function.
\begin{definition}[Energy Function]
\label{def:energy_fun}
A function $\Psi\colon \mathbb{R}^m \to \mathbb{R}$ is called an energy function if it is a Legendre function with its unique minimum attained at the origin, i.e., $\Psi(\mathbf 0)$ is the minimal value, and it admits the radial representation:
\begin{equation}
 \Psi(\mu) = \Psi_\circ(\|\mu\|),
\end{equation}
where $\|\cdot\|$ denotes an arbitrary norm on $\mathbb R^m$.
If the norm is restricted to the $L_2$-norm, we refer to it as an $L_2$-norm energy function. 
\end{definition}
We first present the properties of energy functions. 
 
\begin{proposition}
 \label{prop:energy_fun}
 \begin{enumerate}
 \item \label{prop:energy_fun1} If $\Psi(\mu)$ is an energy function, then $\Psi^*(\nu)$ is also an energy function, and 
 \begin{equation}
 \begin{aligned}
 & \Psi^*(\nu) = \Psi_\circ^*(\|\nu\|_*),\\
 & \Psi(0)+\Psi^*(0)=0,\\ 
 & \mu^\top \mu_\Psi^* \geq 0,
 \end{aligned}
 \end{equation}
where $\Psi_\circ^*$ is the one-dimensional convex conjugate of $\Psi_\circ$. 
 \item \label{prop:energy_fun2}If $\Psi(\mu) = \|a_{\Psi}\mu\|_2^{r_\Psi}/r_{\Psi}+c_\Psi$ is an energy function with parameters $a_\Psi\in\mathbb{R}_{>0}$, $r_\Psi>1$, and $c_\Psi\in\mathbb{R}$, then its convex conjugate $\Psi^*(\nu)=\|a_{\Psi^*}\nu\|_2^{r_{\Psi^*}}/r_{\Psi^*}+c_{\Psi^*}$ is also an energy function, where
\begin{equation}
 \begin{aligned}
 1/r_{\Psi} + 1/r_{\Psi^*} &= 1,\\
 a_{\Psi}a_{\Psi^*} &= 1,\\
 c_{\Psi^*}+c_{\Psi} &= 0.
 \end{aligned}
\end{equation}
\item \label{prop:energy_fun3} Let $\Psi(\mu)$ be an energy function. For any \(\mu \in \mathbb{R}^m\), \(F(a) = \Psi(a\mu)\) is monotonically non-decreasing on $[0,\infty)$; 
if \(\mu \neq 0\), then \(F\) is strictly increasing and convex.
 \end{enumerate}
\end{proposition}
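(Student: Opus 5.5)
The plan is to reduce everything to one-dimensional convex analysis on the radial profile $\Psi_\circ$. First I would record the properties of $\Psi_\circ$ forced by $\Psi$ being an energy function. Fix a unit vector $u$ with $\|u\|=1$; then $\Psi_\circ(t)=\Psi(tu)$ for $t\ge 0$. Since $\Psi$ is Legendre (strictly convex, essentially smooth) with its unique minimum at $0$, $\Psi_\circ$ is strictly convex and differentiable on $[0,\infty)$, with $\Psi_\circ'(0)=0$ (the gradient vanishes at the minimizer) and $\Psi_\circ$ strictly increasing there. I would extend $\Psi_\circ$ evenly to $\mathbb{R}$, so that it becomes a one-dimensional Legendre function minimized at $0$.

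For item~\ref{prop:energy_fun1}, I would compute the conjugate through the dual norm:
\[
\Psi^*(\nu)=\sup_{t\ge 0}\sup_{\|\mu\|=t}\bigl\{\langle\mu,\nu\rangle-\Psi_\circ(t)\bigr\}=\sup_{t\ge0}\bigl\{t\|\nu\|_*-\Psi_\circ(t)\bigr\}=\Psi_\circ^*(\|\nu\|_*),
\]
where the last equality uses that $\Psi_\circ$ is even, so the supremum over $t\in\mathbb{R}$ is attained at $t\ge0$ when $\|\nu\|_*\ge 0$. The conjugate of a Legendre function is Legendre (Rockafellar, Thm.~26.5; I would cite the convex-analysis lemmas of Section~\ref{sec:lemmas}). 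Moreover $\Psi^*$ attains its unique minimum at $\nabla\Psi(0)=0$, and $\Psi^*$ is radial in $\|\cdot\|_*$. Together these show that $\Psi^*$ is an energy function.

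The identity $\Psi(0)+\Psi^*(0)=0$ follows directly: $\Psi^*(0)=\sup_\mu\{-\Psi(\mu)\}=-\min\Psi=-\Psi(0)$. For $\mu^\top\mu_\Psi^*\ge0$, I would use monotonicity of the gradient of a convex function: $\langle\nabla\Psi(\mu)-\nabla\Psi(0),\mu-0\rangle\ge0$, and $\nabla\Psi(0)=0$.

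Item~\ref{prop:energy_fun2} is a direct calculation. Maximize $t\|\nu\|_2-(a t)^{r}/r$ over $t\ge0$; the stationarity condition gives $a^{r}t^{r-1}=\|\nu\|_2$. Substituting back yields $\|\nu/a\|_2^{r^*}/r^*$ with $1/r+1/r^*=1$, so $a_{\Psi^*}=1/a_\Psi$, and the additive constant contributes $-c_\Psi$.

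For item~\ref{prop:energy_fun3}, I would write $F(a)=\Psi_\circ(a\|\mu\|)$. If $\mu=0$, $F$ is constant, hence non-decreasing. If $\mu\neq0$, $F$ is the composition of the strictly convex function $\Psi_\circ$ with a nonconstant linear map, so it is strictly convex. It has $F'(0)=\|\mu\|\Psi_\circ'(0)=0$, and strict convexity then forces $F'(a)>0$ for $a>0$, so $F$ is strictly increasing on $[0,\infty)$.

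The main obstacle is the regularity bookkeeping. I need to justify that the radial profile inherits differentiability at $0$ and strict convexity even though a general norm $\|\cdot\|$ is not differentiable. The way around this is to argue only along rays, since $t\mapsto tu$ is linear and strict convexity restricts to lines. I also need to check that the even extension keeps the Legendre property, so that the one-dimensional conjugate identity holds without a finiteness-of-domain issue.
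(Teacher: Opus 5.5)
Your proposal is correct. It shares the paper's skeleton: Legendre duality plus $\Psi^*(0)=\sup_\mu\{-\Psi(\mu)\}=-\Psi(0)$ for item~1, a one-dimensional maximization for item~2, and differentiation along the ray $a\mapsto a\mu$ for item~3. Several local steps are argued differently, and mostly more carefully.

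In item~1 you derive $\Psi^*(\nu)=\Psi_\circ^*(\|\nu\|_*)$ directly from the definition of the dual norm, where the paper cites the radial-conjugate lemma. You also get \emph{uniqueness} of the minimizer of $\Psi^*$ from the inversion rule $0\in\partial\Psi^*(y)\iff y\in\partial\Psi(0)=\{\nabla\Psi(0)\}=\{0\}$. The paper only shows $\Psi^*(y)\ge-\Psi(0)=\Psi^*(0)$, i.e.\ that $0$ is \emph{a} minimizer, although uniqueness is part of the definition of an energy function.

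In item~3 the paper gets $F'(a)=\langle\nabla\Psi(a\mu),\mu\rangle\ge 0$ from the first-order inequality $\langle\nabla\Psi(z),z\rangle\ge\Psi(z)-\Psi(0)$. It makes this strict using uniqueness of the minimizer of $\Psi$, and proves convexity through $F''(a)=\mu^\top\nabla^2\Psi(a\mu)\mu\ge 0$. You instead note that $F$ is the restriction of the strictly convex $\Psi$ to a line, so $F'$ is strictly increasing from $F'(0)=0$ and convexity is immediate. Your version never uses second derivatives. That matters, because a Legendre function need not be twice differentiable: the paper's own family $\|a_\Psi\mu\|_2^{r_\Psi}/r_\Psi$ with $1<r_\Psi<2$ is not $C^2$ at the origin.

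One caveat is shared by both proofs and touches the finiteness issue you raised. The paper declares energy functions real-valued on $\mathbb{R}^m$, yet the conjugate of a finite Legendre function can be $+\infty$ off a bounded set. For example, $\Psi(\mu)=\sqrt{1+\|\mu\|_2^2}$ is an energy function, but $\Psi^*(\nu)=-\sqrt{1-\|\nu\|_2^2}$ on the closed unit ball and $+\infty$ outside. So ``$\Psi^*$ is an energy function'' holds only in the extended-valued Legendre sense. This is a defect of the statement rather than of your argument; your one-dimensional conjugate identity remains valid with values in $\bar{\mathbb{R}}$.
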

The proof of this proposition is provided in Appendix~\ref{appendix:proof_min_energy}. 
 
\subsection{Generalized Convexity and Smoothness}

To handle potential non-differentiable points in DNNs, we generalize the classical gradient. Let $G: \mathbb{R}^m \to \bar{\mathbb{R}}$ be a function with finitely many non-differentiable points. For any point $\mu \in \mathbb{R}^m$, the meaning of $\nabla G(\mu)$ is extended as follows:
\begin{enumerate}
\item If $G$ is differentiable at $\mu$, then $\nabla G(\mu)$ reduces to the classical gradient;
\item If $G$ is not differentiable at $\mu$, then $\nabla G(\mu) := \mathbf{0}$.
\end{enumerate}
Under this definition, all operations involving gradients (such as the first-order Taylor remainder and the gradient descent update rule) are well-defined, providing a foundation for subsequent theoretical analysis. This setting is designed to encompass more general optimization scenarios and to provide theoretical support for analyzing neural networks with non-smooth units such as the Rectified Linear Unit (ReLU).
In deep learning practice, the gradient at non-differentiable points of ReLU is set to zero, so this convention is consistent with real-world applications. Moreover, it ensures that the assumption of zero gradient at the global optimum is satisfied. 

Using energy functions and the extended gradient, we introduce generalized notions of convexity and smoothness. Throughout, uppercase energy functions ($\Psi$, $\Phi$) characterize smoothness and lowercase ones ($\psi$, $\phi$) characterize convexity. 
\begin{definition}[Generalized Convexity and Smoothness]
\label{def:g_convex_smooth}
Define the first-order Taylor remainder $S_G(\mu,\nu) = G(\mu) - G(\nu) - \langle \nabla G(\nu), \mu - \nu \rangle$. Note that $S_G(\mu,\nu)$ is the residual of the first-order expansion of $G$ around $\nu$ evaluated at $\mu$. 
We define the following notions of generalized convexity and smoothness. 
\begin{itemize}
 \item \textbf{$\mathcal{H}(\Psi)$-smooth function}. A real-valued function $G(\cdot)$ is said to be $\mathcal{H}(\Psi)$-smooth if
\begin{equation}
 S_G(\mu, \nu) \le \Psi(\mu - \nu), \quad \forall \mu, \nu \in \mathrm{dom}(G),
 \end{equation}
 where $\Psi$ is an energy function.

 \item \textbf{$\mathcal{H}(\psi)$-convex function}. A function $G(\cdot)$ is said to be $\mathcal{H}(\psi)$-convex if 
\begin{equation}
 S_G(\mu, \nu) \ge \psi(\mu - \nu), \quad \forall \mu, \nu \in \mathrm{dom}(G),
\end{equation}
 where $\psi$ is an energy function. 

\end{itemize}
Since $S_G(\mu,\mu)=0$, these inequalities imply $\Psi(0)\ge 0$ and $\psi(0)\le 0$.
\end{definition}

Generalized smoothness and convexity further satisfy the following properties:
\begin{proposition}
\label{prop:gen_prop}
\begin{enumerate}
 \item \label{prop:gen_prop_duality} Let $G$ and its convex conjugate $G^*$ be differentiable on their domains. Then the following statements are equivalent: (i)~$G$ is $\mathcal{H}(\Psi)$-smooth; (ii)~$G^*$ is $\mathcal{H}(\Psi^*)$-convex.
 \item \label{prop:gen_prop_scale}For constants $a>0$, $b>0$, and $c\in\mathbb{R}$, define $F(x) := a G(b x) + c$ on the domain $\{x: bx\in\mathrm{dom} G\}$. If $G$ is $\mathcal{H}(\Psi)$-smooth, then $F$ is $\mathcal{H}(a\Psi(b\,\cdot))$-smooth. If $G$ is $\mathcal{H}(\psi)$-convex, then $F$ is $\mathcal{H}(a\psi(b\,\cdot))$-convex.
 \item \label{prop:gen_prop_dataset} Let the random variable $Z$ take values in $\mathcal{Z}$. If $G(\theta, z)$ has the same generalized convexity and smoothness properties with respect to $\theta$ for all $z \in \mathcal{Z}$, then $G(\theta, s)$ also shares the same properties with respect to $\theta$.
 \item \label{prop:gen_prop_transform_keep}
 Let $s$ be a fixed constant vector. If $G(\cdot)$ is $\mathcal{H}(\psi)$-convex, then $d_G(\cdot, s)$ has the same generalized convexity and smoothness properties as $G(\cdot)$, and $d_G(s, \cdot)$ has the same generalized convexity and smoothness properties as $G^*(\cdot)$.
\end{enumerate}
\end{proposition}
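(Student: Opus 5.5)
Four parts, handled separately; the real work is part~1, where I would use the Fenchel--Young loss to link the Taylor remainders of $G$ and $G^*$.

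\textbf{Part 1 (duality).} For differentiable $G$ and $G^*$, the gradient maps are mutually inverse on the relevant domains, and the Fenchel--Young equality $G(\nu)+G^*(\nabla G(\nu))=\langle \nu,\nabla G(\nu)\rangle$ holds. With $\nu^*=\nabla G(\nu)$, a direct computation gives
\[
S_G(\mu,\nu)=G(\mu)+G^*(\nu^*)-\langle \mu,\nu^*\rangle=d_G(\mu,\nu^*).
\]
By symmetry, $d_G(\mu,\nu^*)=S_{G^*}(\nu^*,\mu^*)$ whenever $\mu^*=\nabla G(\mu)$. So a bound on $S_G$ is the same as a bound on the Fenchel--Young gap.

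For (i)$\Rightarrow$(ii), assume $G(\mu)\le G(\nu)+\langle \nu^*,\mu-\nu\rangle+\Psi(\mu-\nu)$ for all $\mu$. Conjugating in $\mu$ reverses the inequality:
\[
G^*(\xi)\ \ge\ \sup_\mu\bigl\{\langle\xi,\mu\rangle-G(\nu)-\langle\nu^*,\mu-\nu\rangle-\Psi(\mu-\nu)\bigr\}.
\]
Substituting $u=\mu-\nu$ turns the right side into $\langle\xi,\nu\rangle-G(\nu)+\Psi^*(\xi-\nu^*)$. Using $G^*(\nu^*)=\langle\nu,\nu^*\rangle-G(\nu)$ and $\nabla G^*(\nu^*)=\nu$, this says exactly $S_{G^*}(\xi,\nu^*)\ge\Psi^*(\xi-\nu^*)$. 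Since every point of $\mathrm{dom}(G^*)$ is some $\nu^*$, this gives $\mathcal{H}(\Psi^*)$-convexity.

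For (ii)$\Rightarrow$(i), run the same argument with the roles swapped: conjugate the lower bound on $G^*$, then use $G^{**}=G$ and $\Psi^{**}=\Psi$. Both identities hold by properness, continuity and Legendre-ness, with Proposition~\ref{prop:energy_fun} giving that $\Psi^*$ is again an energy function. The main obstacle is domain bookkeeping: surjectivity of $\nabla G$ onto $\mathrm{dom}(G^*)$, and the fact that conjugating an inequality that holds only on $\mathrm{dom}(G)$ still gives a valid bound. I would handle this by extending $G$ by $+\infty$ outside its domain, where the inequality is trivial.

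\textbf{Part 2 (scaling).} Since $\nabla F(x)=ab\nabla G(bx)$, we get $S_F(x,y)=a\,S_G(bx,by)$. Applying the smoothness or convexity bound of $G$ gives $a\Psi(b(x-y))$, respectively $a\psi(b(x-y))$. To finish, I would check that $a\Psi(b\,\cdot)$ is still an energy function: it is Legendre, minimized at $0$, and radial in $\|\cdot\|$.

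\textbf{Part 3 (dataset).} I read $G(\theta,s)$ as an average or expectation $\sum_z w_z G(\theta,z)$ over samples, with $\sum_z w_z=1$. Gradients are linear, so $S$ is the same convex combination of the per-sample remainders. Each remainder satisfies the same bound $\Psi$ or $\psi$, and the weights sum to one, so the bound passes to the average. One caveat: under the zero-surrogate-gradient convention, linearity of the gradient only holds at points where every $G(\cdot,z)$ is differentiable. I would restrict to those points, or note that the non-differentiable points form a finite set.

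\textbf{Part 4 (Fenchel--Young).} In the first argument, $d_G(\cdot,s)=G(\cdot)+\text{const}-\langle\cdot,s\rangle$. Adding an affine function leaves $S$ unchanged, so $S_{d_G(\cdot,s)}=S_G$ and every generalized property of $G$ carries over. In the second argument, $d_G(s,\cdot)=G^*(\cdot)-\langle s,\cdot\rangle+\text{const}$, so $S=S_{G^*}$ and it inherits the properties of $G^*$. By Part~1 these are determined dually by those of $G$.
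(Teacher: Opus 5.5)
Your argument follows the paper's proof in all four parts. Part~1 conjugates the upper model $H(\mu)=G(\nu)+\langle\nabla G(\nu),\mu-\nu\rangle+\Psi(\mu-\nu)$, and for the converse the lower model of $G^*$, together with $G^{**}=G$ and $\Psi^{**}=\Psi$. Part~2 uses $S_F(x,y)=a\,S_G(bx,by)$, Part~3 uses that the remainder of an average is the average of the remainders, and Part~4 uses that adding affine terms leaves $S$ unchanged. Your direct reading $d_G(s,\cdot)=G^*(\cdot)-\langle s,\cdot\rangle+G(s)$ is slightly cleaner than the paper's detour through $d_G(s,\mu)=d_{G^*}(\mu,s)$, which needs $G^{**}=G$.

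The one place you go beyond the paper is the domain patch in Part~1, and it does not work for (i)$\Rightarrow$(ii). There the inequality being conjugated is an \emph{upper} bound $G\le H$. Setting $G=+\infty$ off $\mathrm{dom}(G)$ makes it false there, not trivial. So $H^*(\xi)$, a supremum over all of $\mathbb{R}^m$, can exceed $G^*(\xi)$. Padding by $+\infty$ is legitimate only in (ii)$\Rightarrow$(i), where the conjugated inequality is the lower bound $G^*\ge K$.

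This is a real obstruction, not bookkeeping. Take $G(\mu)=\mu^2/2$ on $(-1,1)$ and $\Psi(u)=u^2/2$. Then $S_G(\mu,\nu)=\Psi(\mu-\nu)$, and $G^*$ is the Huber function, which is differentiable on $\mathbb{R}$. Yet $S_{G^*}(3,2)=2.5-1.5-1=0<1/2=\Psi^*(1)$. Your surjectivity claim also fails here, since $\nabla G$ only reaches $(-1,1)$ while $\mathrm{dom}(G^*)=\mathbb{R}$.

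So (i)$\Rightarrow$(ii) needs extra hypotheses, for instance $G$ closed convex with $\mathrm{dom}(G)=\mathbb{R}^m$. Convexity is needed anyway, because differentiability alone does not give the Fenchel--Young equality $G(\nu)+G^*(\nabla G(\nu))=\langle\nu,\nabla G(\nu)\rangle$ that you invoke. Full domain then also yields the surjectivity you want, via $\eta\in\partial G(\nabla G^*(\eta))$. The paper's proof relies on these same assumptions without stating them: it writes $G\le H$ ``for all $\mu$'' and invokes $G^{**}=G$. You should therefore state them explicitly rather than try to derive them from the hypotheses as written.
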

The proof of this proposition is provided in Appendix~\ref{appendix:proof_gen_prop}.

Item~\ref{prop:gen_prop_duality} reflects the conjugacy between generalized convexity and generalized smoothness. Item~\ref{prop:gen_prop_scale} reveals the scaling and vertical-shift invariance of generalized convexity and smoothness. Item~\ref{prop:gen_prop_dataset} shows that the generalized convexity and smoothness of the dataset are identical to those at the sample level. Item~\ref{prop:gen_prop_transform_keep} reveals the relationship between the generalized convexity and smoothness of the Fenchel--Young loss and those of its generating function.

It is well established that a wide range of commonly used loss functions in statistics and machine learning, including CrossEntropy loss, mean squared error (MSE) loss, and perceptron loss, can be expressed as Fenchel--Young losses $d_F(y, f_\theta(x))$ through an appropriate choice of $F$~\citep{Blondel2019LearningWF}. 
Proposition~\ref{prop:gen_prop_transform_keep} demonstrates that the generalized smoothness and convexity of the Fenchel--Young loss can be analyzed through its generating function.

More generally, for a general function $G(\mu)$, the inequality $S_G(\mu,\nu) \le (a_\Psi\|\mu-\nu\|_2)^{r_\Psi}+c_\Psi$ can be satisfied by increasing $a_\Psi$, $r_\Psi$, and $c_\Psi$, so that $G(\mu)$ is $\mathcal{H}(\Psi)$-smooth. The construction of $\psi$ under $\mathcal{H}(\psi)$-convexity is analogous. Thus, $\mathcal{H}(\psi)$-convexity and $\mathcal{H}(\Psi)$-smoothness provide a unified framework that encompasses a wide range of loss functions. From this, we obtain the following proposition:
\begin{proposition}
 \label{prop:general_universe}
 The objective function corresponding to neural network training is both $\mathcal{H}(\Psi)$-smooth and $\mathcal{H}(\psi)$-convex with respect to the parameters.
\end{proposition}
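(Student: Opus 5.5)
The plan is to exhibit explicit energy functions of the norm-power form from Proposition~\ref{prop:energy_fun}(\ref{prop:energy_fun2}) for which the two inequalities of Definition~\ref{def:g_convex_smooth} hold. Write the training objective as $L(\theta)=\mathbb{E}_{z}\,\ell(f_\theta(x),y)$, where the expectation is over the finite set $\mathcal{Z}$ (the basic assumptions make $|\mathcal{Z}|$ finite). By Proposition~\ref{prop:gen_prop}(\ref{prop:gen_prop_dataset}) it suffices to treat each sample loss $G_z(\theta)=\ell(f_\theta(x),y)$. The surrogate-gradient convention then makes $S_{G_z}(\mu,\nu)$ a well-defined real-valued function on $\mathrm{dom}(G_z)\times\mathrm{dom}(G_z)$.

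\textbf{Step 1 (growth bound on the remainder).} First control $|S_{G_z}(\mu,\nu)|$ by a polynomial in $\|\mu-\nu\|_2$. The network is a finite composition of affine maps and piecewise-smooth activations, so $G_z$ is continuous and piecewise differentiable. On any ball of radius $R$ around $\nu$, continuity gives a bound on $|G_z(\mu)-G_z(\nu)|$, and the gradients are bounded there as well. Hence
\[
|S_{G_z}(\mu,\nu)|\le |G_z(\mu)-G_z(\nu)|+\|\nabla G_z(\nu)\|_2\,\|\mu-\nu\|_2 .
\]
For a network with polynomially growing activations and a loss of polynomial or log-sum-exp growth, both terms grow at most polynomially in $\|\theta\|$. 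This should give
\[
|S_{G_z}(\mu,\nu)|\le C\bigl(1+\|\mu-\nu\|_2\bigr)^{k}\bigl(1+\|\nu\|_2\bigr)^{k}.
\]

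\textbf{Step 2 (choosing $\Psi$ and $\psi$).} Following the remark preceding the proposition, take
\[
\Psi(\mu)=\tfrac{1}{r}\|a\mu\|_2^{r}+c, \qquad \psi(\mu)=-\Psi(\mu)=-\tfrac{1}{r}\|a\mu\|_2^{r}-c .
\]
Choose $r>k$ large, $a$ large, and $c\ge 0$ large, so that $S_{G_z}\le\Psi(\mu-\nu)$ and $S_{G_z}\ge\psi(\mu-\nu)$. $\Psi$ is an energy function by Proposition~\ref{prop:energy_fun}. For $\psi$ I would allow the "convexity" energy to enter with the sign convention $\psi(0)\le0$ noted after Definition~\ref{def:g_convex_smooth}. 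The same $\mathcal{H}(\psi)$ bound then follows by applying the smoothness argument to $-G_z$, since $S_{-G}=-S_G$.

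\textbf{Main obstacle.} The bound in Step~1 depends on $\nu$ through $\|\nu\|_2$, while $\Psi$ may depend only on $\mu-\nu$. A uniform choice of $(a,r,c)$ therefore requires restricting to a bounded parameter domain. I would argue that this domain can be taken to be a compact convex set containing the training trajectory; the basic assumptions allow domains to be convex subsets. On such a set, continuity yields uniform bounds on $|G_z|$ and on the surrogate gradients. The surrogate gradients are bounded because on the finitely many differentiable pieces they are continuous up to the boundary, and they equal zero at the nondifferentiable points. A constant $c$ exceeding $2\sup|G_z|+\sup\|\nabla G_z\|_2\cdot\mathrm{diam}$ then suffices on its own. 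The careful work lies in justifying this compactness reduction and in checking that the resulting $\psi$ fits the energy-function framework.
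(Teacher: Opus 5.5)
There is a genuine gap on the convexity side. Your choice $\psi=-\Psi=-\tfrac1r\|a\mu\|_2^r-c$ is concave and is \emph{maximized} at the origin. It is therefore not a Legendre function with unique minimum at $0$, i.e.\ not an energy function in the sense of Definition~\ref{def:energy_fun}. The inequality $S_{G_z}(\mu,\nu)\ge-\Psi(\mu-\nu)$ that you get from $S_{-G}=-S_G$ thus certifies nothing. The convention $\psi(0)\le 0$ only allows a negative constant term, not a sign flip of the whole function.

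This is not cosmetic. An energy function is convex and finite with a unique minimizer, hence coercive: $\psi(\mu)\ge\psi(0)+\delta\|\mu\|_2$ for $\|\mu\|_2\ge 1$, where $\delta=\min_{\|u\|_2=1}\psi(u)-\psi(0)>0$. So on an unbounded domain, $\mathcal{H}(\psi)$-convexity fails whenever $G$ has a point with zero (surrogate) gradient from which $G$ stays bounded along some ray. Take $G(a,w)=\tfrac12(awx-y)^2$ with $x\neq 0$ (the same happens for $a\,\mathrm{ReLU}(wx)$). Then $\nabla G(0,0)=0$ and $S_G((0,w),(0,0))=0$ for every $w$. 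The same model rules out the smoothness side globally: $S_G\bigl((a_0,w_0+\eta),(a_0,w_0)\bigr)=\tfrac12 a_0^2x^2\eta^2$, which is unbounded in $a_0$ for fixed $\eta\neq 0$. Hence no $\Psi$ exists on all of $\mathbb{R}^2$.

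Your ``main obstacle'' is therefore correct, and more decisive than you present it. The paper's only support for this proposition is the remark preceding it: enlarge $a_\Psi$, $r_\Psi$, $c_\Psi$, and build $\psi$ ``analogously''. That remark overlooks exactly this $\nu$-dependence, and it is valid only on bounded domains, where $c_\Psi$ alone does the job. What is true, and what your argument proves once repaired, is the statement for $G$ restricted to an arbitrary compact convex set $K$, which the paper's standing assumption of convex domains permits.

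State it that way rather than choosing $K$ to contain ``the training trajectory''. The generalized GD/SGD iterates are defined through $\nabla\Psi^*$, so selecting $K$, and hence $\Psi$, from the trajectory is circular. On $K$, let $D$ be the diameter and $B:=2\sup_K|G|+D\sup_K\|\nabla G\|_2$, which is finite by your piecewise-polynomial argument. Then take $\Psi(\mu)=\tfrac12\|\mu\|_2^2+B$ and $\psi(\mu)=\tfrac12\|\mu\|_2^2-B-\tfrac12 D^2$. Both are energy functions, and $\psi(\mu-\nu)\le -B\le S_G(\mu,\nu)\le B\le \Psi(\mu-\nu)$ on $K\times K$. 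Your Step~1 and the choice of large $r$ and $a$ become unnecessary.
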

 
According to the definitions of $\mathcal{H}(\psi)$-convexity and $\mathcal{H}(\Psi)$-smoothness, $G$ need not be convex nor smooth. Specifically, when $\psi(0) = 0$, $G$ is convex; however, when $\psi(0) \neq 0$, $G$ need not be convex. The constant term $\psi(0)$ of the energy function thus unifies convex and non-convex functions, with different values corresponding to different degrees of convexity. In the following, we no longer distinguish between convex and non-convex functions, but instead use $\mathcal{H}(\psi)$-convexity for unified analysis. Similarly, $\mathcal{H}(\Psi)$-smoothness also generalizes smoothness. The constant term $\Psi(0)$ of the energy function unifies smooth and non-smooth functions, with different values corresponding to different degrees of smoothness. In the following, we no longer distinguish between smooth and non-smooth functions, but instead use $\mathcal{H}(\Psi)$-smoothness for unified analysis.

\subsection{Connecting Gradients and Optimal Solutions}
The following theorem bounds the suboptimality $G(\mu)-G_*$ in terms of the gradient energy, connecting gradient dynamics to global optimality without requiring classical convexity or smoothness. 
\begin{theorem}
\label{thm:h_bound}
Let $G_* = \min_{\mu} G(\mu)$, $\mathcal{G}_* = \{\mu \mid G(\mu)=G_*\}$, and $\mu_* \in \mathcal{G}_*$. 
The following results hold:
\begin{enumerate}
\item\label{thm:h_bound:1} If $G(\mu)$ is $\mathcal{H}(\Psi)$-smooth, then:
\begin{equation}
\begin{aligned}
 \Psi^*(\nabla G(\mu)) &\leq G(\mu) - G_* \leq \Psi(\mu - \mu_*).
 \end{aligned}
\end{equation}

\item\label{thm:h_bound:2} If $G(\mu)$ is $\mathcal{H}(\psi)$-convex, then:
\begin{equation}
\begin{aligned}
 \psi(\mu - \mu_*) &\leq G(\mu) - G_* \leq \psi^*(\nabla G(\mu)).
 \end{aligned}
\end{equation}

\item\label{thm:h_bound:3} If $G(\mu)$ is both $\mathcal{H}(\Psi)$-smooth and $\mathcal{H}(\psi)$-convex, then:
\begin{equation}
 \begin{aligned}
 \Psi^*(\nabla G(\mu)) &\leq G(\mu) - G_* \leq \psi^*(\nabla G(\mu)).
 \end{aligned}
\end{equation}

\end{enumerate}
\end{theorem}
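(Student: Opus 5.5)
The plan is to derive all three items from the two defining inequalities in Definition~\ref{def:g_convex_smooth}. The only ingredients beyond the definitions are the Fenchel--Young inequality and the convention that $\nabla G(\mu_*)=\mathbf 0$ at a global minimizer. The convention holds because $G$ is either differentiable at $\mu_*$, where first-order optimality gives a zero gradient, or non-differentiable there, where the surrogate gradient is set to zero.

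\textbf{Outer bounds.} For the upper bound in item~\ref{thm:h_bound:1}, I evaluate the smoothness inequality at $(\mu,\nu)=(\mu,\mu_*)$. Since $\nabla G(\mu_*)=\mathbf 0$, the remainder is $S_G(\mu,\mu_*)=G(\mu)-G_*$, and smoothness gives $G(\mu)-G_*\le\Psi(\mu-\mu_*)$. The lower bound $\psi(\mu-\mu_*)\le G(\mu)-G_*$ in item~\ref{thm:h_bound:2} follows from the same substitution in the convexity inequality.

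\textbf{Inner bounds.} These are the gradient-energy bounds, and I obtain them by a variational argument.
\begin{itemize}
\item For item~\ref{thm:h_bound:1}, fix $\mu$ and let $\nu$ range over the domain. Smoothness gives
\[
G_*\le G(\nu)\le G(\mu)+\langle\nabla G(\mu),\nu-\mu\rangle+\Psi(\nu-\mu).
\]
Writing $h=\nu-\mu$ and minimizing the right-hand side over $h$ yields
\[
G_*\le G(\mu)-\sup_h\{\langle-\nabla G(\mu),h\rangle-\Psi(h)\}=G(\mu)-\Psi^*(-\nabla G(\mu)).
\]
Because $\Psi$ is radial, $\Psi^*$ is radial in the dual norm by Proposition~\ref{prop:energy_fun}, item~\ref{prop:energy_fun1}. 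Hence $\Psi^*(-g)=\Psi^*(g)$, which gives $\Psi^*(\nabla G(\mu))\le G(\mu)-G_*$.
\item For item~\ref{thm:h_bound:2}, the convexity inequality at $(\mu_*,\mu)$ gives
\[
G_*\ge G(\mu)+\langle\nabla G(\mu),\mu_*-\mu\rangle+\psi(\mu_*-\mu).
\]
Rearranging,
\[
G(\mu)-G_*\le\langle\nabla G(\mu),h\rangle-\psi(h)\quad\text{with } h=\mu-\mu_*.
\]
The right-hand side is bounded by $\psi^*(\nabla G(\mu))$, which is the Fenchel--Young inequality. Again by radial symmetry the sign of $h$ does not matter.
\end{itemize}
Item~\ref{thm:h_bound:3} is then the concatenation of the two inner bounds.

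\textbf{Main obstacle.} I expect the only delicate step to be the domain and optimality issue in the smoothness lower bound. The supremum defining $\Psi^*$ ranges over all $h\in\mathbb R^m$, whereas $\nu=\mu+h$ must stay in $\mathrm{dom}(G)$.

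I would handle this by working with $G$ defined on $\mathbb R^m$, which is the setting of neural network parameters. If the domain is a proper convex set, I would only claim the bound with the supremum restricted to feasible $h$, or assume the maximizer $h=\nabla\Psi^*(-\nabla G(\mu))$ is feasible.

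A second check is that $\nabla G(\mu_*)=\mathbf 0$ is legitimate when $\mu_*$ lies on the boundary of the domain. This requires the minimizer to be interior or a non-differentiable point. I would state this as part of the standing assumptions, consistent with the paper's remark that the surrogate-gradient convention guarantees a zero gradient at the global optimum.
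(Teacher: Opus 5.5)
Your proposal is correct and follows the paper's proof essentially step for step. The outer bounds come from substituting $\nu=\mu_*$ into the defining inequalities with $\nabla G(\mu_*)=\mathbf 0$. The inner bounds come from minimizing the smoothness majorant over the displacement, which the paper phrases via $\min_z d_\Psi(z,-\nabla G(\mu))=0$ (the same conjugate computation), and from the Fenchel--Young inequality for $\psi$. Two points the paper uses only tacitly are made explicit in your version: the evenness of $\Psi$ and $\psi$, which lets you replace $-\nabla G(\mu)$ by $\nabla G(\mu)$, and the caveats about a restricted domain and minimizers on its boundary.
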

The proof of this theorem is provided in Appendix~\ref{appendix:proof_h_bound}.

This theorem shows that $\mathcal{H}(\psi)$-convexity and $\mathcal{H}(\Psi)$-smoothness symmetrically relate global optimality to gradient behavior, without requiring classical convexity or smoothness. This relationship provides a flexible yet rigorous framework for analyzing the optimization dynamics of gradient-based algorithms in complex settings.

Building upon Theorem~\ref{thm:h_bound}, we derive the following corollary:
\begin{corollary}
\label{cor:h_bound}
If $G(\mu)$ is both $\mathcal{H}(\psi)$‑convex and $\mathcal{H}(\Psi)$‑smooth, with $\Psi(\mu) = \frac{k_\Psi}{2}\|\mu\|_2^2$ and $\psi(\mu) = \frac{k_\psi}{2}\|\mu\|_2^2$, where $k_\Psi \ge k_\psi > 0$, the following holds: 
\begin{equation}
 \frac{\|\nabla G(\mu)\|_2^2}{2 k_\Psi} \leq G(\mu) - G_* \leq \frac{\|\nabla G(\mu)\|_2^2}{2 k_\psi}.
\end{equation}
\end{corollary}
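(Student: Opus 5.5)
The plan is to specialize Theorem~\ref{thm:h_bound}, item~\ref{thm:h_bound:3}, to the two quadratic energy functions. Under the stated hypotheses that item gives directly
\[
\Psi^*(\nabla G(\mu)) \le G(\mu)-G_* \le \psi^*(\nabla G(\mu)).
\]
So it only remains to compute the conjugates $\Psi^*$ and $\psi^*$ explicitly.

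First I check that $\Psi(\mu)=\frac{k_\Psi}{2}\|\mu\|_2^2$ and $\psi(\mu)=\frac{k_\psi}{2}\|\mu\|_2^2$ are legitimate energy functions in the sense of Definition~\ref{def:energy_fun}. Because $k_\Psi,k_\psi>0$, each is a Legendre function, radial in the $L_2$-norm, with unique minimizer at the origin.

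Next I compute the conjugates, and there are two equivalent routes:
\begin{itemize}
\item \emph{Via Proposition~\ref{prop:energy_fun}, item~\ref{prop:energy_fun2}.} Write $\frac{k}{2}\|\mu\|_2^2=\|a\mu\|_2^{2}/2$ with $r=2$, $a=\sqrt{k}$ and $c=0$. The proposition then gives $r^*=2$, $a^*=1/\sqrt{k}$ and $c^*=0$, so the conjugate is $\|\nu\|_2^2/(2k)$.
\item \emph{Directly.} The supremum of $\langle\mu,\nu\rangle-\frac{k}{2}\|\mu\|_2^2$ is attained at $\mu=\nu/k$, and its value is $\|\nu\|_2^2/(2k)$.
\end{itemize}
Either way, $\Psi^*(\nabla G(\mu))=\|\nabla G(\mu)\|_2^2/(2k_\Psi)$ and $\psi^*(\nabla G(\mu))=\|\nabla G(\mu)\|_2^2/(2k_\psi)$. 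Substituting these into the sandwich inequality above yields the claimed bounds.

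The condition $k_\Psi\ge k_\psi$ guarantees that the lower bound does not exceed the upper bound, so the two-sided estimate is consistent. I would remark that it also follows from combining the two generalized inequalities, $\psi(\mu-\nu)\le S_G(\mu,\nu)\le\Psi(\mu-\nu)$.

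I expect no real obstacle here. The only point needing care is confirming that the hypotheses of Theorem~\ref{thm:h_bound} apply verbatim, in particular that a minimizer $\mu_*$ exists so that $G_*$ is attained, as the theorem assumes. The conjugate computation itself is routine.
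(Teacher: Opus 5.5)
Your proposal is correct and follows the paper's own (implicit) argument. The paper gives no separate proof; it obtains the corollary by specializing item~\ref{thm:h_bound:3} of Theorem~\ref{thm:h_bound} and using the conjugate $\bigl(\tfrac{k}{2}\|\cdot\|_2^2\bigr)^* = \tfrac{1}{2k}\|\cdot\|_2^2$ from Proposition~\ref{prop:energy_fun}, item~\ref{prop:energy_fun2}, which is exactly what you do.
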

This recovers the classical bounds under Lipschitz smoothness and strong convexity (Lemma~\ref{lem:convex_smooth_dual}), verifying the generality of Theorem~\ref{thm:h_bound}.

\subsection{Determining Generalized Convexity and Smoothness}
 
The definitions of $\mathcal{H}(\psi)$-convexity and $\mathcal{H}(\Psi)$-smoothness are general and can be applied to virtually any neural network objective. 
However, this generality introduces a challenge: a single objective function may admit multiple valid choices of $(\psi, \Psi)$. To illustrate, consider an $\mathcal{H}(\Psi)$-smooth function $G(\theta)$; one can construct $\Psi'(\cdot) = \Psi(\cdot)+c$, where $c>0$. Clearly, $G(\theta)$ is also $\mathcal{H}(\Psi')$-smooth. 
An analogous issue exists for $\mathcal{H}(\psi)$-convexity.

While different constructions of $\Psi$ still satisfy the gradient-value inequalities in Theorem~\ref{thm:h_bound}, they may render the resulting bounds on the global optimum too loose to offer practical guidance for optimization. 
This is analogous to classical GD/SGD convergence analysis under the Lipschitz smoothness assumption: if the employed Lipschitz constant is excessively large (which is indeed the case for DNNs, where the constant is often nonexistent or extremely large~\citep{VirmauxS18,liu2022learning}), the resulting optimal learning rate approaches zero. Such a prediction fails to reflect the actual convergence behavior observed with realistic learning rates. Therefore, the construction of the core parameters for $\mathcal{H}(\psi)$-convexity and $\mathcal{H}(\Psi)$-smoothness directly impacts the practical utility of the theory.

According to Theorem~\ref{thm:h_bound}, the gradient energy controls the distance between stationary points and the global optimum. When the objective function is both $\mathcal{H}(\Psi)$-smooth and $\mathcal{H}(\psi)$-convex, the minimum values attainable by the upper and lower bounds are $\Psi^*(0)$ and $\psi^*(0)$, respectively.
By Proposition~\ref{prop:energy_fun}, we have $\Psi(0)+\Psi^*(0)=0$.
Therefore, the minimum values of the upper and lower bounds are $-\Psi(0)$ and $-\psi(0)$, respectively, and the gap between them is $\Psi(0)-\psi(0)$, where $\Psi(0)\geq 0$ and $\psi(0)\leq 0$. To minimize this gap, both $\Psi(0)$ and $\psi(0)$ should be chosen as close to zero as possible.
Under the $\mathcal{H}(\Psi)$-smooth setting, Theorems~\ref{thm:general_gd_convergence} and \ref{thm:general_sgd_convergence} prove that the minimum value to which the gradient energy can theoretically converge is determined by $\Psi(0)$, which represents the optimization limit of gradient descent--type algorithms.
Since generalized convexity and smoothness are intrinsic properties of the optimization objective, they control the optimization limit achievable by directly applying gradient descent--type algorithms. However, as we will show in Section~\ref{sec:com_opt}, by constructing a benign composite optimization problem, gradient descent--type algorithms can be used to approach the global optimum.
We therefore state the following proposition characterizing the construction principles for generalized convexity and smoothness:
\begin{proposition}\label{prop:generalized_principle}
For $(\psi,\Psi)$ to be useful in optimization analysis, the following two construction principles should be satisfied: 
\begin{enumerate}
 \item $\psi(0)$ and $\Psi(0)$ should be as close to 0 as possible;
 \item The pair $(\Psi, \psi)$ should provide tight upper and lower bounds for the remainder $S_G(\theta,\eta)$.
\end{enumerate}
\end{proposition}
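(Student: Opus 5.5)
The proposition is normative rather than a sharp mathematical claim, so the plan is to turn each principle into a quantitative statement about the bounds in Theorem~\ref{thm:h_bound}. I will show that choosing $(\psi,\Psi)$ against these principles provably loosens those bounds. Throughout, $G$ is assumed both $\mathcal{H}(\Psi)$-smooth and $\mathcal{H}(\psi)$-convex, and I use Item~\ref{thm:h_bound:3}:
\[
\Psi^*(\nabla G(\mu)) \le G(\mu)-G_* \le \psi^*(\nabla G(\mu)).
\]

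For the first principle, I evaluate both sides at a stationary point, where $\nabla G(\mu)=0$. This includes every point of non-differentiability, by the zero-gradient convention. By Proposition~\ref{prop:energy_fun}, Item~\ref{prop:energy_fun1}, $\Psi^*(0)=-\Psi(0)$ and $\psi^*(0)=-\psi(0)$. Hence any stationary point obeys $-\Psi(0)\le G(\mu)-G_*\le -\psi(0)$, an interval of width $\Psi(0)-\psi(0)\ge 0$. Since $G(\mu)-G_*\ge 0$ holds trivially, the upper bound $-\psi(0)$ is the only informative certificate of global optimality at stationary points. That certificate is exact iff $\psi(0)=0$, which by the discussion after Proposition~\ref{prop:general_universe} is precisely convexity. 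Next I would note the effect of vertical shifts. Replacing $\Psi$ by $\Psi+c$ with $c>0$ gives $(\Psi+c)^*=\Psi^*-c$, so the gradient lower bound drops uniformly by $c$. Replacing $\psi$ by $\psi-c$ raises $\psi^*$ uniformly by $c$. Together these show monotone loosening as $|\Psi(0)|$ or $|\psi(0)|$ grows. I would also cite (forward-referencing) that the gradient-energy floor in Theorems~\ref{thm:general_gd_convergence} and~\ref{thm:general_sgd_convergence} is governed by $\Psi(0)$. These facts give principle~1.

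For the second principle, I would use the order-reversing property of conjugation. If $\Psi_1\le\Psi_2$ pointwise, then $\Psi_1^*\ge\Psi_2^*$. Similarly, if $\psi_1\ge\psi_2$, then $\psi_1^*\le\psi_2^*$. So among admissible energy functions, the one that fits the remainder most tightly, meaning the smallest $\Psi$ with $S_G\le\Psi$ and the largest $\psi$ with $S_G\ge\psi$, yields the largest lower bound and the smallest upper bound on $G(\mu)-G_*$ in Theorem~\ref{thm:h_bound}. The same holds for the distance-type bounds $\psi(\mu-\mu_*)\le G(\mu)-G_*\le\Psi(\mu-\mu_*)$. Corollary~\ref{cor:h_bound} serves as a sanity check: in the quadratic case, tightness means taking $k_\Psi$ minimal and $k_\psi$ maximal, which recovers the classical condition-number tradeoff.

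The main obstacle is that "as close to $0$ as possible" and "tight" are not formally defined, and the two principles can conflict. Forcing $\Psi(0)=0$ may require a much steeper $\Psi_\circ$ and thus a looser fit away from the origin. I would handle this by stating the result as a pair of monotonicity facts, one for the constant term and one for the pointwise order, rather than as a single optimality claim. I would leave the choice of tradeoff to the application, and remark that no canonical optimum exists in general.
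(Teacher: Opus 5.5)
Your proposal takes essentially the same route as the paper's informal justification preceding the proposition. Both evaluate Theorem~\ref{thm:h_bound} at stationary points using $\Psi^*(0)=-\Psi(0)$ and $\psi^*(0)=-\psi(0)$ to get an interval of width $\Psi(0)-\psi(0)$, and both invoke the $\Psi(0)$-governed gradient-energy floor of the convergence theorems; your order-reversal argument for the second principle and your remark that the lower end $-\Psi(0)\le 0$ is vacuous at stationary points make the paper's ``too loose'' wording more precise. One small correction: $\psi(0)=0$ implies convexity but is not equivalent to it, so drop ``precisely'' (a constant $G$ is convex with $S_G\equiv 0$, yet no energy function with $\psi(0)=0$ satisfies $S_G\ge\psi$, because such a $\psi$ is strictly positive away from the origin).
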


According to the finite-dimensional norm-equivalence Lemma~\ref{lem:norm_equivalence_finite_dim}, in finite-dimensional settings, norms are equivalent, i.e., there exist positive real numbers such that $a\|\mu\|_2\leq \|\mu\|\leq b\|\mu\|_2$. If $G(\cdot)$ is $\mathcal{H}(\Psi)$-smooth, then it is also necessarily $\mathcal{H}(\Psi_\circ(b\|\cdot\|_2))$-smooth, where $\Psi(\mu)=\Psi_\circ(\|\mu\|)$. Similarly, $\mathcal{H}(\psi)$-convexity also implies $\mathcal{H}(\psi_\circ(a\|\cdot\|_2))$-convexity. This substitution of an arbitrary norm by the $L_2$-norm still captures the essence of generalized convexity and smoothness, namely, using energy functions to bound the first-order Taylor remainder. Therefore, in general, when constructing generalized convexity and smoothness, we preferentially use the $L_2$-norm for the norm component of the energy function.

We acknowledge that precisely computing the globally optimal $\Psi$ for complex DNNs is extremely challenging. We conjecture that computing the globally optimal $\Psi$ for a given DNN is NP-hard, consistent with the established NP-hardness of computing global Lipschitz constants for DNNs~\citep{VirmauxS18,liu2022learning}. However, as demonstrated below, this computational challenge does not negate the theoretical or practical value of our framework.
Theoretically, the $\mathcal{H}(\psi)$-convexity and $\mathcal{H}(\Psi)$-smoothness framework generalizes classical strictly convex and Lipschitz smooth frameworks. It provides relationships between gradients and global optima for non-convex and non-differentiable problems, alleviating the difficulties classical frameworks face when analyzing DNNs. In Section~\ref{sec:methods}, we utilize this to explain the non-convex optimization mechanism of SGD for DNNs.
Practically, the convergence properties under these generalized conditions guide architectural design towards better optimization landscapes. By adhering to the construction principles (Proposition~\ref{prop:generalized_principle}), one can design modules for DNNs with provably favorable spectral properties, enhancing trainability without explicit computation of the global constants.

For several important special cases, we derive the following criteria for generalized convexity and smoothness. 
\begin{proposition}
\label{prop:convex_smooth_case}
\begin{enumerate}
\item \label{prop:convex_smooth_case_hessian} Suppose $G$ is a continuously differentiable function defined on a closed convex set, twice differentiable on the interior of its domain, and with a positive definite Hessian matrix. Let $\lambda_{\min}$ and $\lambda_{\max}$ denote the smallest and largest eigenvalues of the Hessian matrix $\nabla^2 G(\mu)$ for all $\mu$ in the domain of $G$. Then $G$ is both $\mathcal{H}(\lambda_{\max}\|\cdot\|_2^2/2)$-smooth and $\mathcal{H}(\lambda_{\min}\|\cdot\|_2^2/2)$-convex.
\item ~\label{prop:convex_smooth_case_mse}The MSE loss $\frac{1}{2}\|f_\theta(x)-y\|_2^2$ with respect to $f_\theta(x)$ is both $\mathcal{H}(\|\cdot\|_2^2/2)$-convex and $\mathcal{H}(\|\cdot\|_2^2/2)$-smooth. 
\item ~\label{prop:convex_smooth_case_ce}The Softmax CrossEntropy loss $\mathrm{CrossEntropy}(q,\mathrm{Softmax}(f_\theta(x)))$ with respect to $f_\theta(x)$ is both $\mathcal{H}(\min_i p_i \|\cdot\|_2^2)$-convex and $\mathcal{H}(2 \ln 2 \|\cdot\|_2^2)$-smooth, where $p=\mathrm{Softmax}(f_\theta(x))$ and $q$ is the target distribution. 
\end{enumerate}
\end{proposition}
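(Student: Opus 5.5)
The plan is to reduce all three items to a single integral form of the Taylor remainder and then bound the Hessian's quadratic form along segments. For $\mu,\nu$ in the (convex) domain, continuity of $\nabla G$ on the closed set and twice differentiability on the interior give
\[
S_G(\mu,\nu)=\int_0^1 (1-t)\,(\mu-\nu)^\top \nabla^2 G(\nu+t(\mu-\nu))\,(\mu-\nu)\,dt .
\]
This holds first for interior segments, and extends by a limiting argument to segments touching the boundary, using continuity of $G$ and $\nabla G$ on the closed domain.

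\textbf{Item 1.} Bound the quadratic form pointwise by $\lambda_{\min}\|\mu-\nu\|_2^2\le (\mu-\nu)^\top\nabla^2G(\cdot)(\mu-\nu)\le\lambda_{\max}\|\mu-\nu\|_2^2$, and use $\int_0^1(1-t)\,dt=1/2$. This gives
\[
\frac{\lambda_{\min}}{2}\|\mu-\nu\|_2^2\le S_G(\mu,\nu)\le \frac{\lambda_{\max}}{2}\|\mu-\nu\|_2^2 .
\]
It remains to check that $\frac{k}{2}\|\cdot\|_2^2$ with $k>0$ is an energy function. It is Legendre, radial, and minimized at $0$. This is where positive definiteness is used, and we also need $\lambda_{\min}>0$ as an infimum over the domain.

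\textbf{Item 2.} Here the Taylor remainder is exact. With $G(f)=\frac12\|f-y\|_2^2$ we get $S_G(f,g)=\frac12\|f-g\|_2^2$ directly, so both inequalities hold with equality. Alternatively, apply Item 1 with Hessian $I$.

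\textbf{Item 3.} The main work is here. Write $G(f)=-\sum_i q_i f_i+\log\sum_j e^{f_j}$. The linear part drops out of $S_G$, so $S_G$ equals the remainder of log-sum-exp. The Hessian is $H(f)=\mathrm{diag}(p)-pp^\top$ with $p=\mathrm{Softmax}(f)$.

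\emph{Upper bound.} Use $v^\top H v=\mathrm{Var}_p(v)\le \sum_i p_i v_i^2\le\|v\|_2^2$, which gives $\lambda_{\max}\le 1$ (indeed $\le 1/2$). Item 1 then yields $\frac12\|\cdot\|_2^2\le 2\ln2\,\|\cdot\|_2^2$, which is the stated constant.

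\emph{Lower bound.} This is the obstacle. $H$ is singular along $\mathbf 1$, so log-sum-exp is not strongly convex on $\mathbb{R}^d$, and the claimed $\mathcal{H}(\min_i p_i\|\cdot\|_2^2)$-convexity cannot hold uniformly as literally stated. I would first restrict the domain to the subspace $\mathbf 1^\perp$ (logits are identifiable only modulo constants), where $v^\top Hv=\mathrm{Var}_p(v)\ge \min_i p_i\,\|v\|_2^2$ for $v\perp\mathbf 1$. This follows from $\mathrm{Var}_p(v)=\min_c\sum_i p_i(v_i-c)^2\ge \min_i p_i\min_c\|v-c\mathbf 1\|_2^2=\min_i p_i\|v\|_2^2$.

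Second, I would interpret $p$ as the minimum of $\min_i p_i$ over the segment, or over the bounded region considered. Item 1 then gives the constant $\min_i p_i/2$, and I would need to reconcile the factor $2$. One option is to use $\sum_i p_i v_i^2$-type bounds with a sharper constant. The fallback is to note that the stated constant should be read up to this normalization.

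I expect the careful statement of domain restrictions in this lower bound to be the main difficulty.
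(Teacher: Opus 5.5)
Items 1 and 2 follow the paper. The paper writes $S_G(\mu,\nu)=\frac12(\mu-\nu)^\top\nabla^2G(\xi)(\mu-\nu)$ at an intermediate point $\xi$ and bounds the quadratic form by Courant--Fischer, which is equivalent to your integral form. Your remark that $\lambda_{\min}$ must be a positive infimum (and $\lambda_{\max}$ finite) makes explicit a hypothesis the statement leaves implicit.

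For Item 3 you take a different route; the paper never touches the log-sum-exp Hessian. It combines Lemma~\ref{lem:pinsker} with Lemma~\ref{lem:kl_upper_bound} to get $\frac{1}{2\ln 2}\|q-p\|_2^2\le D_{\mathrm{KL}}(q\|p)\le\frac{1}{\min_i p_i}\|q-p\|_2^2$. It reads this as generalized convexity and smoothness of the negative entropy on the simplex. It then transfers this to the logits through the duality and Fenchel--Young items of Proposition~\ref{prop:gen_prop}, using $d_G(q,\mu)=D_{\mathrm{KL}}(q\|\mathrm{Softmax}(\mu))$.

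Your smoothness half is correct and sharper than the stated constant: $v^\top Hv=\mathrm{Var}_p(v)\le\frac12\|v\|_2^2$ gives $S_G\le\frac14\|\cdot\|_2^2\le 2\ln 2\,\|\cdot\|_2^2$. Apply your integral formula directly there rather than citing Item 1, since log-sum-exp fails Item 1's positive-definiteness hypothesis.

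The convexity half is where your proposal stops short, and the missing step cannot be supplied, because the claim is false. Your observation $S_G(f+c\mathbf 1,f)=0$ already refutes it on $\mathbb{R}^d$. The factor $2$ you hope to reconcile is also not an artifact of your method. For $d=2$, $H=p_1p_2\begin{pmatrix}1&-1\\-1&1\end{pmatrix}$, so for the unit vector $v=(1,-1)/\sqrt2\in\mathbf 1^\perp$,
\[
S_G(f+tv,f)=t^2p_1p_2+O(t^3)<t^2\min(p_1,p_2)=\min_i p_i\,\|tv\|_2^2
\]
for small $t$, at every $f$, since $\max_i p_i<1$. If $p$ is read as an infimum over a region, apply this at the point where the infimum is attained, stepping inward.

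So drop the ``sharper constant'' option. What is provable is your corrected statement: $\mathcal{H}(\frac{\underline p}{2}\|\cdot\|_2^2)$-convexity for logits restricted to a bounded subset of $\mathbf 1^\perp$, where $\underline p$ is the infimum of $\min_i p_i$ over that set. Boundedness is essential, since this infimum is $0$ on all of $\mathbf 1^\perp$.

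The paper's route does not rescue the stated version, for three reasons:
\begin{itemize}
\item Conjugating $G\le H$ in the proof of the duality item needs the inequality on all of $\mathbb{R}^d$. The negative entropy is $+\infty$ off the simplex, which is exactly your $\mathbf 1$-degeneracy.
\item The energy $\frac{1}{\min_i p_i}\|\cdot\|_2^2$ depends on the base point, so it is not a fixed energy function.
\item By Proposition~\ref{prop:energy_fun}, $(k\|\cdot\|_2^2)^*=\frac{1}{4k}\|\cdot\|_2^2$. The transferred constants would therefore be $\frac{\ln 2}{2}$ and $\frac{\min_i p_i}{4}$ rather than $2\ln 2$ and $\min_i p_i$, both weaker than your Hessian bounds.
\end{itemize}
State the corrected Item 3 explicitly rather than trying to match the printed constant.
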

The proof of this proposition is provided in Appendix~\ref{appendix:proof_convex_smooth_case}.

\section{Generalized GD and SGD}
\label{sec:methods}
In this section, we present a more general form of gradient-based optimization algorithms, which we refer to as generalized GD and generalized SGD. We analyze their convergence rates under the framework of generalized convexity and generalized smoothness. Unlike classical GD and SGD, the generalized GD and SGD proposed in this paper do not require learning rate tuning, as the optimal learning rate is consistently 1.

\subsection{Generalized GD}
\subsubsection{Definitions}

\begin{definition}[Generalized GD]
\label{def:general_gd}
Consider the deterministic optimization problem
\begin{equation}
 \min_\theta G(\theta),
\end{equation}
where $G$ is an $\mathcal{H}(\Psi)$-smooth objective. We define its \textit{gradient energy} as
\begin{equation}
 \Psi^*(\nabla G(\theta)),
\end{equation}
with $\Psi$ an arbitrary energy function.

The generalized GD method with constant step size $\alpha$ then follows the update rule
\begin{equation}
 \theta_{k+1} := \theta_k - \alpha(\nabla G(\theta_k))_{\Psi^*}^*,
\end{equation}
where $(\nabla G(\theta_k))_{\Psi^*}^* := \nabla\Psi^*(\nabla G(\theta_k))$.
\end{definition}

The gradient energy measures the distance of the gradient from the origin, and from a conceptual viewpoint it characterizes the stability of the objective function with respect to the model parameters. Generalized GD no longer proceeds strictly along the gradient direction; instead, it follows the dual variable direction of $\nabla G(\theta_k)$ with respect to $\Psi^*$. We now present a proposition showing that the update direction coincides with the gradient direction if and only if $\Psi(\cdot) = \Psi_\circ(\|\cdot\|_2)$. The formal statement is as follows.

\begin{proposition}
\label{prop:l2_unique}
If, for any nonzero vector $x$, the gradient $\nabla\|x\|$ is collinear with the original vector $x$, i.e., there exists a scalar-valued function $\lambda(x)$ such that
\begin{equation}\label{eq:collinear_assumption}
\nabla\|x\| = \lambda(x) \cdot x,
\end{equation}
then there exists a positive constant $c>0$ such that, for all $x\in\mathbb{R}^n$, we have
 $\|x\| = c \cdot \|x\|_2,$ 
i.e., the norm must be a positive scalar multiple of the Euclidean ($L_2$) norm. Conversely, all positive scalar multiples of the Euclidean norm satisfy the property that the gradient is collinear with the original vector.
\end{proposition}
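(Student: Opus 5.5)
The plan is to show that a norm whose gradient is everywhere collinear with its argument is constant on Euclidean spheres. Because the norm is positively homogeneous, it is then a fixed multiple of $\|\cdot\|_2$. We read the hypothesis as holding at every nonzero $x$ where $\|\cdot\|$ is differentiable, and in particular we assume differentiability on $\mathbb{R}^n\setminus\{0\}$, which is the natural reading of \eqref{eq:collinear_assumption}.

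\textbf{Key step.} Fix a nonzero $x$ and any smooth curve $\gamma\colon[0,1]\to S:=\{u:\|u\|_2=\|x\|_2\}$ with $\gamma(0)=x$. Along the curve $\langle\gamma(t),\gamma'(t)\rangle=0$, since $\|\gamma(t)\|_2^2$ is constant. The chain rule and \eqref{eq:collinear_assumption} then give
\[
\frac{d}{dt}\|\gamma(t)\| = \bigl\langle \nabla\|\gamma(t)\|,\gamma'(t)\bigr\rangle = \lambda(\gamma(t))\,\langle \gamma(t),\gamma'(t)\rangle = 0 .
\]
So $\|\cdot\|$ is constant along every smooth curve in $S$. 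For $n\ge 2$ the sphere $S$ is path-connected by smooth curves, for instance great-circle arcs, so $\|\cdot\|$ is constant on $S$.

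\textbf{Case $n=1$.} The sphere $S$ is disconnected, but the conclusion is immediate there: $\|x\|=|x|\,\|1\|$ by homogeneity, since $\|{-1}\|=\|1\|$.

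\textbf{Identifying $c$.} Set $c:=\|e\|$ for any Euclidean unit vector $e$; then $c>0$ because $e\neq0$. For nonzero $x$, write $x=\|x\|_2\,u$ with $\|u\|_2=1$. Homogeneity and constancy on the unit sphere give $\|x\|=\|x\|_2\,\|u\|=c\|x\|_2$, and the case $x=0$ is trivial.

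\textbf{Converse.} This is a direct computation: for $x\neq0$, $\nabla(c\|x\|_2)=c\,x/\|x\|_2$, so $\lambda(x)=c/\|x\|_2$.

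\textbf{Main obstacle.} The delicate point is the regularity assumption. A general norm, such as $\ell_1$, may fail to be differentiable on a set of measure zero, and under the paper's surrogate-gradient convention (gradient set to $0$) the hypothesis would hold there trivially with $\lambda=0$. If the statement must cover this, I would first try the following argument. Norms are Lipschitz, hence differentiable almost everywhere by Rademacher's theorem. For almost every pair of points on the sphere, a great-circle arc joining them meets the non-differentiability set in a set of one-dimensional measure zero, by a Fubini-type argument. The Lipschitz function $t\mapsto\|\gamma(t)\|$ is absolutely continuous, and its derivative vanishes almost everywhere, so it is constant. Continuity then extends constancy to all pairs of points. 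I expect this measure-theoretic step to be the only nonroutine part; the smooth case is the core of the argument.
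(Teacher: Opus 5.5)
Your proof is correct, and it reorganizes the paper's argument rather than repeating it. The paper first proves Euler's identity $\nabla\|x\|^\top x=\|x\|$ and uses it to solve for $\lambda(x)=\|x\|/\|x\|_2^2$. It then checks that the degree-zero quotient $\varphi(x)=\|x\|/\|x\|_2$ has identically zero gradient on $\mathbb{R}^n\setminus\{0\}$, and concludes by connectedness of the punctured space. You never need $\lambda$ or Euler's identity. Collinearity alone makes the gradient normal to every Euclidean sphere, so the norm is constant on spheres, and positive homogeneity handles the radial direction. In effect you split $\nabla\varphi=0$ into a tangential half (your curve computation) and a radial half (homogeneity). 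This is a bit more economical and makes the geometry explicit. You also treat $n=1$, which the paper's connectedness step (valid only for $n\ge 2$) leaves unaddressed.

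Your regularity discussion goes beyond the paper, whose proof assumes differentiability on all of $\mathbb{R}^n\setminus\{0\}$. Indeed, the paper's Euler step fails wherever the surrogate gradient is set to $0$, since there $\nabla\|x\|^\top x=0\neq\|x\|$. If you want that extension, there is a shorter path than the Fubini argument over great-circle arcs. Euler's identity still holds at every point where the norm is differentiable, so at those points the collinearity hypothesis again gives $\nabla\varphi=0$. Hence $\varphi$ is locally Lipschitz on the connected open set $\mathbb{R}^n\setminus\{0\}$ (for $n\ge 2$) with $\nabla\varphi=0$ almost everywhere, and is therefore constant.
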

The proof of this proposition can be found in Appendix~\ref{appendix:proof_l2_unique}.

When the smoothness energy is $\Psi(\cdot)=\frac{L}{2}\|\cdot\|_2^2$, generalized GD reduces to classical GD. 
For comparison, the classical GD method with a constant step size $\alpha$ is defined by the following update rule:
\begin{equation}\label{eq:gd_update}
 \theta_{k+1}=\theta_k-\alpha \nabla G(\theta_k).
\end{equation}

\subsubsection{Convergence Analysis}
Next, we establish the convergence rates of generalized GD for the deterministic optimization problem and the stochastic optimization problem.
\begin{theorem}\label{thm:general_gd_convergence}
Consider the deterministic optimization problem in Definition~\ref{def:general_gd}. Suppose $G(\theta)$ is $\mathcal{H}(\Psi)$-smooth. Then the generalized GD algorithm satisfies the following properties: 

\begin{enumerate}

 \item \label{thm:general_gd_convergence1}The optimal learning rate is given by $\alpha = 1$. Substituting this value into the update rule yields \begin{equation}
 G(\theta_{k+1}) \leq G(\theta_k) - \Psi^*(\nabla G(\theta_k)).
 \end{equation}

 \item \label{thm:general_gd_convergence2}The number of iterations required to satisfy the convergence criterion
 \begin{equation}
 \Psi^*(0)\leq \Psi^*\bigl(\nabla G(\theta_k)\bigr) \leq \Psi^*(\varepsilon)
 \end{equation}
 is bounded by $T = \mathcal{O}\!\bigl(1 / \Psi^*(\varepsilon)\bigr)$.

 \item \label{thm:general_gd_convergence3}If $G(\theta)$ is both $\mathcal{H}(\psi)$-convex and $\mathcal{H}(\Psi)$-smooth, the number of iterations required for the generalized GD algorithm to achieve the error bound
 \begin{equation}
 G(\theta_{k}) - G_*\leq \psi^*(\varepsilon)
 \end{equation}
 is bounded by $T=\mathcal{O}(1/\Psi^*(\varepsilon))$.
 \item \label{thm:general_gd_convergence4}Assume that $G(\theta)$ is both $\mathcal{H}(\psi)$-convex and $\mathcal{H}(\Psi)$-smooth, where $\Psi$ and $\psi$ satisfy $\Psi^*(\mu)\geq t\psi^*(\mu)+b$, $0<t< 1$ and $b\in \mathbb{R}$. The number of iterations for the generalized GD algorithm to reach
 \begin{equation}
G(\theta_{k}) - G_*
 \leq \psi^*(\varepsilon)-\psi^*(0)-\frac{b}{t}
 \end{equation}
 is bounded by $T = \mathcal{O}\left(\kappa \log \frac{1}{\psi^*(\varepsilon)-\psi^*(0)}\right)$, where the condition number is defined as $\kappa = -{1}/{\log(1-t)}$.

\end{enumerate}
\end{theorem}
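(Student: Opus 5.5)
The argument rests on one descent inequality. I will plug the update $\theta_{k+1}-\theta_k=-\alpha\nabla\Psi^*(g_k)$, with $g_k:=\nabla G(\theta_k)$, into the $\mathcal{H}(\Psi)$-smoothness bound $S_G(\theta_{k+1},\theta_k)\le\Psi(\theta_{k+1}-\theta_k)$. This gives
\[
G(\theta_{k+1})\le G(\theta_k)-\alpha\langle g_k,\nabla\Psi^*(g_k)\rangle+\Psi\bigl(-\alpha\nabla\Psi^*(g_k)\bigr).
\]
Because $\Psi$ is radial in a norm, it is even, so the last term equals $\Psi(\alpha\nabla\Psi^*(g_k))$. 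Define
\[
h(\alpha):=\alpha\langle g_k,\nabla\Psi^*(g_k)\rangle-\Psi(\alpha\nabla\Psi^*(g_k)).
\]
The best bound comes from maximizing $h$ over $\alpha$, and $h$ is concave in $\alpha$ since $\Psi$ is convex. Its stationarity condition is $\langle g_k,u\rangle=\langle\nabla\Psi(\alpha u),u\rangle$ with $u=\nabla\Psi^*(g_k)$. For a Legendre function $\nabla\Psi$ and $\nabla\Psi^*$ are mutually inverse, so $\nabla\Psi(u)=g_k$, and therefore $\alpha=1$ satisfies the condition. By concavity, $\alpha=1$ is the maximizer.

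At $\alpha=1$ the Fenchel--Young equality $\langle g,\nabla\Psi^*(g)\rangle-\Psi(\nabla\Psi^*(g))=\Psi^*(g)$ yields item~1:
\[
G(\theta_{k+1})\le G(\theta_k)-\Psi^*(g_k).
\]
The main technical point is justifying that equality and the inverse-gradient relation. Both follow from the Legendre property in Definition~\ref{def:energy_fun}, together with Proposition~\ref{prop:energy_fun}, which guarantees that $\Psi^*$ is again an energy function and so differentiable.

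For item~2, I sum the descent inequality over $k=0,\dots,T-1$ and get $\sum_k\Psi^*(g_k)\le G(\theta_0)-G_*$. Hence $\min_k\Psi^*(g_k)\le (G(\theta_0)-G_*)/T$. The lower bound $\Psi^*(0)\le\Psi^*(g_k)$ is automatic, since $0$ is the minimizer of the energy function $\Psi^*$. If $\Psi^*(g_k)>\Psi^*(\varepsilon)$ held for every $k<T$, then $T\Psi^*(\varepsilon)<G(\theta_0)-G_*$. So $T=\mathcal{O}(1/\Psi^*(\varepsilon))$ iterations suffice; here $\Psi^*(\varepsilon)$ means $\Psi^*$ evaluated at a vector of norm $\varepsilon$, which is legitimate by radiality. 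One caveat: when $\Psi^*(\cdot)$ can be negative (i.e.\ $\Psi(0)>0$), the telescoped sum must be handled with the shifted quantity $\Psi^*(\cdot)-\Psi^*(0)\ge0$. I would state the count under the implicit positivity the theorem presupposes.

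For item~3, I combine the iteration count of item~2 with Theorem~\ref{thm:h_bound}(\ref{thm:h_bound:2}), which gives $G(\theta_k)-G_*\le\psi^*(g_k)$. By Proposition~\ref{prop:energy_fun}(\ref{prop:energy_fun3}), $\psi^*$ is radial and nondecreasing in the norm. So once $\|g_k\|\le\varepsilon$, which is exactly what $\Psi^*(g_k)\le\Psi^*(\varepsilon)$ encodes via the same monotonicity, we get $G(\theta_k)-G_*\le\psi^*(\varepsilon)$.

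For item~4, I want a linear-rate argument. Set $\Delta_k=G(\theta_k)-G_*$. Item~1 and the hypothesis give
\[
\Delta_{k+1}\le\Delta_k-t\psi^*(g_k)-b.
\]
By Theorem~\ref{thm:h_bound}(\ref{thm:h_bound:2}), $\psi^*(g_k)\ge\Delta_k$, so $\Delta_{k+1}\le(1-t)\Delta_k-b$. Its fixed point is $-b/t$, and unrolling gives
\[
\Delta_k+\tfrac bt\le(1-t)^k\bigl(\Delta_0+\tfrac bt\bigr).
\]
The target accuracy $\psi^*(\varepsilon)-\psi^*(0)$ is then reached once $(1-t)^k(\Delta_0+b/t)\le\psi^*(\varepsilon)-\psi^*(0)$, up to the additive constant that appears in the statement. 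Taking logarithms gives $k\ge\kappa\log\bigl((\Delta_0+b/t)/(\psi^*(\varepsilon)-\psi^*(0))\bigr)$ with $\kappa=-1/\log(1-t)$. I expect the main obstacle to be bookkeeping the constant offsets, $\psi^*(0)$ and $b/t$, so that the threshold matches the stated form exactly. I would absorb $\Delta_0$ into the $\mathcal{O}$, and use $\psi^*(0)=-\psi(0)\ge0$ from Proposition~\ref{prop:energy_fun} to account for the $-\psi^*(0)$ shift.
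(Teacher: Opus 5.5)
Your proposal is correct and follows the paper's proof essentially step for step. It uses the same descent bound made stationary at $\alpha=1$ via $\nabla\Psi(\nabla\Psi^*(g))=g$ and the Fenchel--Young equality, the same telescoping for items 2--3 combined with Theorem~\ref{thm:h_bound}, and the same linear recursion on $G(\theta_k)-G_*+b/t$ for item 4.

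Two small remarks on details.
\begin{itemize}
\item In item 4 no extra bookkeeping is needed, and $\psi^*(0)\ge 0$ plays no role. Requiring $(1-t)^k(\Delta_0+b/t)\le\psi^*(\varepsilon)-\psi^*(0)$ in your unrolled inequality already gives exactly the stated threshold $\psi^*(\varepsilon)-\psi^*(0)-b/t$, just as in the paper.
\item Your caveat for item 2 is valid and is left implicit in the paper. Since $\Psi^*(0)=-\Psi(0)\le 0$, the count $T=\mathcal{O}(1/\Psi^*(\varepsilon))$ only makes sense when $\Psi^*(\varepsilon)>0$. Shifting by $\Psi^*(0)$ would not rescue the rate, because it adds a term $-T\Psi^*(0)$ that grows with $T$; stating the result under this positivity assumption is the right fix.
\end{itemize}
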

The proof of this theorem can be found in Appendix~\ref{appendix:proof_general_gd_convergence}.

The above theorem provides the following insight: if we use the gradient energy to control the distance between the current solution and the optimal solution, then smaller $\psi^*(0)$ (i.e., $\psi(0)$ closer to zero) gives tighter control. In particular, when the gradient norm is zero, $\psi^*(0)$ serves as an upper bound on the distance between the current solution and the optimal solution.
Specifically, when $\psi^*(0)$ is zero, if the gradient norm is zero, then the current solution coincides with the optimal solution.
Different forms of energy functions yield different approximation performance.
From the above theorem, we know that the smaller the absolute values of $\psi(0)$ and $\Psi(0)$, the better the achievable result. The order of the energy function determines the convergence rate.

We now use the above theorem to derive the convergence of classical GD under the Lipschitz smoothness assumption.
When $G(\theta)$ is Lipschitz smooth with constant $L$, this condition is equivalent to the function being bounded above by a quadratic:
 $G(y) \leq G(x) + \nabla G(x)^\top (y - x) + \frac{L}{2} \|y - x\|_2^2$. 
Therefore, according to the definition of $\mathcal{H}(\Psi)$-smoothness, we have $\Psi(\cdot)=\frac{L}{2}\|\cdot\|_2^2$ and $\Psi^*(\cdot)=\frac{1}{2L}\|\cdot\|_2^2$. In this case, generalized GD and classical GD are identical.
Similarly, if $G(\theta)$ is $\sigma$-strongly convex, then according to the definition of $\mathcal{H}(\psi)$-convexity, we have $\psi(\cdot)=\frac{\sigma}{2}\|\cdot\|_2^2$ and $\psi^*(\cdot)=\frac{1}{2\sigma}\|\cdot\|_2^2$.
By Theorem~\ref{thm:general_gd_convergence}, we obtain the following corollary.
\begin{corollary}
 \label{cor:gd_strongc_lips}
 Assume that $G(\theta)$ is both $\mathcal{H}(\psi)$-convex and $\mathcal{H}(\Psi)$-smooth with $\Psi(\cdot)=\frac{L}{2}\|\cdot\|_2^2$ and $\psi(\cdot)=\frac{\sigma}{2}\|\cdot\|_2^2$.
 \begin{enumerate}
 \item For the equivalent classical GD update, the optimal learning rate is $\alpha = \frac{1}{L}$, and any $\alpha \le \frac{2}{L}$ guarantees descent at each step.

 \item The number of iterations required for the GD algorithm to reach the convergence condition
\begin{equation}
 G(\theta_{k+1}) - G_*\leq \frac{\varepsilon^2}{2\sigma}
\end{equation}
 is $T = \mathcal{O}(\kappa \log \frac{2\sigma}{\varepsilon^2})$, where the condition number is defined as $\kappa = -\frac{1}{\log(1-t)}$ and $t=\frac{\sigma}{L}$.
 \end{enumerate}
\end{corollary}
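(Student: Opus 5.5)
The plan is to specialize Theorem~\ref{thm:general_gd_convergence} to the quadratic pair $\Psi(\cdot)=\frac{L}{2}\|\cdot\|_2^2$ and $\psi(\cdot)=\frac{\sigma}{2}\|\cdot\|_2^2$. By Proposition~\ref{prop:energy_fun} (item~\ref{prop:energy_fun2} with $r_\Psi=2$, $a_\Psi=\sqrt{L}$, $c_\Psi=0$), the conjugates are $\Psi^*(\nu)=\frac{1}{2L}\|\nu\|_2^2$ and $\psi^*(\nu)=\frac{1}{2\sigma}\|\nu\|_2^2$, with $\Psi^*(0)=\psi^*(0)=0$. In particular $\nabla\Psi^*(\nu)=\nu/L$. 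The generalized GD step $\theta_{k+1}=\theta_k-\alpha\nabla\Psi^*(\nabla G(\theta_k))$ is therefore the classical step $\theta_{k+1}=\theta_k-(\alpha/L)\nabla G(\theta_k)$. So the classical step size is $\alpha_{\mathrm{cl}}=\alpha/L$.

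For item~1, I will not just quote item~\ref{thm:general_gd_convergence1} of Theorem~\ref{thm:general_gd_convergence}, because it says nothing about the descent range. Instead I will recompute directly. Apply $\mathcal{H}(\Psi)$-smoothness with $\mu=\theta_{k+1}$ and $\nu=\theta_k$ to get
\begin{equation*}
G(\theta_{k+1})\le G(\theta_k)-\alpha_{\mathrm{cl}}\|\nabla G(\theta_k)\|_2^2+\frac{L\alpha_{\mathrm{cl}}^2}{2}\|\nabla G(\theta_k)\|_2^2 .
\end{equation*}
The decrease coefficient $\alpha_{\mathrm{cl}}-\frac{L}{2}\alpha_{\mathrm{cl}}^2$ is a concave quadratic. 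It is maximized at $\alpha_{\mathrm{cl}}=1/L$, which is the generalized rate $\alpha=1$, and there it equals $\frac{1}{2L}=\Psi^*$-coefficient. It is nonnegative exactly for $\alpha_{\mathrm{cl}}\in[0,2/L]$. This gives the optimal rate $1/L$ and descent for every $\alpha_{\mathrm{cl}}\le 2/L$, with strict decrease inside the open interval whenever $\nabla G(\theta_k)\neq0$.

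For item~2, I will invoke item~\ref{thm:general_gd_convergence4} of Theorem~\ref{thm:general_gd_convergence}. This needs a comparison $\Psi^*(\mu)\ge t\psi^*(\mu)+b$. Taking $t=\sigma/L$ and $b=0$ gives $\frac{1}{2L}\|\mu\|_2^2=\frac{\sigma}{L}\cdot\frac{1}{2\sigma}\|\mu\|_2^2$, so the comparison holds with equality. The one step needing care is the requirement $0<t<1$. Here $t\le 1$ follows from $\sigma\le L$, which itself follows from $\psi(\mu-\nu)\le S_G(\mu,\nu)\le\Psi(\mu-\nu)$. The boundary case $t=1$ ($\sigma=L$, so $G$ is exactly quadratic) I will treat separately: there one step of classical GD reaches the optimum, so the bound holds trivially (or interpreting $\kappa\to0$).

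With $b=0$ and $\psi^*(0)=0$, the target of item~\ref{thm:general_gd_convergence4} becomes $G(\theta_k)-G_*\le\psi^*(\varepsilon)=\frac{\varepsilon^2}{2\sigma}$. The iteration bound is $\mathcal{O}(\kappa\log\frac{1}{\psi^*(\varepsilon)})=\mathcal{O}(\kappa\log\frac{2\sigma}{\varepsilon^2})$ with $\kappa=-1/\log(1-\sigma/L)$. For a self-contained check, I can also rederive it directly. By item~\ref{thm:h_bound:2} of Theorem~\ref{thm:h_bound}, $G(\theta_k)-G_*\le\psi^*(\nabla G(\theta_k))$, so the descent inequality gives $G(\theta_{k+1})-G_*\le(1-t)(G(\theta_k)-G_*)$. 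Unrolling this contraction and taking logarithms yields the stated $T$, with the initial gap absorbed into the $\mathcal{O}$.
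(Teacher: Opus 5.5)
Your proposal is correct and follows the paper's route: the paper obtains this corollary by specializing Theorem~\ref{thm:general_gd_convergence} with $\Psi^*=\frac{1}{2L}\|\cdot\|_2^2$ and $\psi^*=\frac{1}{2\sigma}\|\cdot\|_2^2$, so that generalized GD with $\alpha=1$ is classical GD with step $1/L$, and then applying item~\ref{thm:general_gd_convergence4} with $t=\sigma/L$ and $b=0$. Two of your steps fill in details the paper leaves unstated: the direct derivation of the descent range $\alpha_{\mathrm{cl}}\in[0,2/L]$, which the paper has only implicitly through the $K(\alpha)$ computation in the proof of Proposition~\ref{prop:gd_convergence} with $r_\Psi=2$, and your separate treatment of the boundary case $t=1$.
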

This is consistent with the classical convergence rate under strong convexity and Lipschitz smoothness, which indirectly verifies the correctness and generality of the framework proposed in this paper.

We now show that classical GD is a special case of generalized GD, and that its standard convergence guarantees rely on Lipschitz smoothness. 
\begin{proposition}\label{prop:gd_convergence}
Given that $G(\theta)$ is $\mathcal{H}(\Psi)$-smooth but not Lipschitz smooth, classical GD as described in Equation~\eqref{eq:gd_update} is adopted to optimize $G(\theta)$. It follows that:
\begin{enumerate}
 \item There exists no constant step size that is universally optimal for this optimization process.
 \item If a time-varying learning rate is employed, the classical GD is equivalent to generalized GD.
\end{enumerate}
\end{proposition}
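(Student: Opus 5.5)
The plan is to compare the classical update $\theta_{k+1}=\theta_k-\alpha\nabla G(\theta_k)$ with the generalized update of Definition~\ref{def:general_gd}, using the descent bound from Theorem~\ref{thm:general_gd_convergence}. Following the discussion after Definition~\ref{def:general_gd}, we take $\Psi$ to be an $L_2$-norm energy function, $\Psi(\mu)=\Psi_\circ(\|\mu\|_2)$; by Proposition~\ref{prop:l2_unique} this is the case in which the generalized direction is collinear with the gradient. By Proposition~\ref{prop:energy_fun}, $\Psi^*(\nu)=\Psi_\circ^*(\|\nu\|_2)$. Hence, for $g=\nabla G(\theta)\neq 0$,
\[
\nabla\Psi^*(g)=\frac{(\Psi_\circ^*)'(\|g\|_2)}{\|g\|_2}\,g=:\beta(g)\,g ,
\]
and $\beta(g)\ge 0$ because $\Psi_\circ^*$ is non-decreasing on $[0,\infty)$ (item~\ref{prop:energy_fun3} of Proposition~\ref{prop:energy_fun}).

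For the first claim, I would start from $\mathcal{H}(\Psi)$-smoothness along a step $\theta-\alpha g$:
\[
G(\theta-\alpha g)\le G(\theta)-\alpha\|g\|_2^2+\Psi_\circ(\alpha\|g\|_2).
\]
The step size that minimizes this upper bound satisfies $\|g\|_2=\Psi_\circ'(\alpha\|g\|_2)$. Since $(\Psi_\circ')^{-1}=(\Psi_\circ^*)'$ for Legendre functions, this gives $\alpha^\star(g)=\beta(g)$. The first claim then reduces to showing that $\beta$ is not constant when $G$ is $\mathcal{H}(\Psi)$-smooth but not Lipschitz smooth.

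I would argue by contradiction. Suppose $\beta(g)\equiv c$ on the set of gradients that actually occur. Then $(\Psi_\circ^*)'(s)=cs$ there, so $\Psi_\circ^*(s)=\tfrac{c}{2}s^2+\text{const}$, and hence $\Psi_\circ(r)=\tfrac{1}{2c}r^2-\text{const}$. In that case $G$ would be Lipschitz smooth with constant $1/c$, contradicting the hypothesis. Therefore no single $\alpha$ attains the optimal per-step decrease $\Psi^*(\nabla G)$ from Theorem~\ref{thm:general_gd_convergence} at every iterate.

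For the second claim, I would choose the time-varying rate $\alpha_k:=\beta(\nabla G(\theta_k))$, with $\alpha_k$ arbitrary when $\nabla G(\theta_k)=0$. Then $\theta_k-\alpha_k\nabla G(\theta_k)=\theta_k-\nabla\Psi^*(\nabla G(\theta_k))$, which is exactly the generalized GD step with $\alpha=1$. Conversely, any generalized GD trajectory is reproduced by classical GD with these rates. This establishes the equivalence.

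The main obstacle is the contradiction step in the first claim. Constancy of $\beta$ is only forced on the range of gradient norms that the iterates actually visit, so the argument must be made precise enough to exclude $\Psi_\circ$ being quadratic only on that range. I would state "universally optimal" as optimality for every $\theta$ in the domain. I would then use the fact that the gradient norms of a non-Lipschitz-smooth $G$ cover an interval on which $\Psi_\circ$ cannot be quadratic. If needed, I would instead read the first claim as saying that $\Psi$ is not quadratic, which is what non-Lipschitz smoothness means within this framework.
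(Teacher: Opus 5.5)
Your route is the paper's: bound $G(\theta-\alpha g)$ using $\mathcal{H}(\Psi)$-smoothness, minimize that bound in $\alpha$, note that the minimizer depends on $\|g\|_2$, and use the per-step minimizer as a time-varying rate to recover $\nabla\Psi^*(\nabla G(\theta_k))$. The paper does this only for norm-power energies $\|a_\Psi\mu\|_2^{r_\Psi}/r_\Psi+c_\Psi$. There it uses homogeneity of $\Psi-\Psi(0)$ to get $\alpha^\star=\bigl(\|g\|_2^2/(r_\Psi(\Psi(g)-\Psi(0)))\bigr)^{1/(r_\Psi-1)}$. Your identity $(\Psi_\circ')^{-1}=(\Psi_\circ^*)'$ covers every $L_2$-radial energy. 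You also rightly make explicit, via Proposition~\ref{prop:l2_unique}, that the second claim needs the Euclidean norm, and your second claim is fine.

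The contradiction step in the first claim, however, fails. A constant $\beta$ forces only $\Psi_\circ(r)=\frac{1}{2c}r^2+\Psi(0)$. The framework admits $\Psi(0)>0$ precisely to cover non-smooth objectives, so Lipschitz smoothness does not follow. For example, take $G(x)=|x|$ with $\nabla G(0)=0$. Then $S_G(\mu,\nu)=0$ unless $\nu=0$ or $\mu\nu<0$, and in those cases $S_G(\mu,\nu)\le 2|\mu|\le\frac{L}{2}\mu^2+\frac{2}{L}\le\frac{L}{2}(\mu-\nu)^2+\frac{2}{L}$. Hence $G$ is $\mathcal{H}(\frac{L}{2}\|\cdot\|_2^2+\frac{2}{L})$-smooth and not Lipschitz smooth. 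Yet the bound-minimizing classical step is the constant $1/L$, which reproduces generalized GD with rate $1$. So under its literal reading the first claim is false, and no argument closes that step. The paper gets past it only by declaring ``not Lipschitz smooth (i.e., $r_\Psi\neq 2$)'', i.e.\ by reading the hypothesis as ``$\Psi-\Psi(0)$ is not a multiple of $\|\cdot\|_2^2$''. That is your fallback; adopt it as the hypothesis from the outset instead of trying to derive a contradiction.

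You should also drop your planned repair of the visited-gradients obstacle. The gradient norms of a non-Lipschitz-smooth $G$ need not fill an interval on which $\Psi_\circ$ is forced to be non-quadratic. For $|x|$, every nonzero gradient has norm $1$, and $|x|$ is also $\mathcal{H}(|\cdot|^{3/2}+\frac{32}{27})$-smooth, since $2t\le t^{3/2}+\frac{32}{27}$ for $t\ge 0$. With this non-quadratic $\Psi$, the bound-optimal step equals $\beta(1)=\frac{4}{9}$ at every non-stationary point. So ``no constant step is optimal for every $\theta$'' cannot be proved, even under the paper's reading of the hypothesis. The paper never reasons about which gradients occur. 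For it, ``no universally optimal constant step'' means the optimal-step formula $\alpha^\star(g)$ is not constant as a function of $g$. With that meaning and the paper's reading of the hypothesis, your argument is complete: $\beta$ is constant on $\mathbb{R}^m\setminus\{0\}$ if and only if $\Psi_\circ$ is quadratic up to an additive constant.
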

The proof of this proposition can be found in Appendix~\ref{appendix:proof_gd_convergence}.

The above proposition shows that if a function is $\mathcal{H}(\Psi)$-smooth but not Lipschitz smooth, then classical GD no longer admits a universally optimal constant learning rate, and its convergence guarantees no longer apply in the same way.

In addition to being more general than classical GD, generalized GD offers several advantages over classical GD under the generalized convexity and smoothness framework:
\begin{enumerate}
 \item It provides convergence rates for non-convex optimization objectives, where the achievable error relative to the optimal solution is controlled by $\psi(0)$. If $\psi(0)$ is close to $0$ (i.e., the function is nearly convex), the upper bound provides effective control; we refer to such problems as approximately convex optimization problems. However, if $\psi(0)$ is very negative (i.e., $\psi^*(0)$ is large), the resulting upper bound becomes too loose to guarantee convergence to the global optimum, motivating the composite optimization framework introduced below.
 \item Under the general $\mathcal{H}(\Psi)$-smooth setting, classical GD does not have a constant optimal learning rate; to achieve its optimal convergence rate, the learning rate must be carefully computed at each iteration. In contrast, for generalized GD, the optimal learning rate is inherently determined to be $1$, eliminating the need for manual tuning of the learning rate. This simplicity is a direct consequence of the $\mathcal{H}(\Psi)$-smooth property encapsulated in the algorithm's design. 
\end{enumerate}

\subsection{Generalized SGD}
While generalized GD provides convergence guarantees for deterministic problems, the stochastic setting of DNN training introduces two additional phenomena that must be captured: (1) Parameter coupling: gradient updates computed on a batch affect the loss or risk associated with samples outside the batch; (2) Limited model capacity: when the complexity of the data exceeds the model capacity, no optimization algorithm can drive the empirical risk to zero, regardless of the choice of algorithm.
Without accounting for these factors, one cannot characterize how model structure and capacity affect optimization, making it difficult to systematically elucidate SGD's optimization mechanism in DNNs. To address these issues, we propose generalized SGD together with the gradient correlation factor and the model capacity risk; generalized SGD not only encompasses classical SGD but also captures parameter coupling and model capacity limitations.

\subsubsection{Definitions}

\begin{definition}[Generalized SGD]
\label{def:general_sgd}
Let $s$ denote a sampled dataset whose elements take values in the space $\mathcal{Z}$. We consider the stochastic optimization problem
\begin{equation}
 \min_\theta G(\theta,s) = \frac{1}{|s|}\sum_{z\in s}G(\theta,z) = \mathbb{E}_{Z \sim \hat{q}} [G(\theta, Z)],
\end{equation}
where $G(\theta, z)$ is $\mathcal{H}(\Psi)$-smooth with respect to $\theta$ for all $z \in \mathcal{Z}$, and $\hat{q}$ is the empirical distribution induced by the sample set $s$. We define its gradient energy as
\begin{equation}
 \mathbb{E}_{Z\sim\hat{q}} \Psi^*(\nabla_\theta G(\theta,Z)).
\end{equation}

The generalized SGD method with constant step size $\alpha$ is given by the update rule
\begin{equation}
 \theta_{k+1} = \theta_{k} - \alpha (g(\theta_k, s_k))_{\Psi^*}^*,
\end{equation}
where $g(\theta_k, s_k)=\nabla_\theta G(\theta_k, s_k)$ is the stochastic gradient at iteration $k$, $(g(\theta_k, s_k))_{\Psi^*}^* := \nabla\Psi^*(g(\theta_k, s_k))$, $\theta_k$ denotes the parameter value at iteration $k$, and $s_k \subseteq s$ is the mini-batch sampled at iteration $k$ with cardinality $|s_k|$. We write $s \setminus s_k$ for the complement of $s_k$ in $s$, so that $|s \setminus s_k| = |s| - |s_k|$.
\end{definition}

We note that the gradient energy is defined through the per-sample gradient $\nabla_\theta G(\theta, Z)$ rather than the mini-batch gradient $\nabla_\theta G(\theta, s)$; this distinction is essential because the energy averages over the empirical distribution $\hat{q}$ instead of a single batch. The deterministic optimization problem is recovered as the special case in which $Z$ follows a Dirac distribution $\delta_z$. Since most machine learning training problems based on sampled data admit such a stochastic formulation, the stochastic setting and its primary algorithm, SGD, constitute the main focus of this work.
Similar to generalized GD, generalized SGD does not move strictly along the gradient direction; rather, it follows the conjugate direction of $g(\theta_k, s_k)$ with respect to $\Psi^*$. This update direction coincides with the mini-batch gradient direction if and only if $\Psi(\cdot) = \Psi_\circ(\|\cdot\|_2)$, in which case generalized SGD reduces exactly to classical SGD.

To characterize the coupling among model parameters, i.e., the impact of a single mini-batch update on samples not included in that batch, we introduce the following concept:
\begin{definition}[Gradient Correlation Factor]
\label{def:gradient_coor_factor}
We define the gradient correlation factor $M$ as:
\begin{equation}
 M = \max_{k,s_k} G(\theta_{k+1}, s \setminus s_k) - G(\theta_k, s \setminus s_k),
\end{equation}
where $k$ denotes the iteration index and $s_k$ denotes the randomly sampled mini-batch at the $k$-th iteration.
\end{definition}
The gradient correlation factor $M$ is introduced to formally characterize this model-dependent coupling. Specifically, even when an optimal update is performed on $s_k$, the loss on $s \setminus s_k$ may still change; $M$ provides an upper bound on the magnitude of this change.
It depends on the following factors:
\begin{enumerate}
\item Data consistency: When samples and labels are highly consistent, the batches $s_k$ and $s \setminus s_k$ exhibit high similarity, leading to similar gradient directions. This causes $M$ to become small, or even negative.
\item Parameter coupling: If the coupling among model parameters is low, updates on $s_k$ affect only a subset of parameters with minimal impact on others. A typical example is embedding models, where each feature has its own mapping vector/parameters; changes to one vector do not affect the losses of other features or samples.
\item Batch size: The closer the distributions of $s_k$ and $s \setminus s_k$, the higher the similarity between the surrogate gradient $\nabla_\theta G(\theta_k, s_k)$ computed on $s_k$ and the gradient $\nabla_\theta G(\theta_k, s \setminus s_k)$ corresponding to $s \setminus s_k$. Consequently, $\nabla_\theta G(\theta_k, s_k)$ more closely approximates the direction of steepest descent of $G(\theta_k, s \setminus s_k)$, yielding a smaller $M$. A quantitative evaluation of the effect of batch size is provided by the following proposition: when $m = \lfloor n/2 \rfloor$ (or $m = n/2$ when $n$ is even), the distributions of $s_k$ and $s \setminus s_k$ are closest.
\end{enumerate}
\begin{proposition}
\label{prop:opt_batch_size}
For any fixed value \(a\), define \(
p_a := \frac{1}{n}\sum_{i=1}^n \mathbf{1}\{G(\theta_k,z_i)=a\}
\) as the true proportion of the population taking the value \(a\). The empirical probability mass function (PMF) values at \(a\) for the two groups are
 $\hat{p}_{a,m} := \frac{1}{m}\sum_{z_i\in s_k} \mathbf{1}\{G(\theta_k,z_i)=a\}, \qquad \hat{p}_{a,n-m} := \frac{1}{n-m}\sum_{z_i\in s\setminus s_k} \mathbf{1}\{G(\theta_k,z_i)=a\}.$ 
Then the variance of their difference is
\begin{equation}
\mathrm{Var}\left(\hat{p}_{a,m} - \hat{p}_{a,n-m}\right)
= p_a(1-p_a) \cdot \frac{n^2}{m(n-m)(n-1)}.
\end{equation}
Consequently, the variance is minimized when \(m(n-m)\) is maximized, i.e., at \(m = \lfloor n/2 \rfloor\) (or \(m=n/2\) when \(n\) is even).
\end{proposition}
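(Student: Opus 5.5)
The plan is to treat the mini-batch $s_k$ as a uniformly random subset of size $m$ drawn without replacement from the $n$ samples in $s$. Fix $a$ and set $x_i := \mathbf{1}\{G(\theta_k,z_i)=a\}\in\{0,1\}$. Then $\sum_{i=1}^n x_i = np_a$ is a deterministic quantity.

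The key observation is that $\hat p_{a,n-m}$ is an affine function of $\hat p_{a,m}$. Since
\[
m\hat p_{a,m} + (n-m)\hat p_{a,n-m} = np_a,
\]
we get
\[
\hat p_{a,m}-\hat p_{a,n-m} = \hat p_{a,m} - \frac{np_a - m\hat p_{a,m}}{n-m} = \frac{n}{n-m}\bigl(\hat p_{a,m}-p_a\bigr).
\]
The variance of the difference is therefore $\frac{n^2}{(n-m)^2}\mathrm{Var}(\hat p_{a,m})$. This reduces the problem to the variance of a sample mean under sampling without replacement.

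Next I compute $\mathrm{Var}(\hat p_{a,m})$ by writing $\hat p_{a,m}=\frac1m\sum_i x_i\,\mathbf{1}\{z_i\in s_k\}$ and using the inclusion indicators $I_i$. For these, $\Pr(I_i=1)=m/n$ and $\Pr(I_i=I_j=1)=\frac{m(m-1)}{n(n-1)}$ for $i\ne j$, so $\mathrm{Cov}(I_i,I_j)=-\frac{m(n-m)}{n^2(n-1)}$.

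Substituting these moments, and using $x_i^2=x_i$ together with $\sum_{i\neq j}x_ix_j=(np_a)^2-np_a$, gives the standard finite-population-correction formula
\[
\mathrm{Var}(\hat p_{a,m})=\frac{p_a(1-p_a)}{m}\cdot\frac{n-m}{n-1}.
\]
Multiplying by $\frac{n^2}{(n-m)^2}$ yields the claimed expression $p_a(1-p_a)\frac{n^2}{m(n-m)(n-1)}$.

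Finally, for fixed $n$ and $p_a$, the expression is a decreasing function of $m(n-m)$. This concave quadratic in $m$ is maximized at $m=n/2$. Over integers $1\le m\le n-1$ the maximum is attained at $m=\lfloor n/2\rfloor$, and symmetrically at $\lceil n/2\rceil$. The degenerate case $p_a\in\{0,1\}$ makes the variance zero for every $m$, so it is trivially consistent.

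The only step requiring care is the covariance bookkeeping in the hypergeometric variance. I do not anticipate a genuine obstacle there, since the affine-reduction step removes any need to handle the joint law of the two empirical PMFs directly.
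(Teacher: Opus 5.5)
Your proof is correct, and it rests on the same idea as the paper: binary indicators with population variance $p_a(1-p_a)$, under simple random sampling without replacement. The difference is that yours is self-contained.

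The paper plugs $\sigma^2=p_a(1-p_a)$ into a two-sample formula, $\mathrm{Var}(\bar X_m-\bar Y_{n-m})=\sigma^2\,n^2/(m(n-m)(n-1))$. It says this formula was ``derived earlier,'' but it appears nowhere in the paper. You actually prove it:
\begin{itemize}
\item The complementary-sample identity $m\hat p_{a,m}+(n-m)\hat p_{a,n-m}=np_a$ collapses the two-sample difference to $\frac{n}{n-m}(\hat p_{a,m}-p_a)$.
\item The inclusion-indicator covariance $-\frac{m(n-m)}{n^2(n-1)}$ then gives the finite-population correction $\frac{n-m}{n-1}$.
\end{itemize}
Your affine reduction never uses that the $x_i$ are binary. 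So the same argument proves the general formula for arbitrary population values, which is exactly the black box the paper relies on.

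You also state the sampling model explicitly (a uniformly random $m$-subset drawn without replacement), which the statement leaves implicit. For odd $n$, you correctly note that $\lceil n/2\rceil$ is an equally good minimizer. The paper's version is shorter because it outsources to a standard survey-sampling fact; yours gives a complete, checkable derivation of that fact.
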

The proof of this proposition can be found in Appendix~\ref{appendix:proof_opt_batch_size}.

$M$ depends on the network architecture and data distribution. For DNNs, computing $M$ exactly is generally impractical, just as exactly estimating the variance of stochastic gradients or the Lipschitz constant~\citep{Ghadimi2013StochasticFA,Bottou2016OptimizationMF}. In classical SGD analysis, the variance bound $\sigma^2$ is a theoretical quantity whose existence is assumed to derive convergence rates; in practice, it is rarely computed exactly. Similarly, $M$ serves as a theoretical construct that enables principled convergence analysis while explicitly accounting for the influence of architecture.

In addition to the gradient correlation factor arising from parameter coupling, the capacity of the model also affects the final convergence result during DNN training. If the model capacity is sufficiently large, we can expect the model to fit all samples, driving the gradient energy toward zero. Conversely, when the capacity is insufficient, the model cannot fully fit all samples, and the gradient energy remains positive. The model capacity risk is defined as follows:
\begin{definition}[Model Capacity Risk]
\label{def:model_risk}
Let $s^m$ be a batch of size $m$ randomly sampled from $s$. Let $G(s)_*=\min_\theta G(\theta,s)$ and $G(s^m)_* = \min_\theta G(\theta,s^m)$ denote the global minima of $G(\theta,s)$ and $G(\theta,s^m)$, respectively.
The model capacity risk is defined as
\begin{equation}
 R_G(s,m)=G(s)_* - \mathbb{E}\bigl[G(s^m)_*\bigr].
\end{equation}

\end{definition}
The model capacity risk characterizes the discrepancy, arising from insufficient model capacity, between the optimal solution achievable on the full sample set and the expected optimal solution achievable on individual batches. Consider the MSE loss as an example: a simple linear model can achieve zero fitting error for a single sample, but due to bounded model capacity, it generally cannot achieve zero fitting error for all samples. This is independent of the optimization algorithm and is determined by the model's own capacity. To facilitate the analysis of this model-intrinsic property, we abstract it as the model capacity risk. 

\subsubsection{Convergence Analysis}
We now prove an important property of the model capacity risk, namely that $R_G(s,m)$ is monotonically non-increasing in $m$:

\begin{proposition}
 \label{prop:model_risk_prop}
$R_G(s,m)$ is monotonically non-increasing in batch size $m$:
\begin{equation}
 R_G(s,m+1)\le R_G(s,m).
\end{equation}
Consequently, $R_G(s,m)$ attains its maximum at $m=1$ and minimum at $m=|s|$.
\end{proposition}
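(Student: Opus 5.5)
Since $G(s)_*$ does not depend on $m$, the inequality $R_G(s,m+1)\le R_G(s,m)$ is equivalent to
\[
\mathbb{E}\bigl[G(s^{m+1})_*\bigr]\ \ge\ \mathbb{E}\bigl[G(s^{m})_*\bigr].
\]
So the plan is to show that the expected batch optimum is non-decreasing in the batch size. I take $s^m$ to be a uniformly random subset of size $m$ drawn without replacement from $s$, with $n=|s|$, and $G(\theta,s^m)=\frac1m\sum_{z\in s^m}G(\theta,z)$ as in Definition~\ref{def:general_sgd}.

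\textbf{Leave-one-out decomposition.} Fix a set $S$ of size $m+1$. Then
\[
G(\theta,S)=\frac{1}{m+1}\sum_{z\in S} G(\theta,S\setminus\{z\}),
\]
because each sample of $S$ is omitted exactly once. In each term it appears with weight $\frac{1}{m+1}\cdot\frac{1}{m}$, repeated $m$ times, which gives $\frac{1}{m+1}$. Now take the minimum over $\theta$ on both sides. Since the minimum of an average is at least the average of the minima,
\[
G(S)_*\ \ge\ \frac{1}{m+1}\sum_{z\in S} G(S\setminus\{z\})_*.
\]

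\textbf{Taking expectations.} Let $S=s^{m+1}$ be uniform, and then delete a uniformly chosen element. The result $S\setminus\{z\}$ is a uniform random $m$-subset of $s$, by symmetry. Hence the expectation of the right-hand side is $\mathbb{E}[G(s^m)_*]$, which gives the claimed monotonicity.

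\textbf{Extremes.} Monotonicity immediately puts the maximum at $m=1$ and the minimum at $m=n$. At $m=n$ the batch is $s$ itself, so $R_G(s,n)=0$. This also shows that $R_G\ge 0$ throughout.

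\textbf{Main obstacle.} The hard part is the sampling model. The argument needs sampling without replacement, so that $s^{m+1}$ minus a random element is distributed exactly as $s^m$. If batches were drawn with replacement (as multisets), I would instead condition on the multiset and run the same leave-one-out identity over the $m+1$ draws. Exchangeability of i.i.d.\ draws then again makes the leave-one-out sub-multiset distributed as $s^m$. I would also note that the minima must be attained, or else be replaced by infima; the inequality between the minimum of an average and the average of minima still holds for infima.
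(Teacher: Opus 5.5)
Your proposal is correct and follows the paper's proof essentially step for step: the same leave-one-out identity $G(\theta,s^{m+1})=\frac{1}{m+1}\sum_k G(\theta,s^{m}_{(k)})$, the same ``minimum of an average is at least the average of the minima'' step, and the same symmetry argument after taking expectations. The only difference is the sampling model: the paper takes $s^m=(Z_1,\dots,Z_m)$ to be i.i.d.\ draws and uses exchangeability, whereas you sample without replacement from $s$, which matches Definition~\ref{def:model_risk} more closely and makes $R_G(s,|s|)=0$ (and hence $R_G\ge 0$) exact.
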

The proof of this proposition is detailed in Appendix~\ref{appendix:proof_model_risk_prop}.

The following theorem delineates the convergence behavior of generalized SGD within the $\mathcal{H}(\Psi)$-smooth setting.

\begin{theorem}
\label{thm:general_sgd_convergence}
Let $n = |s|$, $m = |s_k|$, and $M$ be the gradient correlation factor. Applying generalized SGD yields the following convergence properties:

\begin{enumerate}

 \item \label{thm:general_sgd_convergence1}Setting the learning rate to 1, the number of iterations $T$ required for generalized SGD to achieve
 \begin{equation}
 \max_{s_k\subseteq s}\Psi^*(\nabla_\theta G(\theta_k, s_k)) \le \Psi^*(\varepsilon)
 \end{equation}
 is $\mathcal{O}(\frac{n}{m\Psi^*(\varepsilon)-(n-m)M})$, provided $m\Psi^*(\varepsilon)>(n-m)M$.

 \item \label{thm:general_sgd_convergence2}Assume that $G(\theta,z)$ is both $\mathcal{H}(\psi)$-convex and $\mathcal{H}(\Psi)$-smooth. Setting the learning rate to 1, the number of iterations required to satisfy
 $$G(\theta,s)-G(s)_* \le \psi^*(\varepsilon)-R_G(s,m)$$ 
is $\mathcal{O}(\frac{n}{m\Psi^*(\varepsilon)-(n-m)M})$, provided $m\Psi^*(\varepsilon)>(n-m)M$.
\item \label{thm:general_sgd_convergence3} Assume that $G(\theta,s_k)$ is both $\mathcal{H}(\psi)$-convex and $\mathcal{H}(\Psi)$-smooth, where $\Psi$ and $\psi$ satisfy $\Psi^*(\mu)\ge t\psi^*(\mu)+b$, $0<t<1$ and $b\in \mathbb{R}$. Setting the learning rate to 1, the number of iterations for the generalized SGD algorithm to reach
\begin{equation}
 \mathbb{E}_{s_k}G(\theta_{k}, s)- G(s)_* \le \psi^*(\varepsilon) -R_G(s,m)-\frac{b}{t}-\frac{n-m}{tm}M
\end{equation}
 is bounded by $T = \mathcal{O}\left(\kappa \log \frac{1}{\psi^*(\varepsilon)-\psi^*(0)}\right)$, where the condition number is defined as $\kappa = -1/\log(1 - mt/n)$, which is well defined since $0<mt/n<1$. For small $t$, $\kappa\approx 1/t$, analogous to the classical condition number $L/\sigma$. 
\end{enumerate}
\end{theorem}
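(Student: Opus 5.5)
The plan is to reduce all three items to a single per-iteration descent inequality on the full empirical objective $G(\theta,s)$, and then either telescope it (items 1 and 2) or run it as a linear contraction (item 3).

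\textbf{Step 1: one-step descent inequality.} Write
\[
G(\theta,s)=\tfrac{m}{n}G(\theta,s_k)+\tfrac{n-m}{n}G(\theta,s\setminus s_k).
\]
On the batch part, $G(\cdot,s_k)$ is $\mathcal{H}(\Psi)$-smooth by Proposition~\ref{prop:gen_prop} (item~\ref{prop:gen_prop_dataset}). Applying Theorem~\ref{thm:general_gd_convergence} (item~\ref{thm:general_gd_convergence1}) with learning rate $1$ to $G(\cdot,s_k)$ gives
\[
G(\theta_{k+1},s_k)\le G(\theta_k,s_k)-\Psi^*(g(\theta_k,s_k)).
\]
The underlying mechanism is that, at $\alpha=1$, the Fenchel--Young equality for the update $\nabla\Psi^*(g)$ yields $\langle g,\nabla\Psi^*(g)\rangle-\Psi(\nabla\Psi^*(g))=\Psi^*(g)$. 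On the complementary part, Definition~\ref{def:gradient_coor_factor} bounds the change by $M$. Combining the two pieces gives
\[
G(\theta_{k+1},s)\le G(\theta_k,s)-\tfrac{m}{n}\Psi^*(g(\theta_k,s_k))+\tfrac{n-m}{n}M .
\]

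\textbf{Step 2: items 1 and 2.} Suppose the criterion $\max_{s_k}\Psi^*(\nabla G(\theta_k,s_k))\le\Psi^*(\varepsilon)$ fails at every step up to $T$. Then each step decreases $G(\cdot,s)$ by at least $\bigl(m\Psi^*(\varepsilon)-(n-m)M\bigr)/n$, which is positive by hypothesis. Since $G$ is bounded below by $G(s)_*$, telescoping gives
\[
T\le \frac{n\bigl(G(\theta_0,s)-G(s)_*\bigr)}{m\Psi^*(\varepsilon)-(n-m)M}.
\]
For item 2, at the resulting iterate apply Theorem~\ref{thm:h_bound} (item~\ref{thm:h_bound:2}) batchwise:
\[
G(\theta,s_k)-G(s_k)_*\le\psi^*(\nabla G(\theta,s_k))\le\psi^*(\varepsilon),
\]
using the monotonicity of Proposition~\ref{prop:energy_fun}. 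Then average over uniformly drawn batches, using $\mathbb{E}_{s_k}G(\theta,s_k)=G(\theta,s)$ and $\mathbb{E}\,G(s_k)_*=G(s)_*-R_G(s,m)$ from Definition~\ref{def:model_risk}. This yields the stated threshold $\psi^*(\varepsilon)-R_G(s,m)$.

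\textbf{Step 3: item 3.} In Step~1, replace $\Psi^*(g)$ using $\Psi^*\ge t\psi^*+b$ and $\psi^*(g)\ge G(\theta_k,s_k)-G(s_k)_*$. Taking $\mathbb{E}_{s_k}$ and writing $\Delta_k=\mathbb{E}G(\theta_k,s)-G(s)_*$ gives a recursion
\[
\Delta_{k+1}\le(1-\tfrac{mt}{n})\Delta_k-\tfrac{mt}{n}R_G(s,m)-\tfrac{m}{n}b+c_M,
\]
where $c_M$ is the correlation contribution. Subtract the fixed point and iterate. The residual decays as $(1-mt/n)^k$, which produces $T=\mathcal{O}\bigl(\kappa\log\frac{1}{\psi^*(\varepsilon)-\psi^*(0)}\bigr)$ with $\kappa=-1/\log(1-mt/n)$. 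The slack $\psi^*(\varepsilon)-\psi^*(0)$ is the target accuracy above the fixed point $\psi^*(0)-R_G-\tfrac bt\mp\dots$, where $\psi^*(0)\ge0$ is absorbed into the threshold.

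\textbf{Main obstacle: the sign of the $M$ term in item 3.} To land on the threshold exactly as stated, $-\frac{n-m}{tm}M$, the correlation contribution must enter as $c_M=-\frac{n-m}{n}M$. That is, I must use the coupling in the direction
\[
G(\theta_k,s\setminus s_k)-G(\theta_{k+1},s\setminus s_k)\ge M,
\]
so that the out-of-batch loss is guaranteed to \emph{decrease} by $M$, rather than the upper bound $G(\theta_{k+1},s\setminus s_k)-G(\theta_k,s\setminus s_k)\le M$ used in items 1 and 2. With the upper-bound orientation of Definition~\ref{def:gradient_coor_factor}, the same computation yields $+\frac{n-m}{tm}M$ instead. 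My first attempt would be to see whether the fixed-point algebra can be arranged to produce the minus sign from Definition~\ref{def:gradient_coor_factor} as written. If it cannot, the minus sign is justified only under this reversed coupling convention, and I would state that convention explicitly as the hypothesis under which item 3 holds as worded, rather than claim it follows from Definition~\ref{def:gradient_coor_factor}.
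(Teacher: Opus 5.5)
For items~1 and~2 you follow the paper's proof. The ingredients are the same: the one-step inequality (batch descent at $\alpha=1$ from Theorem~\ref{thm:general_gd_convergence}, plus $\frac{n-m}{n}M$ for $s\setminus s_k$), telescoping, and Theorem~\ref{thm:h_bound} applied batchwise and averaged with $\mathbb{E}\,G(s_k)_*=G(s)_*-R_G(s,m)$.

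Your contradiction step does not work as written, and the paper makes the same leap when it sums $\max_{s_k}\Psi^*(\nabla_\theta G(\theta_k,s_k))$. The telescoped inequality involves the energy of the batch actually drawn at step $k$. Failure of the $\max_{s_k}$ criterion only says that \emph{some} batch has large energy. (With $G(\theta,z_1)=\frac12\theta_1^2$ and $G(\theta,z_2)=\frac12(\theta_2-1)^2$ one has $M=0$, yet the path that keeps drawing $z_1$ from $\theta_0=(1,0)$ sits at $(0,0)$ with maximal batch energy $\frac12$.) What the argument controls is the realized-batch energy, or its batch average after taking expectations. Your item~2 needs the bound for every batch before averaging, so it inherits this gap.

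On item~3 your obstacle is real and your computation is right, so abandon the ``first attempt'': no rearrangement yields $-\frac{n-m}{tm}M$ from Definition~\ref{def:gradient_coor_factor}. The paper obtains the minus sign through a sign slip. After taking expectations in $G(\theta_{k+1},s)-G(\theta_k,s)\le\frac{n-m}{n}M+\frac mn\bigl(G(\theta_{k+1},s_k)-G(\theta_k,s_k)\bigr)$, it writes $\frac nm\bigl(\mathbb{E}G(\theta_k,s)-\mathbb{E}G(\theta_{k+1},s)\bigr)-\frac{n-m}{m}M\ge\mathbb{E}G(\theta_k,s_k)-\mathbb{E}G(\theta_{k+1},s_k)$. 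The correct rearrangement has $+\frac{n-m}{m}M$ on the left, and the wrong sign then propagates into $b'$ and the final threshold.

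Item~3 as worded is in fact false. Take $n=2$, $m=1$, $G(\theta,z_{1,2})=\frac12(\theta\mp1)^2$, $\Psi=\psi=\frac12\|\cdot\|_2^2$, $t=\frac12$, $b=0$, $\theta_0=0$; all hypotheses hold.
Generalized SGD with $\alpha=1$ jumps to the minimizer $\pm1$ of the sampled term. Hence $G(\theta_k,s)-G(s)_*=\frac12$ for all $k\ge1$ and $R_G(s,1)=\frac12$. The first step raises the out-of-batch loss from $\frac12$ to $2$, so $M\ge\frac32$ on every path. The stated target $\frac12\varepsilon^2-\frac12-2M$ is then negative for $\varepsilon=1$ and is never reached, although the claimed $T$ is finite.

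Your recursion uses $c_M=+\frac{n-m}{n}M$, has fixed point $-R_G(s,m)-\frac bt+\frac{n-m}{tm}M$, and drops $-\psi^*(0)$ via $\psi^*(0)=-\psi(0)\ge0$. That is exactly what the corrected paper argument gives. You also substitute into the one-step inequality before taking expectations, which is cleaner than the paper's route and avoids the slip.

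Your proposed fix is sound. The reversed coupling is just the paper's hypothesis with $M$ replaced by $-M$, so it amounts to stating item~3 with $+\frac{n-m}{tm}M$ under Definition~\ref{def:gradient_coor_factor}. Do that.
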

The proof of this theorem is detailed in Appendix~\ref{appendix:proof_general_sgd_convergence}.

We now use the above theorem to derive the convergence of classical SGD under Lipschitz smoothness.
When $G(\theta,z)$ is Lipschitz smooth with constant $L$, by the definition of $\mathcal{H}(\Psi)$-smoothness, we have $\Psi(\cdot)=\frac{L}{2}\|\cdot\|_2^2$ and $\Psi^*(\cdot)=\frac{1}{2L}\|\cdot\|_2^2$.
According to Theorem~\ref{thm:general_sgd_convergence}, we obtain the following corollary.
\begin{corollary}
\label{cor:general_sgd_convergence}
 Let $n = |s|$, $m = |s_k|$, and $M$ be the gradient correlation factor. If $G(\theta,z)$ is Lipschitz smooth with constant $L$, applying generalized SGD yields the following convergence properties:

\begin{enumerate}
 \item \label{cor:general_sgd_convergence1} The number of iterations $T$ required for generalized SGD to achieve \begin{equation}
 \mathbb{E} \left[ \|\nabla_\theta G(\theta_k, s_k)\|_2^2 \right] \le \frac{\varepsilon^2}{2L}
 \end{equation}
 is $T = \mathcal{O}(\frac{2Ln}{m\varepsilon^2-2L(n-m)M})$.

 \item \label{cor:general_sgd_convergence2}Assume that $G(\theta)$ is also $\sigma$-strongly convex. Define the constant $t = \sigma/L$. If $t\le 1$, the number of iterations required for the SGD algorithm with batch size $m$ to reach the convergence condition
 \begin{equation}
 G(\theta_{k}, s)- G(s)_* \le \frac{\varepsilon^2}{2\sigma} -R_G(s,m)-\frac{n-m}{tm}M
 \end{equation}
 is $T = \mathcal{O}(\kappa \log \frac{2\sigma}{\varepsilon^2})$, where the condition number is defined as $\kappa = -\frac{1}{\log ( 1 - mt/n)}$.
\end{enumerate}
\end{corollary}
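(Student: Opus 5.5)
The corollary is obtained by specializing Theorem~\ref{thm:general_sgd_convergence} to quadratic energies. Lipschitz smoothness with constant $L$ is exactly $\mathcal{H}(\Psi)$-smoothness with $\Psi(\cdot)=\frac{L}{2}\|\cdot\|_2^2$. By item~\ref{prop:energy_fun2} of Proposition~\ref{prop:energy_fun}, taking $r_\Psi=2$, $a_\Psi=\sqrt{L}$ and $c_\Psi=0$, we get $\Psi^*(\nu)=\frac{1}{2L}\|\nu\|_2^2$ and $\nabla\Psi^*(\nu)=\nu/L$. Generalized SGD with learning rate $1$ is therefore classical SGD with step $1/L$. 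Similarly, $\sigma$-strong convexity gives $\psi(\cdot)=\frac{\sigma}{2}\|\cdot\|_2^2$, so $\psi^*(\nu)=\frac{1}{2\sigma}\|\nu\|_2^2$ and $\psi^*(0)=0$. I then need only substitute these into the three items of the theorem and check the bookkeeping.

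\textbf{Item~\ref{cor:general_sgd_convergence1}.}
\begin{itemize}
\item Apply item~\ref{thm:general_sgd_convergence1} of Theorem~\ref{thm:general_sgd_convergence} with $\Psi^*(\varepsilon)=\varepsilon^2/(2L)$, where $\varepsilon$ is read as a vector of norm $\varepsilon$, which is legitimate because $\Psi^*$ is radial.
\item The criterion $\Psi^*(\nabla_\theta G(\theta_k,s_k))\le\Psi^*(\varepsilon)$ becomes $\frac{1}{2L}\|\nabla_\theta G(\theta_k,s_k)\|_2^2\le \frac{\varepsilon^2}{2L}$. It is stated for the maximum over $s_k$, so it also holds in expectation, which gives the stated expected-squared-gradient form.
\item The iteration count is $\mathcal{O}\bigl(n/(m\varepsilon^2/(2L)-(n-m)M)\bigr)$. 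Multiplying numerator and denominator by $2L$ gives $\mathcal{O}\bigl(2Ln/(m\varepsilon^2-2L(n-m)M)\bigr)$, under the positivity proviso $m\varepsilon^2>2L(n-m)M$.
\end{itemize}

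\textbf{Item~\ref{cor:general_sgd_convergence2}.} This requires the hypothesis $\Psi^*(\mu)\ge t\psi^*(\mu)+b$ of item~\ref{thm:general_sgd_convergence3}.
\begin{itemize}
\item We have $\Psi^*(\mu)=\frac{1}{2L}\|\mu\|_2^2=\frac{\sigma}{L}\cdot\frac{1}{2\sigma}\|\mu\|_2^2$, so the hypothesis holds with equality for $t=\sigma/L$ and $b=0$.
\item One needs $0<t<1$. The edge case $t=1$ (that is, $\sigma=L$, so $G$ is exactly quadratic) needs a separate remark: either pick any $t'<1$ with $b=0$, which still satisfies the inequality because $\psi^*\ge 0$, or note that $mt/n<1$ whenever $m<n$. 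The well-definedness of $\kappa$ only uses $mt/n<1$.
\item With $b=0$ and $\psi^*(\varepsilon)-\psi^*(0)=\varepsilon^2/(2\sigma)$, the target bound becomes $\frac{\varepsilon^2}{2\sigma}-R_G(s,m)-\frac{n-m}{tm}M$.
\item The rate becomes $\mathcal{O}\bigl(\kappa\log\frac{2\sigma}{\varepsilon^2}\bigr)$ with $\kappa=-1/\log(1-mt/n)$, as claimed.
\end{itemize}

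\textbf{Main obstacle.} The delicate point is the hypotheses, not the algebra. Item~\ref{thm:general_sgd_convergence3} assumes $G(\theta,s_k)$ is jointly $\mathcal{H}(\psi)$-convex and $\mathcal{H}(\Psi)$-smooth, whereas the corollary assumes strong convexity of $G(\theta)$ and per-sample Lipschitz smoothness. I would bridge this with item~\ref{prop:gen_prop_dataset} of Proposition~\ref{prop:gen_prop}, which transfers per-sample properties to any averaged batch objective, together with the reading that strong convexity is meant per sample. The other subtlety is that the theorem's bound holds in expectation over $s_k$, and I would state the corollary's conclusion in that same expected sense.
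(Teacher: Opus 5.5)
Your route is the paper's. The paper gives no argument beyond substituting $\Psi=\frac{L}{2}\|\cdot\|_2^2$ and $\Psi^*=\frac{1}{2L}\|\cdot\|_2^2$ into Theorem~\ref{thm:general_sgd_convergence}, together with $\psi^*=\frac{1}{2\sigma}\|\cdot\|_2^2$, $t=\sigma/L$ and $b=0$ for item~2. Your handling of three points is more careful than the paper's: the $t=1$ edge case, the per-sample-to-batch transfer via item~\ref{prop:gen_prop_dataset} of Proposition~\ref{prop:gen_prop}, and keeping the expectation over $s_k$ in item~2.

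One bookkeeping claim in item~1 is wrong, however. Your substitution yields $\frac{1}{2L}\mathbb{E}\|\nabla_\theta G(\theta_k,s_k)\|_2^2\le\frac{\varepsilon^2}{2L}$, that is, $\mathbb{E}\|\nabla_\theta G(\theta_k,s_k)\|_2^2\le\varepsilon^2$. This is not the printed bound $\mathbb{E}\|\nabla_\theta G(\theta_k,s_k)\|_2^2\le\varepsilon^2/(2L)$, so your inequality does not ``give the stated form.''

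To obtain the literal printed bound from the theorem, you would need $\Psi^*(\varepsilon')=\varepsilon^2/(4L^2)$. That gives $T=\mathcal{O}\bigl(4L^2n/(m\varepsilon^2-4L^2(n-m)M)\bigr)$, not the stated $T$. The printed criterion and the printed $T$ are consistent only if the criterion is read as $\mathbb{E}\,\Psi^*(\nabla_\theta G(\theta_k,s_k))\le\Psi^*(\varepsilon)$. In other words, the statement appears to have dropped a factor $\frac{1}{2L}$ on the left-hand side. You should flag this explicitly rather than assert that the two bounds match.
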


We now analyze Theorem~\ref{thm:general_sgd_convergence}.
Theorem~\ref{thm:general_sgd_convergence} implies that if $\max_{s_k\subseteq s}\Psi^*(\nabla_\theta G(\theta_k, s_k)) \le \Psi^*(\varepsilon)$ is achievable by SGD, then $\varepsilon$ must satisfy:
\begin{equation}
\begin{aligned}
 \Psi^*(\varepsilon)&\ge \frac{n-m}{m}M,\\
 \psi^*(\varepsilon)&\ge R_G(s,m).
\end{aligned}
\end{equation}
First, this indicates that the gradient energy does not necessarily converge to zero; $\varepsilon$ is required to satisfy the above conditions. These two conditions describe the influence of model structure and capacity on the achievable optimum in optimization: (1) A smaller gradient correlation factor $M$ facilitates convergence to a better gradient energy level; (2) A larger model capacity facilitates convergence toward the global optimum.

The effect of batch size on the final convergence result is complex: a larger $m$ (typically less than $\lfloor n/2\rfloor$) is more favorable for reducing both $\frac{n-m}{m}M$ and $R_G(s,m)$. However, notably, the tightest upper bound on $G(\theta_{k}, s) - G(s)_*$ is always $\psi^*(0) - R_G(s,m)$.
When $\psi^*(0)$ is large (i.e., $\psi(0)$ is small), meaning that $G(\theta, s)$ is highly non-convex, this upper bound becomes too loose to enable convergence toward the global optimum. This is consistent with the issue encountered in classical convex optimization, where convexity is a prerequisite for GD/SGD to find the global optimum. 
As a result, large $\psi^*(0)$ values render the existing bounds too loose to precisely characterize the non-convex optimization mechanism of SGD training in DNNs. This constitutes the core motivation for formulating the composite optimization problem introduced in the subsequent section.

\section{Composite Optimization Problem}
\label{sec:com_opt}
This section provides a theoretical solution method for a special class of non-convex and non-smooth optimization problems represented by DNN training.
The training objective of DNNs typically possesses a composite structure: a nonlinear mapping (e.g., a neural network) maps the input through parameters $\theta$ to produce a predicted output, which is then evaluated by a task-specific loss function. This two-tier ``model--loss'' structure naturally forms a composite function. However, traditional optimization theory is primarily designed for smooth or convex functions and cannot directly capture the complex geometric properties of such composite objectives.
To more accurately model the optimization problem of DNNs, this section formally defines composite optimization problems based on the generalized convexity and generalized smoothness introduced earlier, and presents an optimization mechanism based on gradient energy and the induced norms of the model's Jacobian matrix.

 \subsection{Definitions}
\label{subsec:model:compose:def}
We define the composite optimization problem as follows:
\begin{definition}[Composite Optimization Problem]
\label{def:d_s_com_opt}
The deterministic composite optimization problem is defined as follows:
 \begin{equation}
 \label{eq:obj_st}
 \min_{\theta \in \mathbb{R}^m} G(\theta),
 \end{equation}
 where $G(\theta) = F(h(\theta))$, and for almost all samples $z$, the following conditions hold:
 \begin{itemize}
 \item The outer function $F:\mathbb{R}^d\to \bar{\mathbb{R}}$ maintains $\mathcal{H}(\phi)$-convexity and $\mathcal{H}(\Phi)$-smoothness;
 \item The composite function $G(\theta)$ is $\mathcal{H}(\Psi)$-smooth with respect to $\theta$;
 \end{itemize}

The stochastic composite optimization problem is defined as follows:
 \begin{equation}
 \label{eq:obj_st_stochastic}
 \min_{\theta \in \mathbb{R}^m} \mathbb{E}_{Z\sim q} \big[ G(\theta,Z) \big],
 \end{equation}
 where $G(\theta,z) = F(h_\theta(z))$, and for almost all samples $z$, the following conditions hold:
 \begin{itemize}
 \item The outer function $F:\mathbb{R}^d\to \bar{\mathbb{R}}$ maintains $\mathcal{H}(\phi)$-convexity and $\mathcal{H}(\Phi)$-smoothness;
 \item The composite function $G(\theta,z)$ is $\mathcal{H}(\Psi)$-smooth with respect to $\theta$;
 \end{itemize}
For consistency with the deterministic case, we also use $G(\theta,s)$ to denote the stochastic composite optimization problem, where $s\sim q^n$.
\end{definition}
Since the deterministic composite optimization problem can be viewed as a special case of the stochastic composite optimization problem when $q$ is a Dirac delta at $z$ or $s=\{z\}$, we focus on the stochastic case for brevity.

\begin{definition}[Jacobian Matrix and Its Induced Norm]
\label{def:induced_matrix_norm}
For a stochastic composite optimization problem, let $h_\theta(z): \mathbb{R}^m \to \mathbb{R}^d$ be a differentiable mapping parameterized by $\theta \in \mathbb{R}^m$. The Jacobian matrix of $h_\theta(z)$ with respect to $\theta$ is denoted by $\mathbf{J}_\theta h_\theta(z) \in \mathbb{R}^{d \times m}$, and its induced norm is $\|\mathbf{J}_\theta h_\theta(z)\|$.

By Lemma~\ref{lem:induced_mat_norm}, the lower induced norm of a full-column-rank Jacobian (i.e., $d \ge m$) satisfies
\begin{equation}
\label{eq:lower_ind_norm}
m(\mathbf{J}_\theta h_\theta(z)) = \frac{1}{\|\mathbf{J}_\theta h_\theta(z)^+\|},
\end{equation}
where $(\cdot)^+$ denotes the Moore–Penrose pseudoinverse.

For a dataset $s$, the surrogate upper and lower induced norms are defined as
\begin{align}
\|\mathbf{J}_\theta h(\theta,s)\| &:= \max_{z \in s} \|\mathbf{J}_\theta h_\theta(z)\|, \\
m(\mathbf{J}_\theta h(\theta,s)) &:= \min_{z \in s} m(\mathbf{J}_\theta h_\theta(z))
= \min_{z \in s} \frac{1}{\|\mathbf{J}_\theta h_\theta(z)^+\|}.
\end{align}

In the special case where the norm is the Euclidean ($L_2$) induced norm, the above quantities reduce to the largest and smallest singular values:
\begin{align}
\|\mathbf{J}_\theta h_\theta(z)\|_2 &= \sigma_{\max}(\mathbf{J}_\theta h_\theta(z)), \\
m(\mathbf{J}_\theta h_\theta(z)) &= \sigma_{\min}(\mathbf{J}_\theta h_\theta(z)), \\
\|\mathbf{J}_\theta h(\theta,s)\|_2 &= \max_{z \in s} \sigma_{\max}(\mathbf{J}_\theta h_\theta(z)), \\
m(\mathbf{J}_\theta h(\theta,s)) &= \min_{z \in s} \sigma_{\min}(\mathbf{J}_\theta h_\theta(z)).
\end{align}
\end{definition}

\subsection{Convergence Analysis}

\begin{theorem}
 \label{thm:com_general_bound}

For the stochastic composite optimization objective $G(\theta,s)$ (Definition~\ref{def:d_s_com_opt}), the following statements hold:
\begin{enumerate}
 \item \label{thm:indiv_sample_bound} For any sample $z$:
 \begin{equation*}
 \Phi^*(\frac{\nabla_\theta G(\theta, z)}{\|\mathbf{J}_\theta h_\theta(z)\|}) \le G(\theta, z) - G(z)_* \le \phi^*(\nabla_\theta G(\theta, z) \|\mathbf{J}_\theta h_\theta(z)^+\|).
 \end{equation*}
 \item \label{thm:dataset_bound} For the dataset $s$:
 \begin{equation*}
 \mathbb{E}\Phi^*(\frac{\nabla_\theta G(\theta, Z)}{\|\mathbf{J}_\theta h_\theta(Z)\|}) \le G(\theta, s) - G(s)_* + R_G(s,1) \le \mathbb{E}\phi^*(\nabla_\theta G(\theta, Z)\|\mathbf{J}_\theta h_\theta(Z)^+\|).
 \end{equation*}
\end{enumerate}
\end{theorem}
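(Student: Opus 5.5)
The plan is to reduce everything to Theorem~\ref{thm:h_bound} applied to the outer function $F$ at the point $u = h_\theta(z)$, and then transfer the gradient of $F$ back to the gradient of $G$ through the chain rule. Write $u=h_\theta(z)$ and $J=\mathbf{J}_\theta h_\theta(z)$. Under the paper's denominator layout, $\nabla_\theta G(\theta,z) = J\,\nabla F(u)$, up to the transpose convention; I will fix the orientation so that $J$ acts from $\mathbb{R}^d$ to $\mathbb{R}^m$ here. Because $F$ is both $\mathcal{H}(\phi)$-convex and $\mathcal{H}(\Phi)$-smooth, item~\ref{thm:h_bound:3} of Theorem~\ref{thm:h_bound} gives
\[
\Phi^*(\nabla F(u)) \le F(u) - F_* \le \phi^*(\nabla F(u)).
\]
I then identify $G(z)_*$ with $F_*$. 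Strictly speaking, $F_* \le G(z)_*$. To get equality I will assume, as the setting implicitly does, that the model can fit a single sample, i.e.\ that $h_\theta(z)$ can reach a minimizer of $F$; if this is not available, I will instead note which side of the inequality survives.

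The second step converts $\nabla F(u)$ into $\nabla_\theta G$ using the induced norms. From $\nabla_\theta G = J\nabla F$ we get $\|\nabla_\theta G\| \le \|J\|\,\|\nabla F\|$, so $\|\nabla F\| \ge \|\nabla_\theta G\|/\|J\|$. In the other direction, when $J$ has full column rank we have $\nabla F = J^+\nabla_\theta G$, so $\|\nabla F\| \le \|J^+\|\,\|\nabla_\theta G\|$, which is exactly the lower induced norm relation \eqref{eq:lower_ind_norm}. Since $\Phi^*$ and $\phi^*$ are energy functions, Proposition~\ref{prop:energy_fun}(1) lets me write them radially as $\Phi^*_\circ(\|\cdot\|_*)$ and $\phi^*_\circ(\|\cdot\|_*)$. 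Proposition~\ref{prop:energy_fun}(3) then says these are nondecreasing in the norm along rays. Together these monotonicity facts yield item~\ref{thm:indiv_sample_bound}, where the scalar scaling $\nabla_\theta G\cdot\|J^+\|$ is read as a vector whose norm dominates $\|\nabla F\|$.

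The main obstacle sits in this step: the vectors live in different spaces ($\mathbb{R}^d$ versus $\mathbb{R}^m$), and the norms must match. I will handle this by taking the energy functions to be radial in the same norm family, preferably $L_2$ as the paper recommends, so that comparing norms is enough. I will also compare the radial profiles $\Phi^*_\circ$ and $\phi^*_\circ$ directly, which is legitimate because monotonicity only concerns the scalar argument. If $J$ is not full column rank, the lower bound still holds, and the upper bound holds with the pseudoinverse acting on the range.

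For item~\ref{thm:dataset_bound}, I will average the per-sample inequality over $Z\sim\hat q$. The left side becomes $\mathbb{E}\Phi^*(\cdot)$, the right side becomes $\mathbb{E}\phi^*(\cdot)$, and the middle becomes $G(\theta,s) - \mathbb{E}\,G(Z)_*$. By Definition~\ref{def:model_risk} with $m=1$, $\mathbb{E}\,G(Z)_* = G(s)_* - R_G(s,1)$. Substituting gives exactly $G(\theta,s)-G(s)_*+R_G(s,1)$ in the middle, which completes the proof.
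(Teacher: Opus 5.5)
Your route is the paper's: item~3 of Theorem~\ref{thm:h_bound} applied to the outer $F$ at $u=h_\theta(z)$, the chain rule $\nabla_\theta G=J\nabla F$ sandwiched by $\|J\|$ and $\|J^+\|$, radial monotonicity of $\Phi^*,\phi^*$ (Proposition~\ref{prop:energy_fun}, item~3), and averaging with $\mathbb{E}\,G(Z)_*=G(s)_*-R_G(s,1)$.

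\textbf{Your caveat about $G(z)_*$ is justified.} The paper's proof passes over this point. It writes $\Phi^*(\nabla_h F)\le G(\theta,z)-G(z)_*$ directly. But Theorem~\ref{thm:h_bound} only controls $F(u)-F_*$ with $F_*:=\min_u F(u)$. Since $G(z)_*\ge F_*$, the upper bounds hold unconditionally. The lower bounds, however, genuinely need your assumption $\min_\theta F(h_\theta(z))=F_*$. Counterexample: take $F(u)=u^2/2$, so $\Phi^*=\phi^*=\tfrac12|\cdot|^2$, and $h_\theta=2+\sin\theta$, so $G$ is Lipschitz smooth. At $\theta=0$ the left side equals $2$, while $G(0)-G(z)_*=3/2$.

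\textbf{Your rank-deficient aside is wrong and should be dropped.} When $J$ is rank deficient, $J^+\nabla_\theta G=J^+J\nabla F$ is only the projection of $\nabla F$ onto $(\ker J)^\perp$. It therefore does not bound $\|\nabla F\|$, and the upper bound can fail even when single-sample fitting holds. Example: $F(u)=\tfrac12(u-1)^2$ and $h_\theta=\sin\theta$, at $\theta=-\pi/2$. There $J=0$ and $\nabla_\theta G=0$, so the right side is $\phi^*(0)=0$, while $G(\theta)-G(z)_*=2$. Full column rank, as assumed in Definition~\ref{def:induced_matrix_norm}, is genuinely required for the upper bound.
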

The proof can be found in Appendix~\ref{appendix:proof_com_general_bound}.

When the energy function is based on the $L_2$-norm, the induced norm and the lower induced norm correspond to the largest and smallest singular values of the matrix, respectively. Therefore, the following corollary follows from the above theorem.

\begin{corollary}
 \label{cor:l2_com_general_bound}
For the stochastic composite optimization objective $G(\theta,s)$ (Definition~\ref{def:d_s_com_opt}), if the energy function is based on the $L_2$-norm, the following statements hold:
\begin{enumerate}
 \item \label{cor:indiv_sample_bound} For any sample $z$:
 \begin{equation*}
 \Phi^*\!\left(\frac{\nabla_\theta G(\theta, z)}{\sigma_{\max}(\mathbf{J}_\theta h_\theta(z))}\right) \le G(\theta, z) - G(z)_* \le \phi^*\!\left(\frac{\nabla_\theta G(\theta, z)}{\sigma_{\min}(\mathbf{J}_\theta h_\theta(z))}\right).
 \end{equation*}
 \item \label{cor:dataset_bound} For the dataset $s$:
 \begin{equation*}
 \mathbb{E}\Phi^*\!\left(\frac{\nabla_\theta G(\theta, Z)}{\sigma_{\max}(\mathbf{J}_\theta h_\theta(Z))}\right) \le G(\theta, s) - G(s)_* +R_G(s,1)\le \mathbb{E}\phi^*\!\left(\frac{\nabla_\theta G(\theta, Z)}{\sigma_{\min}(\mathbf{J}_\theta h_\theta(Z))}\right).
 \end{equation*}
\end{enumerate}
\end{corollary}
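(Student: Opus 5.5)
The plan is to derive Corollary~\ref{cor:l2_com_general_bound} directly from Theorem~\ref{thm:com_general_bound} by specializing the norm to $\|\cdot\|_2$. I take that theorem's two-sided bounds as given, for a single sample and for the dataset. The only work is to identify the induced-norm quantities with singular values and to move the resulting scalars inside the arguments of $\Phi^*$ and $\phi^*$.

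\textbf{Step 1 (upper induced norm).} For the Euclidean vector norm, the induced matrix norm is the spectral norm. Hence $\|\mathbf{J}_\theta h_\theta(z)\|_2=\sigma_{\max}(\mathbf{J}_\theta h_\theta(z))$, exactly as recorded in Definition~\ref{def:induced_matrix_norm}. Substituting this into the left-hand side of item~\ref{thm:indiv_sample_bound} of Theorem~\ref{thm:com_general_bound} gives the stated lower bound $\Phi^*\bigl(\nabla_\theta G(\theta,z)/\sigma_{\max}\bigr)$ immediately.

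\textbf{Step 2 (pseudoinverse norm).} For the upper bound I need $\|\mathbf{J}_\theta h_\theta(z)^+\|_2=1/\sigma_{\min}(\mathbf{J}_\theta h_\theta(z))$. Write the SVD $\mathbf{J}=U\Sigma V^\top$. Then $\mathbf{J}^+=V\Sigma^+U^\top$, and the singular values of $\mathbf{J}^+$ are the reciprocals of the nonzero singular values of $\mathbf{J}$. The largest of these reciprocals is $1/\sigma_{\min}$, where $\sigma_{\min}$ is understood as the smallest nonzero (equivalently, under the full-rank convention of Definition~\ref{def:induced_matrix_norm}, the smallest) singular value. This is the same content as \eqref{eq:lower_ind_norm} combined with Lemma~\ref{lem:induced_mat_norm}. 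Substituting gives $\phi^*\bigl(\nabla_\theta G(\theta,z)\,\|\mathbf{J}^+\|_2\bigr)=\phi^*\bigl(\nabla_\theta G(\theta,z)/\sigma_{\min}\bigr)$, which establishes item~\ref{cor:indiv_sample_bound}.

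\textbf{Step 3 (dataset version).} Item~\ref{cor:dataset_bound} follows by applying the same two identifications pointwise for each $z\in s$ inside the expectations of item~\ref{thm:dataset_bound} of Theorem~\ref{thm:com_general_bound}. No additional inequality is needed, since the middle term $G(\theta,s)-G(s)_*+R_G(s,1)$ is unchanged.

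\textbf{Main obstacle.} The only delicate point is the rank convention in Step~2. If the Jacobian is rank-deficient, the literal smallest singular value is zero, while $\|\mathbf{J}^+\|_2$ equals the reciprocal of the smallest \emph{nonzero} singular value. I would state explicitly that $\sigma_{\min}$ is interpreted in the sense of Definition~\ref{def:induced_matrix_norm}, that is, as $1/\|\mathbf{J}^+\|_2$. If the literal convention $\sigma_{\min}=0$ is used instead, the upper bound reads $\phi^*(\infty\cdot\nabla)$, which is trivially valid with value $+\infty$ when the gradient is nonzero. The corollary therefore holds under either reading.
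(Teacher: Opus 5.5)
Your Steps 1--3 are exactly the paper's argument. The paper obtains this corollary in one line from Theorem~\ref{thm:com_general_bound} via $\|\mathbf{J}_\theta h_\theta(z)\|_2=\sigma_{\max}$ and $\|\mathbf{J}_\theta h_\theta(z)^+\|_2=1/\sigma_{\min}$. Definition~\ref{def:induced_matrix_norm} and Lemma~\ref{lem:induced_mat_norm} state these identities for a full-column-rank (injective) Jacobian, and under that hypothesis your proof is complete.

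Your closing ``main obstacle'' paragraph, however, reaches a false conclusion. For a rank-deficient Jacobian, setting $\sigma_{\min}:=1/\|\mathbf{J}^+\|_2$ (the smallest \emph{nonzero} singular value) does not give a valid bound. The upper bound in Theorem~\ref{thm:com_general_bound} rests on $\|\nabla_h F\|_2\le\|\mathbf{J}^+\|_2\,\|\mathbf{J}\nabla_h F\|_2$, and this fails as soon as $\nabla_h F$ has a component in $\ker\mathbf{J}$.

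Here is a concrete counterexample. Take $\theta\in\mathbb{R}^2$, $h_\theta=(\theta_1,\sin\theta_2)$, and the MSE outer function $F(h)=\tfrac12\|h-(0,-1)\|_2^2$. Then $\phi^*=\tfrac12\|\cdot\|_2^2$, and $G$ is even globally Lipschitz smooth in $\theta$. At $\theta=(0,\pi/2)$:
\begin{itemize}
\item $\mathbf{J}=\mathrm{diag}(1,0)$ and $\|\mathbf{J}^+\|_2=1$;
\item $\nabla_h F=(0,2)\in\ker\mathbf{J}$, hence $\nabla_\theta G=0$;
\item your reading gives $G(\theta,z)-G(z)_*\le\phi^*(0)=0$;
\item in fact $G(\theta,z)=2$ and $G(z)_*=0$, attained at $\theta=(0,-\pi/2)$.
\end{itemize}

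The literal reading $\sigma_{\min}=0$ does not rescue this either. It produces $\phi^*(0/0)$ exactly in this situation (vanishing parameter gradient with nonzero output gradient), which your ``$+\infty$ when the gradient is nonzero'' remark leaves uncovered.

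The correct treatment is to keep the injectivity assumption of Definition~\ref{def:induced_matrix_norm}. In the paper's denominator layout, $\mathbf{J}$ must be injective as a map $\mathbb{R}^d\to\mathbb{R}^m$, which requires $m\ge d$. When $\mathbf{J}$ is rank-deficient, the upper bound should be regarded as vacuous, not as valid with $1/\|\mathbf{J}^+\|_2$.
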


The above theorem indicates that $G(\theta, s)$ can be optimized by reducing the gradient energy and increasing the induced norm of the model's Jacobian matrix. Since $G(\theta,z)$ is $\mathcal{H}(\Psi)$-smooth, Theorem~\ref{thm:general_sgd_convergence} implies that generalized SGD reduces the gradient energy, consequently reducing $G(\theta, s)$. The fact that generalized SGD achieves a reduction in gradient energy without directly reducing the induced norm provides a theoretical basis for solving composite optimization problems.

According to Theorem~\ref{thm:general_sgd_convergence}, under the $\mathcal{H}(\Psi)$-smoothness setting, the final convergence result of generalized SGD is constrained by the batch size $m$, the gradient correlation factor $M$, and the model capacity risk $R_G(s, m)$.
Below, we analyze the impact of these factors on the convergence speed and solution quality for the composite optimization problem.
\begin{enumerate}
 \item \textbf{Impact of batch size.} According to Theorem~\ref{thm:general_sgd_convergence}, increasing $m$ helps reduce the gradient correlation factor $M$ and the model capacity risk $R_G(s, m)$, thereby enabling $\mathbb{E}\Psi^*(\nabla_\theta G(\theta,s_k))$ to converge to a smaller value at a faster rate. By Theorem~\ref{thm:com_general_bound}, the suboptimality gap in the composite optimization problem is controlled by the gradient energy $\mathbb{E}\Psi^*(\nabla_\theta G(\theta,Z))$. Since $\mathbb{E}[\Psi^*(\nabla_\theta G(\theta_k, s_k))]$ is itself a mini-batch estimate of the gradient energy, smaller batch sizes allow finer control over it. Thus, the choice of batch size in SGD represents a fundamental tradeoff between two key objectives: (1) small batches enhance the precision of gradient energy control (enabling convergence to better solutions) and (2) moderately larger batches improve convergence speed. Therefore, for generalized SGD, the preferred strategy is to first use a larger $m$ to quickly converge to a reasonably good solution, and then reduce $m$ to further decrease the gradient energy and obtain a better solution. This is consistent with the batch size scheduling strategy commonly employed in practice for training DNNs.
 \item \textbf{Impact of the gradient correlation factor $M$.} A smaller $M$ yields a smaller achievable upper bound for $\mathbb{E}[\Psi^*(\nabla_\theta G(\theta_k, s_k))]$ and a faster convergence rate. $M$ characterizes the coupling among model parameters; therefore, for a fixed batch size, lower parameter coupling is more favorable for convergence to a better solution. Since low coupling is equivalent to parameter sparsity, this provides a novel perspective on how model architecture and parameterization influence a model's capacity to fit the data, and it theoretically links model sparsity to trainability (optimizability).
\item \textbf{Impact of model capacity risk.} By Theorem~\ref{thm:com_general_bound}, the suboptimality gap in the composite optimization problem is controlled by $R_G(s,1)$. A larger model capacity risk $R_G(s,1)$ tightens the upper bound on $G(\theta, s) - G(s)_*$, meaning that for the same gradient energy level, the distance to the global optimum is more tightly constrained. However, a larger $R_G(s,1)$ also implies that the global optimum $G(s)_*$ is further from the per-sample optima, indicating greater difficulty in fitting all samples simultaneously. Thus, while a larger model capacity risk facilitates convergence to the (worse) global optimum, it reflects a fundamental limitation of model capacity rather than an advantage.
\end{enumerate}

As shown in Theorem~\ref{thm:com_general_bound}, the induced norms of the model's Jacobian matrix are another key factor controlling the optimization objective. Since $\nabla_\theta G(\theta, z) = \mathbf{J}_\theta h_\theta(z) \nabla_h F(h_\theta(z))$, the gradient energy and the induced norms are coupled: as the gradient energy decreases, the loss gradient $\nabla_h F$ diminishes, which may in turn affect the conditioning of the Jacobian matrix. Therefore, while reducing the gradient energy, it is essential to maintain or control the magnitude of the induced norms of the model's Jacobian matrix.

\begin{itemize}
 \item \textbf{Direct optimization.} Increasing these induced norms directly can thus contribute to further reduction of the objective function. Notably, these induced norms are computed solely from parameter gradients and do not require detailed knowledge of the model architecture or higher-order derivatives such as the Hessian. However, computing the induced norms of a matrix, for instance, computing the induced norms via singular value decomposition, costs $\mathcal{O}(\min\{md^2,dm^2\})$ for an $m\times d$ matrix, making direct computation of the extreme singular values impractical for large-scale neural networks.

 \item \textbf{Structural design and control.} As a more practical alternative, the method proposed to solve composite optimization problems only requires that the induced norm be nonzero, and that larger values are more favorable for optimization. This condition can be satisfied by controlling or designing the functional structure. A typical approach is to use skip connections~\citep{He2015DeepRL}, i.e., to set $f_\theta(x)=x+g_\theta(x)$ with $\|\nabla_x g_\theta(x)\|_2^2 \ll 1$. In this construction, the singular values of the Jacobian matrix are close to $1$, ensuring that $\|\nabla_\theta f_\theta(x)\|$ remains bounded away from zero. In the following section, we demonstrate that the skip connection technique is a concrete realization of this design principle in deep learning.
\end{itemize}

\subsection{Applications in Explaining DNN Trainability}
\label{sec:app}

In this section, we analyze the optimization mechanisms underlying the trainability of DNNs. 

\subsubsection{Bounds on Empirical Risk}

\label{subsubsec:def}
In the field of deep learning, training DNNs is formulated as an ERM problem.
In this section, we reformulate the ERM problem by incorporating the concepts of composite optimization problems.
This allows us to provide theoretical support from our framework for gradient descent--type algorithms in optimizing DNN training.

We define the ERM problem and the constraints considered in this paper as follows:
\begin{definition}[ERM problem]
 \label{def:erm_opt}
 Let $\ell$ be the loss function. The ERM problem on the dataset $s=\{(X^{(i)},Y^{(i)})\}_{i=1}^n$ is defined as:
 \begin{equation}\label{eq:erm_obj}
 \begin{aligned}
 \min_\theta \mathcal{L}(\theta, s)&=\mathbb{E}_{XY}\ell(f_\theta(x),Y)=\mathbb{E}_Z\mathcal{L}(\theta,Z).
 \end{aligned}
 \end{equation}

 The loss function $\ell(\cdot,\cdot)$ and the model $f_\theta(x)$ satisfy the following conditions:
 \begin{itemize}
 \item \textbf{Condition 1:}~\label{con:1} $\ell(f_\theta(x), y)$ (i.e., $\mathcal{L}(\theta,z)$) is $\mathcal{H}(\phi)$-convex and $\mathcal{H}(\Phi)$-smooth with respect to the model's output $f_\theta(x)$, where $\phi(0)$ is close to $0$, and both $\phi$ and $\Phi$ are $L_2$-norm energy functions.
 \item \textbf{Condition 2:}~\label{con:2} $\ell(f_\theta(x), y)$ is $\mathcal{H}(\Psi)$-smooth with respect to $\theta$, where $\Psi$ is an $L_2$-norm energy function.

 \end{itemize}
Define $\mathcal{L}(z)_* = \min_{\theta} \mathcal{L}(\theta, z)$ and $\mathcal{L}(s)_* = \min_{\theta} \mathcal{L}(\theta, s)$.
\end{definition}
Since the generalized convexity and generalized smoothness assumptions hold for common objective functions, we focus on interpreting the requirement in Condition 1 that $\phi(0)$ is close to $0$. This condition implies that the loss function is convex or nearly convex with respect to the model prediction. Indeed, practical supervised‑learning loss functions are typically constructed to be convex (or strictly convex) in the model’s output prediction to facilitate tractable optimization.
Let $\hat{y}=f_\theta(x)$ denote the model prediction. Several standard loss functions satisfying this convexity property include MSE, Mean Absolute Error (MAE), Binary CrossEntropy (Log Loss), Hinge Loss, and Softmax CrossEntropy Loss. 
Accordingly, $\phi(0)$ stays close to $0$ for widely‑used deep‑learning losses, meaning the conditions within our ERM formulation are reasonable and align with practical settings.

In deep learning, ERM is a prototypical composite optimization problem involving two fundamental components: the loss function and the model output. In the following sections, we will leverage the tools introduced in this section to characterize both the loss function and the model architecture. This allows us to derive a principled understanding of the optimization mechanisms underlying deep learning. According to Theorem~\ref{thm:com_general_bound}, the following corollary establishes bounds that characterize the global optimum of the ERM problem defined in Definition~\ref{def:erm_opt}.

\begin{corollary}

 \label{cor:erm_bound}
For the ERM problem, the following results hold:
\begin{enumerate}
 \item \label{cor:erm_bound1} For any sample $z$:
 \begin{equation*}
 \Phi^*\!\left(\frac{\nabla_\theta \mathcal{L}(\theta, z)}{\sigma_{\max}(\mathbf{J}_\theta f_\theta(z))}\right) \le \mathcal{L}(\theta, z) - \mathcal{L}(z)_* \le \phi^*\!\left(\frac{\nabla_\theta \mathcal{L}(\theta, z)}{\sigma_{\min}(\mathbf{J}_\theta f_\theta(z))}\right).
 \end{equation*}
 \item \label{cor:erm_bound2} For the dataset $s$:
 \begin{equation*}
 \mathbb{E}\Phi^*\!\left(\frac{\nabla_\theta \mathcal{L}(\theta, Z)}{\sigma_{\max}(\mathbf{J}_\theta f_\theta(Z))}\right) \le \mathcal{L}(\theta, s) - \mathcal{L}(s)_* + R_{\mathcal{L}}(s,1) \le \mathbb{E}\phi^*\!\left(\frac{\nabla_\theta \mathcal{L}(\theta, Z)}{\sigma_{\min}(\mathbf{J}_\theta f_\theta(Z))}\right).
 \end{equation*}
\end{enumerate}
\end{corollary}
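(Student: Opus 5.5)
This corollary is a direct specialization of Theorem~\ref{thm:com_general_bound}, passing through Corollary~\ref{cor:l2_com_general_bound}. The plan is to cast the ERM problem of Definition~\ref{def:erm_opt} as a stochastic composite optimization problem in the sense of Definition~\ref{def:d_s_com_opt}. Take the outer function $F(\cdot)=\ell(\cdot,y)$ and the inner map $h_\theta(z)=f_\theta(x)$, so that $G(\theta,z)=\mathcal{L}(\theta,z)$. Condition~1 supplies $\mathcal{H}(\phi)$-convexity and $\mathcal{H}(\Phi)$-smoothness of $F$ with respect to the model output. Condition~2 supplies $\mathcal{H}(\Psi)$-smoothness of the composite with respect to $\theta$. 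Hence every hypothesis of Definition~\ref{def:d_s_com_opt} holds, with $G(z)_*=\mathcal{L}(z)_*$, $G(s)_*=\mathcal{L}(s)_*$, and $R_G=R_{\mathcal{L}}$.

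Next I use that $\phi,\Phi$ are $L_2$-norm energy functions. Then the relevant induced norms are Euclidean operator norms, and by Definition~\ref{def:induced_matrix_norm} we have $\|\mathbf{J}_\theta f_\theta(z)\|_2=\sigma_{\max}$ and $\|\mathbf{J}_\theta f_\theta(z)^+\|_2=1/\sigma_{\min}$ (Lemma~\ref{lem:induced_mat_norm}). Substituting these into item~\ref{thm:indiv_sample_bound} of Theorem~\ref{thm:com_general_bound} gives the per-sample statement, which is exactly Corollary~\ref{cor:l2_com_general_bound}, item~\ref{cor:indiv_sample_bound}, rewritten in ERM notation. Substituting them into item~\ref{thm:dataset_bound} gives the dataset statement. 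There the expectation is over $Z\sim\hat q$, and $R_{\mathcal{L}}(s,1)=\mathcal{L}(s)_*-\mathbb{E}\,\mathcal{L}(Z)_*$ appears precisely as it does in Theorem~\ref{thm:com_general_bound}.

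The only point needing care is where the identification $1/\sigma_{\min}=\|\mathbf{J}^+\|_2$ is valid. This requires the full-column-rank setting presumed in Definition~\ref{def:induced_matrix_norm}; otherwise $\sigma_{\min}=0$, the upper bound is vacuous (read as $+\infty$), and the inequality holds trivially. I would state this convention explicitly. A second point is the denominator-layout convention, under which $\mathbf{J}_\theta f_\theta\in\mathbb{R}^{m\times d}$. Singular values are invariant under transposition, so the extreme singular values coincide with those of $\mathbf{J}_\theta h_\theta(z)\in\mathbb{R}^{d\times m}$ in Definition~\ref{def:induced_matrix_norm}. No new analysis is needed beyond these bookkeeping checks.
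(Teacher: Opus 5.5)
Your proposal matches the paper's route. The paper gives no separate argument: it reads the ERM problem as the stochastic composite problem of Definition~\ref{def:d_s_com_opt}, with outer function $\ell(\cdot,y)$ and inner map $f_\theta(x)$, and specializes Theorem~\ref{thm:com_general_bound} (equivalently Corollary~\ref{cor:l2_com_general_bound}) using $\|\mathbf{J}\|_2=\sigma_{\max}$ and $\|\mathbf{J}^+\|_2=1/\sigma_{\min}$, exactly as you do. Your remarks on the rank condition and the denominator-layout orientation are sound bookkeeping the paper leaves implicit; the condition actually used in the chain-rule step is that the $m\times d$ Jacobian has rank $d$.
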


The above corollary indicates that training DNNs only requires reducing the gradient energy while controlling the induced norm of the model's Jacobian matrix. Theorem~\ref{thm:general_sgd_convergence} demonstrates that SGD can reduce the gradient energy. This provides a theoretical explanation for the optimizability of DNNs using SGD.

The most widely used losses in deep learning are MSE loss for regression tasks and Softmax CrossEntropy loss for classification tasks, both of which belong to the Fenchel--Young loss family. The former is the Fenchel--Young loss generated by $\Omega(\mu)=\frac{1}{2}\|\mu\|_2^2$, in which case $\mu_{\Omega^*}^*=\mu$.
The latter is the Fenchel--Young loss corresponding to $\Omega(p)=-H(p)$, where $p\in \Delta$, in which case $\mu_{\Omega^*}^*=\mathrm{Softmax}(\mu)$.
To ground these general bounds in practical settings, we specialize them to two widely used loss functions in machine learning: MSE and Softmax CrossEntropy.
\begin{corollary}
\label{cor:kl_mse_bound}
Assume that the model $f_\theta(x)$ can fit any single sample, i.e., $\mathcal{L}(z)_*=0$ for all $z\in \mathcal{Z}$. Then the following results hold:
\begin{enumerate}
 \item For the MSE loss $\mathcal{L}(\theta,z) = \frac{1}{2}\|y-f_\theta(x)\|_2^2$, we have
 \begin{equation*}
\begin{aligned}
 \frac{\|\nabla_\theta \mathcal{L}(\theta, z)\|_2^2}{2\sigma^2_{\max}(\mathbf{J}_\theta f_\theta(z))}&\le \mathcal{L}(\theta, z) \le \frac{\|\nabla_\theta \mathcal{L}(\theta, z)\|_2^2}{2\sigma^2_{\min}(\mathbf{J}_\theta f_\theta(z))},\\
 \frac{\mathbb{E} \|\nabla_\theta \mathcal{L}(\theta, Z)\|_2^2}{2\sigma^2_{\max}(\mathbf{J}_\theta f_\theta(s))} &\le \mathcal{L}(\theta, s) \le \frac{\mathbb{E}\|\nabla_\theta \mathcal{L}(\theta, Z)\|_2^2}{2\sigma^2_{\min}(\mathbf{J}_\theta f_\theta(s))}.
\end{aligned}
 \end{equation*}

 \item For the Softmax CrossEntropy loss $\mathcal{L}(\theta,z)=\mathrm{CrossEntropy}(y,\mathrm{Softmax}(f_\theta(x)))$, where $y$ is a one-hot label, we have
 \begin{equation*}
\begin{aligned}
 \frac{\|\nabla_\theta \mathcal{L}(\theta, z)\|_2^2}{8 \ln 2\sigma^2_{\max}(\mathbf{J}_\theta f_\theta(z))}&\le \mathcal{L}(\theta, z) \le \frac{\|\nabla_\theta \mathcal{L}(\theta, z)\|_2^2}{4\min_i p_i(x)\sigma^2_{\min}(\mathbf{J}_\theta f_\theta(z))}, \\
 \frac{\mathbb{E} \|\nabla_\theta \mathcal{L}(\theta, Z)\|_2^2}{8 \ln 2\sigma^2_{\max}(\mathbf{J}_\theta f_\theta(s))} &\le \mathcal{L}(\theta, s) \le \frac{\mathbb{E}\|\nabla_\theta \mathcal{L}(\theta, Z)\|_2^2}{4\min_{i} p_i(s)\sigma^2_{\min}(\mathbf{J}_\theta f_\theta(s))},
\end{aligned}
 \end{equation*}
 where $p(x)=\mathrm{Softmax}(f_\theta(x))$ is the Softmax output, $\min_{i} p_i(s)=\min_{x\in s}\min_{i}p_i(x)$ is the minimum value of $p$, and $y$ is a one-hot label.
 \item \textbf{Fenchel Loss Bound:} Suppose $\Omega$ is a function defined on a closed convex domain, continuously differentiable and twice differentiable on its domain, with a positive definite Hessian matrix. Let $\lambda_{\min}$ and $\lambda_{\max}$ denote the smallest and largest eigenvalues of the Hessian matrix $\nabla^2 \Omega(\mu)$ for all $\mu$ in the domain of $\Omega$. Then we have
 \begin{equation*}
 \begin{aligned}
 \frac{\|\nabla_\theta \mathcal{L}(\theta, z)\|_2^2}{2\lambda_{\min} \sigma^2_{\max}(\mathbf{J}_\theta f_\theta(z))}&\le \mathcal{L}(\theta, z) \le \frac{\|\nabla_\theta \mathcal{L}(\theta, z)\|_2^2}{2\lambda_{\max}\sigma^2_{\min}(\mathbf{J}_\theta f_\theta(z))}, \\
 \frac{ \mathbb{E} \|\nabla_\theta \mathcal{L}(\theta, Z)\|_2^2}{2\lambda_{\min}\sigma^2_{\max}(\mathbf{J}_\theta f_\theta(s))} &\le \mathcal{L}(\theta, s) \le \frac{\mathbb{E}\|\nabla_\theta \mathcal{L}(\theta, Z)\|_2^2}{2\lambda_{\max}\sigma^2_{\min}(\mathbf{J}_\theta f_\theta(s))}.
 \end{aligned}
 \end{equation*}
\end{enumerate}
\end{corollary}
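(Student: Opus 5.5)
The plan is to specialize Corollary~\ref{cor:erm_bound} (equivalently Corollary~\ref{cor:l2_com_general_bound}) to quadratic $L_2$-norm energy functions. The only conjugate computation needed is elementary: if $\phi(\mu)=\frac{k_\phi}{2}\|\mu\|_2^2$ and $\Phi(\mu)=\frac{k_\Phi}{2}\|\mu\|_2^2$, then by Proposition~\ref{prop:energy_fun} (item~\ref{prop:energy_fun2} with $r=2$, $c=0$) we have
\[
\phi^*(\nu)=\frac{\|\nu\|_2^2}{2k_\phi},\qquad \Phi^*(\nu)=\frac{\|\nu\|_2^2}{2k_\Phi}.
\]
Substituting $\nu=\nabla_\theta\mathcal{L}(\theta,z)/\sigma_{\max}$, respectively $\nu=\nabla_\theta\mathcal{L}(\theta,z)/\sigma_{\min}$, into item~\ref{cor:erm_bound1} of Corollary~\ref{cor:erm_bound}, and using $\mathcal{L}(z)_*=0$, gives the per-sample bounds with constants $1/(2k_\Phi)$ and $1/(2k_\phi)$.

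For the dataset bounds I start from item~\ref{cor:erm_bound2}. Under the hypothesis $\mathcal{L}(z)_*=0$ for every $z$, Definition~\ref{def:model_risk} gives
\[
R_{\mathcal{L}}(s,1)=\mathcal{L}(s)_*-\mathbb{E}\,\mathcal{L}(Z)_*=\mathcal{L}(s)_*,
\]
so the middle term $\mathcal{L}(\theta,s)-\mathcal{L}(s)_*+R_{\mathcal{L}}(s,1)$ collapses to $\mathcal{L}(\theta,s)$. I then pull the singular values out of the expectation using the surrogate quantities of Definition~\ref{def:induced_matrix_norm}. Since $\sigma_{\max}(\mathbf{J}_\theta f_\theta(Z))\le\sigma_{\max}(\mathbf{J}_\theta f_\theta(s))$ and $\sigma_{\min}(\mathbf{J}_\theta f_\theta(Z))\ge\sigma_{\min}(\mathbf{J}_\theta f_\theta(s))$ for every $Z\in s$, the lower bound only decreases and the upper bound only increases. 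This yields the stated dataset inequalities.

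It remains to identify $(k_\phi,k_\Phi)$ for each loss.
\begin{itemize}
\item \textbf{MSE.} Proposition~\ref{prop:convex_smooth_case} item~\ref{prop:convex_smooth_case_mse} gives $k_\phi=k_\Phi=1$, hence the factor $1/2$ on both sides.
\item \textbf{Softmax CrossEntropy.} Item~\ref{prop:convex_smooth_case_ce} gives $\phi=\min_i p_i\|\cdot\|_2^2$ and $\Phi=2\ln 2\|\cdot\|_2^2$, i.e. $k_\phi=2\min_i p_i$ and $k_\Phi=4\ln2$. This produces the constants $1/(4\min_i p_i)$ and $1/(8\ln 2)$. For the dataset version, $\min_i p_i(x)$ is replaced by its minimum over $x\in s$, which again only enlarges the upper bound.
\end{itemize}

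The step I expect to be the main obstacle is the general Fenchel--Young case, because it requires tracking which of $\Omega$ and $\Omega^*$ governs the loss as a function of the model output.
\begin{itemize}
\item First, Proposition~\ref{prop:convex_smooth_case} item~\ref{prop:convex_smooth_case_hessian} gives that $\Omega$ is $\mathcal{H}(\lambda_{\max}\|\cdot\|_2^2/2)$-smooth and $\mathcal{H}(\lambda_{\min}\|\cdot\|_2^2/2)$-convex.
\item Second, Proposition~\ref{prop:gen_prop} item~\ref{prop:gen_prop_transform_keep} transfers these properties to $d_\Omega(\cdot,s)$, or transfers those of $\Omega^*$ to $d_\Omega(s,\cdot)$, depending on which slot $f_\theta(x)$ occupies.
\item Third, in the latter case the duality of item~\ref{prop:gen_prop_duality}, together with the conjugate formula of Proposition~\ref{prop:energy_fun}, interchanges the roles of $\lambda_{\min}$ and $\lambda_{\max}$ (inverting them).
\end{itemize}
I would fix the convention by checking it against the MSE case, where $\Omega=\frac12\|\cdot\|_2^2$ and $\lambda_{\min}=\lambda_{\max}=1$. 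I would then read off the constants $1/(2\lambda_{\min})$ and $1/(2\lambda_{\max})$ in the statement from the resulting quadratic $\Phi$ and $\phi$, and finish exactly as in the first two paragraphs. A secondary subtlety in the CrossEntropy case is that $\min_i p_i$ is point-dependent. I would handle it by restricting the convexity constant to the region of outputs actually visited, namely the current iterate and the sample set $s$.
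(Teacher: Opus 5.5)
Your skeleton is the paper's: specialize Corollary~\ref{cor:erm_bound} to quadratic $L_2$ energies, collapse $R_{\mathcal{L}}(s,1)=\mathcal{L}(s)_*$, then pass to the dataset extremes of the singular values. Part~1 goes through exactly as you describe. The gap is at the step you flagged in Part~3, and it cannot be closed, because the displayed inequality is false. Run your own three bullets to the end. $d_\Omega(y,\cdot)$ inherits from $\Omega^*$ the smoothness energy $\Phi=\|\cdot\|_2^2/(2\lambda_{\min})$ and the convexity energy $\phi=\|\cdot\|_2^2/(2\lambda_{\max})$, i.e.\ $k_\Phi=1/\lambda_{\min}$ and $k_\phi=1/\lambda_{\max}$. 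By your own first paragraph the constants are then $1/(2k_\Phi)=\lambda_{\min}/2$ and $1/(2k_\phi)=\lambda_{\max}/2$:
\[
\frac{\lambda_{\min}\,\|\nabla_\theta \mathcal{L}(\theta,z)\|_2^2}{2\sigma^2_{\max}(\mathbf{J}_\theta f_\theta(z))}\le \mathcal{L}(\theta,z)\le \frac{\lambda_{\max}\,\|\nabla_\theta \mathcal{L}(\theta,z)\|_2^2}{2\sigma^2_{\min}(\mathbf{J}_\theta f_\theta(z))}.
\]
The numbers $1/(2\lambda_{\min})$ and $1/(2\lambda_{\max})$ you plan to read off are the coefficients of $\Phi$ and $\phi$ themselves, not of $\Phi^*$ and $\phi^*$. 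If $f_\theta(x)$ sat in the first slot you would get $1/(2\lambda_{\max})$ below and $1/(2\lambda_{\min})$ above, which is not the statement either. Calibrating against MSE cannot detect any of this, since $\lambda_{\min}=\lambda_{\max}=1$ there. The statement genuinely fails whenever $\lambda_{\min}<\lambda_{\max}$: for $f_\theta(x)=\theta$ (so $\sigma_{\min}=\sigma_{\max}=1$) it would force $\lambda_{\max}\le\lambda_{\min}$ at any point with nonzero gradient. Concretely, take $\Omega(\mu)=\frac12\mu^\top A\mu$ with $A=\mathrm{diag}(1,4)$, so $\mathcal{L}=\frac12(\theta-Ay)^\top A^{-1}(\theta-Ay)$. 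At $\theta-Ay=e_1$ one has $\mathcal{L}=\frac12$ and $\|\nabla_\theta\mathcal{L}\|_2^2=1$, but the stated upper bound is $\frac18$. The paper's proof reaches the displayed constants through the same slip. In its notation it shows $d_\Omega(y,\cdot)$ is $\mathcal{H}(\phi^*)$-smooth and $\mathcal{H}(\Phi^*)$-convex, with $\phi^*=\|\cdot\|_2^2/(2\lambda_{\min})$ and $\Phi^*=\|\cdot\|_2^2/(2\lambda_{\max})$. It then plugs $\phi^*$ and $\Phi^*$ into the bounds of Corollary~\ref{cor:erm_bound}, where their conjugates belong.

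Part~2 has a problem of the same nature, located in Proposition~\ref{prop:convex_smooth_case}, item~\ref{prop:convex_smooth_case_ce}, which you (like the paper) use as a black box. Its proof miscomputes both conjugates by a factor $4$: the conjugate of $\frac{1}{2\ln 2}\|\cdot\|_2^2$ is $\frac{\ln 2}{2}\|\cdot\|_2^2$, and that of $\frac{1}{\min_i p_i}\|\cdot\|_2^2$ is $\frac{\min_i p_i}{4}\|\cdot\|_2^2$. Its convexity half also cannot hold on $\mathbb{R}^d$ at all: cross-entropy is invariant under adding $c\mathbf{1}$ to the logits, so the Taylor remainder vanishes in that direction. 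Accordingly the displayed upper bound is false. Take two classes, $f_\theta(x)=\theta$ and $\theta=0$ (here $\mathcal{L}(z)_*=\inf=0$). Then $\mathcal{L}=\ln 2$, $\|\nabla_\theta\mathcal{L}\|_2^2=\frac12$ and $\min_i p_i=\frac12$, so the right-hand side equals $\frac14$; restricting $\min_i p_i$ to visited outputs does not change this. The lower bound survives, because the $\mathcal{H}(2\ln 2\|\cdot\|_2^2)$-smoothness claim is true, only loose (the log-sum-exp Hessian is at most $\frac12 I$). The upper bound holds with $\min_i p_i$ in place of $4\min_i p_i$, by a direct argument that bypasses conjugation. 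By Lemma~\ref{lem:kl_upper_bound}, $\mathcal{L}=D_{\mathrm{KL}}(y\|p)\le\|y-p\|_2^2/\min_i p_i$. Since $p-y$ is the logit gradient, $\|y-p\|_2\le\|\nabla_\theta\mathcal{L}\|_2/\sigma_{\min}(\mathbf{J}_\theta f_\theta(z))$.
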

The proof of Corollary~\ref{cor:kl_mse_bound} is provided in Appendix~\ref{appendix:proof_kl_mse_bound}.

The above corollary establishes the relationship between the empirical risk in DNN training and both the gradient energy and the induced norm, i.e., the singular values, of the model's Jacobian matrix. Compared to Corollary~\ref{cor:erm_bound}, this form is more concise because we have made an assumption: the model $f_\theta(x)$ can fit any single sample, i.e., $\mathcal{L}(z)_*=0$ for all $z\in \mathcal{Z}$. In practice, this assumption is reasonable, since the loss function can generally be driven close to zero when there is only a single training sample, as is the case for MSE and Softmax CrossEntropy loss.
Furthermore, according to Theorem~\ref{thm:general_sgd_convergence}, generalized SGD has been shown to reduce the gradient energy without directly affecting the induced norm of the model's Jacobian matrix, thereby providing a theoretical explanation for SGD-based training of DNNs.

\subsubsection{Induced Norm Control through Random Parameter Initialization}
\label{subsubsec:independence}
We first introduce an idealized assumption characterizing gradient properties of randomly initialized DNNs. For DNNs with randomly initialized parameters (e.g., He or Xavier initialization), the gradients of each output dimension with respect to model parameters are approximately statistically independent. That is, each column of the Jacobian matrix $\mathbf{J}_\theta f(x)$ is independently and uniformly sampled from the ball $\mathcal{B}^{m}_\epsilon = \{v\in\mathbb R^{m}:\|v\|_2 \le \epsilon\}$. We refer to this as the approximate gradient independence assumption. Under this assumption, we can prove the following result that connects model parameter dimensionality to the condition number of the Jacobian matrix: 
\begin{proposition}
 \label{prop:number_para}
Let $d$ denote the dimension of the model output and $m$ denote the number of model parameters, with $d \le m - 1$.
If each column of the Jacobian matrix $\mathbf{J}_\theta f(x)$ is drawn independently and uniformly from the ball $\mathcal{B}^{m}_\epsilon = \{v\in\mathbb R^{m}:\|v\|_2 \le \epsilon\}$, then with probability at least $1 - O(1 / d)$, it follows that:
\begin{equation}
\begin{aligned}
 \sigma_{\min}(\mathbf{J}_\theta f(x))&\ge (1-\frac{2 \log d}{m})\epsilon,\\
 \frac{\sigma^2_{\max}(\mathbf{J}_\theta f(x))}{\sigma^2_{\min}(\mathbf{J}_\theta f(x))} &\le \zeta(m,d) + 1,
\end{aligned}
\end{equation}
where $\zeta(m,d) = \frac{2d\sqrt{6 \log d}}{\sqrt{m - 1}} \left(1 - \frac{2 \log d}{m}\right)^{-2}$. For fixed $d$ and sufficiently large $m$, $\zeta(m,d)$ is decreasing in $m$; for fixed $m$, it is increasing in $d$. 
\end{proposition}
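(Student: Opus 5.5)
We work with the Gram matrix $A = \mathbf{J}^\top \mathbf{J} \in \mathbb{R}^{d\times d}$, where $\mathbf{J} := \mathbf{J}_\theta f(x) \in \mathbb{R}^{m\times d}$ has i.i.d. columns $v_1,\dots,v_d$, each uniform on $\mathcal{B}^m_\epsilon$. We have $\sigma^2_{\max}(\mathbf{J}) = \lambda_{\max}(A)$ and $\sigma^2_{\min}(\mathbf{J}) = \lambda_{\min}(A)$. The diagonal entries of $A$ are $\|v_i\|_2^2$ and the off-diagonal entries are $\langle v_i, v_j\rangle$. The plan is to control the diagonal from below via concentration of the radius, control the off-diagonal via near-orthogonality of random directions, and then combine the two with Gershgorin's circle theorem.

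\textbf{Step 1 (radii).} For $v$ uniform in the ball, $\Pr(\|v\|_2 \le r\epsilon) = r^m$. Taking $r = 1 - \frac{2\log d}{m}$ gives $r^m \le e^{-2\log d} = d^{-2}$. A union bound over the $d$ columns then yields $\min_i \|v_i\|_2 \ge (1-\frac{2\log d}{m})\epsilon$ with probability at least $1 - 1/d$.

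\textbf{Step 2 (inner products).} Write $v_i = \|v_i\|\, u_i$, where the $u_i$ are independent and uniform on $S^{m-1}$. By standard spherical concentration, $\Pr(|\langle u_i,u_j\rangle| > t) \le 2e^{-(m-1)t^2/2}$. Setting $t = \sqrt{6\log d/(m-1)}$ gives a per-pair failure probability of $2d^{-3}$. A union bound over the fewer than $d^2/2$ pairs then fails with probability at most $1/d$. Since $\|v_i\| \le \epsilon$, on this event $|A_{ij}| \le \epsilon^2 \sqrt{6\log d/(m-1)}$.

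\textbf{Step 3 (Gershgorin).} Every eigenvalue of $A$ lies in some disc centered at $A_{ii}$ with radius $R_i = \sum_{j\ne i} |A_{ij}| \le (d-1)\,\epsilon^2\sqrt{6\log d/(m-1)}$. Write $a = \min_i \|v_i\|^2 \ge r^2\epsilon^2$. Then
\[
\frac{\lambda_{\max}(A)}{\lambda_{\min}(A)} \le \frac{\epsilon^2 + R}{a - R}.
\]
This is the \textbf{main obstacle}: the target bound $\zeta+1$ carries a factor of $2d$ in $\zeta$, which has the shape of $2R/a$ rather than the ratio above, and the stated lower bound on $\sigma_{\min}$ is linear in $r$, not in $r^2$ or $\sqrt{a-R}$. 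So some loosening is needed.

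I would first try to rewrite the ratio as $1 + \frac{(\epsilon^2 - a) + 2R}{a - R}$ and argue that, in the regime where $R \le a/2$, the term $2R/(a-R)$ is bounded by $2d\sqrt{6\log d/(m-1)}\, r^{-2}$ up to constants. The slack $\epsilon^2 - a$ would be absorbed either by a sharper two-sided radius concentration, noting that $\|v_i\|^2 \approx \epsilon^2$ with deviations of order $\log d/m$, or by treating it as lower order. For the $\sigma_{\min}$ statement, I would combine $\lambda_{\min} \ge a - R$ with the radius bound. If this does not give exactly $r\epsilon$, I would fall back on the interpretation where the bound is read at leading order, with $R$ negligible when $m \gg d^2\log d$.

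\textbf{Step 4.} The monotonicity claims follow by direct inspection. The factor $(m-1)^{-1/2}$ decreases in $m$, and $(1-2\log d/m)^{-2}$ decreases in $m$ toward $1$, so $\zeta$ is decreasing in $m$ for fixed $d$. The factors $d\sqrt{\log d}$ and $(1-2\log d/m)^{-2}$ both increase in $d$, so $\zeta$ is increasing in $d$ for fixed $m$. The total failure probability is $O(1/d)$ by combining the two union bounds from Steps 1 and 2.
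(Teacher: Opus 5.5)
Your plan follows the paper's plan. You use the Gram matrix $A=\mathbf{J}^\top\mathbf{J}$, a diagonal lower bound and an off-diagonal upper bound, then Gershgorin. Your Steps 1--2 correctly re-derive, with failure probability at most $2/d$, the Blum et al.\ lemma that the paper simply cites.

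\textbf{The gap.} The end of Step 3, ``reading the bound at leading order,'' does not prove the stated inequalities. No argument can, because the first inequality is false. For any $i\neq j$, take the Rayleigh quotient at $x=(e_i-\operatorname{sign}(A_{ij})e_j)/\sqrt2$:
\[
\sigma_{\min}^2(\mathbf{J})=\lambda_{\min}(A)\le \tfrac12(A_{ii}+A_{jj})-|A_{ij}|\le \epsilon^2-|A_{ij}|.
\]
Here $|A_{ij}|=\|v_i\|\,\|v_j\|\,|\langle u_i,u_j\rangle|$ is typically of order $\epsilon^2/\sqrt m$. By contrast, $(1-r^2)\epsilon^2\le 4\epsilon^2\log d/m$, with $r=1-2\log d/m$. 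Hence for fixed $d\ge 2$ and $m\to\infty$, $\sigma_{\min}(\mathbf{J})<r\epsilon$ with probability tending to one.

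The paper's proof contains exactly the hole you ran into. Right after $A_{ii}\ge r^2\epsilon^2$, it asserts $\sigma_{\min}(\mathbf{J})\ge r\epsilon$, which uses $\lambda_{\min}(A)\le A_{ii}$ in the wrong direction. It then obtains $\zeta$ by dividing the valid Gershgorin spread bound
\[
\lambda_{\max}-\lambda_{\min}\le R_k+R_{k'}+(1-r^2)\epsilon^2\le 2d\tau\epsilon^2,\qquad \tau:=\sqrt{6\log d/(m-1)},
\]
by this unjustified $r^2\epsilon^2$.

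\textbf{What you can actually prove.} On the same event, your Steps 1--3 give a corrected statement: if $(d-1)\tau<r^2$, then
\[
\sigma_{\min}^2(\mathbf{J})\ge (r^2-(d-1)\tau)\epsilon^2,\qquad \frac{\sigma_{\max}^2}{\sigma_{\min}^2}\le \frac{1+(d-1)\tau}{r^2-(d-1)\tau}.
\]
Cross-multiplying, the right-hand side of the ratio bound is at most $1+\zeta=1+2d\tau/r^2$ if and only if $2r^2\tau\ge r^2(1-r^2)+2d(d-1)\tau^2$. This holds once $m$ is large compared with $d^4\log d$.

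So the ratio claim can be salvaged in that regime, via your bound rather than the paper's division. The $\sigma_{\min}$ claim cannot be salvaged. For $d$ comparable to $m$, even the ratio claim fails: for $d=m-1$, $\sigma_{\min}(\mathbf{J})$ is typically $O(\epsilon/\sqrt m)$, so the ratio is at least of order $m$, while $1+\zeta$ grows only like $\sqrt{m\log m}$.

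Your Step 4 is fine. State and prove the corrected bounds instead of appealing to a leading-order reading.
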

The proof of Proposition~\ref{prop:number_para} is provided in Appendix~\ref{appendix:proof_number_para}. 
Proposition~\ref{prop:number_para} shows that increasing $m$ relative to $d$ makes the extreme singular values more balanced and larger, so the risk bounds tighten and gradient-energy reduction translates directly into risk reduction.

\subsubsection{Induced Norm Control through Skip Connections}

\label{subsubsec:singular_control}

We next explain how skip connections preserve Jacobian singular values. Corollary~\ref{cor:kl_mse_bound} requires controlling both the gradient energy and the Jacobian singular values; skip connections prevent the latter from decaying with depth.

For quantitative analysis, we decompose the model $f_\theta(x)$ into a sequence of $k$ stacked blocks, with the output of the $i$-th block denoted $h^{(i)}$ (and $h^{(0)} = x$, $h^{(k)} = f_\theta(x)$). Let $\theta^{(j)}$ denote the parameters of the $j$-th block, $I$ the identity matrix of appropriate dimension, $\nabla_{h^{(i)}} h^{(i+1)}$ the Jacobian of the $(i+1)$-th block's output with respect to its input, and $\nabla_{\theta^{(j)}} f_\theta(x)$ the Jacobian of the final output with respect to the $j$-th block's parameters.

For a standard feedforward network (no skip connections), applying this convention gives:
\begin{equation}
\label{eq:noskip}
\begin{aligned}
 \nabla_{\theta^{(j)}} f_\theta(x) 
 &= \nabla_{\theta^{(j)}} h^{(j)}
 \cdot \left( \prod_{i=j}^{k-1} \nabla_{h^{(i)}} h^{(i+1)} \right),
\end{aligned}
\end{equation}
where the product is taken in the natural left-to-right order, i.e., the Jacobian of block \(j+1\) is multiplied after that of block \(j\). 
Under standard random initialization, parameter norms are small (e.g., He initialization for ReLU networks). Consequently, the entries of $\nabla_{h^{(i)}} h^{(i+1)}$ are smaller than $1$ with high probability. The product of these Jacobians decays exponentially with the number of blocks \(k - j\), causing \(\|\nabla_{\theta^{(j)}} f_\theta(x)\|_F\) to vanish as depth increases. This leads to the singular values of the Jacobian approaching zero, which weakens the risk-gradient energy bounds (Corollary~\ref{cor:kl_mse_bound}) and impairs trainability.

Residual blocks with skip connections address this decay by adding an identity skip path to each block. For a residual network \(g_\theta(x)\), the chain rule becomes:
\begin{equation}\label{eq:skip}
\begin{aligned}
 \nabla_{\theta^{(j)}} g_\theta(x) 
 &= \nabla_{\theta^{(j)}} h^{(j)}
 \cdot \left( \prod_{i=j}^{k-1} \left( \nabla_{h^{(i)}} h^{(i+1)} + I \right) \right).
\end{aligned}
\end{equation}
Since the entries of \(\nabla_{h^{(i)}} h^{(i+1)}\) are much smaller than 1, we have \(\nabla_{h^{(i)}} h^{(i+1)} + I \approx I\), so \(\nabla_{\theta^{(j)}} g_\theta(x) \approx \nabla_{\theta^{(j)}} h^{(j)}\). This prevents the Jacobian norm from decaying with depth, preserving the induced norm of the model's Jacobian matrix.

We formalize this insight in the following proposition:
\begin{proposition}[Role of Skip Connections]
 \label{prop:skip_con}
For ERM optimization:
\begin{enumerate}
 \item Skip connections mitigate the exponential decay of the Jacobian matrix's induced norm with increasing network depth, preserving the tightness of the risk-gradient energy bounds (Corollary~\ref{cor:kl_mse_bound}) and maintaining trainability.
 \item If skip connections ensure the singular values of $\nabla_{h^{(i)}} h^{(i+1)} + I$ are greater than 1 (a mild condition satisfied by standard residual blocks with ReLU activations), increasing network depth \emph{increases} the induced norm of the model's Jacobian matrix, strengthening the risk-gradient energy bounds and improving trainability.
\end{enumerate}
\end{proposition}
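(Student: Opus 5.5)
The plan is to work directly with the chain-rule factorizations \eqref{eq:noskip} and \eqref{eq:skip} and bound extreme singular values of products with the submultiplicative inequalities $\sigma_{\max}(AB)\le\sigma_{\max}(A)\sigma_{\max}(B)$ and $\sigma_{\min}(AB)\ge\sigma_{\min}(A)\sigma_{\min}(B)$. The second inequality is valid for the relevant shapes, for instance when the block Jacobians are square, which I will assume for the hidden-to-hidden maps. Write $A_i:=\nabla_{h^{(i)}}h^{(i+1)}$ and $B_j:=\nabla_{\theta^{(j)}}h^{(j)}$. Set $\rho:=\max_i\sigma_{\max}(A_i)$, the quantity the paper's discussion of small initialization suggests is below $1$. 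The full Jacobian $\mathbf{J}_\theta f_\theta(x)$ is the row-stack over $j$ of the blocks $\nabla_{\theta^{(j)}}f_\theta(x)$. Hence $\sigma_{\max}(\mathbf{J})^2\le\sum_j\sigma_{\max}(\nabla_{\theta^{(j)}}f)^2$ and $\sigma_{\min}(\mathbf{J})^2\ge\sum_j\sigma_{\min}(\nabla_{\theta^{(j)}}f)^2$. I will verify both by writing $\mathbf{J}^\top\mathbf{J}$ (or $\mathbf{J}\mathbf{J}^\top$, matching the paper's denominator layout) as a sum of blockwise Gram matrices and applying Weyl's inequality.

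For item 1, the no-skip case gives $\sigma_{\max}(\nabla_{\theta^{(j)}}f)\le\sigma_{\max}(B_j)\rho^{\,k-j}$, so the contribution of early blocks decays geometrically in depth. Summing over $j$ bounds $\sigma_{\max}(\mathbf{J})$, and therefore $\sigma_{\min}(\mathbf{J})$, by $\max_j\sigma_{\max}(B_j)\,(1-\rho^2)^{-1/2}$. More importantly, the $\theta^{(1)}$-block is bounded by $\rho^{k-1}$. For the residual network, Weyl's inequality gives $\sigma_{\min}(I+A_i)\ge 1-\sigma_{\max}(A_i)\ge 1-\rho$ and $\sigma_{\max}(I+A_i)\le 1+\rho$. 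I will use a sharper regime, namely the one the paper states, where $\|A_i\|$ is small. The per-layer perturbation $\rho_i$ is then summable, e.g. $\sum_i\rho_i\le C$, which is natural under $1/\sqrt{k}$ or $1/k$ residual scaling. In that regime $\prod_i(1-\rho_i)\ge e^{-C'}$ stays bounded away from zero uniformly in $k$. So $\sigma_{\min}(\nabla_{\theta^{(j)}}g)\ge\sigma_{\min}(B_j)e^{-C'}$ does not decay, in contrast to the $\rho^{k-j}$ factor. Plugging these bounds into the denominators of Corollary~\ref{cor:kl_mse_bound} shows that the upper bound stays finite and the ratio $\sigma_{\max}/\sigma_{\min}$ stays $O(1)$ as depth grows. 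This is the precise sense of ``preserving tightness.''

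For item 2, the hypothesis is $\sigma_{\min}(I+A_i)\ge 1$, and I will take it as $\ge 1+\delta_i$ with $\delta_i\ge0$. The product inequality gives $\sigma_{\min}(\nabla_{\theta^{(j)}}g)\ge\sigma_{\min}(B_j)\prod_{i\ge j}(1+\delta_i)$, which is nondecreasing as blocks are appended. Adding a block also adds a new parameter block $\theta^{(k+1)}$ to the row-stack. By the Gram-sum bound, this addition can only increase $\sigma_{\min}(\mathbf{J})^2$, since it contributes a positive semidefinite term. So $\sigma_{\min}$ of the full Jacobian is monotone nondecreasing in depth. This enlarges the denominator $\sigma_{\min}$ in Corollary~\ref{cor:kl_mse_bound} and tightens the upper risk bound.

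The main obstacle is making ``depth increases the induced norm'' rigorous while comparing networks of different depths. Adding block $k+1$ multiplies every existing block on the right by $I+A_k$, and it also adds a new row-block. I will handle this by proving the one-step inequality $\mathbf{J}_{k+1}^\top\mathbf{J}_{k+1}\succeq(I+A_k)^\top\mathbf{J}_k^\top\mathbf{J}_k(I+A_k)\succeq\mathbf{J}_k^\top\mathbf{J}_k$ in the appropriate layout. The second step uses $\sigma_{\min}(I+A_k)\ge1$ together with a quadratic-form argument. A second, milder issue is that item 1 is qualitative. I will state it under the explicit quantitative hypothesis $\sum_i\|A_i\|_2<\infty$ and note that this formalizes the paper's $\nabla_{h^{(i)}}h^{(i+1)}+I\approx I$.
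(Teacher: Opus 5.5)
The paper gives no proof of this proposition beyond the informal discussion before it. That discussion uses the factorizations \eqref{eq:noskip} and \eqref{eq:skip}, with ``entries below $1$'' for the plain network and ``$+I\approx I$'' for the residual one. Your plan follows the same route and makes it quantitative, which is the right idea. However, one step is false as written, and the link to Corollary~\ref{cor:kl_mse_bound} does not hold.

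First, in item 2 the Loewner step $(I+A_k)^\top\mathbf{J}_k^\top\mathbf{J}_k(I+A_k)\succeq\mathbf{J}_k^\top\mathbf{J}_k$ does not follow from $\sigma_{\min}(I+A_k)\ge 1$. Writing $P=I+A_k$, that hypothesis gives $\|Px\|\ge\|x\|$, not $\|\mathbf{J}_kPx\|\ge\|\mathbf{J}_kx\|$, because $P$ can turn $x$ toward a direction where $\mathbf{J}_k$ is small. Take $X=\mathrm{diag}(1,0)$ and $P=(1+\delta)R_\phi$, where $R_\phi$ is a rotation by a small angle $\phi\neq 0$. Then $\sigma_{\min}(P)=1+\delta>1$ and $P-I$ is small. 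Yet $P^\top XP-X$ is a difference of two non-parallel rank-one matrices, so it is indefinite. Two weaker inequalities are true and suffice:
\begin{itemize}
\item $\lambda_{\min}(P^\top XP)\ge\sigma_{\min}(P)^2\lambda_{\min}(X)$;
\item for square invertible $P$, $\lambda_{\max}(P^\top XP)\ge\sigma_{\min}(P)^2\lambda_{\max}(X)$, which you get by evaluating the Rayleigh quotient at $P^{-1}v$ with $v$ a top eigenvector of $X$.
\end{itemize}
Combined with your correct first step $\mathbf{J}_{k+1}^\top\mathbf{J}_{k+1}\succeq P^\top\mathbf{J}_k^\top\mathbf{J}_kP$, these give monotonicity in depth of both extreme singular values.

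Second, and more seriously, plugging into Corollary~\ref{cor:kl_mse_bound} does not separate networks with and without skip connections, since the corollary sees only the extreme singular values of the full $\mathbf{J}$.
\begin{itemize}
\item The decay you prove in item 1 lives in the early row-blocks. The last block, however, has $\nabla_{\theta^{(k)}}f=B_k$ with no hidden-to-hidden factor.
\item Your own Gram-sum inequality therefore gives $\sigma_{\min}(\mathbf{J})\ge\sigma_{\min}(B_k)$ for the plain network too. Combined with your bound $\sigma_{\max}(\mathbf{J})\le\max_j\sigma_{\max}(B_j)(1-\rho^2)^{-1/2}$, the ratio $\sigma_{\max}/\sigma_{\min}$ is depth-independent without skip connections as well.
\item So this step says nothing about the role of skip connections. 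The paper's sentence ``this leads to the singular values of the Jacobian approaching zero'' makes the same jump.
\end{itemize}
To get a real contrast, state item 1 for the per-block Jacobians $\nabla_{\theta^{(j)}}f$. You can do this by rerunning the chain-rule argument of Theorem~\ref{thm:com_general_bound} with $\theta^{(j)}$ as the variable; that is where your $\rho^{k-j}$ versus $e^{-C'}$ comparison actually applies.

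The same issue affects item 2. ``Enlarging the denominator tightens the upper bound'' is true only at a fixed gradient-energy level. Since $\nabla_\theta\mathcal{L}=\mathbf{J}\nabla_f\ell$ scales with $\mathbf{J}$, the width of the sandwich is governed by $\sigma_{\max}(\mathbf{J})/\sigma_{\min}(\mathbf{J})$. The hypothesis $\sigma_{\min}(I+A_i)>1$ alone does not control this ratio. For example, $I+A_i=\mathrm{diag}(1+2\delta,1+\delta)$ with $B_j^\top B_j=I$ makes it grow like $\bigl((1+2\delta)/(1+\delta)\bigr)^{k}$, so the bounds loosen with depth. You need an upper control such as $\sigma_{\max}(I+A_i)\le 1+\rho_i$ with $\sum_i\rho_i<\infty$. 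Otherwise, state explicitly that ``strengthened'' means only that a fixed gradient-energy level certifies a smaller risk.
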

This proposition provides a theoretical justification for the empirical success of residual networks: skip connections not only prevent vanishing gradients (a well-known benefit) but also preserve the conditioning of the model's Jacobian matrix, ensuring gradient energy minimization translates directly to empirical risk reduction, even for extremely deep networks.

\section{Empirical Validation}
\label{sec:experiments}

This section presents an empirical validation of the core theoretical conclusions established in this paper through a series of targeted experiments.

\subsection{Experimental Configuration}

The experiments are implemented in Python 3.7 with PyTorch 2.2.2 and executed on a GeForce RTX 2080 Ti GPU.

\subsection{Verification of GD/SGD Convergence}
\subsubsection{Experimental Designs}
Because the convergence analysis of generalized SGD builds on the result that the optimal learning rate of generalized GD is one, we first verify under $\mathcal{H}(\Phi)$-smoothness that the optimal learning rate of generalized GD is $\alpha_* = 1$. Because the case $\Phi(\cdot)=a\|\cdot\|_2^2$ reduces generalized GD to classical GD, which has been extensively studied, we instead evaluate the framework on a family of non-Lipschitz objectives whose corresponding $\Phi$ is tractable:

\begin{equation}
F(x) = \|x\|_2^r, \quad r \in \{1.1, 1.2, \dots, 2.0\},
\end{equation}
where $r$ is the polynomial exponent. We first establish the following technical lemma to determine the tightest energy function $\Phi$:
\begin{lemma}\label{lem:nonlipsmooth}
Let $G(\theta) = \|\theta\|_2^r$ with $1 < r < 2$. Then $G$ satisfies $\mathcal{H}(\Phi)$-smoothness, where $\Phi(\theta-\eta) = 2^{2-r}\|\theta-\eta\|_2^r$ constitutes a tight upper bound for $S_G(\eta,\theta)$.
\end{lemma}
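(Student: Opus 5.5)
The plan is to obtain $\mathcal{H}(\Phi)$-smoothness from a Hölder estimate on the gradient, and then to check tightness on an explicit family. Write $d=\theta-\eta$. By the integral form of the first-order Taylor remainder,
\begin{equation*}
S_G(\theta,\eta)=\int_0^1 \bigl\langle \nabla G(\eta+td)-\nabla G(\eta),\, d\bigr\rangle\,dt .
\end{equation*}
For $1<r<2$, $G(\theta)=\|\theta\|_2^r$ is $C^1$ on all of $\mathbb{R}^m$, with $\nabla G(x)=r\|x\|_2^{r-2}x$ and $\nabla G(0)=0$. The surrogate-gradient convention of the paper therefore never applies, and the integral formula is legitimate.

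The key step is to show that $\nabla G$ is $(r-1)$-Hölder continuous with the constant
\begin{equation*}
\|\nabla G(x)-\nabla G(y)\|_2\le r\,2^{2-r}\|x-y\|_2^{\,r-1}.
\end{equation*}
Setting $\alpha=r-1\in(0,1)$, this is equivalent to the classical inequality
\begin{equation*}
\bigl\|\,\|x\|_2^{\alpha-1}x-\|y\|_2^{\alpha-1}y\,\bigr\|_2\le 2^{1-\alpha}\|x-y\|_2^{\alpha}.
\end{equation*}
I would prove this as follows.
\begin{enumerate}
\item Reduce to the two-dimensional plane spanned by $x$ and $y$. By homogeneity and symmetry, assume $\|x\|_2=1\ge\|y\|_2=\rho$, and let $c=\cos\angle(x,y)$.
\item Square both sides and compare $1+\rho^{2\alpha}-2\rho^{\alpha}c$ with $2^{2-2\alpha}(1+\rho^2-2\rho c)^{\alpha}$.
\item For fixed $\rho$ the left side is affine in $c$ and the right side is concave in $c$. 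It therefore suffices to check the endpoints $c=\pm1$.
\item At $c=1$ the inequality reduces to the scalar bound $1-\rho^\alpha\le 2^{1-\alpha}(1-\rho)^\alpha$, which holds by concavity and subadditivity of $t\mapsto t^\alpha$.
\item At $c=-1$ it reduces to $1+\rho^\alpha\le 2^{1-\alpha}(1+\rho)^\alpha$, which is the power-mean (Jensen) inequality for $t^\alpha$.
\end{enumerate}

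Plugging the Hölder bound into the integral gives
\begin{equation*}
S_G(\theta,\eta)\le \int_0^1 r\,2^{2-r}t^{r-1}\|d\|_2^{r}\,dt=2^{2-r}\|d\|_2^r .
\end{equation*}
This is exactly $\Phi(\theta-\eta)$. Moreover, $\Phi$ is an $L_2$-norm energy function in the sense of Definition~\ref{def:energy_fun}, since it is a power of the norm with $r>1$ (compare Proposition~\ref{prop:energy_fun}, item~\ref{prop:energy_fun2}).

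For tightness, note first that the Hölder constant $2^{1-\alpha}$ is attained in the antipodal configuration $y=-x$ at $c=-1$. I would then evaluate the remainder on antipodal and collinear families. For example, $\eta=-x$ and $\theta=x$ give $S_G=2r\|x\|_2^r$ against $\Phi=2^r\|x\|_2^r$. More generally, I would take $\eta=-\lambda\theta$ and optimize over $\lambda$ to show that no constant smaller than the stated one, or one of the same order, can work uniformly in $r$. The bound is exact at $r=2$ (where $S_G=\|d\|_2^2$), and the exponent $r$ cannot be changed because of the scaling $S_G(t\theta,t\eta)=t^rS_G(\theta,\eta)$.

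I expect the main obstacle to be this tightness claim. Either the stated constant $2^{2-r}$ is best possible only asymptotically or in the sense of matching order and exponent, or a sharper extremal configuration must be found by a one-parameter optimization along $\eta=-\lambda\theta$. I would first attempt that optimization, and if it does not reach $2^{2-r}$, I would interpret "tight" as sharp exponent with the constant attained in the limits $r\to1^+$ and $r=2$.
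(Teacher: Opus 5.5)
Your proof of the $\mathcal{H}(\Phi)$-smoothness bound is correct and follows the paper's route. Both use the integral form of the remainder, the H\"older estimate $\|\nabla G(x)-\nabla G(y)\|_2\le r\,2^{2-r}\|x-y\|_2^{r-1}$, and $\int_0^1 t^{r-1}\,dt=1/r$. The one real difference is that the paper takes the H\"older estimate from Rodomanov and Nesterov (their Theorem~6.3), whereas you prove it yourself. Your argument is sound and makes the lemma self-contained: reduce to the plane, normalize to $\|x\|_2=1\ge\|y\|_2=\rho$, use concavity in $c$ of the difference of squared sides so only $c=\pm1$ need checking, then apply subadditivity and the power-mean inequality at the endpoints.

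On tightness, the paper gives no argument; its proof stops at the upper bound. Your suspicion is justified, and your one-parameter family settles the question in the negative. Normalize $\|\eta\|_2=1$ (the case $\eta=0$ gives ratio exactly $1$), and put $\rho=\|\theta\|_2$ and $D=\|\theta-\eta\|_2$. Then
\begin{equation*}
S_G(\theta,\eta)=\rho^r-1-\tfrac{r}{2}\bigl(\rho^2-1-D^2\bigr),
\end{equation*}
which increases in $\rho$ on $(0,1)$ and decreases on $(1,\infty)$.
\begin{itemize}
\item For $D\le2$, the worst case is $\rho=1$, giving $S_G/D^r=\tfrac r2D^{2-r}\le r2^{1-r}$.
\item For $D\ge2$, the worst case is $\theta=-(D-1)\eta$, which lies inside your family $\eta=-\lambda\theta$. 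It gives $g(1/D)$ with $g(u)=(1-u)^r-u^r+ru^{r-1}$.
\end{itemize}
Hence the sharp constant is $C_r=\max_{u\in[0,1/2]}g(u)$, with $g(0)=1$. For example, $C_{3/2}\approx1.31<\sqrt2$. So $2^{2-r}$ is never attained for $1<r<2$. It is sharp only in three weaker senses: the exponent (your scaling argument), the order (it overestimates $C_r$ by less than a factor $2$, since $C_r\ge1$), and asymptotically as $r\to1^+$ or $r\to2^-$. Your fallback reading of ``tight'' is therefore the only defensible one, and nothing further can be proved.

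The loss occurs in the integration step. The H\"older estimate is an equality (for $x\neq y$) only for antipodal pairs, and the pair $(\eta+td,\eta)$ is antipodal for at most one $t$. So the sharpness you noted for the H\"older constant does not transfer to the remainder.

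Finally, fix one slip: for $\theta=x$ and $\eta=-x$ one has $\Phi(2x)=2^{2-r}\cdot2^r\|x\|_2^r=4\|x\|_2^r$, not $2^r\|x\|_2^r$. This pair therefore gives $S_G/\Phi=r/2$.
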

The proof is provided in Appendix~\ref{proof:nonlipsmooth}. 

In our experiments, we initialize 100-dimensional parameters from $\mathcal{N}(0,1)$ and optimize using generalized GD with learning rate $\alpha \in \{0.1, 0.3, 0.5, 0.7, 1.0, 1.5, 1.8, 2.0, 3.0\}$. We monitor the gradient energy $\log_2 \|\nabla F(\theta_k)\|^2_2$ to quantify convergence behavior.
\subsubsection{Experimental Results}

\begin{figure}[ht]
\centering
\includegraphics[width=0.95\linewidth]{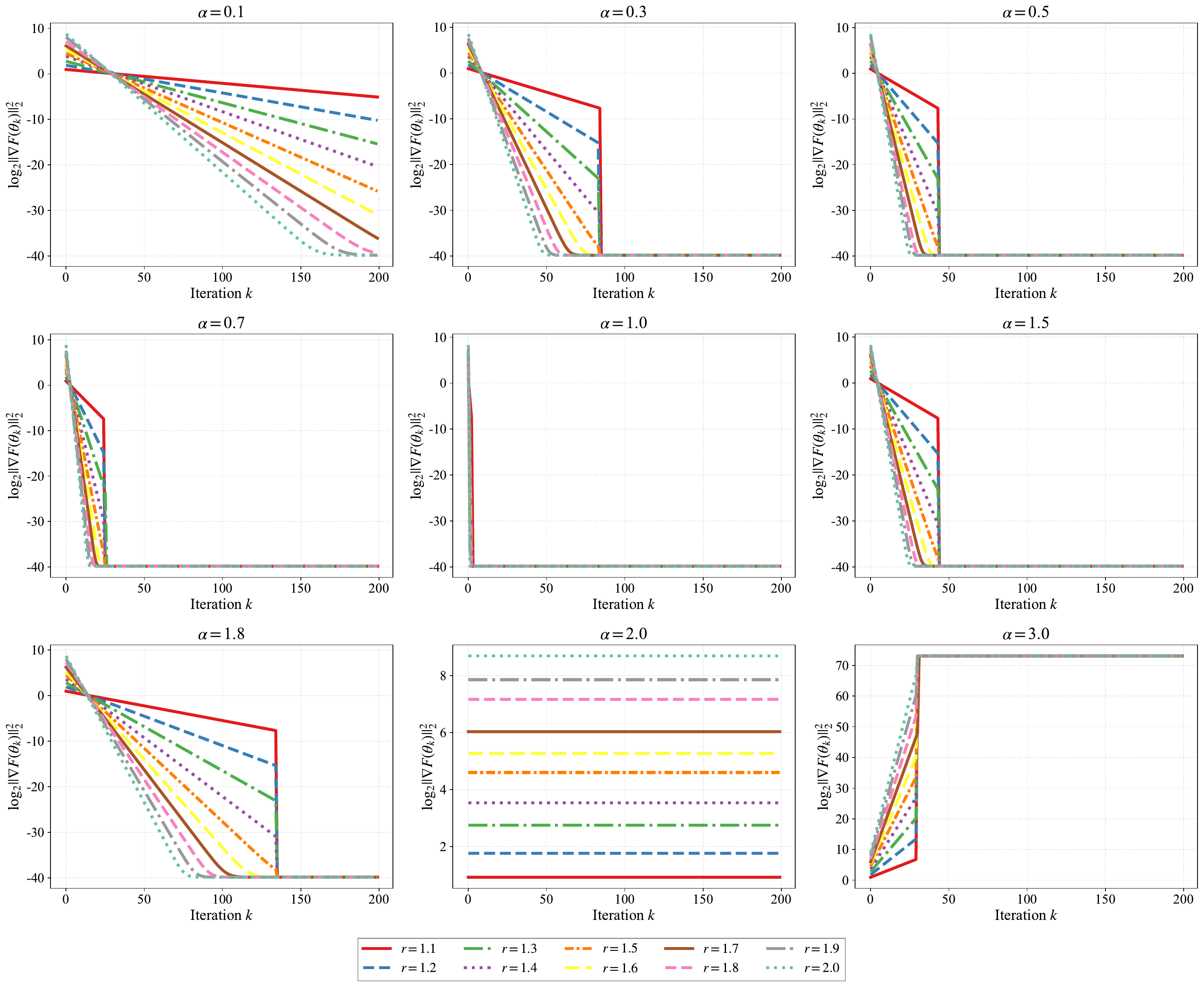}
\caption{Convergence curves for all $r \in [1.1, 2.0]$ under fixed $\alpha$. Optimal convergence, in terms of both speed and stability, is achieved at $\alpha=1$, corresponding to the theoretically optimal learning rate. Deviations from $\alpha=1$ reduce convergence speed, stagnation appears at $\alpha=2$, and divergence occurs for $\alpha>2$, which is consistent with Proposition~\ref{prop:gd_convergence}.}
\label{fig:all_ratio_vs_all_r}
\end{figure}

\begin{figure}[ht]
\centering
\includegraphics[width=0.75\linewidth]{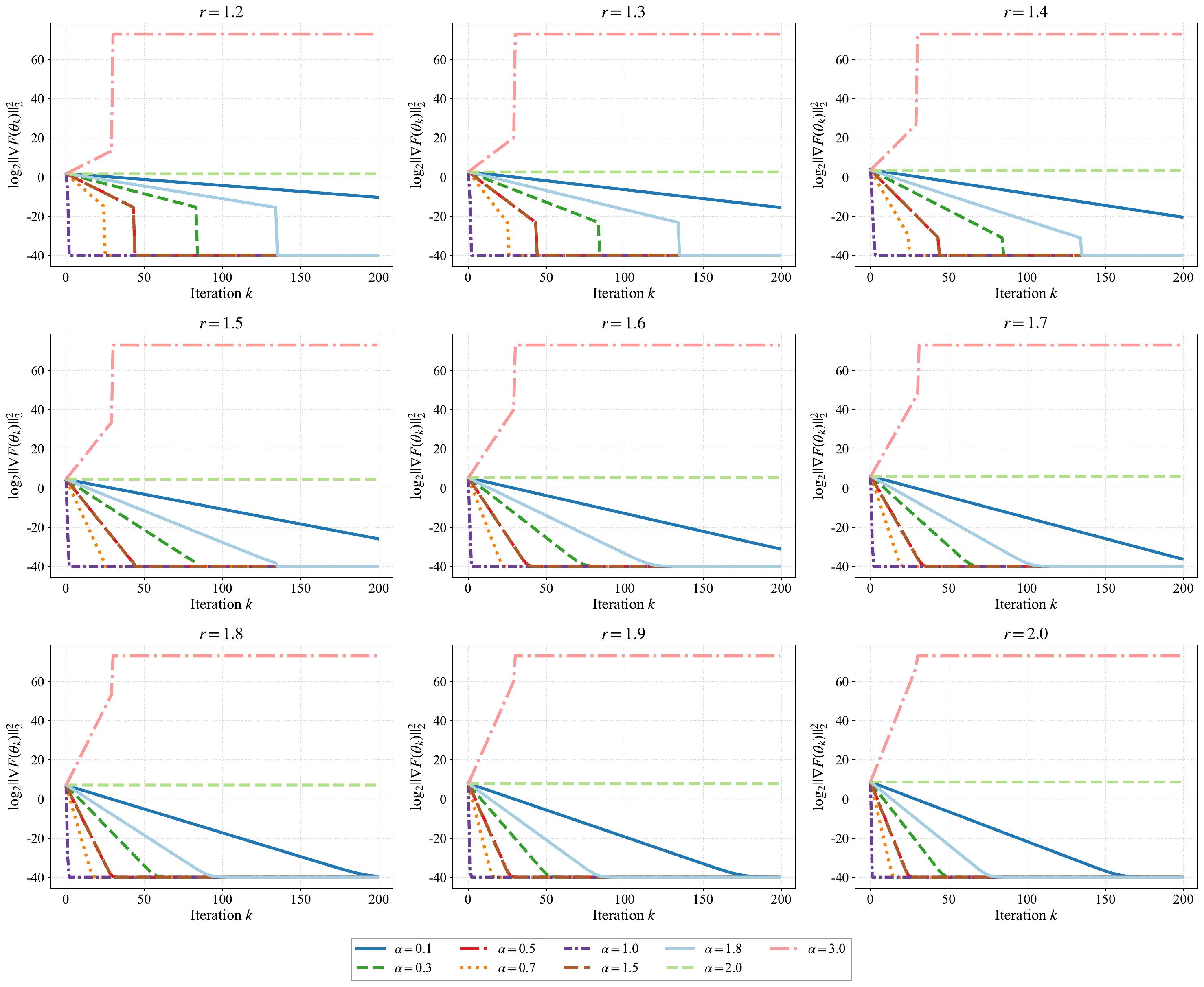}
\caption{Convergence curves for fixed $r$ under different values of $\alpha$. For every tested $r$, $\alpha=1$ yields the best performance, verifying the universality of the optimal learning rate.}
\label{fig:all_ratio_vs_r}
\end{figure}

\begin{figure}[htbp]
\centering
\includegraphics[width=0.6\linewidth]{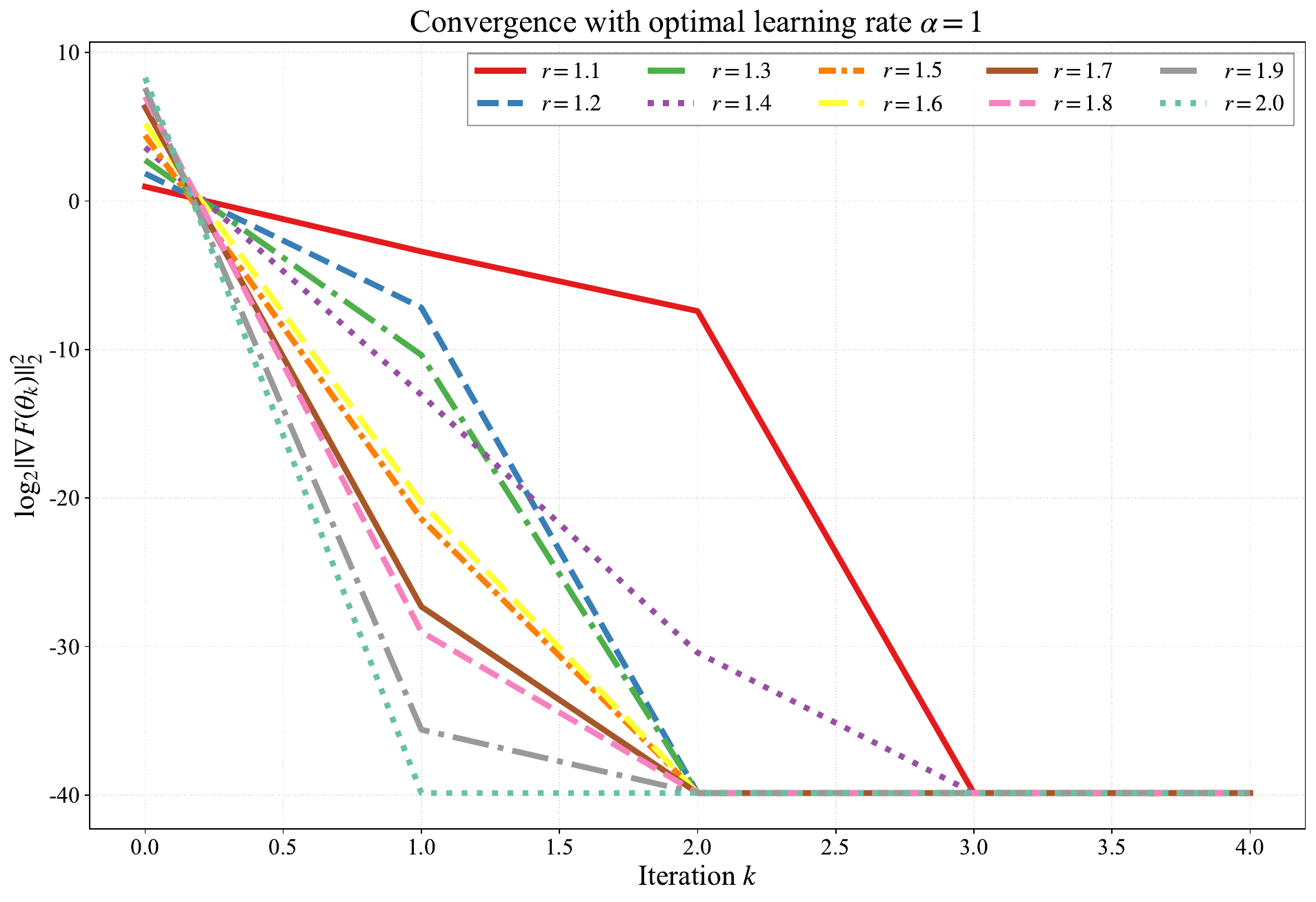}
\caption{Convergence speed under $\alpha=1$, the optimal learning rate. Larger $r$ leads to faster convergence.}
\label{fig:1_ratio_vs_r}
\end{figure}

\vspace{0.5em}
\noindent \textbf{Experimental Results and Discussion}
\vspace{0.5em}

The convergence curves in Figs.~\ref{fig:all_ratio_vs_all_r} and~\ref{fig:all_ratio_vs_r} show consistent empirical patterns across the tested non‑Lipschitz‑smooth functions. The best convergence performance is persistently attained at $\alpha=1$; convergence degrades monotonically as $\alpha$ deviates from $1$, optimization stagnates at $\alpha=2$, and divergence occurs for $\alpha>2$. These trends hold uniformly over all examined exponent values $r \in \{1.2,1.3,1.4,1.5,1.6,1.7,1.8,1.9,2.0\}$, offering direct empirical evidence for the optimal learning‑rate condition and stability boundary stated in Proposition~\ref{prop:gd_convergence}. Complementing these findings, Fig.~\ref{fig:1_ratio_vs_r} visualizes dynamics under the optimal setting $\alpha=1$: larger $r$ yields faster convergence and a lower final objective value. This corroborates Theorem~\ref{thm:general_gd_convergence} and Proposition~\ref{prop:gd_convergence}, revealing that the convergence speed increases monotonically with $r$. Taken together, the full set of experimental results validates our generalized‑smoothness theoretical analysis.

\subsection{Verification of Optimization Mechanism}

This subsection presents experimental designs aimed at validating the optimization mechanism described in Corollary~\ref{cor:erm_bound} and Corollary~\ref{cor:kl_mse_bound}. 
We empirically verify that SGD reduces the gradient energy and that, together with controlling the Jacobian induced norm, this governs the empirical risk bounds predicted by Theorem~\ref{thm:general_sgd_convergence}. 

\subsubsection{Experimental Designs}

To comprehensively monitor the training dynamics and verify the effectiveness of the theoretical bounds, the following three core quantities are recorded during training, with simplified notation adopted in the figures for improved readability:
\begin{itemize}
 \item $\log \mathcal{L}(\theta,z)$: the logarithmic risk (hereafter referred to as risk), reflecting the degree of suboptimality of the current model on sample $z$;
 \item $\text{bound}_{\text{up}} = -\log \sigma^2_{\min}(\mathbf{J}_\theta f(x)) + \log \|\nabla_\theta \mathcal{L}(\theta,z)\|_2^2$: the core components of the risk's upper bound;
 \item $\text{bound}_{\text{low}} = -\log \sigma^2_{\max}(\mathbf{J}_\theta f(x)) + \log \|\nabla_\theta \mathcal{L}(\theta,z)\|_2^2$: the core components of the risk's lower bound;
\end{itemize}

According to Corollary~\ref{cor:erm_bound}, $\text{bound}_{\text{up}}$ and $\text{bound}_{\text{low}}$ do not serve as strict bounds on $\log \mathcal{L}(\theta,z)$, but rather as core components determining the bounds of $\log \mathcal{L}(\theta,z)$.
Since $\Phi(\cdot)$ and $\phi(\cdot)$ are generally difficult to compute exactly, but are increasing functions of the norm of their argument, we can instead monitor the correlations of $\text{bound}_{\text{up}}$ and $\text{bound}_{\text{low}}$ with $\log \mathcal{L}(\theta,z)$, respectively. If a high degree of consistency is observed, it indicates that the risk is governed by its upper and lower bounds, thereby validating the effectiveness of the proposed framework. Corollary~\ref{cor:kl_mse_bound} indicates that when using the MSE and Softmax CrossEntropy losses, $\text{bound}_{\text{up}}$ and $\text{bound}_{\text{low}}$ need only be augmented by specific constants to constitute strict upper and lower bounds on $\log \mathcal{L}(\theta,z)$. Therefore, when these two loss functions are employed, we additionally monitor whether the upper and lower bounds formed by $\text{bound}_{\text{up}}$ and $\text{bound}_{\text{low}}$ hold for $\log \mathcal{L}(\theta,z)$.

Under these circumstances, we rely on the local Pearson correlation coefficient between these quantities to verify the core conclusion that ``the gradient energy and the induced norms of the Jacobian matrix jointly govern the evolution of the empirical risk.''
To quantify the dynamic evolutionary relationships among the various quantities during training, a sliding window approach is employed to compute the local Pearson correlation coefficient. Given a window length of $m$, the local Pearson correlation coefficient between two variables $X$ and $Y$ is defined as
\begin{equation}
 r_w = \frac{\sum_{i=1}^m (X_i - \bar{X})(Y_i - \bar{Y})}{\sqrt{\sum_{i=1}^m (X_i - \bar{X})^2} \sqrt{\sum_{i=1}^m (Y_i - \bar{Y})^2}},
\end{equation}
where $\bar{X} = \frac{1}{m} \sum_{i=1}^m X_i$ and $\bar{Y} = \frac{1}{m} \sum_{i=1}^m Y_i$. The value of $r_w$ ranges over $[-1, 1]$, with the following interpretations: $r_w = 1$ indicates perfect positive correlation, $r_w = -1$ indicates perfect negative correlation, and $r_w = 0$ indicates no linear correlation. In the subsequent analysis, a sliding window of length $50$ is uniformly adopted to compute the dynamic correlation between the risk and its associated bounds.

\textbf{Datasets.} To verify that the theoretical conclusions are independent of data modality, the experiments cover two types of tasks: visual recognition and text classification. For image data, three benchmark datasets are employed: MNIST~\citep{LeCun1998GradientbasedLA}, CIFAR-10, and CIFAR-100~\citep{krizhevsky2009learning}. For text data, two benchmark datasets are used: TREC~\citep{LiR02} and SST-2~\citep{SocherPWCMNP13}. This diverse selection of datasets facilitates the evaluation of the applicability of the theoretical framework under varying data complexities and feature structures.

\begin{figure}[ht]
\centering
\centerline{\includegraphics[width=\linewidth]{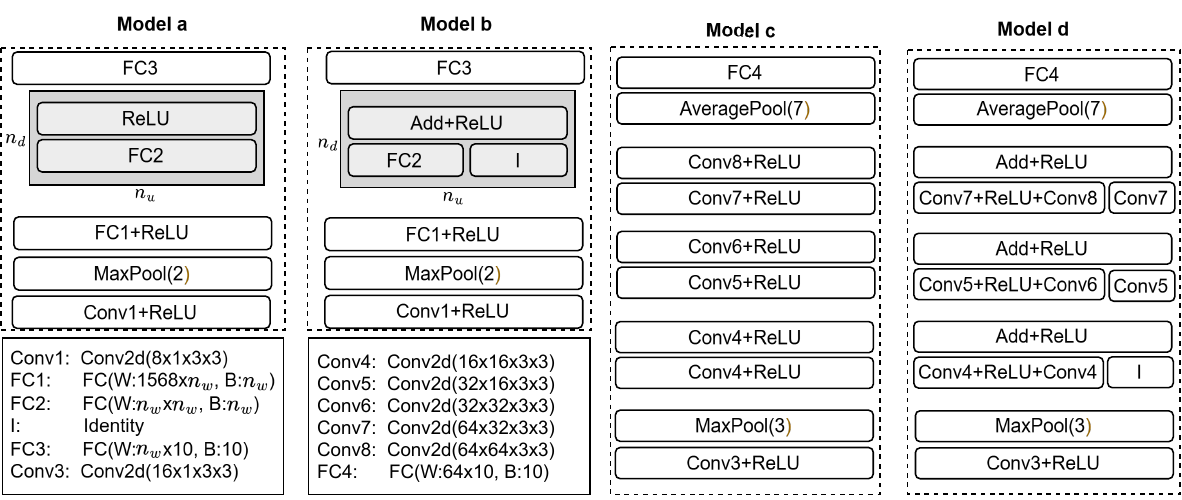}}
\caption{Model architectures and configuration parameters. This figure illustrates the model architectures and their configuration parameters. Models b and d are extended from models a and c by introducing skip connections.}
\label{fig:custom_model_structure}
\end{figure}

\textbf{Model Architectures.} To verify that the theoretical conclusions are independent of the choice of model architecture, a variety of mainstream deep network architectures are employed in the experiments.
For image-based tasks, two groups of models are used to verify the theoretical properties of the proposed framework: one consists of custom-designed models, and the other comprises widely used classical network architectures.
The overall architecture and specific configuration parameters of the custom models are illustrated in Figure~\ref{fig:custom_model_structure}, where the gray blocks denote repeated modules. The parameter $n_d$ denotes the number of modules repeated along the depth direction (e.g., $n_d=2$ indicates two identical modules cascaded sequentially), and $n_w$ denotes the number of modules repeated along the width direction (varied in steps of 16 in the experiments). In the optimization mechanism experiments, $n_d=1$ and $n_w=20$. In the experiments on the induced norm, both parameters are adjusted. These custom models are evaluated on the MNIST dataset.

To ensure applicability to real-world scenarios, we conduct experiments on the more complex and commonly used CIFAR-10 and CIFAR-100 datasets~\citep{Krizhevsky2009LearningML} using a series of classical deep learning models. LeNet~\citep{5265772}, ResNet-18~\citep{He2015DeepRL}, and Vision Transformers (ViT)~\citep{dosovitskiy2021an} are selected as representative examples of standard convolutional networks, architectures with skip connections, and Transformer-based models, respectively. The architectural designs of these models can be found in \citet{yoshioka2024visiontransformers}. For text-based tasks, three widely adopted text classification architectures are used: TextCNN~\citep{Kim14}, a residual Multi-Layer Perceptron (MLP) with layer normalization~\citep{ba2016layernormalization}, and a lightweight Transformer~\citep{Vaswani2017AttentionIA}. Their implementation follows standard open-source configurations~\citep{chinese-text-classification}. Unless otherwise noted, we train all models with the SGD optimizer. We set the learning rate to \(0.01\), momentum to \(0.9\), and batch size to 32. 

We conduct the following sets of experiments: 
\begin{itemize}

 \item \textbf{Architecture Independence.}
 We examine the training dynamics of the custom models as well as different architectures including LeNet, ResNet-18, and ViT, to verify that the non-convex optimization mechanism of DNNs described in this paper is independent of the model architecture. 
 \item \textbf{Non-Triviality of Bounds.} 
We additionally conduct a sanity check via \emph{false-label evaluation}: the model is trained with 
true labels but evaluated on randomly chosen incorrect labels. Under this setting, the model parameters and the Jacobian matrix remain identical to the true-label run, but the gradient energy becomes excessively large. Consequently, the theoretical upper bound becomes too loose to constrain the risk. Comparing the correlation between the loss and its bounds under true-label and false-label conditions verifies that the bounds are non-trivial, they are informative only when SGD sufficiently reduces the gradient energy. 

 \item \textbf{Optimizer and Regularization Independence.} 
 The generality of the results is further examined by introducing standard optimization techniques. On CIFAR-10, ResNet-18 with Batch Normalization enabled is used, and Dropout with a rate of $0.5$ is applied before the fully connected layer. The number of training epochs is set to $80$. Models are trained under multiple optimization configurations: SGD (initial learning rate $0.1$), Adam (adaptive moment estimation, initial learning rate $0.0001$), and RMSprop (root mean square propagation, initial learning rate $0.0001$), each combined with either a StepLR scheduler (step size $30$, $\gamma = 0.1$) or a cosine annealing scheduler ($T_{\max}=80$, $\eta_{\min}=0$).

 \item \textbf{Loss-Function Independence.} 
 To ensure that the findings are not limited to the Softmax CrossEntropy loss, additional experiments are conducted using a variety of loss functions, including the PyTorch built-in MSE, Kullback–Leibler (KL) divergence (KLDiv), and PoissonNLL, as well as a custom loss family $\|f_{\theta}(x) - y\|_k^k$ with $k$ ranging from $2$ to $6$.

 \item \textbf{Data-Modality Independence.} 
 Text classification benchmark datasets TREC~\citep{LiR02} and SST-2~\citep{SocherPWCMNP13}, along with the corresponding models TextCNN, residual MLP with layer normalization, and lightweight Transformer, are used to examine the training dynamics. The hyperparameter settings follow standard open-source configurations~\citep{chinese-text-classification}.

 \item \textbf{Model-Scale Independence.} 
 On CIFAR-10, ResNet-18, ResNet-34, ResNet-50, ResNet-101, and ResNet-152, all with Batch Normalization enabled, are used to observe the training dynamics. Training employs the Adam optimizer with a learning rate of $0.001$ for $80$ epochs.
\end{itemize}

\subsubsection{Experimental Results}

Figures~\ref{fig:custom_convergence}--\ref{fig:model_scale_convergence} summarize the training dynamics under the six verification settings described above. In all cases, the local Pearson correlation between the risk and its bounds approaches 1, indicating that the theoretical bounds consistently characterize the observed optimization behavior.

\begin{figure}[ht]
\centering
\includegraphics[width=\linewidth]{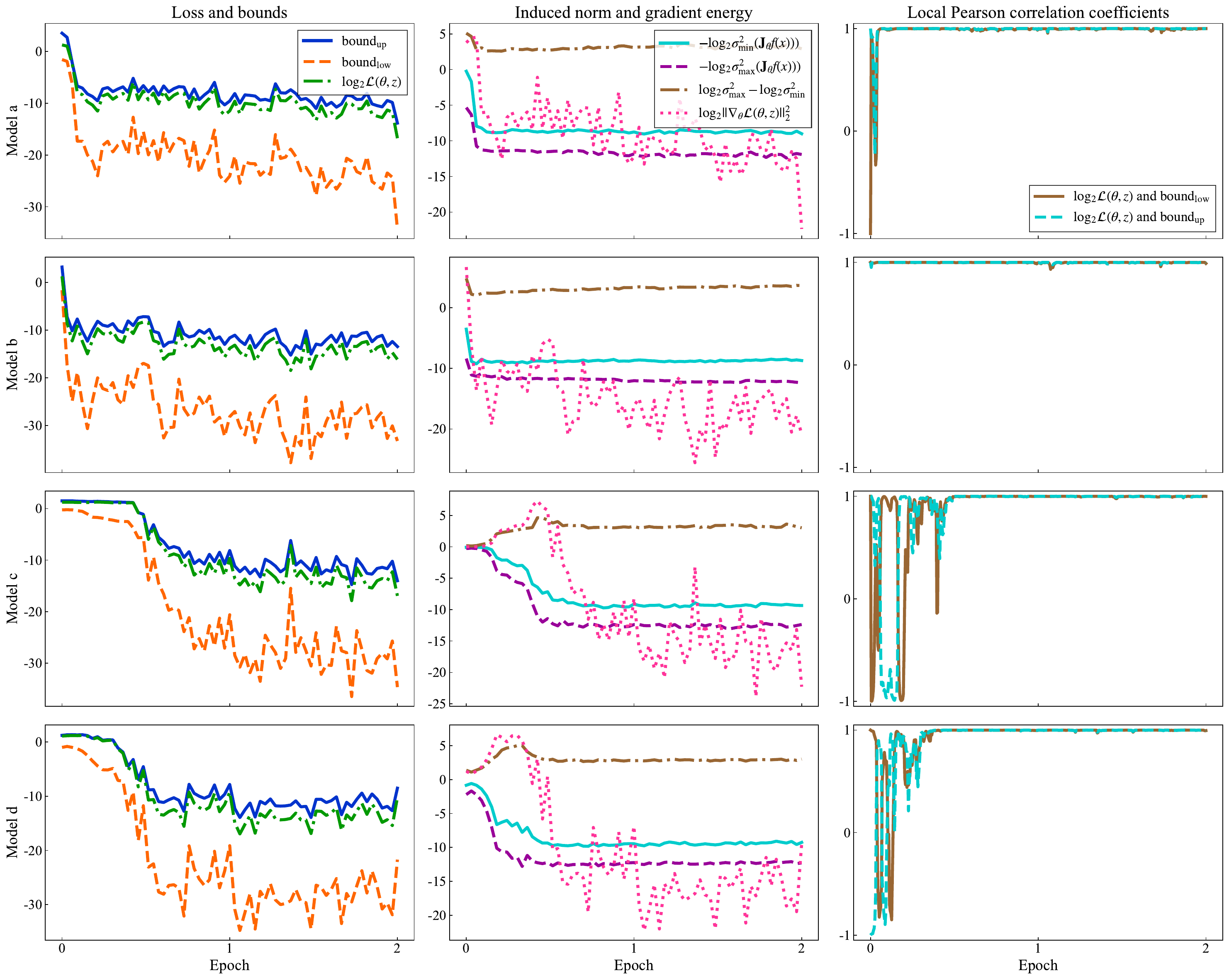}
\caption{Training dynamics of the custom models on MNIST. Left: risk and its upper/lower bounds. Middle: risk and the components constituting the bounds. Right: local Pearson correlation between the risk and its bounds.}
\label{fig:custom_convergence}
\end{figure}

\begin{figure}[ht]
\centering
\includegraphics[width=\linewidth]{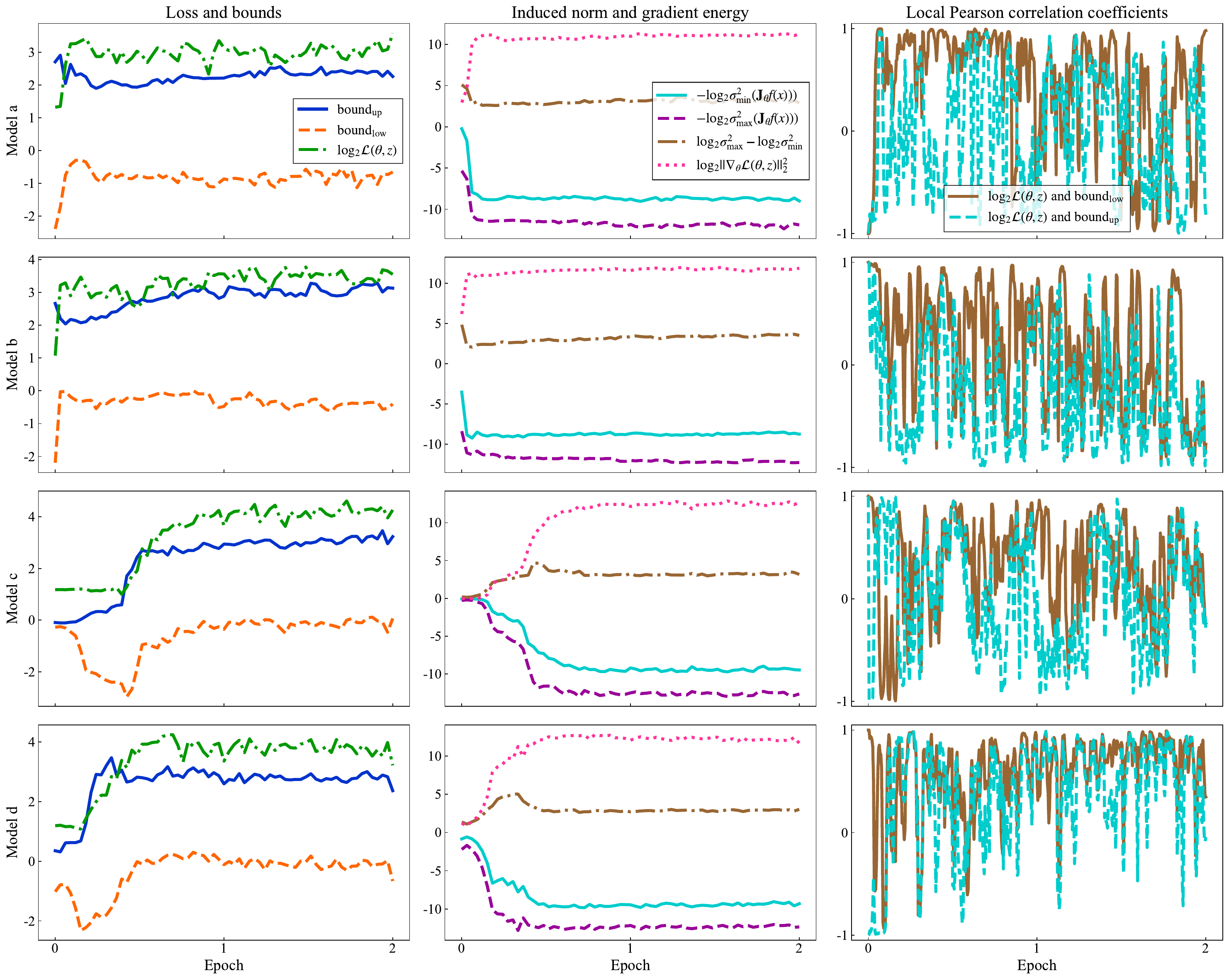}
\caption{Training dynamics of the custom models under the false-label setting. Left: risk and its upper/lower bounds. Middle: risk and the components constituting the bounds. Right: local Pearson correlation between the risk and its bounds.}
\label{fig:false_custom_convergence}
\end{figure}

\begin{figure}[ht]
\centering
\includegraphics[width=\linewidth]{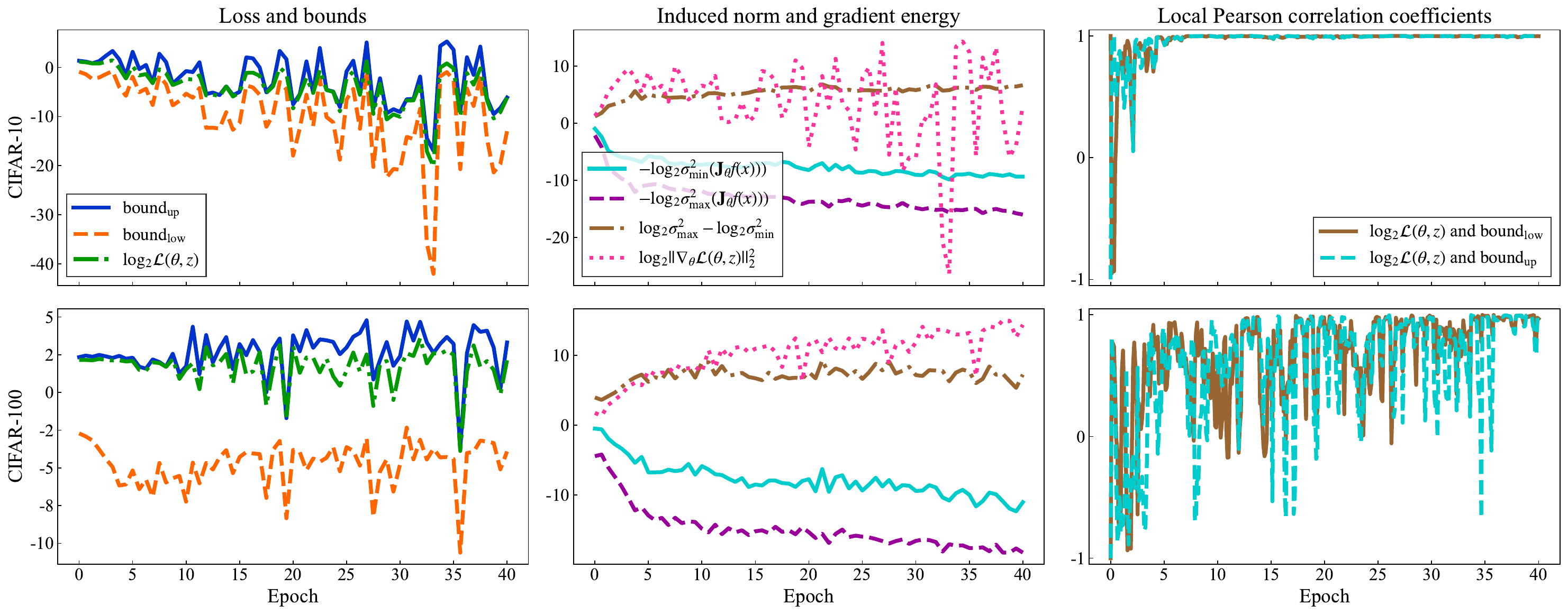}
\caption{Training dynamics of LeNet on CIFAR-10 and CIFAR-100.}
\label{fig:lenet_info_ch5}
\end{figure}

\begin{figure}[ht]
\centering
\includegraphics[width=\linewidth]{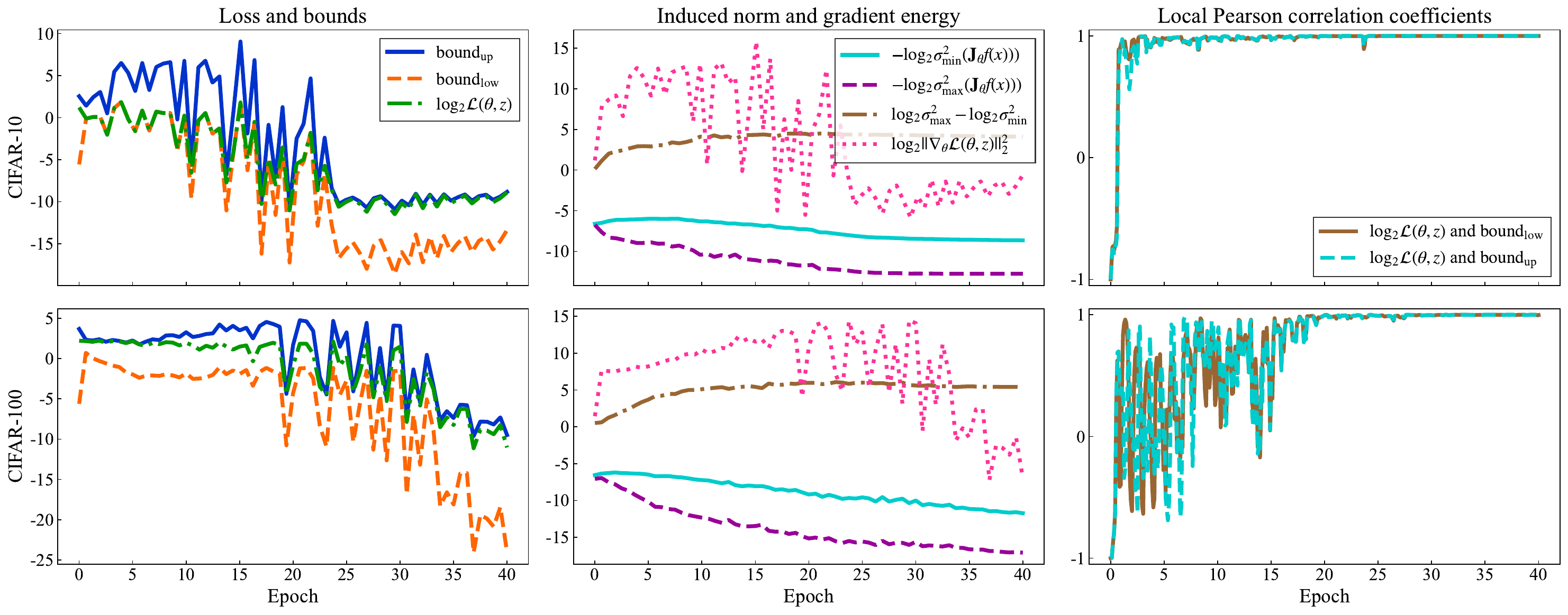}
\caption{Training dynamics of ResNet-18 on CIFAR-10 and CIFAR-100.}
\label{fig:resnet18_info_ch5}
\end{figure}

\begin{figure}[ht]
\centering
\includegraphics[width=\linewidth]{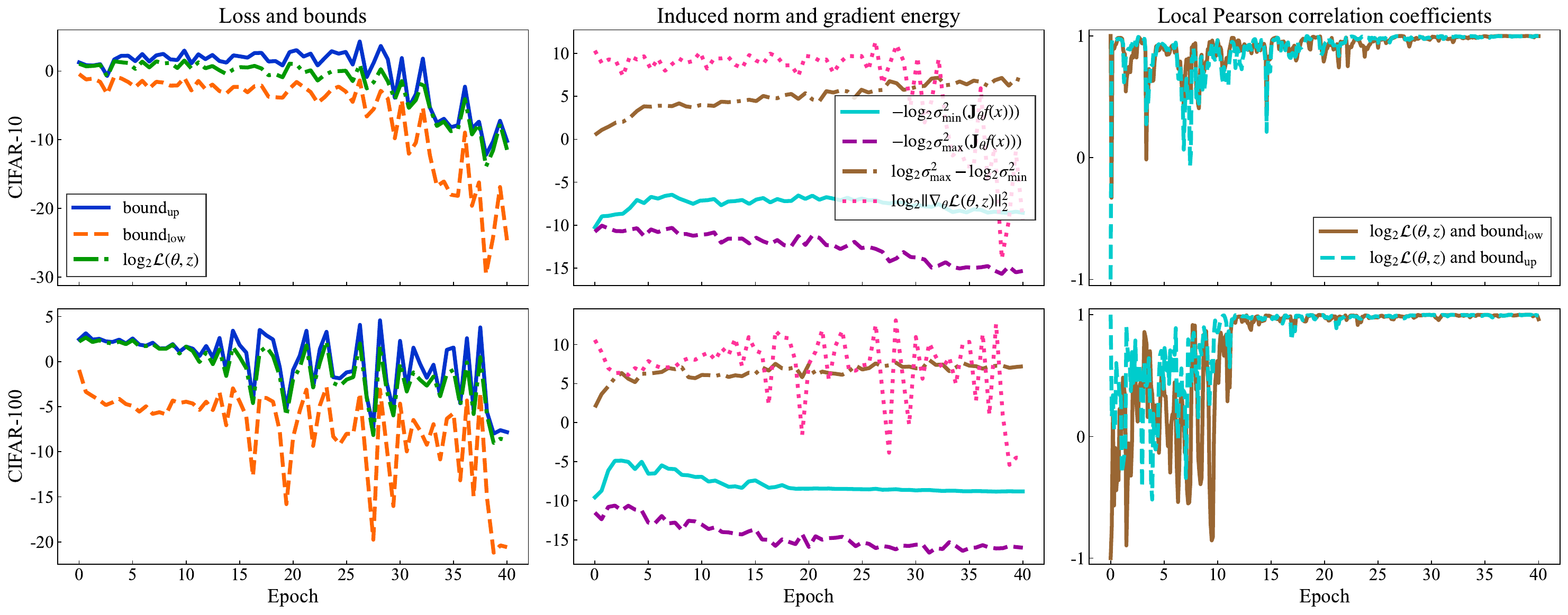}
\caption{Training dynamics of Vision Transformer (ViT) on CIFAR-10 and CIFAR-100. Left: risk and its upper/lower bounds. Middle: risk and the components constituting the bounds. Right: local Pearson correlation between the risk and its bounds.}
\label{fig:vit_info_ch5}
\end{figure}

\begin{figure}[ht]
\centering
\includegraphics[width=\linewidth]{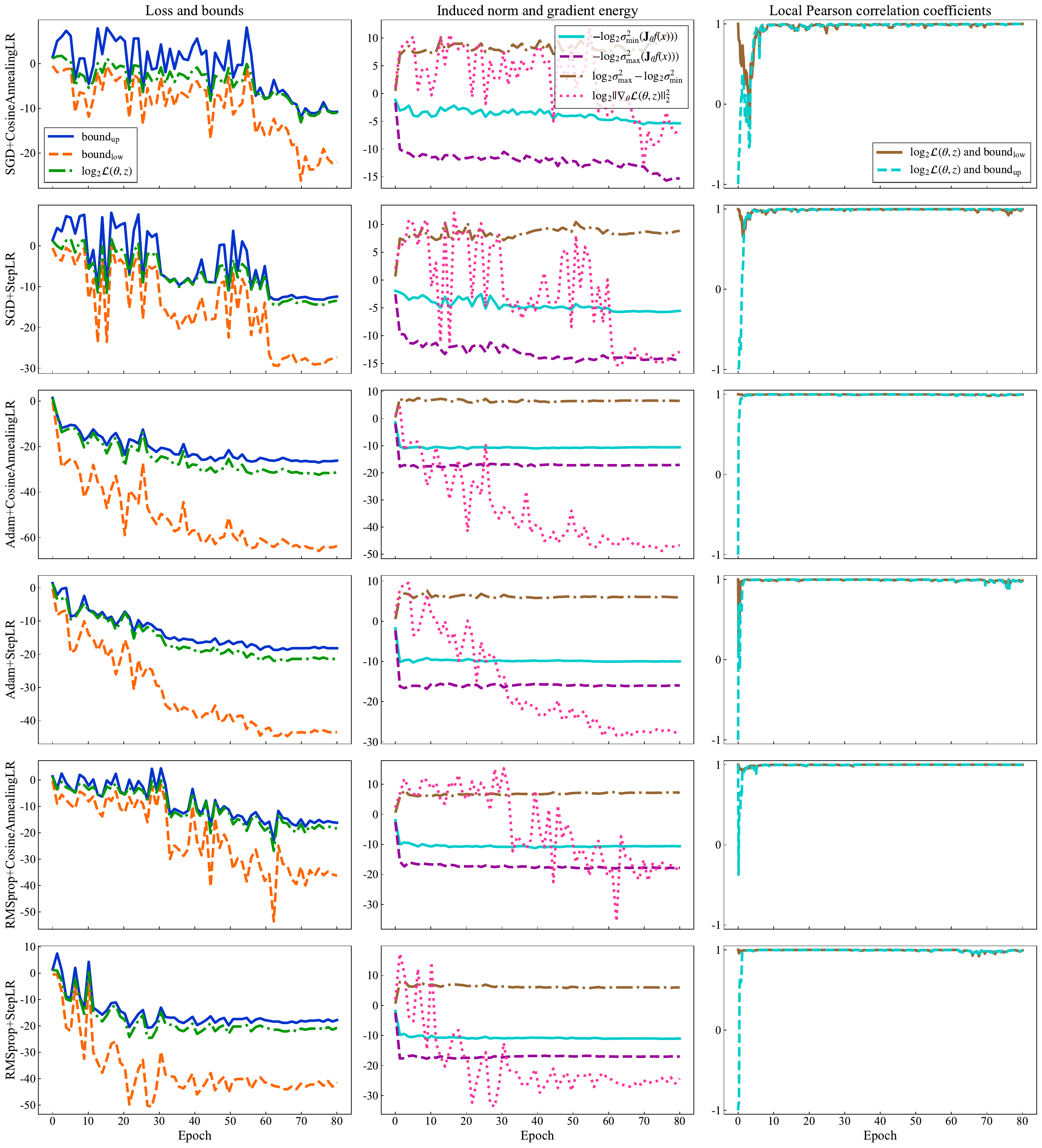}
\caption{Training dynamics of ResNet-18 on CIFAR-10 under different optimization configurations with Batch Normalization and Dropout enabled. This figure demonstrates that the conclusions of this paper hold even when Batch Normalization, Dropout, and various optimizers with different learning rate schedulers are applied.}
\label{fig:resnet_optimization_dynamics}
\end{figure}

\begin{figure}[!ht]
\centering
\includegraphics[width=\linewidth]{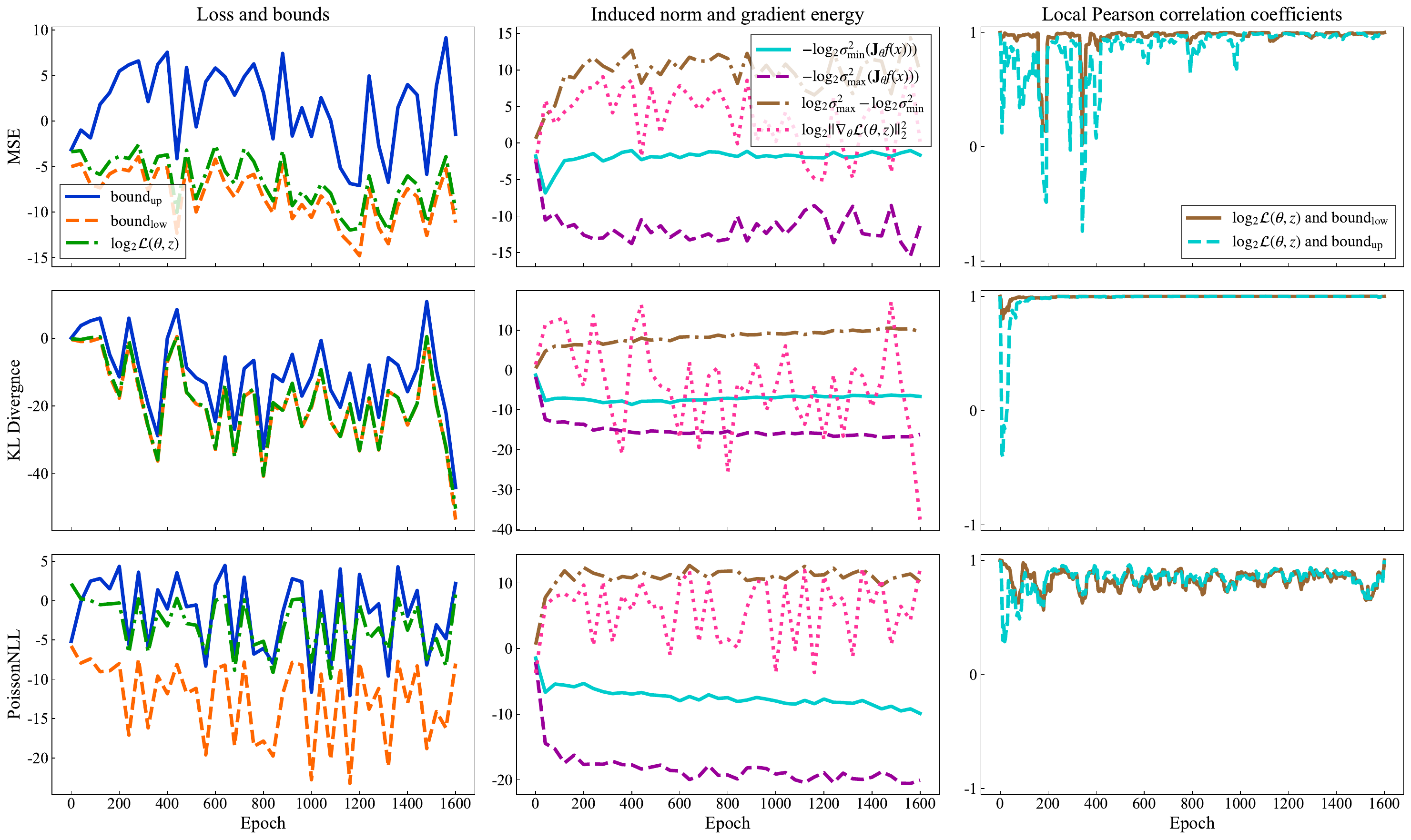}
\caption{Training dynamics of ResNet-18 on CIFAR-10 using different PyTorch built-in loss functions (MSE, KLDiv, and PoissonNLL). }
\label{fig:native_loss_convergence}
\end{figure}

\begin{figure}[!ht]
\centering
\includegraphics[width=\linewidth]{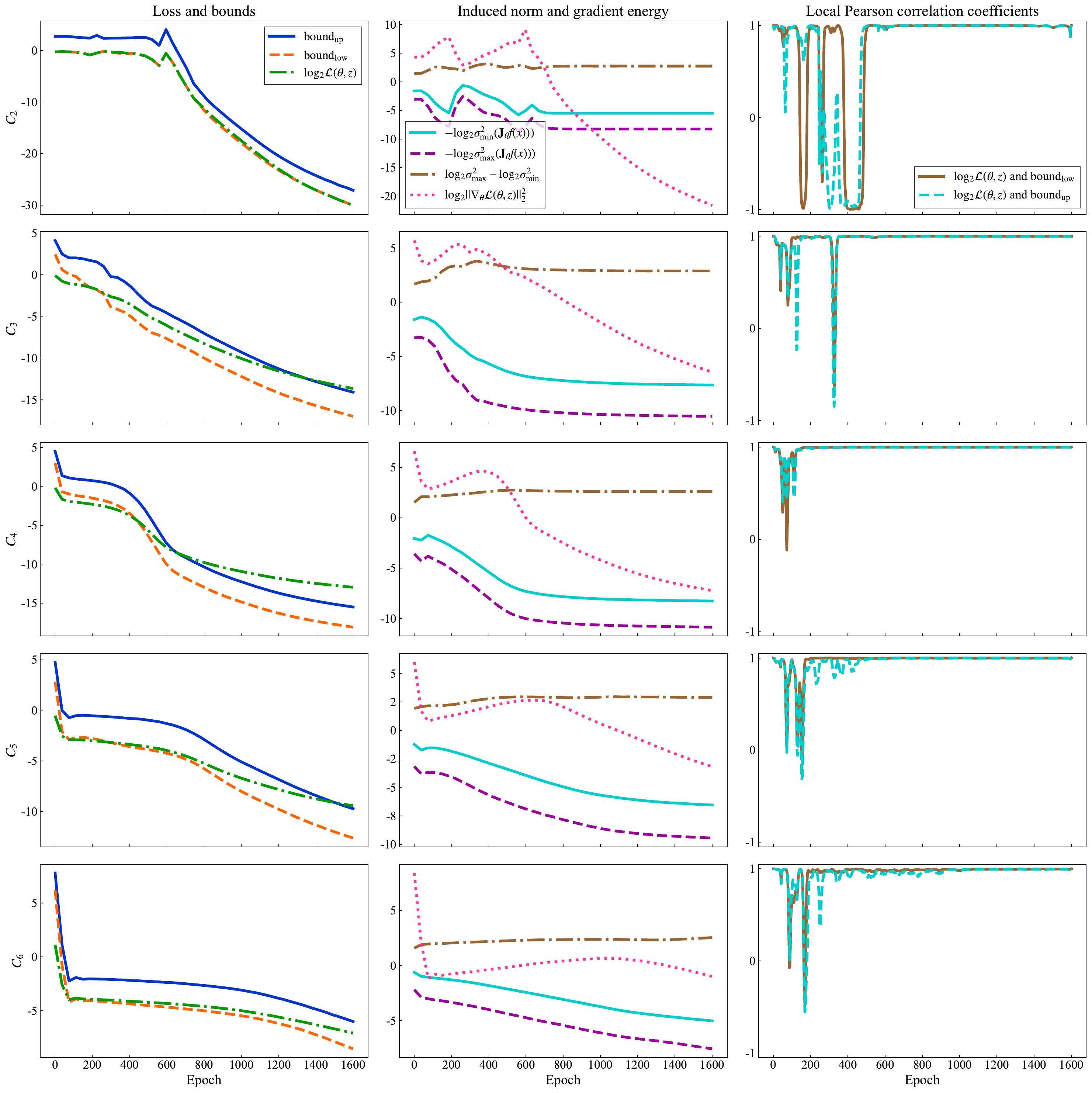}
\caption{Training dynamics of model~a on MNIST using the custom loss family $\|f_{\theta}(x) - y\|_k^k$ ($k \in \{2, \dots, 6\}$). Only 10 randomly selected samples are used to ensure fast convergence.}
\label{fig:custom_loss_convergence}
\end{figure}

\begin{figure}[!ht]
\centering
\includegraphics[width=\linewidth]{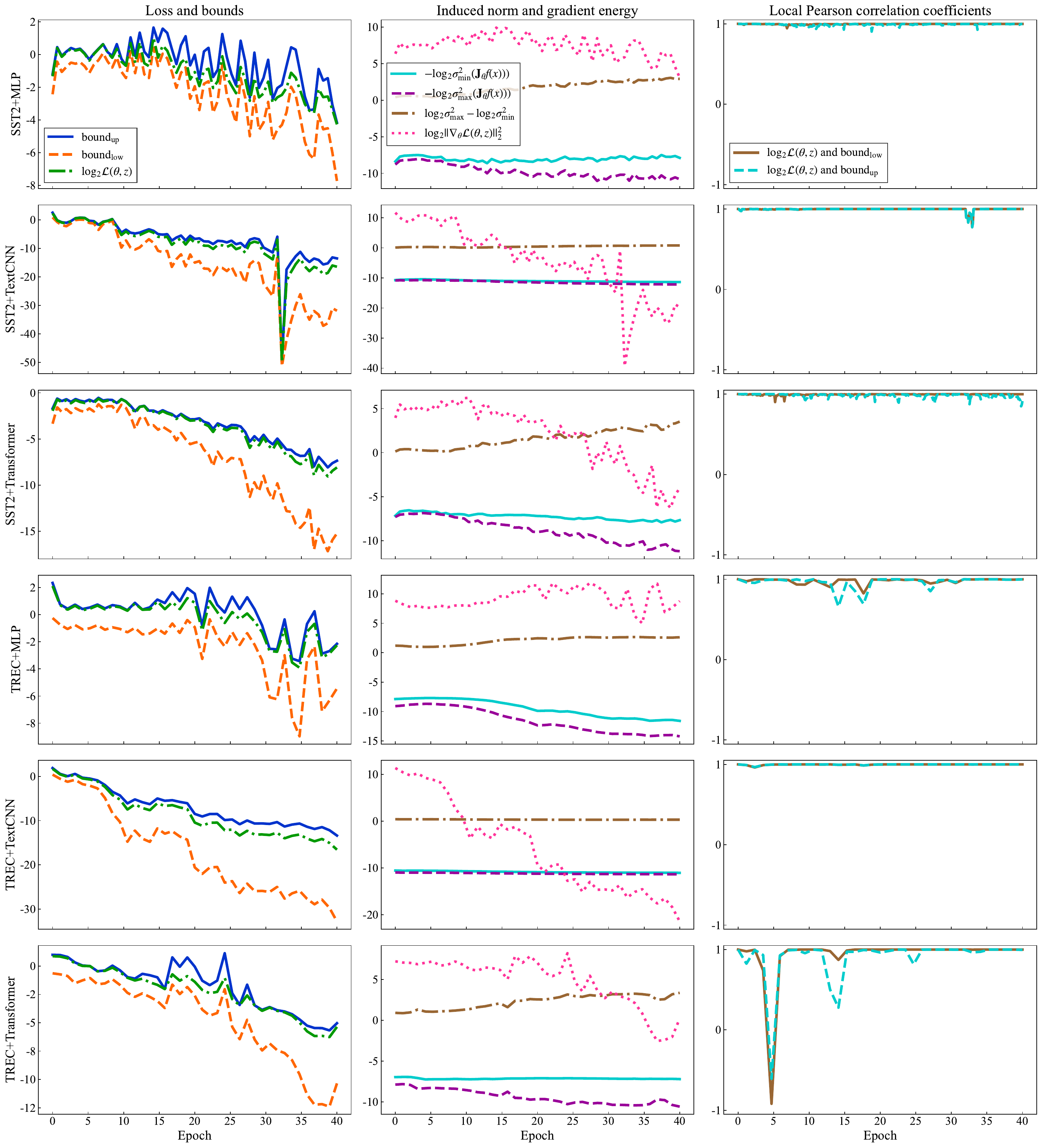}
\caption{Training dynamics on text classification datasets (TREC, SST-2). This figure demonstrates the applicability of the conclusions across different data modalities. The loss function is Softmax CrossEntropy, and the optimizer is Adam with a learning rate of 0.001.}
\label{fig:nlp_convergence}
\end{figure}

\begin{figure}[!ht]
\centering
\includegraphics[width=\linewidth]{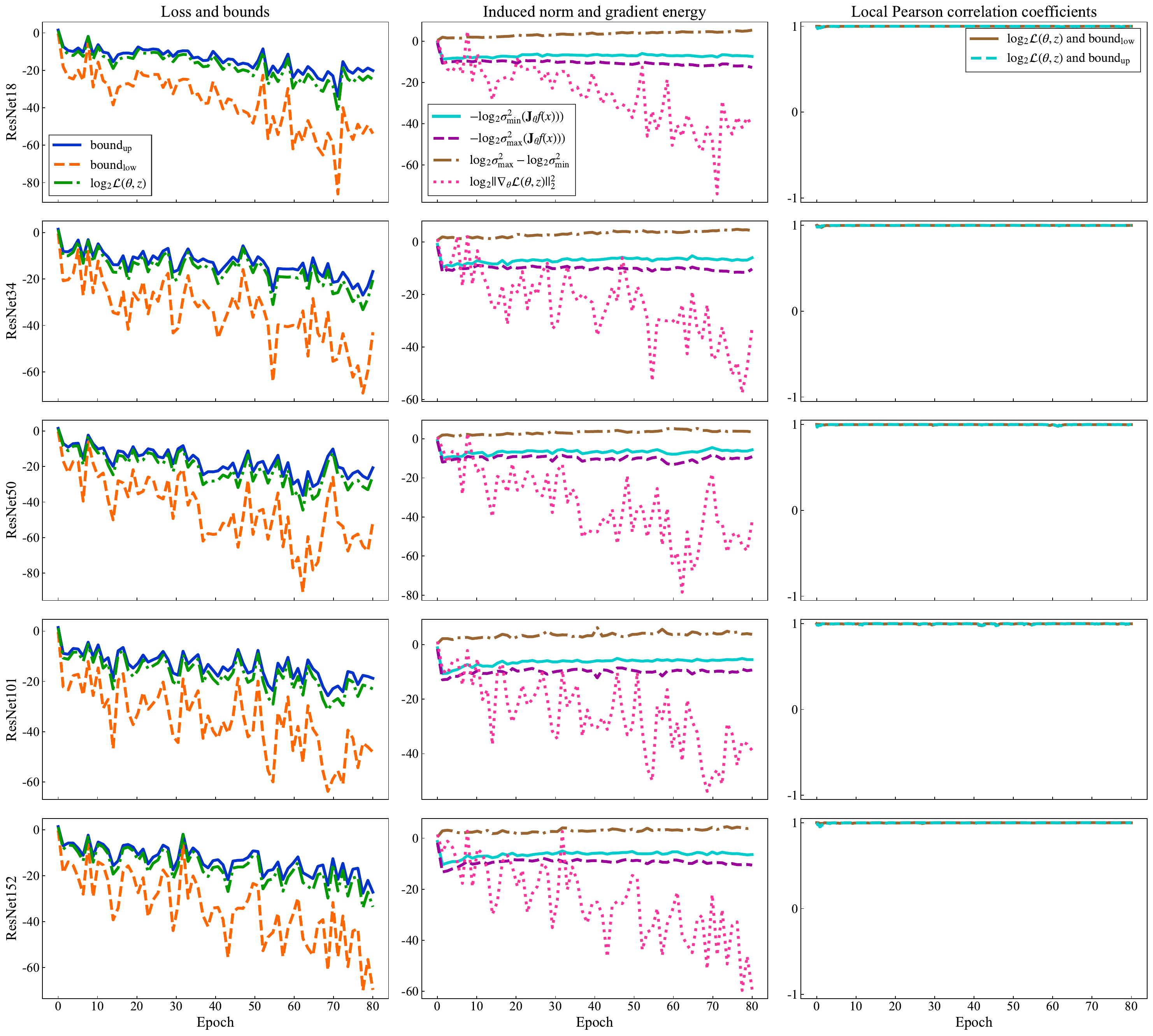}
\caption{Training dynamics under different model scales (ResNet variants). This figure demonstrates the applicability of the conclusions as the number of model parameters increases. The loss function is Softmax CrossEntropy, and the optimizer is Adam with a learning rate of 0.001.}
\label{fig:model_scale_convergence}
\end{figure}

The empirical observations above consistently support the theoretical framework. Based on these results, we draw the following conclusions: 
\begin{enumerate}
 \item \textbf{Risk optimization in DNNs is achieved by controlling its upper and lower bounds.} 
 For models evaluated with true labels, the evolution of the risk becomes increasingly consistent with its theoretical upper and lower bounds as training progresses. 
 This consistency indicates that the theoretical bounds can characterize the behavior of the risk. 
The gap between the upper and lower bounds narrows significantly over time, and both converge to zero. 
Across all models, the local Pearson correlation coefficient between the risk and the bounds approaches 1, confirming that the theoretical bounds tightly characterize the risk dynamics. 
 These results are consistent with the theoretical analysis in this paper: the optimization of the risk in DNNs is closely tied to the evolution of its theoretical upper and lower bounds.
 \item \textbf{The derived bounds are non-trivial.} 
 Under false-label evaluation, although the algebraic relationship between the risk and its bounds still holds, the gradient energy becomes excessively large, and the spectral properties of the Jacobian matrix are severely mismatched with those under true labels. Consequently, the bounds can no longer effectively constrain the risk. This indicates that the bounds in Theorem~\ref{thm:com_general_bound}, while universally valid, are non-trivial: they reveal a fundamental training mechanism specific to the true-label setting.
 \item \textbf{SGD reduces the gradient energy.} 
 The evolution of the extreme singular values of the Jacobian matrix and the gradient energy throughout training confirms the prediction of Theorem~\ref{thm:general_sgd_convergence}, demonstrating that mini-batch SGD can progressively reduce the gradient energy, thereby achieving effective optimization of non-convex objectives.
 \item \textbf{The above optimization mechanism is observed across different datasets and model architectures.} 
 Figures~\ref{fig:lenet_info_ch5}, \ref{fig:resnet18_info_ch5}, and~\ref{fig:vit_info_ch5} further indicate that these findings generalize across different datasets and model architectures. Moreover, Figure~\ref{fig:model_scale_convergence} shows that model scale does not affect the main conclusions, and Figure~\ref{fig:nlp_convergence} shows that data modality likewise does not affect the conclusions. This universality stems from Theorem~\ref{thm:com_general_bound}, which states that the empirical risk is jointly determined by the gradient energy and the extreme singular values of the Jacobian matrix. 
\item \textbf{The revealed mechanism is independent of the specific optimization algorithm or strategy.} 
 As shown in Figure~\ref{fig:resnet_optimization_dynamics}, under various combinations of techniques including Batch Normalization, Dropout, different optimizers (SGD, Adam), and multiple learning rate schedulers, the training dynamics remain fully consistent with the conclusions of this paper. Thus, the proposed explanation of the DNN optimization mechanism is essentially independent of the choice of optimization strategy.
 \item \textbf{The empirical risk in DNNs is governed by the interaction between the gradient energy and the singular values of Jacobian matrix across different loss functions.} 
 Figures~\ref{fig:native_loss_convergence} and~\ref{fig:custom_loss_convergence} demonstrate that the bounds formed by the gradient energy and the extreme singular values of the Jacobian matrix closely track the risk in both overall trend and local fluctuations. This consistency across different loss functions further supports the applicability of the theoretical framework proposed in this paper, indicating that it is not limited to a specific choice of loss.
 \item \textbf{Model capacity determines the achievable convergence limit of the gradient energy, which can be analyzed through the gradient correlation factor $M$.} 
 The final values of the gradient energy in Figures~\ref{fig:lenet_info_ch5} and~\ref{fig:resnet18_info_ch5} exhibit differences: the value for LeNet is significantly larger (ranging from $2^0$ to $2^{10}$), while that for ResNet-18 approaches zero. 
 In contrast, on the smaller MNIST dataset, all custom models are highly over-parameterized, with the gradient energy of all four models falling within the range of $2^{-20}$ to $2^{-30}$. 
 This result supports Theorem~\ref{thm:general_sgd_convergence} and the introduction of the gradient correlation factor $M$. 
 Theorem~\ref{thm:general_sgd_convergence} indicates that the convergence behavior of the gradient energy is jointly determined by the optimization algorithm, the model architecture, and the training data; convergence is not solely dictated by the optimizer but critically depends on $M$. More precisely, for a fixed training set size, simpler models typically have a larger $M$, leading to a larger final gradient energy. 
 On complex datasets, simpler models tend to have a relatively large $M$, preventing the gradient energy from converging to zero and thereby resulting in a larger final risk compared to high-capacity models (where $M \approx 0$).
\end{enumerate}

\FloatBarrier

\subsection{Verification on the Impact of Induced Norm}

We next disentangle the respective effects of width and depth. Specifically, we examine whether model depth, width, and skip connections behave as expected: increasing the number of parameters and reducing parameter dependence tend to narrow the singular-value distribution of the Jacobian matrix, whereas skip connections can suppress the decay of extreme singular values caused by increasing depth.

\subsubsection{Experimental Designs}

Using model~a and model~b (see Figure~\ref{fig:custom_model_structure}) as the subjects of study, two sets of controlled experiments are conducted on the MNIST and Fashion-MNIST~\citep{DBLP:journals/corr/abs-1708-07747} datasets to investigate the relationship between model architecture and the spectral properties of the Jacobian matrix:
\begin{enumerate} 
 \item Fixing the width parameter $n_w=64$, the depth parameter $n_d$ is gradually increased from 1 to 100 to examine the effect of depth on the singular values of the Jacobian matrix;
 \item Fixing the depth parameter $n_d=1$, the width parameter $n_w$ is gradually increased from 16 to 1600 to examine the effect of width on the singular values of the Jacobian matrix.
\end{enumerate}
For each experimental setting, the maximum singular value $\log_2\sigma^2_{\max}(\mathbf{J}_\theta f(x))$, the minimum singular value $\log_2\sigma^2_{\min}(\mathbf{J}_\theta f(x))$, and their gap are observed at the initialization stage as functions of the architectural parameters.

All models are trained using the SGD optimizer with a learning rate of $0.01$ and a momentum coefficient of $0.9$. In each experiment, the spectral characteristics of the Jacobian matrix are computed at the initialization stage, to observe the modulatory effect of the training process on the properties of the Jacobian matrix.

\subsubsection{Experimental Results}
\begin{figure}[htbp]
	\centering
	\begin{minipage}{1.0\linewidth}
		\centering
 \centerline{\includegraphics[width=0.9\columnwidth]{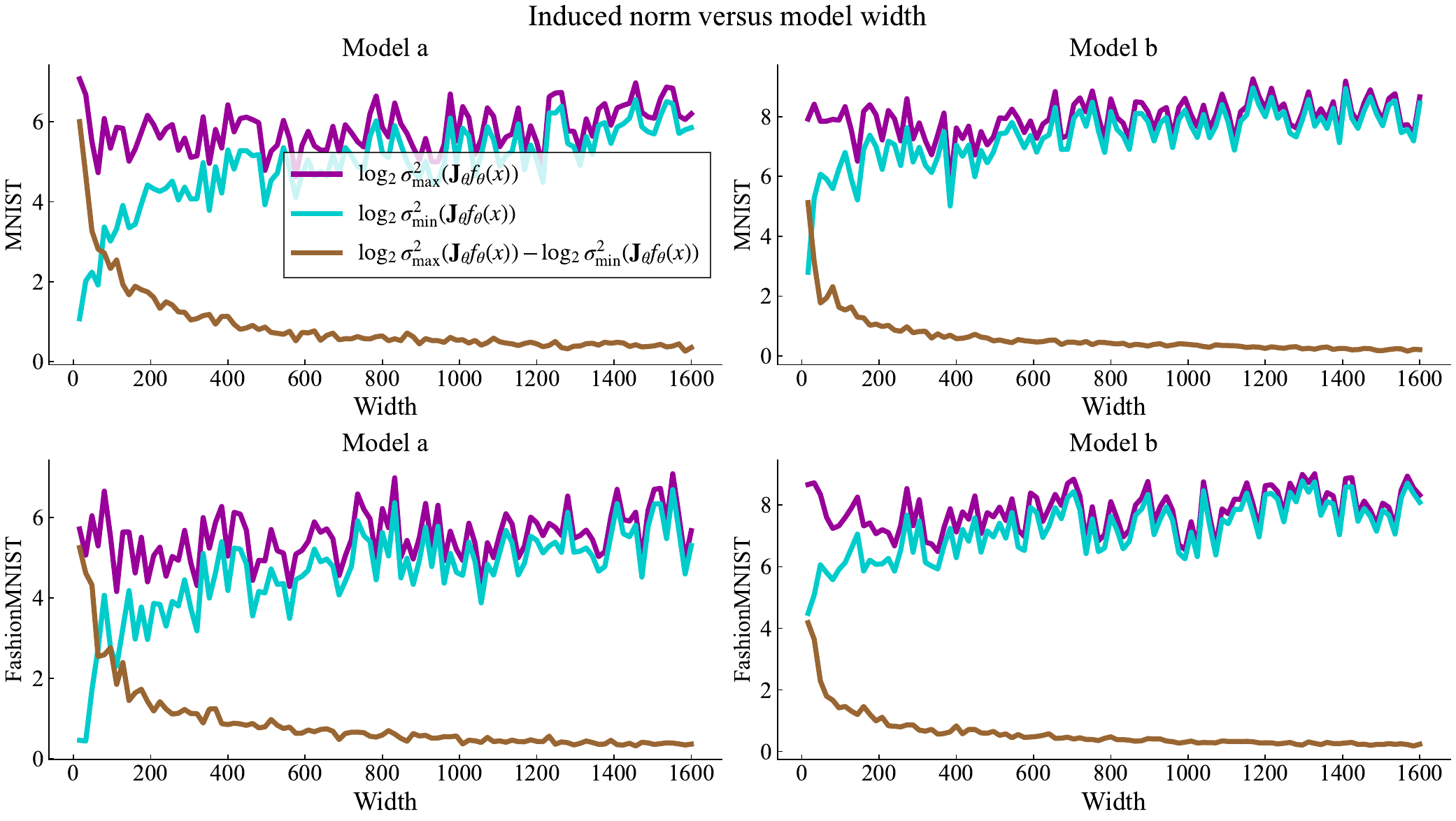}}
 \caption{Variation of the extreme singular values with model width.}
 \label{fig:width_init}
 \vspace{5mm}
	\end{minipage}
	\begin{minipage}{1.0\linewidth}
		\centering
 \centerline{\includegraphics[width=0.9\linewidth]{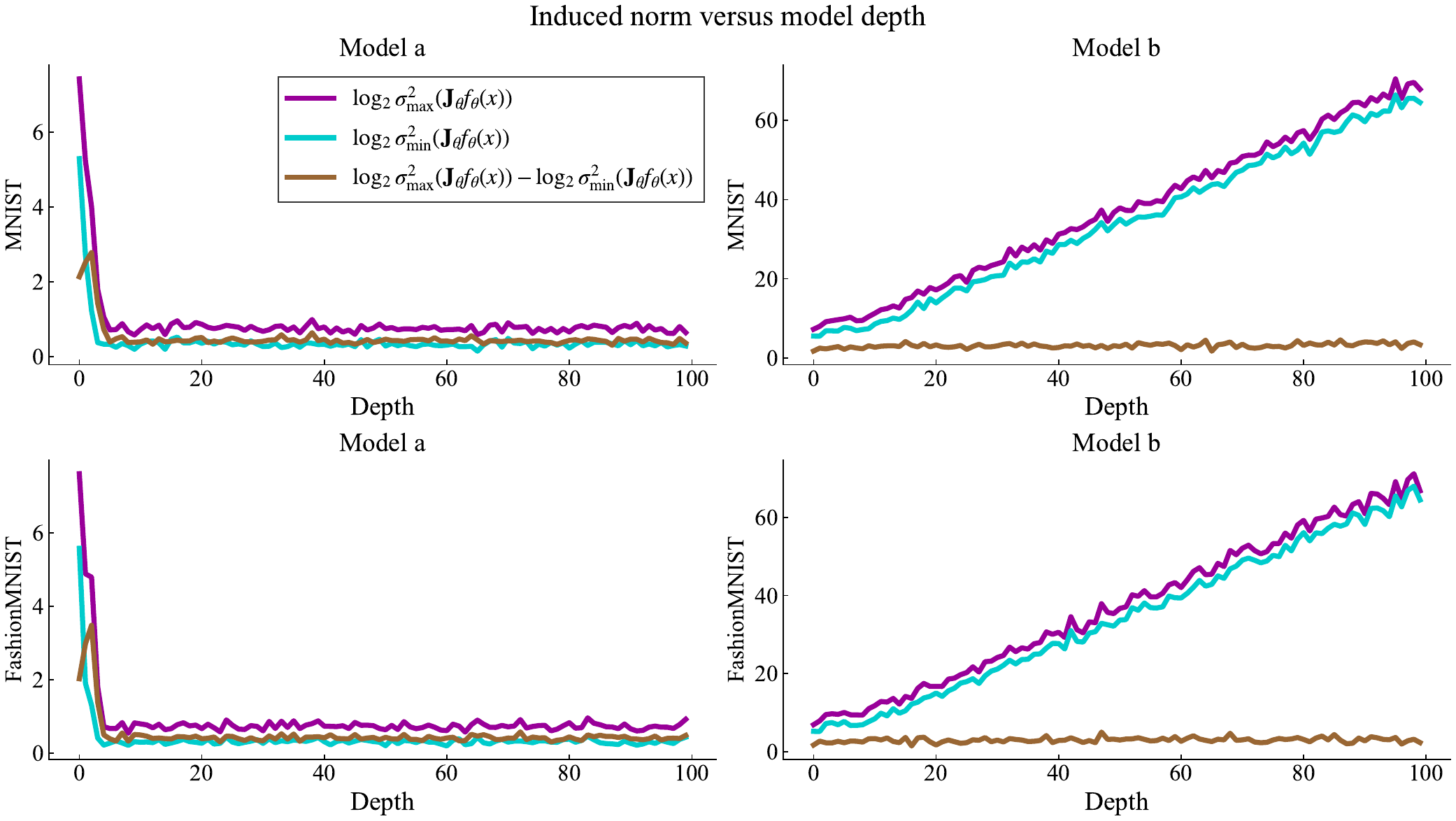}}
 \caption{Variation of the extreme singular values with model depth.}
 \label{fig:depth_init}
	\end{minipage}
 \caption{Relationship between the extreme singular values and model depth/width under random initialization.}
 \label{fig:init}
\end{figure}
\FloatBarrier

\paragraph{Synergistic Effect of Random Parameter Initialization}
As shown in Figure~\ref{fig:init}, the spectral gap $\log_2 \sigma^2_{\max}(\mathbf{J}_\theta f(x)) - \log_2\sigma^2_{\min}(\mathbf{J}_\theta f(x))$ gradually decreases as either the model width or the model depth increases. Increasing either model depth or width expands the total number of trainable parameters, thereby verifying the conclusion that, under the approximate gradient independence assumption, increasing the number of parameters helps reduce the ratio between the extreme singular values of the Jacobian matrix. 

\paragraph{Network Width and Extreme Singular Values}
As shown in Figure~\ref{fig:init}, when the model width increases, the extreme singular values $\log_2 \sigma^2_{\max}(\mathbf{J}_\theta f(x))$ and $\log_2 \sigma^2_{\min}(\mathbf{J}_\theta f(x))$ of model~a and model~b exhibit an oscillatory upward trends and gradually stabilize. If the model parameters change minimally during training (i.e., they remain close to their initialization values), the extreme singular values of the Jacobian matrix for wider models remain approximately constant. Under these circumstances, the upper and lower bounds of the empirical risk are determined solely by the gradient energy, which can be effectively optimized through mini-batch stochastic gradient descent. This observation is consistent with the ``lazy training'' perspective in neural tangent kernel theory; however, the theoretical framework of this paper does not require the infinite-width assumption (which is one of the main limitations of neural tangent kernel-based approaches).

\paragraph{Network Depth and Skip Connections}
As shown in Figure~\ref{fig:init}, as the model depth increases, the extreme singular values of models a and b exhibit markedly different trends. The extreme singular values of model a (without skip connections) decay rapidly with increasing depth, while the logarithm of the extreme singular values of model b (with skip connections) grows linearly with depth. This result is consistent with the theoretical prediction: the overall Jacobian matrix can be approximated as the product of the Jacobian matrices of individual local modules. Under the standard initialization where the entries of each local Jacobian have mean zero and variance 
$\sigma^2 < 1$, their product leads to exponential decay of the extreme singular values with depth. In contrast, skip connections add an identity term to the local module Jacobian matrices, causing the extreme singular values of model b to grow exponentially with depth. The larger the extreme singular values of the Jacobian matrix, the tighter the upper and lower bounds of the empirical risk. Therefore, Figure~\ref{fig:depth_init} verifies the conclusion of this paper: skip connections cause the extreme singular values of the Jacobian matrix to increase with depth, thereby helping to tighten the upper and lower bounds of the empirical risk.

\section{Conclusion}
\label{sec:conclusion}

This work set out from a central puzzle in deep learning: why SGD reliably succeeds on highly non-convex, non-smooth objectives for which classical optimization theory provides no guarantees. Instead of imposing convexity or differentiability, we addressed this question through the lens of conjugate duality. By replacing the quadratic energy $\|\cdot\|_2^2$ underlying classical strong convexity and Lipschitz smoothness with a broad family of Legendre energy functions, we introduced $\mathcal{H}(\psi)$-convexity and $\mathcal{H}(\Psi)$-smoothness. These generalized notions not only unify convex and non-convex, as well as smooth and non-smooth, objectives within a single formalism, but also reveal an inherent duality between convexity and smoothness.
The proposed viewpoint yields two concrete benefits. First, under the generalized conditions, the optimal learning rate of generalized GD is identically $1$, eliminating the need to tune this most delicate hyperparameter in classical analyses. Second, for the composite formulation that captures DNN training, optimization admits a transparent two-term characterization: SGD reduces the gradient energy, while the induced norm of the Jacobian is governed by architecture, initialization, and stochasticity. Extensive experiments confirm that the resulting upper and lower bounds faithfully track the empirical risk across a wide range of models, datasets, optimizers, and loss functions.
Several directions remain open for future work. It would be valuable to connect the gradient-energy perspective with generalization and implicit-regularization theory, and to develop optimization strategies that directly control the induced norm of the Jacobian matrix. We hope that the conjugate-duality viewpoint developed here provides a reusable foundation for such analyses.

\section{Supporting Lemmas}
\label{sec:lemmas}
This section collects the auxiliary results used in the proofs of this paper.

\subsection{Convex Analysis}

\begin{lemma}[Properties of Convex Conjugate Duality]
\label{lem:fenchel_duality}
For all $\mu,\nu\in \mathbb{R}^k$, the following hold:
\begin{enumerate}
 \item \textbf{Fenchel--Young inequality}\label{lem:fy_inequality}:
\begin{equation}
 \Psi(\mu) + \Psi^*(\nu) \ge \langle \mu,\nu\rangle, \quad \forall \mu \in \mathrm{dom}(\Psi),\, \nu \in \mathrm{dom}(\Psi^*).
 \label{eq:fenchel_young_inequality}
 \end{equation}
 Equality holds if and only if $\nu \in \partial \Psi(\mu)$. If, furthermore, $\Psi$ is strictly convex and differentiable, then equality holds if and only if $\nu = \mu_\Psi^*$.
 \item \label{lem:mul_factor_outer}Suppose $\Phi(x)$ is a convex function and $a > 0$. Letting $\Psi(x)=a\Phi(x)$, we have $\Psi^*(f)=a\Phi^*(f/a)$.
\item \label{lem:mul_factor}Suppose $\Phi(x)$ is a convex function and $a > 0$. Letting $\Psi(x)=\Phi(ax)$, we obtain $\Psi^*(f)=\Phi^*(f/a)$.
 \item \textbf{Conjugate of radial functions}\label{lem:radial_fun}. Let \(\|\cdot\|\) be a norm on \(\mathbb{R}^d\) and \(\|\cdot\|_*\) its dual norm. If \(\Psi(x) = \phi(\|x\|)\), where \(\phi : \mathbb{R} \to (-\infty, +\infty]\) is an \textbf{even}, lower semicontinuous, proper convex function, then $\Psi^*(y) = \phi^*(\|y\|_*)$. 
\item \label{lem:order_reverse}If \(f(x) \le g(x)\) pointwise, then \(f^*(y) \ge g^*(y)\) pointwise.
\end{enumerate}
\end{lemma}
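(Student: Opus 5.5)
We prove the five items of Lemma~\ref{lem:fenchel_duality} directly from the definition $\Psi^*(\nu)=\sup_{\mu}\{\langle\mu,\nu\rangle-\Psi(\mu)\}$. Each item is a standard fact of convex analysis, so the approach is a short, self-contained verification for each one. Items 1, 2, 3 and 5 follow by manipulating the supremum. Item 4 needs a small reduction through the dual norm, and we expect it to be the main obstacle.

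\emph{Fenchel--Young inequality.} Taking $\mu$ as one candidate in the supremum gives $\Psi^*(\nu)\ge\langle\mu,\nu\rangle-\Psi(\mu)$, which is the inequality. Equality holds exactly when $\mu$ attains the supremum. Equivalently, $\langle\mu',\nu\rangle-\Psi(\mu')\le\langle\mu,\nu\rangle-\Psi(\mu)$ for all $\mu'$, that is, $\Psi(\mu')\ge\Psi(\mu)+\langle\nu,\mu'-\mu\rangle$. This is precisely the statement $\nu\in\partial\Psi(\mu)$. If $\Psi$ is convex and differentiable, then $\partial\Psi(\mu)=\{\nabla\Psi(\mu)\}=\{\mu_\Psi^*\}$, which gives the refined condition.

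\emph{Scaling and order reversal.} For item 2 we factor $a$ out of the supremum: $\sup_x\{\langle x,f\rangle-a\Phi(x)\}=a\sup_x\{\langle x,f/a\rangle-\Phi(x)\}=a\Phi^*(f/a)$. For item 3 we substitute $u=ax$, a bijection since $a>0$: $\sup_x\{\langle x,f\rangle-\Phi(ax)\}=\sup_u\{\langle u,f/a\rangle-\Phi(u)\}=\Phi^*(f/a)$. For item 5, if $f\le g$ pointwise then $\langle x,y\rangle-f(x)\ge\langle x,y\rangle-g(x)$ for every $x$. Taking the supremum over $x$ gives $f^*(y)\ge g^*(y)$.

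\emph{Radial functions (main step).} We split the supremum into a radius part and a direction part. Write $x=tu$ with $t=\|x\|\ge 0$ and $\|u\|=1$. Then
\begin{equation*}
\Psi^*(y)=\sup_{t\ge0}\Bigl\{\sup_{\|u\|=1}t\langle u,y\rangle-\phi(t)\Bigr\}=\sup_{t\ge 0}\{t\|y\|_*-\phi(t)\},
\end{equation*}
using the definition of the dual norm, $\|y\|_*=\sup_{\|u\|=1}\langle u,y\rangle$. The point $x=0$ is covered by $t=0$. It remains to show that $\sup_{t\ge0}\{ts-\phi(t)\}=\phi^*(s)$ for $s=\|y\|_*\ge 0$, where $\phi^*$ is the one-dimensional conjugate with the supremum taken over all $t\in\mathbb{R}$. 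This is where evenness is used. For $t<0$ and $s\ge 0$ we have $ts-\phi(t)\le (-t)s-\phi(-t)$, because $\phi(-t)=\phi(t)$ and $ts\le(-t)s$. So negative $t$ never beats its reflection $-t\ge0$, and the supremum over $\mathbb{R}$ equals the supremum over $t\ge 0$. This yields $\Psi^*(y)=\phi^*(\|y\|_*)$.

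The only delicate point is this reduction to $t\ge0$, together with checking that the direction supremum is attained or approached uniformly in $t$. For each fixed $t$ the inner supremum is exactly $t\|y\|_*$, so exchanging the two suprema is harmless. Lower semicontinuity and properness of $\phi$ are not needed for the identity itself. They only guarantee that $\Psi$ is a proper closed convex function, so that its conjugate is well behaved.
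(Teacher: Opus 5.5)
Your proof is correct. The paper does not prove this lemma itself; it only refers to Rockafellar's \emph{Convex Analysis} and to Todd. Your argument is therefore less an alternative route than a self-contained version of the standard textbook derivations:
\begin{itemize}
\item items 2, 3 and 5 come from rescaling, or comparing, the defining supremum;
\item item 1 is the subgradient characterization of a maximizer of $\mu'\mapsto\langle\mu',\nu\rangle-\Psi(\mu')$;
\item item 4 is the radius--direction splitting of the supremum, followed by the reflection $t\mapsto -t$ that uses evenness.
\end{itemize}
What your version buys over the bare citation is clarity about which hypotheses are used. Item 4 needs only evenness of $\phi$; lower semicontinuity, properness and even convexity play no role in the identity. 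Items 2 and 3 do not use the stated convexity of $\Phi$. The refinement in item 1 needs convexity plus differentiability at $\mu$, not strict convexity. The one fact you import rather than prove is $\partial\Psi(\mu)=\{\nabla\Psi(\mu)\}$ for a convex $\Psi$ differentiable at $\mu$. This is standard, but it presupposes $\mu\in\mathrm{int}\,\mathrm{dom}(\Psi)$. For a Legendre $\Psi$ the subdifferential is empty at boundary points of the domain, so the ``equality iff $\nu=\mu_\Psi^*$'' clause should be read on the interior. This is a reading of the statement, not a gap in your argument.
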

For detailed expositions of these properties, see~\citet{1970Convex} and~\citet{Todd2003ConvexAA}.

\subsection{Legendre Functions}
\begin{definition}[Legendre Function]
A proper, lower semicontinuous convex function $\Psi:\mathbb{R}^m\to \bar{\mathbb R}$ is called a Legendre function if it satisfies the following conditions:
 \begin{enumerate}
 \item \textbf{Essentially smooth:} The interior of the domain is nonempty, i.e., $\mathrm{int}(\mathrm{dom}(\Psi))\neq \emptyset$, and $\Psi$ is differentiable on $\mathrm{int}(\mathrm{dom}(\Psi))$ such that $\|\nabla\Psi(w^{\nu})\|\to \infty$ whenever $w^{\nu}\to w\in \text{bdry }\mathrm{dom}(\Psi)$;
 \item \textbf{Essentially strictly convex:} $\Psi$ is strictly convex on $\mathrm{dom}(\partial \Psi):=\{w\in \mathbb{R}^m:\partial\Psi(w)\neq \emptyset\}$;
 \item \textbf{Legendre:} $\Psi$ is both essentially smooth and essentially strictly convex~\citep{1970Convex}.
 \end{enumerate}
\end{definition}
The essential strict convexity of a Legendre function is transformed into essential smoothness under duality. The following proposition states that the conjugate of a Legendre function is also a Legendre function~\citep{Strmberg2009ANO,1970Convex}.
\begin{proposition}\label{prop:legendre_dual}
If $F$ is a Legendre function, then its convex conjugate $F^*$ is also a Legendre function.
\end{proposition}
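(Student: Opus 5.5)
The plan is to use the symmetry between essential smoothness and essential strict convexity under conjugation. First, $F^*$ is automatically proper, lower semicontinuous and convex. The reason is that $F$ is proper, lsc and convex, so $F^{*}$ is a supremum of affine functions of $\nu$ and is not identically $+\infty$. The Fenchel--Moreau theorem then also gives $F^{**}=F$. This biconjugacy is what lets us move statements freely between $F$ and $F^*$.

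The core of the argument is the subdifferential inversion rule: for proper lsc convex $F$, $\nu\in\partial F(\mu)$ if and only if $\mu\in\partial F^*(\nu)$. This is the equality case of the Fenchel--Young inequality (Lemma~\ref{lem:fenchel_duality}, item~\ref{lem:fy_inequality}) combined with $F^{**}=F$. Hence $\partial F^*=(\partial F)^{-1}$ as set-valued maps. I would then rely on the standard characterizations, valid for proper lsc convex functions (Rockafellar, Theorem 26.1):
\begin{itemize}
\item $F$ is essentially smooth if and only if $\partial F$ is single-valued wherever it is nonempty, i.e.\ $\partial F(\mu)=\{\nabla F(\mu)\}$ on $\mathrm{dom}(\partial F)=\mathrm{int}(\mathrm{dom}F)$;
\item $F$ is essentially strictly convex if and only if $\partial F^{-1}$ is single-valued, i.e.\ no two distinct points share a subgradient.
\end{itemize}
The second characterization follows because strict convexity on convex subsets of $\mathrm{dom}(\partial F)$ fails exactly when $F$ is affine on some segment, and then a common subgradient exists along that segment.

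With these characterizations the conclusion is bookkeeping. Since $F$ is essentially strictly convex, $(\partial F)^{-1}=\partial F^*$ is single-valued where nonempty, so $F^*$ is essentially smooth. Since $F$ is essentially smooth, $\partial F$ is single-valued, so $(\partial F^*)^{-1}=\partial F^{**}=\partial F$ is single-valued, and $F^*$ is essentially strictly convex. Therefore $F^*$ is Legendre, and moreover $\nabla F^*=(\nabla F)^{-1}$ on the interiors, which is the identity $\mu_\Psi^*$ used elsewhere in the paper.

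The main obstacle is the characterization of essential smoothness through single-valuedness of $\partial F$, in particular the boundary blow-up condition. If $\partial F(w)\neq\emptyset$ at a boundary point $w$, the subdifferential contains a normal-cone direction and is therefore multivalued. Conversely, if the gradients stayed bounded along a sequence approaching the boundary, a limit subgradient would exist at the boundary point, again producing a nonempty multivalued subdifferential there. I would handle this by first showing that $\mathrm{dom}(\partial F)$ lies between $\mathrm{ri}(\mathrm{dom}F)$ and $\mathrm{dom}F$, and then using the closedness of the graph of $\partial F$ to get the limiting subgradient. Rather than reprove this, I would cite Rockafellar's Theorem 26.1 and Theorem 26.3, which give exactly this duality and which the paper already references via \citet{1970Convex}.
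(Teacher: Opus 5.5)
Your proposal is correct and matches the paper's treatment: the paper gives no proof of its own and simply cites Rockafellar's \emph{Convex Analysis} (and Str\"omberg), where Legendre duality (Theorem 26.5) is derived just as you do, from Theorem 26.1, the inversion $\partial F^*=(\partial F)^{-1}$ obtained via $F^{**}=F$, and Theorem 26.3. One minor point: your ``obstacle'' paragraph only sketches the direction ``$\partial F$ single-valued $\Rightarrow$ $F$ essentially smooth,'' whereas your step ``$F$ essentially smooth $\Rightarrow$ $\partial F$ single-valued'' needs the converse (in particular $\partial F(w)=\emptyset$ at boundary points $w\in\mathrm{dom}F$), which your sketch does not address but your citation of Theorem 26.1 covers.
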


\begin{lemma}
\label{lem:convex_smooth_dual}
Let $f_*$ be the global minimum of the differentiable function $f(x)$.
If $f$ is strongly convex with parameter $c$, it follows that
\begin{equation}\label{eq:strong}
\begin{aligned}
 f(y) &\geq f(x) + \nabla f(x)^\top(y-x) + \frac{c}{2}\|y-x\|^2, \quad \forall x, y,\\
 f(x) &- f_* \leq \dfrac1{2c}\|\nabla f(x)\|_2^2.
\end{aligned}
\end{equation}
If $f$ is Lipschitz smooth over $\mathbb{R}^n$ with Lipschitz constant $L$, then it follows that
\begin{equation}\label{eq:lipschitz}
\begin{aligned}
 f(y) &\leq f(x) + \nabla f(x)^\top(y-x) + \frac{L}{2}\|y-x\|^2, \quad \forall x, y,\\
 f(x)&-f_*\geq \frac{1}{2L} \|\nabla f(x)\|_2^2.
\end{aligned}
\end{equation}
For the proofs of the above results, see the studies~\citep{kakade2009duality,zhou2018fenchel}.
They further reveal the duality properties between strong convexity and Lipschitz smoothness as follows.
For a function $f$ and its Fenchel conjugate $f^*$, the following assertions hold:
\begin{itemize}
 \item If $f$ is closed and strongly convex with parameter $c$, then $f^*$ is Lipschitz smooth with parameter $1/c$.
 \item If $f$ is convex and Lipschitz smooth with parameter $L$, then $f^*$ is strongly convex with parameter $1/L$.
\end{itemize}
\end{lemma}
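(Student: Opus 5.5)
The plan is to treat the two inequality blocks first, since they follow directly from the quadratic bounds, and then the two duality assertions, which need monotonicity arguments for the gradient maps. Throughout, $\|\cdot\|$ is the Euclidean norm. For the strongly convex case, I take the first inequality in \eqref{eq:strong} as the first-order characterization of $c$-strong convexity for a differentiable $f$. To justify it, apply the definition $f(x+t(y-x))\le (1-t)f(x)+tf(y)-\tfrac{c}{2}t(1-t)\|y-x\|^2$, divide by $t$, and let $t\downarrow 0$. Next, fix $x$ and minimize both sides of that inequality over $y$. The left side has infimum $f_*$. The right side is a convex quadratic in $y$, minimized at $y=x-\nabla f(x)/c$ with value $f(x)-\tfrac{1}{2c}\|\nabla f(x)\|_2^2$. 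This gives $f(x)-f_*\le \tfrac{1}{2c}\|\nabla f(x)\|_2^2$.

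For the $L$-smooth case, I obtain the quadratic upper bound in \eqref{eq:lipschitz} from the integral form of the remainder, $f(y)-f(x)-\nabla f(x)^\top(y-x)=\int_0^1(\nabla f(x+t(y-x))-\nabla f(x))^\top(y-x)\,dt$. Cauchy--Schwarz and the Lipschitz bound on $\nabla f$ make the integrand at most $Lt\|y-x\|^2$, so the remainder is at most $\tfrac{L}{2}\|y-x\|^2$. Then I plug in the gradient step $y=x-\nabla f(x)/L$, which gives $f_*\le f(y)\le f(x)-\tfrac{1}{2L}\|\nabla f(x)\|_2^2$. Both of these are the quadratic special case of Theorem~\ref{thm:h_bound}, and I would note that as a consistency check.

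For the first duality claim, I would work with $f^*$ via the Fenchel--Young equality case in Lemma~\ref{lem:fenchel_duality}. Since $f$ is closed and $c$-strongly convex, $y\mapsto\langle y,g\rangle-f(y)$ is strongly concave and coercive for every $g$. So $f^*$ is finite everywhere and the maximizer $x(g)$ is unique, which gives $\partial f^*(g)=\{x(g)\}$; hence $f^*$ is differentiable with $\nabla f^*(g)=x(g)$ and $g\in\partial f(x(g))$. Strong convexity of $f$ gives strong monotonicity of $\partial f$:
\[
\langle g_1-g_2,x_1-x_2\rangle\ge c\|x_1-x_2\|^2 \quad\text{for } g_i\in\partial f(x_i).
\]
Applying this with $x_i=\nabla f^*(g_i)$ and using Cauchy--Schwarz yields $\|\nabla f^*(g_1)-\nabla f^*(g_2)\|\le\tfrac1c\|g_1-g_2\|$, which is $(1/c)$-smoothness of $f^*$.

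For the converse, I would first establish the Baillon--Haddad co-coercivity inequality for a convex $L$-smooth $f$:
\[
\langle\nabla f(x)-\nabla f(y),x-y\rangle\ge\tfrac1L\|\nabla f(x)-\nabla f(y)\|^2 .
\]
To prove it, apply the inequality $\tfrac{1}{2L}\|\nabla h\|^2\le h-h_*$ just derived to the convex, $L$-smooth auxiliary function $h(z)=f(z)-\nabla f(x)^\top z$, whose minimizer is $x$. Then symmetrize in $x$ and $y$ and add the two resulting inequalities. Rewriting with $g_i=\nabla f(x_i)$, equivalently $x_i\in\partial f^*(g_i)$, turns co-coercivity into $(1/L)$-strong monotonicity of $\partial f^*$. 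Strong monotonicity of the subdifferential of a closed convex function is equivalent to strong convexity with the same modulus, and I would obtain this by integrating along segments. I expect this step to be the main obstacle, specifically the careful handling of $\mathrm{dom}\,f^*$, which may be a proper subset on which $\partial f^*$ is nonempty only in the relative interior. I will rely on closedness and the standard results cited from \citet{1970Convex} for this.
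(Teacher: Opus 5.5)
Your proof is correct. The paper does not prove this lemma itself; it only points to the two cited references. The natural comparison is therefore with its own generalizations.

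\textbf{Inequality blocks.} Your argument here is the quadratic special case of the proof of Theorem~\ref{thm:h_bound}. There, the identity $\min_z\{\langle\nabla G(\mu),z\rangle+\Psi(z)\}=-\Psi^*(\nabla G(\mu))$ plays the role of your explicit minimization of the quadratic model and of your gradient step $y=x-\nabla f(x)/L$. You additionally derive the first-order forms from the $t(1-t)$ definition and from the Lipschitz gradient. The paper instead builds those forms directly into its definitions.

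\textbf{Second duality bullet.} You argue through monotone operators. The paper's analogue, Proposition~\ref{prop:gen_prop}, item~1, uses order reversal of conjugation instead. From $f\le H_\nu:=f(\nu)+\langle\nabla f(\nu),\cdot-\nu\rangle+\tfrac{L}{2}\|\cdot-\nu\|^2$ one gets $f^*\ge H_\nu^*$, that is,
\[
f^*(\theta)\;\ge\; f^*(\eta)+\langle \nu,\theta-\eta\rangle+\tfrac{1}{2L}\|\theta-\eta\|^2\qquad\text{for all }\theta,\quad \eta=\nabla f(\nu).
\]
Convexity of $f$ enters only to write $f^*(\eta)=\langle\eta,\nu\rangle-f(\nu)$.

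For this bullet, that route is shorter than yours. It needs only the quadratic upper bound you already proved, with no Baillon--Haddad step and no integration along segments. It yields the first-order inequality at every $\eta\in\operatorname{range}\nabla f=\mathrm{dom}\,\partial f^*\supseteq\mathrm{ri}(\mathrm{dom}\,f^*)$. Averaging the cases $\theta=u$ and $\theta=v$ at $\eta=w=(1-\lambda)u+\lambda v$ then gives the $\lambda(1-\lambda)$ form for $u,v\in\mathrm{ri}(\mathrm{dom}\,f^*)$.

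The relative-boundary difficulty you flag arises in either route; the paper's write-up glosses over it by assuming $G^*$ differentiable. It closes as follows. Fix $c_0\in\mathrm{ri}(\mathrm{dom}\,f^*)$ and replace $u,v$ by $(1-s)u+sc_0$ and $(1-s)v+sc_0$. Let $s\downarrow 0$, using lower semicontinuity at $w$. At $u$ and $v$, use Rockafellar's Theorem~7.5: a closed convex function is continuous along segments issuing from the relative interior.

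\textbf{First duality bullet.} Here the comparison goes the other way. The conjugation route only returns the quadratic upper bound for $f^*$. It also presupposes the differentiability of $f^*$, which your attainment-and-uniqueness argument actually establishes. Upgrading that bound to the Lipschitz-gradient statement in the lemma would need an extra step such as your co-coercivity lemma. So your direct strong-monotonicity proof is the more economical one for this bullet.

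\textbf{Summary.} The conjugation argument is symmetric and carries over verbatim to arbitrary energies $\Psi$, which is why the paper uses it. Your monotone-operator argument is tied to the quadratic case. In exchange, it gives the stronger gradient-Lipschitz conclusion and needs no differentiability assumption on the conjugate.
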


\subsection{Vector Norms}

\begin{lemma}[Equivalence of Norms on Finite-Dimensional Spaces]
\label{lem:norm_equivalence_finite_dim}
Let $V$ be a finite-dimensional vector space over $\mathbb{R}$ (or $\mathbb{C}$), and let $\|\cdot\|_{\alpha}$ and $\|\cdot\|_{\beta}$ be any two norms on $V$. Then there exist constants $c, C > 0$ such that for all $x \in V$,
\begin{equation}
\label{eq:norm_equivalence}
c\,\|x\|_{\alpha} \;\le\; \|x\|_{\beta} \;\le\; C\,\|x\|_{\alpha}.
\end{equation}
Consequently, all norms on $V$ induce the same topology; in particular, convergence, continuity, and compactness are preserved under norm changes~\citep{1970Convex}.
\end{lemma}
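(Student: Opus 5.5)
We prove the lemma by comparing an arbitrary norm to one fixed reference norm; since the relation ``$\|\cdot\|_\alpha$ and $\|\cdot\|_\beta$ are equivalent'' is an equivalence relation, this suffices. Fix a basis $e_1,\dots,e_n$ of $V$ and write $x=\sum_i x_i e_i$. As reference norm take $\|x\|_1:=\sum_i |x_i|$. For $\mathbb{C}$, view $V$ as a real space of dimension $2n$: the norm axioms over $\mathbb{R}$ follow from those over $\mathbb{C}$, and equivalence is a statement about real scalars only. So we may work over $\mathbb{R}$. It is enough to show that any norm $\|\cdot\|$ on $V$ satisfies $m\|x\|_1\le \|x\|\le M\|x\|_1$ for some constants $0<m\le M$. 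Applying this to both norms gives
\[
\frac{m_\beta}{M_\alpha}\,\|x\|_\alpha\;\le\;\|x\|_\beta\;\le\;\frac{M_\beta}{m_\alpha}\,\|x\|_\alpha ,
\]
which is the claim with $c=m_\beta/M_\alpha$ and $C=M_\beta/m_\alpha$.

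\textbf{Upper bound.} By the triangle inequality and homogeneity,
\[
\|x\|\le \sum_i |x_i|\,\|e_i\| \le M\|x\|_1, \qquad M:=\max_i\|e_i\| .
\]
Combined with the reverse triangle inequality, $\bigl|\|x\|-\|y\|\bigr|\le M\|x-y\|_1$. Hence $\|\cdot\|$ is Lipschitz, and in particular continuous, with respect to $\|\cdot\|_1$.

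\textbf{Lower bound.} This is the main step, and it relies on compactness. The unit sphere $S=\{x:\|x\|_1=1\}$ is closed and bounded in coordinates, so it is compact by Heine--Borel. The continuous function $x\mapsto\|x\|$ therefore attains its minimum $m$ on $S$ at some point $x_0$. Since $x_0\neq 0$ and a norm vanishes only at $0$, we get $m=\|x_0\|>0$. For $x\neq 0$, homogeneity applied to $x/\|x\|_1\in S$ gives $\|x\|\ge m\|x\|_1$, and the inequality is trivial at $x=0$. The only delicate point is this step: we must justify that $\|\cdot\|_1$-bounded closed sets are compact (via the coordinate isomorphism with $\mathbb{R}^n$) and that continuity is taken with respect to $\|\cdot\|_1$, which is exactly what the upper bound provides.

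\textbf{Consequences.} The two-sided inequality shows that every $\|\cdot\|_\alpha$-ball contains a $\|\cdot\|_\beta$-ball with the same center, and conversely. Hence the two norms generate the same open sets. Convergence, continuity and compactness are defined through these open sets (or through sequences, which the two-sided bound also controls), so they are preserved when one norm is replaced by the other.
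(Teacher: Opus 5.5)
Your proof is correct, and it is the classical argument. The norm is Lipschitz with respect to a coordinate norm, so it attains a positive minimum on the compact coordinate unit sphere. The two-sided bounds for $\|\cdot\|_\alpha$ and $\|\cdot\|_\beta$ then combine exactly as you show.

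The paper gives no proof of its own and simply cites the result as standard, so there is no competing route to compare. The one step you flag deserves its explicit justification: compactness of $\{x:\|x\|_1=1\}$. This follows from $\|x\|_2\le\|x\|_1\le\sqrt{n}\,\|x\|_2$ in coordinates together with Heine--Borel, or directly from a coordinatewise Bolzano--Weierstrass argument.
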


\textbf{Dual Norm.} Let \(\|\cdot\|\) be a norm on \(\mathbb{R}^d\). The \textbf{dual norm} \(\|\cdot\|_*\) is defined by

\[
\|y\|_* = \sup_{\|x\| \le 1} \langle x, y \rangle = \sup_{x \neq \mathbf{0}} \frac{\langle x, y \rangle}{\|x\|}.
\]

The dual norm is indeed a norm and possesses the following key properties:
\begin{lemma}
\label{lem:dual_norm}
\begin{enumerate}
\item \textbf{Norm axioms.} \(\|\cdot\|_*\) is positive definite, absolutely homogeneous, and satisfies the triangle inequality; hence it is a norm on \(\mathbb{R}^d\).

\item \textbf{Generalized Cauchy--Schwarz inequality.} For any \(x, y \in \mathbb{R}^d\),
\[
|\langle x, y \rangle| \le \|x\| \, \|y\|_*.
\]

\item \textbf{Bidual.} In finite-dimensional spaces (or reflexive Banach spaces), the dual of the dual recovers the original norm: \(\|\cdot\|_{**} = \|\cdot\|\).

\item \textbf{Duality of \(\ell_p\) norms.} For \(p \ge 1\) and \(q\) satisfying \(1/p + 1/q = 1\), the dual of \(\|\cdot\|_p\) is \(\|\cdot\|_q\). In particular, the Euclidean norm \(\|\cdot\|_2\) is self-dual.

\item \textbf{Convex conjugate of a norm.} The Fenchel conjugate of \(f(x) = \|x\|\) is the indicator function of the dual unit ball: \(f^*(y) = \mathbf{1}_{\|y\|_* \le 1}\).

\item \textbf{Radial convex functions.} For an even, proper, lower semicontinuous convex function \(\phi: \mathbb{R} \to (-\infty, +\infty]\), the convex conjugate of \(\Phi(x) = \phi(\|x\|)\) is \(\Phi^*(y) = \phi^*(\|y\|_*)\), where \(\phi^*\) is the Legendre conjugate of \(\phi\) as a function on \(\mathbb{R}\). This property is crucial when handling sparsity-promoting regularizers and energy functions built from norms.
\end{enumerate}
\end{lemma}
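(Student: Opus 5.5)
The plan is to prove the six items in order, working directly from the definition $\|y\|_* = \sup_{\|x\|\le 1}\langle x,y\rangle$. Item~3 is the only one needing a genuine convex-analytic tool, and item~6 reuses item~2.

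\textbf{Items 1 and 2.} For the norm axioms, $\|\cdot\|_*$ is a supremum of the linear functionals $y\mapsto\langle x,y\rangle$ over the compact unit ball, so it is finite.
\begin{itemize}
\item Subadditivity follows because a supremum of sums is at most the sum of the suprema.
\item Absolute homogeneity follows from the symmetry $x\mapsto -x$ of the ball.
\item Positive definiteness follows by taking $x=y/\|y\|$ for $y\neq 0$, which gives $\|y\|_*\ge \|y\|_2^2/\|y\|>0$.
\end{itemize}
For the generalized Cauchy--Schwarz inequality, apply the definition to $\pm x/\|x\|$ when $x\neq 0$; the case $x=0$ is trivial.

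\textbf{Item 3 (bidual).} The inequality $\|x\|_{**}\le\|x\|$ is immediate from item~2. For the reverse, fix $x_0$ with $\|x_0\|=1$. The closed unit ball is a closed convex set, and $x_0$ lies on its boundary. In finite dimensions I will invoke the supporting hyperplane theorem, i.e.\ separation, to obtain $y\neq 0$ with $\langle x,y\rangle\le\langle x_0,y\rangle$ for all $\|x\|\le1$. This says $\|y\|_*=\langle x_0,y\rangle$, so $\|x_0\|_{**}\ge \langle x_0,y\rangle/\|y\|_*=1$. Homogeneity then extends the equality to all $x$. I expect this to be the main obstacle, since it is the one place that needs Hahn--Banach/separation rather than pure algebra; everything else is elementary.

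\textbf{Item 4 ($\ell_p$ duality).} H\"older's inequality gives $\|y\|_{p,*}\le\|y\|_q$. Equality is attained at $x_i=\mathrm{sign}(y_i)|y_i|^{q-1}/\|y\|_q^{q-1}$, with the usual coordinate choice when $p=1$ or $p=\infty$. Self-duality of $\|\cdot\|_2$ is the case $p=q=2$.

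\textbf{Item 5 (conjugate of a norm).} We have $f^*(y)=\sup_x\{\langle x,y\rangle-\|x\|\}$.
\begin{itemize}
\item If $\|y\|_*\le1$, item~2 makes every term nonpositive, and $x=0$ gives the value $0$.
\item If $\|y\|_*>1$, pick $x$ with $\|x\|\le1$ and $\langle x,y\rangle>\|x\|$. Scaling by $t\to\infty$ sends the objective to $+\infty$.
\end{itemize}
This is exactly the indicator function $\mathbf{1}_{\|y\|_*\le 1}$ in the convex-analysis sense ($0$ inside, $+\infty$ outside).

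\textbf{Item 6 (radial functions).} This is the same statement as Lemma~\ref{lem:fenchel_duality}(\ref{lem:radial_fun}), and I prove it by slicing the supremum by level sets. Write
\[
\Phi^*(y)=\sup_{t\ge0}\Bigl\{\sup_{\|x\|=t}\langle x,y\rangle-\phi(t)\Bigr\}=\sup_{t\ge0}\{t\|y\|_*-\phi(t)\}.
\]
Here the inner supremum equals $t\|y\|_*$ by homogeneity and compactness of the sphere, using the definition together with item~2. It remains to identify $\sup_{t\ge0}\{ts-\phi(t)\}$ with $\phi^*(s)$ for $s=\|y\|_*\ge0$. Evenness of $\phi$ gives, for $t<0$,
\[
ts-\phi(t)\le |t|s-\phi(|t|),
\]
so restricting to $t\ge0$ loses nothing. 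Hence $\Phi^*(y)=\phi^*(\|y\|_*)$.
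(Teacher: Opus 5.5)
Your proposal is correct. The paper gives no proof of this lemma at all: it defers to standard convex-analysis texts (Rockafellar). So your argument is a self-contained replacement for a citation, not a variant of an argument in the paper. What your route buys is explicitness on several points the citation leaves implicit.

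In item~5 you correctly read $\mathbf{1}_{\|y\|_*\le 1}$ as the convex-analysis indicator ($0$ on the dual unit ball, $+\infty$ outside). Under the usual $0/1$ reading the statement would be false, since $f^*(0)=\sup_x\{-\|x\|\}=0$, so this is worth saying explicitly.

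In item~6, slicing by level sets and using the second form of the definition, $\|y\|_*=\sup_{x\neq 0}\langle x,y\rangle/\|x\|$, gives the inner supremum $t\|y\|_*$ at once. The evenness step $\sup_{t\ge 0}\{ts-\phi(t)\}=\sup_{t\in\mathbb{R}}\{ts-\phi(t)\}$ for $s\ge 0$ then finishes the proof. This uses only evenness of $\phi$, so lower semicontinuity and convexity are not needed for the identity itself, and item~5 is just the special case $\phi(t)=|t|$.

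In item~3 your supporting-hyperplane argument is exactly right for $\mathbb{R}^d$, which is the lemma's setting. It does not cover the parenthetical about reflexive Banach spaces; that case would need the analytic Hahn--Banach theorem. In fact $\|x\|_{**}=\|x\|$ on the canonical image holds in any normed space, with reflexivity only governing surjectivity. Nothing in the paper depends on that case.
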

The proofs of these properties can be found in standard convex analysis textbooks~\citep{1970Convex}.

\subsection{Induced Matrix Norms}
\label{subsubsec:matrix_norms}

\begin{lemma}
\label{lem:induced_mat_norm}
For the induced matrix norm and the lower induced matrix norm, the following properties hold~\citep{BV2014}:
\begin{enumerate}
 \item If $A$ is injective (i.e., has full column rank), then
\begin{equation}
 m(A) = \frac{1}{\|A^+\|},
 \end{equation}
 where $A^+$ denotes the Moore--Penrose pseudoinverse of $A$ and $\|A^+\|$ is its induced norm. If $A$ is not injective (i.e., $\ker(A)\neq \{0\}$), then $m(A)=0$ and the identity above degenerates since $\|A^+\|=\infty$.

 \item In the special case of the Euclidean ($L_2$) norm, the upper and lower induced norms admit the following singular-value characterizations:
 \begin{equation}
 \|A\|_2 = \sigma_{\max}(A), \qquad m_2(A) = \sigma_{\min}(A),
 \end{equation}
where $\sigma_{\max}(A)$ and $\sigma_{\min}(A)$ denote the largest and smallest singular values of $A$, 
respectively. The expression for $m_2(A)$ follows directly from the identity $\|A^+\|_2 = 1/\sigma_{\min}(A)$ when $A$ has full column rank; otherwise $\sigma_{\min}(A)=0$.
\end{enumerate}
\end{lemma}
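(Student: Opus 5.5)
The plan is to work directly from the definitions $m(A)=\inf_{x\neq 0}\|Ax\|/\|x\|$ and $\|A\|=\sup_{x\neq0}\|Ax\|/\|x\|$, using only the algebraic identities of the Moore--Penrose pseudoinverse. Let $A\in\mathbb{R}^{p\times n}$ have full column rank, so $A^\top A$ is invertible, $A^+=(A^\top A)^{-1}A^\top$, and $A^+A=I_n$.

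\textbf{Lower bound.} For every $x\neq 0$ we have $x=A^+(Ax)$, hence $\|x\|\le\|A^+\|\,\|Ax\|$. Since $A$ is injective, $Ax\neq 0$, so taking the infimum gives
\[
m(A)\ge 1/\|A^+\|.
\]

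\textbf{Reverse inequality.} Every $y$ in $\mathrm{range}(A)$ can be written as $y=Ax$ with $x=A^+y$, and
\[
\frac{\|A^+y\|}{\|y\|}=\frac{\|x\|}{\|Ax\|}.
\]
Taking the supremum over $y\in\mathrm{range}(A)\setminus\{0\}$ gives exactly $1/m(A)$. So $\|A^+\|$ restricted to $\mathrm{range}(A)$ equals $1/m(A)$. It then remains to show that the supremum defining $\|A^+\|$ over all of $\mathbb{R}^p$ is attained on $\mathrm{range}(A)$.

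\textbf{Main obstacle.} Write $y=AA^+y+(I-AA^+)y$, an orthogonal decomposition. Then $A^+y=A^+(AA^+y)$, so the question is whether $\|AA^+y\|\le\|y\|$. That holds whenever the orthogonal projector $AA^+$ is nonexpansive in the chosen norm. It holds for the Euclidean norm and, more generally, for any norm invariant under the relevant reflections. For a general norm it can fail, so I would either add this as a hypothesis or read $\|A^+\|$ as the norm of $A^+$ restricted to $\mathrm{range}(A)$. In the latter reading the identity is exactly the computation above. I expect this to be the delicate point, and I would flag it explicitly rather than gloss over it.

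\textbf{Non-injective case.} If $\ker A\neq\{0\}$, pick $x\in\ker A$ with $x\neq 0$. Then $\|Ax\|/\|x\|=0$, so $m(A)=0$, consistent with the convention $\|A^+\|=\infty$.

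\textbf{Euclidean case.} Take the SVD $A=U\Sigma V^\top$. By orthogonal invariance, $\|Ax\|_2^2=x^\top V\Sigma^\top\Sigma V^\top x$. The Rayleigh quotient of $A^\top A$ then gives $\sup\|Ax\|_2/\|x\|_2=\sigma_{\max}(A)$ and $\inf\|Ax\|_2/\|x\|_2=\sigma_{\min}(A)$. Here $\sigma_{\min}(A)$ denotes the $n$-th singular value, which is $0$ exactly when $A$ is not injective. Moreover $A^+=V\Sigma^+U^\top$, so $\|A^+\|_2=1/\sigma_{\min}(A)$. This is consistent with the first item, and there the projection issue disappears because $AA^+$ is an orthogonal projector, which is nonexpansive in $\|\cdot\|_2$.
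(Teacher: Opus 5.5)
Your argument is correct. The paper gives no proof of this lemma, only a citation, so there is no competing route to compare against.

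The obstacle you flag is not a weakness of your proof. It is a genuine error in item~1 as stated for an arbitrary norm, and a concrete example settles it. Take $A=(2,1)^\top\colon\mathbb{R}\to\mathbb{R}^2$, with $\|\cdot\|_1$ on $\mathbb{R}^2$ and $|\cdot|$ on $\mathbb{R}$.
\begin{itemize}
\item The lower induced norm is $m(A)=\|(2,1)\|_1=3$.
\item The pseudoinverse is $A^+=\tfrac{1}{5}(2,1)$, with $\|A^+\|=\sup_{y\neq 0}|2y_1+y_2|/(5\|y\|_1)=2/5$.
\item Hence $1/\|A^+\|=5/2<m(A)$. With $\|\cdot\|_\infty$ instead, $m(A)=2$ while $1/\|A^+\|=5/3$.
\end{itemize}
For square invertible $A$ the identity does hold in any norm, since $A^+=A^{-1}$. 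The failure is specific to the tall, full-column-rank case, which is exactly the case the paper needs for Jacobians.

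So for a general norm only $m(A)\ge 1/\|A^+\|$ holds. Equality is recovered under the extra hypothesis you propose: if $\|AA^+y\|\le\|y\|$ for all $y$, as in the Euclidean case, then
$\|A^+y\|=\|A^+AA^+y\|\le\|AA^+y\|/m(A)\le\|y\|/m(A)$.
It is also recovered if $\|A^+\|$ is read as the norm of $A^+$ restricted to $\mathrm{range}(A)$.

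Two further remarks.
\begin{itemize}
\item The inequality you prove unconditionally is precisely what the paper uses downstream. The proof of Theorem~\ref{thm:com_general_bound} only needs $\|\nabla_h G(\theta,z)\|\le\|\mathbf{J}_\theta h_\theta(z)^+\|\,\|\nabla_\theta G(\theta,z)\|$. That is your step $x=A^+(Ax)$, so the theorem survives the correction.
\item In the non-injective case the Moore--Penrose pseudoinverse is an ordinary finite matrix. For example, $A=\mathrm{diag}(1,0)$ has $A^+=A$ and $\|A^+\|_2=1$, yet $m_2(A)=0$. So ``$\|A^+\|=\infty$'' can only be a convention, which is how you treat it.
\end{itemize}
Your Euclidean argument is fine. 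That includes taking $\sigma_{\min}(A)$ to be the $n$-th singular value, so that it vanishes exactly when $A$ is not injective.
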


\subsection{Properties of Fenchel--Young Losses}

\begin{lemma}[Properties of Fenchel--Young Losses~\citep{Blondel2019LearningWF}]$\ $
\label{lem:fy_losses}
\begin{enumerate}
 \item \textbf{Non-negativity.} $d_{\Psi}(\mu, \nu) \ge 0$ for any $\mu \in \mathrm{dom}(\Psi)$ and $\nu \in \mathrm{dom}(\Psi^*)$. If $\Psi$ is a proper, lower semicontinuous, convex function, then $d_{\Psi}(\mu, \nu) = 0$ if and only if $\nu \in \partial \Psi(\mu)$. If $\Psi$ is strictly convex and differentiable, then $d_{\Psi}(\mu, \nu) = 0$ if and only if $\nu = \mu_\Psi^*$.

 \item \textbf{Differentiability.} If $\Psi$ is strictly convex and differentiable, then $d_{\Psi}(\mu, \nu)$ is differentiable in both arguments. In particular, $\nabla_\nu d_{\Psi}(\mu, \nu) = \nu_{\Psi^*}^* - \mu$.

\end{enumerate}
\end{lemma}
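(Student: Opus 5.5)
The plan is to derive both items of Lemma~\ref{lem:fy_losses} directly from the definition $d_{\Psi}(\mu,\nu)=\Psi(\mu)+\Psi^*(\nu)-\langle\mu,\nu\rangle$ together with Lemma~\ref{lem:fenchel_duality} and Proposition~\ref{prop:legendre_dual}. No new machinery is needed. The argument amounts to unpacking when the supremum defining $\Psi^*$ is attained, and how it varies with its argument.

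\textbf{Non-negativity.} First, $d_{\Psi}(\mu,\nu)\ge 0$ is a restatement of the Fenchel--Young inequality in Lemma~\ref{lem:fenchel_duality}(\ref{lem:fy_inequality}). For the equality case I would argue directly rather than only cite the lemma. The condition $d_{\Psi}(\mu,\nu)=0$ means $\sup_{\mu'}\{\langle\mu',\nu\rangle-\Psi(\mu')\}=\langle\mu,\nu\rangle-\Psi(\mu)$, i.e., $\mu$ attains the supremum. Rearranged, this reads $\Psi(\mu')\ge\Psi(\mu)+\langle\nu,\mu'-\mu\rangle$ for all $\mu'$, which is exactly the statement $\nu\in\partial\Psi(\mu)$. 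Every step is reversible, so this gives the equivalence; properness, lower semicontinuity and convexity only ensure that $\Psi^*$ is proper and the subdifferential is the usual one. If $\Psi$ is moreover strictly convex and differentiable, then $\partial\Psi(\mu)=\{\nabla\Psi(\mu)\}=\{\mu_\Psi^*\}$, which yields the last equivalence.

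\textbf{Differentiability.} In $\mu$ the claim is immediate: $\nabla_\mu d_\Psi(\mu,\nu)=\nabla\Psi(\mu)-\nu$. The real content lies in $\nu$, where I need $\Psi^*$ to be differentiable with $\nabla\Psi^*(\nu)=\nu_{\Psi^*}^*$. I would proceed in two steps.
\begin{enumerate}
\item For each $\nu$, strict convexity of $\Psi$ makes the maximizer $\mu(\nu)$ of $\mu'\mapsto\langle\mu',\nu\rangle-\Psi(\mu')$ unique whenever it exists.
\item Danskin's theorem, or equivalently the fact that $\partial\Psi^*(\nu)$ is the set of maximizers, then gives $\partial\Psi^*(\nu)=\{\mu(\nu)\}$. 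A convex function with a singleton subdifferential on the interior of its domain is differentiable there.
\end{enumerate}
It follows that $\nabla_\nu d_\Psi(\mu,\nu)=\nabla\Psi^*(\nu)-\mu=\nu_{\Psi^*}^*-\mu$.

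\textbf{Main obstacle.} The delicate point is step 1: guaranteeing that a maximizer exists, so that $\partial\Psi^*(\nu)\neq\emptyset$. Strict convexity alone gives uniqueness but not existence, and without existence the subdifferential could be empty. To close this gap, I would work in the setting used throughout the paper, where $\Psi$ is Legendre. By Proposition~\ref{prop:legendre_dual}, $\Psi^*$ is then also Legendre, hence essentially smooth, hence differentiable on $\mathrm{int}(\mathrm{dom}\,\Psi^*)$. In addition, $\nabla\Psi^*=(\nabla\Psi)^{-1}$ there, which identifies $\nabla\Psi^*(\nu)$ with the maximizer. Accordingly, I would state the differentiability claim for $\nu$ in the interior of $\mathrm{dom}(\Psi^*)$.
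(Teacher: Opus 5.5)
Your proposal is correct and follows the standard argument of the cited source; the paper gives no proof of its own and defers to Blondel et al. Their argument is the same as yours: non-negativity and its equality case come from the Fenchel--Young inequality and the subgradient characterization, and differentiability of $\Psi^*$ comes from uniqueness of the maximizer of $\langle\cdot,\nu\rangle-\Psi$ (via Danskin's theorem or Rockafellar's Theorem~26.3).

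Your restriction to $\nu\in\mathrm{int}(\mathrm{dom}\,\Psi^*)$ is a genuinely needed refinement of the statement as written. For example, $\Psi(x)=\sqrt{1+x^2}$ gives $\Psi^*(y)=-\sqrt{1-y^2}$ on $[-1,1]$, which is not differentiable at $y=\pm1$. The full Legendre hypothesis, however, is more than you need: for closed convex $\Psi$, the set $\partial\Psi^*(\nu)$ of maximizers is automatically nonempty at interior points of $\mathrm{dom}\,\Psi^*$.
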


\subsection{Spectral Properties of Matrices}
\begin{lemma}
\label{lem:high_dim_2}
 Suppose that we sample $n$ points $x^{(1)}, \ldots, x^{(n)}$ uniformly from the unit ball $\mathcal{B}^{m}_1 := \{x\in\mathbb R^m \mid \|x\|_2 \le 1\}$. Then with probability $1-O(1 / n)$ the following holds~\citep[Theorem 2.8]{blum2020foundations}:
\begin{equation}
 \begin{aligned}
 &\left\|x^{(i)}\right\|_2 \geq 1-\frac{2 \log n}{m},\text{ for } i=1, \ldots, n.\\
 & |\langle x^{(i)}, x^{(j)}\rangle| \leq \frac{\sqrt{6 \log n}}{\sqrt{m-1}} \text { for all } i, j=1, \ldots, n, \, i \neq j.
 \end{aligned}
\end{equation}
\end{lemma}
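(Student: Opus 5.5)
This is a concentration statement for uniform samples in the high-dimensional unit ball, so I would prove the two claims separately. Each gets a tail bound for a single point (or a single pair), followed by a union bound. The two failure probabilities are tuned so that their sum is $O(1/n)$.

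\emph{Norms.} For one point $x$ uniform in $\mathcal{B}^m_1$, volume scaling gives
\[
\Pr\bigl(\|x\|_2 < 1-\varepsilon\bigr) = (1-\varepsilon)^m \le e^{-\varepsilon m}.
\]
Choosing $\varepsilon = 2\log n/m$ makes this at most $n^{-2}$. A union bound over the $n$ points then gives failure probability at most $1/n$ for the first inequality.

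\emph{Inner products.} Fix a pair $i\neq j$ and condition on $x^{(i)}$; the case $x^{(i)}=0$ has probability zero and can be ignored. By rotational invariance of the uniform distribution, we may rotate so that $u := x^{(i)}/\|x^{(i)}\|_2 = e_1$. Since $\|x^{(i)}\|_2\le 1$, we have $|\langle x^{(i)},x^{(j)}\rangle| \le |x^{(j)}_1|$. It therefore suffices to prove the cap estimate
\[
\Pr\Bigl(|x_1| \ge \tfrac{c}{\sqrt{m-1}}\Bigr) \le \tfrac{2}{c}\,e^{-c^2/2}\qquad (c\ge 1)
\]
for $x$ uniform in the ball. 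With $c=\sqrt{6\log n}$ the right-hand side is $O(n^{-3})$. A union bound over fewer than $n^2$ pairs gives $O(1/n)$, and adding the $1/n$ from the norm part proves the lemma.

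The main obstacle is the cap estimate; I would derive it by volume comparison on the hemisphere $x_1\ge 0$, then double for $|x_1|$.
\begin{itemize}
\item \textbf{Upper bound on the cap.} The volume of $\{x_1\ge c/\sqrt{m-1}\}$ is
\[
\int_{c/\sqrt{m-1}}^1 V_{m-1}\,(1-x_1^2)^{(m-1)/2}\,dx_1 .
\]
Using $(1-x_1^2)^{(m-1)/2}\le e^{-(m-1)x_1^2/2}$ and inserting the factor $x_1\sqrt{m-1}/c\ge 1$ makes the integrand exactly integrable. This gives at most $\frac{V_{m-1}}{c\sqrt{m-1}}e^{-c^2/2}$.
\item \textbf{Lower bound on the hemisphere.} The hemisphere contains the cylinder of height $1/\sqrt{m-1}$ and radius $\sqrt{1-1/(m-1)}$. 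Since $(1-1/(m-1))^{(m-1)/2}\ge 1/2$, its volume is at least $\frac{V_{m-1}}{2\sqrt{m-1}}$.
\item \textbf{Conclusion.} The ratio is at most $\frac{2}{c}e^{-c^2/2}$. Doubling for the two caps only changes constants, which are absorbed into the $O(\cdot)$.
\end{itemize}
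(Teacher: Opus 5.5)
Your proof is correct and follows the same route as the paper. The paper gives no proof of its own and simply cites Blum--Hopcroft--Kannan, Theorem~2.8, which is proved exactly as you do: the annulus bound $(1-\varepsilon)^m\le e^{-\varepsilon m}$ with a union bound over the $n$ points, then the cap-versus-cylinder volume comparison (their Theorem~2.7) giving $\frac{2}{c}e^{-c^2/2}$ with $c=\sqrt{6\log n}$ and a union bound over pairs.

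Two small refinements. By symmetry, $\mathrm{vol}(\text{cap})/\mathrm{vol}(\text{hemisphere})$ already equals $\Pr(|x_1|\ge c/\sqrt{m-1})$, so no doubling is needed. Also, the estimate $(1-\frac{1}{m-1})^{(m-1)/2}\ge\frac12$ requires $m\ge 3$; for $m=2$ the inner-product claim holds trivially because $\sqrt{6\log n}\ge 1$.
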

\begin{lemma}[Gershgorin’s circle theorem]
\label{lem:gershgorin}
 Let $A$ be a real symmetric $n\times n$ matrix, with entries $a_{ij}$. For $i \in \{1,\cdots,n\}$ let $R_i$ be the sum of the absolute values of the non-diagonal entries in the $i$-th row: $R_i=\sum_{1\le j\le n,j\neq i}|a_{ij}|$.
 For any eigenvalue $\lambda$ of $A$, there exists $i\in \{1,\cdots,n\}$ such that \begin{equation}
 |\lambda-a_{ii}|\le R_i.
 \end{equation}
\end{lemma}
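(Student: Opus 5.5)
We plan to use the classical eigenvector argument, which needs only the definition of an eigenvalue. Symmetry is used only at the start: since $A$ is real symmetric, every eigenvalue $\lambda$ is real and has a real eigenvector. Fix such a $\lambda$ and an eigenvector $x\in\mathbb{R}^n$, $x\neq 0$, with $Ax=\lambda x$. (The argument below works verbatim over $\mathbb{C}$, so symmetry is not essential. It just lets us stay in real arithmetic, consistent with how the lemma is used for Gram-type matrices $\mathbf{J}^\top\mathbf{J}$ in the proof of Proposition~\ref{prop:number_para}.)

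The key step is to choose the index $i$ cleverly. We take $i\in\arg\max_{1\le k\le n}|x_k|$. Since $x\neq 0$, we have $|x_i|>0$ and $|x_j|/|x_i|\le 1$ for all $j$. Reading off the $i$-th row of $Ax=\lambda x$ gives $\sum_{j} a_{ij}x_j=\lambda x_i$, which we rearrange as
\begin{equation*}
(\lambda-a_{ii})\,x_i=\sum_{j\neq i} a_{ij}x_j .
\end{equation*}
Taking absolute values, applying the triangle inequality, and dividing by $|x_i|>0$ then yields
\begin{equation*}
|\lambda-a_{ii}|\le \sum_{j\neq i}|a_{ij}|\,\frac{|x_j|}{|x_i|}\le \sum_{j\neq i}|a_{ij}| = R_i ,
\end{equation*}
which is exactly the claimed inclusion for this index $i$.

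There is no genuine obstacle here. The only point requiring care is that the index $i$ depends on the eigenvector, so the statement is existential in $i$, exactly as the lemma is phrased. One must also ensure the division is legitimate, and that is guaranteed by choosing $i$ as a maximizer of $|x_k|$ for a nonzero $x$. If the eigenspace of $\lambda$ has dimension greater than one, any choice of eigenvector works, so no further case distinction is needed.
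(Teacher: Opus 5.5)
Your proof is correct: it is the standard argument, choosing $i$ to maximize $|x_i|$ over the entries of an eigenvector and bounding the $i$-th row of $Ax=\lambda x$, and your remark that symmetry is not needed is accurate. The paper gives no proof of this lemma, citing it as a classical result, so there is no separate route to compare against.
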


\subsection{Information-Theoretic Inequalities}

\begin{lemma}[KL Divergence Upper Bound]
\label{lem:kl_upper_bound}
 If $p$ and $q$ are probability densities/masses both supported on a bounded interval $I$, then we have~\citep{1603768}
 \begin{equation}
 D_{\mathrm{KL}}(p,q) \leq \frac{1}{\inf_{x\in I} q(x)} \|p-q\|_2^2.
 \end{equation}
\end{lemma}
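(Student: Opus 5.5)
The plan is to bound the KL divergence by the $\chi^2$-divergence and then bound the $\chi^2$-divergence by the squared $L_2$ distance. I will write the argument for probability masses on a finite support $I$, matching the paper's standing assumption that $|\mathcal{Z}|$ is finite. For densities on a bounded interval, every sum becomes an integral over $I$ and nothing else changes.

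First I dispose of the degenerate case. If $\inf_{x\in I} q(x)=0$, the right-hand side is $+\infty$ (or the convention makes it vacuous), so there is nothing to prove. Hence I assume $q_{\min}:=\inf_{x\in I}q(x)>0$. Then $q>0$ on $I$, and every ratio $p(x)/q(x)$ below is finite. I take $D_{\mathrm{KL}}$ with the natural logarithm; with $\log_2$ the bound would pick up an extra factor $1/\ln 2$. Terms with $p(x)=0$ contribute $0$ under the convention $0\log 0=0$.

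The key step is the elementary inequality $\ln t\le t-1$ for $t>0$, applied with $t=p(x)/q(x)$:
\begin{equation*}
D_{\mathrm{KL}}(p,q)=\sum_{x\in I}p(x)\ln\frac{p(x)}{q(x)}\le\sum_{x\in I}p(x)\Bigl(\frac{p(x)}{q(x)}-1\Bigr)=\sum_{x\in I}\frac{p(x)^2}{q(x)}-1 .
\end{equation*}
Next I rewrite the right-hand side using $\sum_x p(x)=\sum_x q(x)=1$. Expanding the square gives
\begin{equation*}
\sum_x\frac{(p(x)-q(x))^2}{q(x)}=\sum_x\frac{p(x)^2}{q(x)}-2\sum_x p(x)+\sum_x q(x)=\sum_x\frac{p(x)^2}{q(x)}-1 .
\end{equation*}
So the right-hand side above is exactly the $\chi^2$-divergence $\chi^2(p\,\|\,q)$.

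Finally, since each term $(p(x)-q(x))^2\ge 0$ and $q(x)\ge q_{\min}$ on $I$, I bound each denominator below by $q_{\min}$. This yields $\chi^2(p\,\|\,q)\le \|p-q\|_2^2/q_{\min}$, which is the claimed inequality.

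There is no real obstacle here. The only points needing care are the support and normalization hypotheses: both $p$ and $q$ must sum (or integrate) to $1$ over the same set $I$, which is exactly what is used in the $\chi^2$ identity, and the logarithm base must be fixed as above.
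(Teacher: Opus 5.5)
Your proof is correct. The inequality $\ln t\le t-1$ gives $D_{\mathrm{KL}}(p,q)\le\chi^2(p\,\|\,q)=\sum_x (p(x)-q(x))^2/q(x)$, and lower-bounding each denominator by $\inf_I q$ finishes it; this is the standard $\chi^2$ argument behind the cited result, which the paper itself does not reprove.

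Your care about the logarithm is warranted, since in bits the inequality fails: for $q$ uniform on $10$ points and $p$ uniform on $9$ of them, $\log_2(10/9)\approx 0.152>1/9=\|p-q\|_2^2/\inf_I q$. So this lemma must be read in nats, even though the paper's companion Pinsker lemma is stated in base-$2$ form.
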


\begin{lemma}[Pinsker's Inequality~\citep{2017Elements}]
\label{lem:pinsker}
If $p$ and $q$ are probability densities/masses both supported on a bounded interval $I$, then we have
\begin{equation}
 D_{\mathrm{KL}}(p\|q) \ge \frac{1}{2\ln 2} \|p-q\|_1^2.
\end{equation}
\end{lemma}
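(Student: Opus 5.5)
We read $D_{\mathrm{KL}}$ with base-$2$ logarithms. This is the reading under which the constant $1/(2\ln 2)$ is the sharp one; in nats the bound would have to be $\frac12\|p-q\|_1^2$. Writing $D_{\mathrm{KL}}(p\|q)=D_{\mathrm{nats}}(p\|q)/\ln 2$, it therefore suffices to prove the classical form
\begin{equation*}
D_{\mathrm{nats}}(p\|q)\ge \tfrac12\|p-q\|_1^2 .
\end{equation*}
We argue for probability masses on a finite support, which is the only case used in this paper; densities follow by the same argument with sums replaced by integrals. If $D_{\mathrm{nats}}(p\|q)=\infty$ there is nothing to prove, so we assume it is finite.

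The first step is a reduction to two-point distributions via the data-processing (log-sum) inequality. Let $A=\{x: p(x)>q(x)\}$ and put $a=p(A)$, $b=q(A)$, so that $a\ge b$. Since $p$ and $q$ both have total mass $1$,
\begin{equation*}
\|p-q\|_1=2\bigl(p(A)-q(A)\bigr)=2(a-b).
\end{equation*}
Apply the log-sum inequality $\sum_i u_i\ln\frac{u_i}{v_i}\ge \bigl(\sum_i u_i\bigr)\ln\frac{\sum_i u_i}{\sum_i v_i}$ separately on $A$ and on $A^c$. This gives
\begin{equation*}
D_{\mathrm{nats}}(p\|q)\ge a\ln\tfrac{a}{b}+(1-a)\ln\tfrac{1-a}{1-b}=:d(a,b),
\end{equation*}
which is the divergence between the Bernoulli distributions with parameters $a$ and $b$.

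The second step is the binary inequality $d(a,b)\ge 2(a-b)^2$; combined with the first step it gives exactly $\frac12\|p-q\|_1^2$. Fix $a$ and set $g(b)=d(a,b)-2(a-b)^2$ for $b\in(0,a]$. Then $g(a)=0$ and
\begin{equation*}
g'(b)=-\frac{a}{b}+\frac{1-a}{1-b}+4(a-b)=(a-b)\Bigl(4-\frac{1}{b(1-b)}\Bigr).
\end{equation*}
Because $b(1-b)\le \tfrac14$, the bracket is nonpositive, so $g'(b)\le 0$ on $(0,a]$. Hence $g$ is nonincreasing and $g(b)\ge g(a)=0$. The boundary cases are routine: if $b=0<a$ then $d=\infty$, and if $a\in\{0,1\}$ the formula holds with the convention $0\ln 0=0$.

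The main obstacle is the reduction step, specifically justifying the log-sum inequality. I would derive it from convexity of $t\mapsto t\ln t$ via Jensen, applied with weights $v_i/\sum_j v_j$. One also needs $q(x)>0$ wherever $p(x)>0$, which is exactly finiteness of the divergence. Finally, dividing the nats inequality by $\ln 2$ converts it into the stated bit-based bound with constant $1/(2\ln 2)$.
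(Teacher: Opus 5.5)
Your proof is correct and is essentially the standard argument of the Cover--Thomas text that the paper cites (the paper states the lemma without proving it): reduce to two-point distributions via the indicator of $A=\{x: p(x)>q(x)\}$ and the data-processing/log-sum inequality, then prove the binary bound by differentiating in $b$ and using $b(1-b)\le \frac14$. Your remark that the constant $\frac{1}{2\ln 2}$ holds only when $D_{\mathrm{KL}}$ is measured in bits is also right (in nats the sharp constant is $\frac12$), and this matches that source's convention.
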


\section{Related Work}
\label{sec:model:related}

This section reviews the evolution of convexity and smoothness assumptions and the gradient-based algorithms built upon them. By identifying where existing theories fall short of explaining deep network training, we motivate the unified conjugate framework proposed in this paper. 

\subsection{Evolution of Strong Convexity and Smoothness Assumptions}
\label{subsec:model:related:convex}

Classical convex optimization is built upon strong convexity and global Lipschitz smoothness \citep{nesterov2013introductory}. The loss landscapes of deep networks violate both assumptions, giving rise to a series of generalized formulations.

\subsubsection{Extensions of Strong Convexity}
To capture finer geometric structures, researchers have proposed higher-order strong convexity.
\citet{Lin2003SomeEP} introduced this notion, which allows curvature to vary with direction and scale instead of enforcing a fixed quadratic lower bound. Formally, let $K \subseteq \mathbb{R}^n$ be a closed convex set. A function $F: K \to \mathbb{R}$ is higher-order strongly convex if there exist constants $\mu \geq 0$ and $p \geq 1$ such that for all $u, v \in K$ and $t \in [0,1]$,
\begin{equation}
 \label{eq:higher_order_convex}
 F(u + t(v - u)) \leq (1 - t)F(u) + tF(v) - \mu \varphi(t) \|v - u\|^p
\end{equation}
where $\varphi(t) = t(1 - t)$. For $p = 2$, this recovers standard strong convexity.

Subsequent work further extended this framework. Studies including \citet{noor2020higher}, \citet{Mohsen2019StronglyCF}, and \citet{Alabdali2019CharacterizationsOU} proposed variants of higher-order strongly convex functions and applied them to variational inequalities, multilevel games, and engineering optimization. Collectively, these generalizations enable flexible modeling of a continuous spectrum from weak to super convexity by adjusting the exponent $p$ and weight function $\varphi(t)$.

\subsubsection{Extensions of Lipschitz Smoothness}
On the smoothness side, \citet{Grimmer2021GeneralHS} systematically studied H\"older smoothness, defined via the H\"older continuity of subgradients:
\begin{equation}
\|g - g'\| \leq L \|x - x'\|^\eta,
\end{equation}
for all $x, x' \in Q$ and all subgradients $g \in \partial f(x)$, $g' \in \partial f(x')$, where $L \geq 0$, $\eta \in [0,1]$, and $\partial f(x)$ denotes the subdifferential.
This framework unifies smooth and non-smooth regimes: $\eta = 1$ recovers Lipschitz smoothness, and the condition applies to functions with polynomially growing Hessians (e.g., $f(x) = |x|^{1+\eta}$), but cannot capture exponential growth.

\textbf{$(L_0, L_1)$-smoothness}, proposed by \citet{Zhang2019WhyGC}, is designed to match the empirical observation in neural networks that larger gradients correspond to larger curvature:
\begin{equation}
\label{eq:l0l1_smooth}
\|\nabla^2 f(x)\| \leq L_0 + L_1 \|\nabla f(x)\|,
\end{equation}
with $L_0, L_1 \geq 0$. This naturally fits losses such as CrossEntropy, where gradients and Hessians grow simultaneously when predictions deviate far from labels. The assumption has been widely validated \citep{crawshaw2022robustness} and applied to adaptive step-size and variance-reduction methods \citep{Reisizadeh2023VariancereducedCF,gorbunov2024methodsconvexl0l1smoothoptimization}.

\textbf{$\ell$-smoothness}, a more general formulation by \citet{Li2023ConvexAN}, takes the form
\begin{equation}
 \label{eq:ell_smooth}
 \|\nabla^2 f(x)\| \leq \ell(\|\nabla f(x)\|),
\end{equation}
where $\ell: \mathbb{R}_+ \to \mathbb{R}_+$ is a non-decreasing continuous function. This allows arbitrary monotonic relationships between Hessian and gradient (logarithmic, exponential, piecewise linear), providing high flexibility for complex loss landscapes.

Most recently, \citet{qi2025extendedconvexitysmoothnessapplications} exploited the duality between strong convexity and Lipschitz smoothness to propose an extended framework and established DNN trainability for convex differentiable losses.

However, all existing generalizations still require convexity and differentiability to derive global 
SGD convergence guarantees, and these requirements are violated by the non-convex, non-smooth objectives typical of deep network training.
Moreover, existing frameworks treat convexity and smoothness independently, lacking a unified perspective. These gaps motivate the $\mathcal{H}(\psi)$-convexity and $\mathcal{H}(\Psi)$-smoothness proposed in this work: we couple the two into a conjugate dual pair based on Legendre functions, providing a new geometric characterization for deep learning objectives.

\subsection{Gradient-Based Optimization Algorithms}
\label{subsec:model:related:gdopt}

Gradient-based methods have been thoroughly studied for Lipschitz-smooth objectives in both convex and non-convex settings \citep{LiQTRJ23}.

In convex optimization, the goal is to find an $\varepsilon$-suboptimal point with $f(x) - f_* \leq \varepsilon$. For smooth convex functions, gradient descent (GD) achieves gradient complexity $\mathcal{O}(1/\varepsilon)$; under $\mu$-strong convexity, GD improves to $\mathcal{O}(\kappa \log(1/\varepsilon))$, where $\kappa$ is the condition number, and this rate is optimal.

In non-convex optimization, where global minimization is NP-hard, research focuses on finding $\varepsilon$-stationary points with $\|\nabla f(x)\| \leq \varepsilon$. For deterministic smooth non-convex problems, GD achieves optimal gradient complexity $\mathcal{O}(1/\varepsilon^2)$, matching the known lower bound \citep{Carmon2017LowerBF}. In the stochastic setting, with unbiased bounded-variance gradients, SGD achieves sample complexity $\mathcal{O}(1/\varepsilon^4)$ \citep{Ghadimi2013StochasticFA}, which also matches the corresponding lower bound \citep{Arjevani2019LowerBF,abbaszadehpeivasti2023conditions}.

To reduce the high sample complexity of SGD, variance-reduction algorithms have been proposed, such as SPIDER \citep{Fang2018SPIDERNN}, SARAH \citep{Nguyen2017SARAHAN}, SpiderBoost \citep{Wang2019SpiderBoostAM}, STORM \citep{Cutkosky2019MomentumBasedVR}, and SNVRG \citep{Zhou2018StochasticNV}, all achieving the optimal $\mathcal{O}(1/\varepsilon^3)$ sample complexity under stronger smoothness assumptions.

Alongside generalized smoothness assumptions, algorithms have evolved accordingly. For $(L_0, L_1)$-smooth functions, clipped GD and normalized GD preserve the optimal $\mathcal{O}(1/\varepsilon^2)$ deterministic complexity, and clipped SGD achieves $\mathcal{O}(1/\varepsilon^4)$ stochastic complexity \citep{Zhang2019WhyGC}. Momentum-accelerated variants follow the same rates \citep{Zhang2020ImprovedAO}, and SPIDER-based variance reduction further improves stochastic complexity to $\mathcal{O}(1/\varepsilon^3)$ \citep{Reisizadeh2023VariancereducedCF}. For $\ell$-smooth objectives, \citet{Li2023ConvexAN} show that GD and SGD retain the same complexities as under classical Lipschitz smoothness.

Despite these advances, all existing theories can only guarantee convergence to stationary points, and cannot explain the empirically observed implicit preference for globally optimal or well-generalizing solutions in deep learning. Establishing a framework that fully characterizes DNN optimization dynamics therefore remains a critical open problem.

\subsection{Non-Convex Optimization Mechanisms in Deep Learning}
\label{subsec:alg:intro:gd}

SGD is the workhorse of deep network training. Its effectiveness stems from two synergistic 
mechanisms: (i) stochastic noise, which enables saddle-point escape and helps avoid poor local optima; and (ii) over-parameterization, which reshapes the loss-landscape geometry. 

\subsubsection{Stochastic Noise}

The stochastic noise in SGD drives dynamics along negative curvature directions to escape saddle points \citep{Ge2015EscapingFS}. Large-step SGD with momentum or adaptive variants further exploits noise-driven exploration to bypass inferior local minima \citep{li2023restarted}, and analogous effects hold for compressed SGD in federated learning \citep{ChenLC24}. In contrast, deterministic GD, lacking stochastic perturbations, is more prone to stagnation at suboptimal stationary points \citep{Jentzen2018StrongEA}.

However, this conventional view has been revised by empirical findings. \citet{KeskarMNST17} showed that even with very large batch sizes (5000--6000) that greatly reduce stochasticity, networks can still perfectly fit training data on standard benchmarks \citep{KeskarMNST17,abs-1712-07628}. In highly over-parameterized networks, near-full-batch training can still achieve zero training error via interpolation \citep{LeeHA23}, suggesting that over-parameterization, rather than SGD noise alone, may play the more critical role in DNN trainability \citep{do2024revisitinglarslargebatch}.
 
\subsubsection{Role of Over-Parameterization}
Over-parameterization, having far more parameters than training samples, improves both trainability and generalization by reshaping the loss landscape.

Two idealized infinite-width frameworks have been proposed to explain these dynamics. The \textbf{Neural Tangent Kernel (NTK)} theory \citep{Jacot2018NeuralTK} shows that infinitely wide networks exhibit lazy training: the kernel remains constant, and non-convex optimization reduces to convex kernel learning with linear convergence. However, the frozen-kernel assumption contradicts the feature learning of real finite-width networks, and empirical NTK evolves significantly during training \citep{Vyas2023EmpiricalLO}.
\textbf{Mean Field Theory (MFT)} \citep{Sirignano2018MeanFA} models parameters as a particle system and SGD dynamics as a deterministic gradient flow over the parameter distribution. Convexity of the loss functional guarantees global convergence, and the theory extends to deep networks \citep{nguyen2023rigorous}. Nevertheless, MFT is also limited to the infinite-width regime and poorly adapts to modern architectures such as residual connections and attention mechanisms.

Recent work on mildly over-parameterized regimes further shows that ReLU networks contain high-dimensional global minimum sets, and the full-rank Jacobian region expands with parameter redundancy \citep{KarhadkarMTM24,Kohler26,duzgun2024doesoverparameterizationaffectfeatures}, providing a finite-width geometric explanation for efficient optimization.
NTK and Mean Field Theory still lack a unified quantitative characterization of the synergy between SGD and over-parameterization, especially for finite-width networks with feature learning \citep{Oneto2023DoWR}. 

Recently, \citet{QiGL25} interpreted classification as conditional distribution estimation and showed that under a Fenchel--Young loss, the squared parameter gradient norm is tightly linked to the MSE between predicted and empirical conditional distributions, mediated by a structure matrix $A_z = \nabla_\theta f_\theta(x) \nabla_\theta f_\theta(x)^\top$. Their experiments suggest that DNN training implicitly minimizes the squared parameter-gradient norm while preserving the non-degeneracy of $A_z$.
Three critical limitations remain:
\begin{enumerate}
 \item No monotonic decrease guarantee is provided for the Fenchel--Young loss itself, creating a gap between the surrogate (squared gradient norm) and the true objective.
 \item The analysis is restricted to supervised classification and does not extend to regression, generative modeling, or unsupervised learning.
 \item The framework addresses only trainability, not generalization, leaving an end-to-end learning gap.
\end{enumerate}

\section*{Acknowledgments}
This work was supported in part by the National Natural Science Foundation of China under Grants 72171172 and 92367101.

\appendix

\section{Appendix}
\label{appendix}

\label{appendix:proof}
In this section, we prove the results stated in the paper and provide necessary technical details and discussions.

\subsection{Proof of Proposition~\ref{prop:energy_fun}}
\label{appendix:proof_min_energy}

\begin{proposition}

 \begin{enumerate}

 \item If $\Psi(\mu)$ is an energy function, then $\Psi^*(\nu)$ is also an energy function, and
 \begin{equation}
 \begin{aligned}
 & \Psi^*(\nu) = \Psi_\circ^*(\|\nu\|_*),\\
 & \Psi(0)+\Psi^*(0)=0,\\
 & \mu^\top \mu_\Psi^*\ge 0,
 \end{aligned}
 \end{equation}
where $\Psi_\circ^*$ is the one-dimensional convex conjugate of $\Psi_\circ$. 
 \item If $\Psi(\mu) = \|a_{\Psi}\mu\|_2^{r_\Psi}/r_{\Psi}+c_\Psi$ is an energy function with parameters $a_\Psi\in\mathbb{R}_{>0}$, $r_\Psi>1$, and $c_\Psi\in\mathbb{R}$, then its convex conjugate $\Psi^*(\nu)=\|a_{\Psi^*}\nu\|_2^{r_{\Psi^*}}/r_{\Psi^*}+c_{\Psi^*}$ is also an energy function, where
\begin{equation}
 \begin{aligned}
 &1/r_{\Psi} + 1/r_{\Psi^*} = 1,\\
 &a_{\Psi}a_{\Psi^*}=1,\\
 &c_{\Psi^*}+c_{\Psi} = 0.
 \end{aligned}
\end{equation}
\item Let $\Psi(\mu)$ be an energy function. For any \(\mu \in \mathbb{R}^m\), \(F(a) = \Psi(a\mu)\) is monotonically non-decreasing on $[0,\infty)$; if \(\mu \neq 0\), then \(F\) is strictly increasing and convex.
 \end{enumerate}
\end{proposition}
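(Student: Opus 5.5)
For item~\ref{prop:energy_fun1}, we first note that $\Psi^*$ is Legendre by Proposition~\ref{prop:legendre_dual}. Next we would extend $\Psi_\circ$ evenly to $\mathbb{R}$ by $t\mapsto\Psi_\circ(|t|)$. This extension is convex: $\Psi_\circ$ restricted to $[0,\infty)$ is the convex function $a\mapsto\Psi(a\mu_0)$ for a unit vector $\mu_0$, and by item~\ref{prop:energy_fun3} (proved independently below) it is non-decreasing there. Lemma~\ref{lem:fenchel_duality}(\ref{lem:radial_fun}) then gives $\Psi^*(\nu)=\Psi_\circ^*(\|\nu\|_*)$.

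To see that $\Psi^*$ attains its unique minimum at the origin, we use $\nabla\Psi(0)=0$ and Legendre duality, which gives $\nabla\Psi^*(0)=0$. Since $\Psi^*$ is strictly convex on the interior of its domain, $0$ is its unique minimizer. Together with the radial form, this shows $\Psi^*$ is an energy function.

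For the identity $\Psi(0)+\Psi^*(0)=0$, we compute $\Psi^*(0)=\sup_\mu\{-\Psi(\mu)\}=-\min\Psi=-\Psi(0)$. For the inequality $\mu^\top\mu_\Psi^*\ge 0$, we apply the gradient inequality for convex $\Psi$ at $\mu$ toward $0$: $\Psi(0)\ge\Psi(\mu)+\langle\nabla\Psi(\mu),0-\mu\rangle$. Rearranging gives $\langle\nabla\Psi(\mu),\mu\rangle\ge\Psi(\mu)-\Psi(0)\ge 0$, because $0$ minimizes $\Psi$.

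For item~\ref{prop:energy_fun2}, we would compute the conjugate directly. Write $\Psi(\mu)=\phi(\|\mu\|_2)$ with $\phi(t)=(a t)^{r}/r+c$. By Lemma~\ref{lem:fenchel_duality}(\ref{lem:mul_factor}), the conjugate of $t\mapsto|at|^r/r$ is $s\mapsto|s/a|^{r'}/r'$, where $1/r+1/r'=1$. This is the standard Young computation: maximize $st-|t|^r/r$ over $t$, giving $t=|s|^{1/(r-1)}$. The additive constant contributes $-c$. Since $\|\cdot\|_2$ is self-dual (Lemma~\ref{lem:dual_norm}), we obtain $\Psi^*(\nu)=\|\nu/a\|_2^{r'}/r'-c$, which identifies $a_{\Psi^*}=1/a_\Psi$ and $c_{\Psi^*}=-c_\Psi$. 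We also need $\Psi$ to be a Legendre function, which requires $r>1$: then $\Psi$ is strictly convex, differentiable everywhere with full domain, and essentially smooth. Its conjugate has the same form with $r'>1$, so it is also an energy function.

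For item~\ref{prop:energy_fun3}, the function $F(a)=\Psi(a\mu)$ is convex as the composition of a convex function with an affine map. If $\mu=0$, $F$ is constant and hence non-decreasing. If $\mu\ne0$, $F$ is strictly convex, because $a\mapsto a\mu$ is injective and $\Psi$ is strictly convex on the relevant set. Its derivative at $0$ is $\langle\nabla\Psi(0),\mu\rangle=0$, which by strict convexity forces $F'(a)>0$ for $a>0$. So $F$ is strictly increasing on $[0,\infty)$.

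The main obstacle is a technical point about domains. Lemma~\ref{lem:fenchel_duality}(\ref{lem:radial_fun}) requires an even, lower semicontinuous, convex extension of $\Psi_\circ$ on $\mathbb{R}$. Moreover, strict convexity of a Legendre function is guaranteed only on $\mathrm{dom}(\partial\Psi)$. We would address this by working on the interior of the domain and using $\Psi\colon\mathbb{R}^m\to\mathbb{R}$, so the domain is full. Differentiability of $\Psi$ at $0$, which gives $\nabla\Psi(0)=0$, then follows from essential smoothness.
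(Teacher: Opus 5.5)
Your proposal is correct. For item~1 it shares the paper's core steps: Legendre duality for $\Psi^*$, the computation $\Psi^*(\mathbf 0)=\sup_\mu\{-\Psi(\mu)\}=-\Psi(\mathbf 0)$, and the gradient inequality toward $\mathbf 0$ for $\mu^\top\mu_\Psi^*\ge 0$. Several sub-arguments differ, however.

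\textbf{Minimum of $\Psi^*$.} The paper plugs the test point $x=\mathbf 0$ into the supremum to get $\Psi^*(y)\ge-\Psi(\mathbf 0)=\Psi^*(\mathbf 0)$. This is more elementary, but it only shows that $\mathbf 0$ is \emph{a} minimizer. Your gradient-inversion argument gives the uniqueness that the definition of an energy function actually requires. To close it cleanly, note that any minimizer $y$ satisfies $0\in\partial\Psi^*(y)$, i.e.\ $y\in\partial\Psi(\mathbf 0)=\{\mathbf 0\}$. This also excludes boundary points of $\mathrm{dom}\,\Psi^*$, where strict convexity is not guaranteed.

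\textbf{Radial form.} You verify the hypotheses of the radial-conjugate lemma: the even extension $t\mapsto\Psi_\circ(|t|)$ is convex because $\Psi_\circ$ is convex and non-decreasing on $[0,\infty)$. The paper cites the lemma without this check.

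\textbf{Item~2.} The paper bounds $\langle\mu,\nu\rangle\le\|\mu\|_2\|\nu\|_2$, reduces to a scalar maximization over $t=\|\mu\|_2$, and solves it by calculus. Your route through the scaling lemma, the one-dimensional Young conjugate, and self-duality of $\|\cdot\|_2$ is shorter and makes the equality (not just the upper bound) automatic. The price is that it depends on the radial lemma.

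\textbf{Item~3.} The paper gets convexity from $F''(a)=\mu^\top\nabla^2\Psi(a\mu)\mu\ge 0$, which tacitly assumes $\Psi$ is twice differentiable; that is not among the hypotheses. Your composition-with-an-affine-map argument needs only convexity. For strict monotonicity the paper uses $\langle\nabla\Psi(z),z\rangle\ge\Psi(z)-\Psi(\mathbf 0)>0$ for $z\neq\mathbf 0$, i.e.\ uniqueness of the minimizer. Your use of strict convexity together with $F'(0)=0$ is an equally valid substitute.

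\textbf{A caveat shared by both proofs.} Nothing forces $\Psi^*$ to be finite on all of $\mathbb{R}^m$. For example, $\Psi(\mu)=\sqrt{1+\|\mu\|_2^2}$ is an energy function, yet $\Psi^*(\nu)=-\sqrt{1-\|\nu\|_2^2}$ on the closed unit ball and $+\infty$ outside it. So what you and the paper actually establish is that $\Psi^*$ is a Legendre function, radial in $\|\cdot\|_*$, with unique minimizer $\mathbf 0$. Your full-domain remark covers $\Psi$ but not $\Psi^*$. Concluding that $\Psi^*\colon\mathbb{R}^m\to\mathbb{R}$ is an energy function in the literal sense needs an extra supercoercivity assumption on $\Psi$. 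That assumption does hold in item~2, where $r_{\Psi^*}>1$.
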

\begin{proof}
\noindent \textbf{Proof of Conclusion 1.}

According to Proposition~\ref{prop:legendre_dual},
if $\Psi$ is a Legendre function, then $\Psi^*$ is also a Legendre function.
By the definition of convex conjugate:

\[
\Psi^*(\mathbf{0}) = \sup_{x} \{ -\Psi(x) \} = -\inf_{x} \Psi(x) = -\Psi(\mathbf{0}).
\]

Taking \( x = \mathbf{0} \) as a test point in the supremum definition, we have:
\[
\Psi^*(y) = \sup_{x} \{ \langle y, x \rangle - \Psi(x) \} \ge \langle y, \mathbf{0} \rangle - \Psi(\mathbf{0}) = -\Psi(\mathbf{0}).
\]
Combining the above two steps, we obtain:
\[
\Psi^*(y) \ge -\Psi(\mathbf{0}) = \Psi^*(\mathbf{0}),
\]

Therefore, \( \Psi^*(\mathbf{0}) \) is the global minimum of \( \Psi^* \).
By the definition of energy functions, $\Psi^*(\cdot)$ is also an energy function.

Since $\Psi$ is a convex function and $\mathbf{0}$ is a global minimum, we have $\nabla\Psi(\mathbf{0}) = 0$. For any $z \in \mathbb{R}^m$, by the first-order optimality condition of convex functions:
\[
\Psi(\mathbf{0}) \ge \Psi(z) + \langle \nabla\Psi(z),\, 0 - z \rangle
= \Psi(z) - \langle \nabla\Psi(z),\, z \rangle,
\]
Rearranging, we obtain
\begin{equation}
\langle z_\Psi^*,\, z \rangle \ge \Psi(z) - \Psi(\mathbf{0}) \ge 0. \tag{2}
\end{equation}
The last inequality follows because $\mathbf{0}$ is the global minimum.
According to the sixth property of Lemma~\ref{lem:dual_norm}, $\Psi^*(\nu)=\Psi_\circ^*(\|\nu\|_*)$.
Conclusion 1 is proved.

\noindent \textbf{Proof of Conclusion 2.}
By the definition of the Legendre--Fenchel transform,
\[
\Psi^*(\nu) = \sup_{\mu\in\mathbb{R}^m} \left\{ \langle \mu, \nu \rangle - \Psi(\mu) \right\}.
\]
Applying the Cauchy--Schwarz inequality $\langle \mu,\nu\rangle \le \|\mu\|_2\|\nu\|_2$, with equality if and only if $\mu$ and $\nu$ are nonnegative multiples of each other (i.e., $\mu = \lambda \nu$ for some $\lambda\ge 0$), we obtain
\[
\Psi^*(\nu) \le \sup_{\mu} \left\{ \|\mu\|_2\|\nu\|_2 - \frac{a_\Psi^{r_\Psi}}{r_\Psi}\|\mu\|_2^{r_\Psi} \right\} - c_\Psi.
\]
For $\nu=0$, the supremum is attained at $\mu=0$ and gives $-c_\Psi$; the following calculation also covers this case by taking the limit. Assume $\nu\neq 0$. Put $t = \|\mu\|_2 \ge 0$. Then the right‐hand side becomes
\[
\sup_{t\ge 0} \left\{ t\|\nu\|_2 - \frac{a_\Psi^{r_\Psi}}{r_\Psi} t^{r_\Psi} \right\} - c_\Psi.
\]
Define $g(t) = t\|\nu\|_2 - \frac{a_\Psi^{r_\Psi}}{r_\Psi} t^{r_\Psi}$. Since $r_\Psi>1$, $g$ is strictly concave on $[0,\infty)$ and attains its unique maximum at the critical point:
\[
g'(t) = \|\nu\|_2 - a_\Psi^{r_\Psi} t^{r_\Psi-1} = 0
\quad\Longrightarrow\quad
t_* = \left( \frac{\|\nu\|_2}{a_\Psi^{r_\Psi}} \right)^{\frac{1}{r_\Psi-1}}
= \frac{\|\nu\|_2^{\frac{1}{r_\Psi-1}}}{a_\Psi^{\frac{r_\Psi}{r_\Psi-1}}}.
\]
Substituting $t_*$ into $g$ yields
\[
g(t_*) = \frac{r_\Psi-1}{r_\Psi} \left( \frac{\|\nu\|_2}{a_\Psi} \right)^{\frac{r_\Psi}{r_\Psi-1}}.
\]
Therefore,
\[
\Psi^*(\nu) = \frac{r_\Psi-1}{r_\Psi} \left( \frac{\|\nu\|_2}{a_\Psi} \right)^{\frac{r_\Psi}{r_\Psi-1}} - c_\Psi.
\]
Rewriting this in the standard form of a norm power function,
\[
\Psi^*(\nu) = \frac{\|a_{\Psi^*}\nu\|_2^{r_{\Psi^*}}}{r_{\Psi^*}} + c_{\Psi^*},
\]
we identify
\[
r_{\Psi^*} = \frac{r_\Psi}{r_\Psi-1}, \qquad
a_{\Psi^*} = \frac{1}{a_\Psi}, \qquad
c_{\Psi^*} = -c_\Psi.
\]
Thus, Conclusion 2 is proved.

\noindent \textbf{Proof of Conclusion 3.}

By the chain rule, $F$ is differentiable and
\begin{equation}\label{eq:prop1_tmp1}
F'(a) = \langle \nabla\Psi(a\mu),\, \mu \rangle.
\end{equation}
Let $z = a\mu$. If $a \ge 0$, by Conclusion 1,
\begin{equation}\label{eq:prop1_tmp2}
\langle \nabla\Psi(a\mu),\, a\mu \rangle \ge 0.
\end{equation}
When $a > 0$, dividing by $a$ (a positive number) yields
\[
\langle \nabla\Psi(a\mu),\, \mu \rangle \ge 0.
\]
Combining equation~\eqref{eq:prop1_tmp1} with the above inequality, we have $F'(a) \ge 0$ for $a > 0$. Also, $F'(0) = \langle \nabla\Psi(0), \mu \rangle = 0$. Therefore, $F'(a) \ge 0$ holds for all $a \ge 0$, so $F$ is monotonically non-decreasing on $[0, \infty)$.

If $\mu \neq \mathbf{0}$, then for any $a > 0$, $a\mu \neq \mathbf{0}$. Since $\mathbf{0}$ is the unique global minimum, $\Psi(a\mu) > \Psi(\mathbf{0})$. If equality holds in inequality~\eqref{eq:prop1_tmp2}, i.e., $\langle \nabla\Psi(a\mu), a\mu \rangle = 0$, then
\[
\Psi(\mathbf{0}) = \Psi(a\mu) - \langle \nabla\Psi(a\mu), a\mu \rangle = \Psi(a\mu),
\]
which contradicts $\Psi(a\mu) > \Psi(\mathbf{0})$. Therefore, the inequality~\eqref{eq:prop1_tmp2} is strict for $z \neq \mathbf{0}$:
\[
\langle \nabla\Psi(z), z \rangle > 0 \quad (z \neq \mathbf{0}).
\]
Thus, for $a > 0$,
\[
a \langle \nabla\Psi(a\mu), \mu \rangle > 0 \quad \Rightarrow \quad F'(a) > 0.
\]
Therefore, $F'$ is strictly positive on $(0, \infty)$, and $F$ is strictly monotonically increasing on $[0, \infty)$.

If $\mu = \mathbf{0}$, then $F(a) = \Psi(\mathbf{0})$ is a constant, which is monotonically non-decreasing but not strictly increasing.
When $\mu\neq \mathbf{0}$, since $\Psi$ is a strictly convex function, its Hessian matrix is positive semidefinite. For any vector $\mu$:
$$\mu^\top \nabla^2\Psi(a\mu) \mu \ge 0$$
Thus $F''(a) = \mu^\top \nabla^2\Psi(a\mu) \mu \ge 0$, so $F(a)$ is convex.
When $\mu=\mathbf{0}$, $F(a)=\Psi(\mathbf{0})$ is a constant.
\end{proof}

\subsection{Proof of Proposition~\ref{prop:convex_smooth_case}}
\label{appendix:proof_convex_smooth_case}
\begin{proposition}
\begin{enumerate}
\item Suppose $G$ is a continuously differentiable function defined on a closed convex set, twice differentiable on the interior of its domain, and with a positive definite Hessian matrix. Let $\lambda_{\min}$ and $\lambda_{\max}$ denote the smallest and largest eigenvalues of the Hessian matrix $\nabla^2 G(\mu)$ for all $\mu$ in the domain of $G$. Then $G$ is both $\mathcal{H}(\lambda_{\max}\|\cdot\|_2^2/2)$-smooth and $\mathcal{H}(\lambda_{\min}\|\cdot\|_2^2/2)$-convex.
\item The MSE loss $\frac{1}{2}\|f_\theta(x)-y\|_2^2$ is both $\mathcal{H}(\|\cdot\|_2^2/2)$-convex and $\mathcal{H}(\|\cdot\|_2^2/2)$-smooth with respect to $f_\theta(x)$.
\item The Softmax CrossEntropy loss $\mathrm{CrossEntropy}(q,\mathrm{Softmax}(f_\theta(x)))$ is both $\mathcal{H}(\min_i p_i \|\cdot\|_2^2)$-convex and $\mathcal{H}(2 \ln 2 \|\cdot\|_2^2)$-smooth with respect to $f_\theta(x)$, where $p=\mathrm{Softmax}(f_\theta(x))$.
\end{enumerate}
\end{proposition}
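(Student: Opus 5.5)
The plan is to prove item 1 with the integral form of Taylor's theorem and then obtain items 2 and 3 as special cases, with item 3 needing a separate bound because its Hessian is not uniformly positive definite.

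\textbf{Item 1.} Fix $\mu,\nu$ in the domain; since the domain is convex, the segment $\nu+t(\mu-\nu)$, $t\in[0,1]$, stays in it. Because $G$ is continuously differentiable, we can write
\[
S_G(\mu,\nu)=\int_0^1 (1-t)\,(\mu-\nu)^\top \nabla^2 G(\nu+t(\mu-\nu))\,(\mu-\nu)\,dt .
\]
If $\mu$ or $\nu$ lies on the boundary, we use interior points and pass to the limit by continuity of $G$ and $\nabla G$. The eigenvalues of $\nabla^2 G$ lie uniformly in $[\lambda_{\min},\lambda_{\max}]$, so the integrand is sandwiched between $(1-t)\lambda_{\min}\|\mu-\nu\|_2^2$ and $(1-t)\lambda_{\max}\|\mu-\nu\|_2^2$. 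Integrating with $\int_0^1(1-t)\,dt=1/2$ gives
\[
\tfrac{\lambda_{\min}}{2}\|\mu-\nu\|_2^2\le S_G(\mu,\nu)\le \tfrac{\lambda_{\max}}{2}\|\mu-\nu\|_2^2 .
\]
It remains to check that $\tfrac{\lambda}{2}\|\cdot\|_2^2$ with $\lambda>0$ is an energy function. It is Legendre, it is uniquely minimized at $0$, and it is radial, so Definition~\ref{def:g_convex_smooth} is satisfied.

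\textbf{Item 2.} For $G(u)=\tfrac12\|u-y\|_2^2$ the Hessian is the identity, so item 1 applies with $\lambda_{\min}=\lambda_{\max}=1$. One can also verify directly that $S_G(\mu,\nu)=\tfrac12\|\mu-\nu\|_2^2$ exactly.

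\textbf{Item 3.} Write $G(u)=-\sum_i q_i\log \mathrm{Softmax}(u)_i=\mathrm{LSE}(u)-\langle q,u\rangle$. Since the linear term drops out of the Taylor remainder, we get
\[
S_G(\mu,\nu)=S_{\mathrm{LSE}}(\mu,\nu)=D_{\mathrm{KL}}\bigl(\mathrm{Softmax}(\nu)\,\|\,\mathrm{Softmax}(\mu)\bigr).
\]
This is the standard Bregman identity for log-sum-exp, checked by expanding both sides.

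For the smoothness bound we would bound the KL term above, either with Lemma~\ref{lem:kl_upper_bound} or directly through the Hessian $\mathrm{diag}(p)-pp^\top$. This Hessian is positive semidefinite with largest eigenvalue at most $1/2$, so the Taylor argument of item 1 gives $S_G\le \tfrac14\|\mu-\nu\|_2^2\le 2\ln2\,\|\mu-\nu\|_2^2$. Any slack of this kind is acceptable, since $\mathcal{H}(\Psi)$-smoothness only requires an upper energy.

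For the convexity bound we would use Pinsker (Lemma~\ref{lem:pinsker}) together with a lower bound on $\|\mathrm{Softmax}(\mu)-\mathrm{Softmax}(\nu)\|$ in terms of $\|\mu-\nu\|_2$. An equivalent route is to lower bound the quadratic form $v^\top(\mathrm{diag}(p)-pp^\top)v$ by $\min_i p_i$ times a quadratic.

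\textbf{Main obstacle.} The convexity half of item 3 is where I expect the difficulty. The softmax Hessian is singular along the all-ones direction, since $G(u+c\mathbf 1)=G(u)$, so no bound of the form $\min_i p_i\|\mu-\nu\|_2^2$ can hold for all directions. I would therefore restrict to the quotient by $\mathbf 1$, or equivalently to logits normalized to zero mean, where $v^\top(\mathrm{diag}(p)-pp^\top)v\ge \min_i p_i\,\|v-\bar v\mathbf 1\|_2^2$. Here $\min_i p_i$ is read as a lower bound over the segment between $\mu$ and $\nu$, and this state-dependent constant is exactly how the proposition phrases it. The Taylor integral then gives the stated $\mathcal{H}(\min_i p_i\|\cdot\|_2^2)$ bound up to the harmless factor $1/2$, possibly absorbed by the paper's normalization.
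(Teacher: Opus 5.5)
Items 1 and 2 are correct and follow the paper's argument: the Taylor remainder is sandwiched between the extreme Hessian eigenvalues. Your integral form with the boundary limit is, if anything, more careful than the paper's mean-value form. For item 3 you take a different route from the paper. The paper works on the simplex, bounding the Bregman divergence of negative entropy by $\frac{1}{2\ln 2}\|q-p\|_2^2\le D_{\mathrm{KL}}(q\|p)\le \frac{1}{\min_i p_i}\|q-p\|_2^2$ via Lemma~\ref{lem:pinsker} and Lemma~\ref{lem:kl_upper_bound}. It then transports these bounds to logits through the duality in Proposition~\ref{prop:gen_prop} and conjugation of quadratic energies. You instead work directly with the log-sum-exp Hessian $\mathrm{diag}(p)-pp^\top$ in logit space. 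Your smoothness half is correct and sharper than the paper's ($\tfrac14$ instead of $2\ln 2$).

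The gap is the convexity half of item 3, and it cannot be closed, because that half is false as stated. What your argument proves is a corrected statement, and you should present it that way, not as a proof ``up to a harmless factor $\tfrac12$''.
\begin{itemize}
\item You correctly found one failure: along $\mathbf 1$ one has $S_G\equiv 0$.
\item The factor $\tfrac12$ is not harmless. A smaller convexity energy is a strictly weaker claim, and the stated constant fails even for zero-mean steps.
\item Counterexample: take two classes, $\nu=(0,0)$ and $\mu=(\delta,-\delta)$. Then $S_G(\mu,\nu)=\log\cosh\delta<\delta^2/2$. Meanwhile $\min_i p_i\|\mu-\nu\|_2^2=2\delta^2\min_i p_i\approx\delta^2$ for small $\delta$, whether $p$ is evaluated at $\nu$, at $\mu$, or minimized over the segment.
\item Your bound $\tfrac12\min_i p_i\|\mu-\nu\|_2^2$ is sharp to leading order in this example.
\end{itemize}
The paper also offers no normalization that could absorb the $\tfrac12$. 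Its duality step computes $H^*$ as a supremum over all of $\mathbb{R}^n$. But the smoothness inequality for negative entropy holds only on the simplex; off it $G=+\infty>H$. This is exactly where the $\mathbf 1$-degeneracy gets lost. Moreover, the conjugate of $c\|\cdot\|_2^2$ is $\|\cdot\|_2^2/(4c)$, so even formally that route yields $\tfrac14\min_i p_i$, not $\min_i p_i$.

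Finally, taking $\min_i p_i$ over the segment is forced; it is not merely how the proposition phrases things. The proposition evaluates $p$ at a single point, and freezing $p$ at an endpoint fails even with your $\tfrac12$. In the example above $p$ at $\nu$ is uniform, and $\log\cosh\delta<\delta^2/2$ for every $\delta\neq 0$. For large steps $S_G$ grows only linearly. A segment-dependent right-hand side, however, is not a function of $\mu-\nu$ alone, so it is not an energy function in the sense of Definition~\ref{def:g_convex_smooth}. The clean version of what you can actually prove is this: on any convex set of zero-mean logits on which $\min_i p_i\ge\underline{p}>0$, the loss is $\mathcal{H}(\tfrac{\underline{p}}{2}\|\cdot\|_2^2)$-convex. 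It is also $\mathcal{H}(\tfrac14\|\cdot\|_2^2)$-smooth everywhere.
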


\begin{proof}
\noindent \textbf{Proof of Conclusion 1.}
 Since $G$ is a twice-differentiable function defined on a closed set, with a positive definite Hessian matrix, by Taylor's theorem, for any $\mu, \nu$ in the domain of $G$, there exists $\xi$ such that
\begin{equation}
 S_G(\mu, \nu) = \frac{1}{2} (\mu - \nu)^\top \nabla^2 G(\xi) (\mu - \nu),
\end{equation}
where $\xi = t\mu + (1-t)\nu$.

Since $\nabla^2 G(\xi)$ is symmetric and positive definite, we can apply the Courant--Fischer min-max theorem to obtain the eigenvalue bounds:
\begin{equation}
 \lambda_{\min}\|\mu - \nu\|_2^2 \le \lambda_{\min}(\nabla^2 G(\xi)) \|\mu - \nu\|_2^2 \le (\mu - \nu)^\top \nabla^2 G(\xi) (\mu - \nu) \le \lambda_{\max}(\nabla^2 G(\xi)) \|\mu - \nu\|_2^2 \le \lambda_{\max}\|\mu - \nu\|_2^2.
\end{equation}
By the definition of $\lambda_{\min}$ and $\lambda_{\max}$ as the global minimum and maximum eigenvalues of the Hessian over the domain, we have:
\begin{equation}
 \frac{\lambda_{\min}}{2} \|\mu - \nu\|_2^2 \le S_G(\mu, \nu) \le \frac{\lambda_{\max}}{2} \|\mu - \nu\|_2^2.
\end{equation}

This inequality shows that $S_G(\mu, \nu)$ is bounded both below and above by quadratic functions. Specifically:
- The upper bound implies that $G$ is $\mathcal{H}(\lambda_{\max}\|\cdot\|_2^2/2)$-smooth.
- The lower bound implies that $G$ is $\mathcal{H}(\lambda_{\min}\|\cdot\|_2^2/2)$-convex.

\noindent \textbf{Proof of Conclusion 2.}
By Conclusion 1, it follows directly that the MSE loss $\frac{1}{2}\|f_\theta(x)-y\|_2^2$ is both $\mathcal{H}(\|\cdot\|_2^2/2)$-convex and $\mathcal{H}(\|\cdot\|_2^2/2)$-smooth with respect to $f_\theta(x)$.

\noindent \textbf{Proof of Conclusion 3.}

Let $v$ be a vector of dimension $k$, where $k \ge 1$.
We have:
\begin{equation}\label{eq_con_1}
\begin{aligned}
\|v\|_1^2 = \sum_{i=1}^k \sum_{j=1}^k |v_i| |v_j|
 \ge \sum_{i=1}^k |v_i|^2
 = \|v\|_2^2.
\end{aligned}
\end{equation}

Thus, applying Lemma~\ref{lem:pinsker} and substituting the above norm inequality, we obtain:
\begin{equation*}
\begin{aligned}
 D_{\text{KL}}(y \| p) &\ge \frac{1}{2 \ln 2} \|y - p\|_1^2, \\
 &\ge \frac{1}{2 \ln 2} \|y - p\|_2^2.
\end{aligned}
\end{equation*}
Using Lemma~\ref{lem:kl_upper_bound}, we further derive:
\begin{equation*}
\begin{aligned}
D_{\text{KL}}(y \| p) \le \frac{1}{\min_i p_i} \|y - p\|_2^2.
\end{aligned}
\end{equation*}

Combining the upper and lower bounds of $D_{\text{KL}}(y \| p)$, we have:
\begin{equation*}
\frac{1}{2 \ln 2} \|y - p\|_2^2 \le D_{\text{KL}}(y \| p) \le \frac{1}{\min_i p_i} \|y - p\|_2^2.
\end{equation*}
Let $p, q \in \Delta^n$, where $\Delta^n$ denotes the $n$-dimensional probability simplex. When $G(p)=\sum_{i=1}^n p_i\log p_i$ (negative Shannon Entropy), $S_G(q,p)=D_{\text{KL}}(q\|p)$.
Therefore,
\begin{equation*}
\frac{1}{2 \ln 2} \|q - p\|_2^2 \le S_G(q,p) \le \frac{1}{\min_i p_i} \|q - p\|_2^2.
\end{equation*}

Thus, $G$ is $\mathcal{H}(\psi)$-convex and $\mathcal{H}(\Psi)$-smooth with $\psi(\cdot)=\frac{1}{2 \ln 2} \|\cdot\|_2^2$ and $\Psi(\cdot)=\frac{1}{\min_i p_i} \|\cdot\|_2^2$. 
By Propositions~\ref{prop:energy_fun2}, \ref{prop:gen_prop_duality}, and \ref{prop:gen_prop_transform_keep}, the Fenchel--Young loss $d_G(q,\cdot)$ is $\mathcal{H}(\psi^*)$-smooth and $\mathcal{H}(\Psi^*)$-convex, where $\psi^*(\cdot)= 2 \ln 2 \|\cdot\|_2^2$ and $\Psi^*(\cdot)=\min_i p_i \|\cdot\|_2^2$. 
Using $d_G(q,\mu)=D_{\text{KL}}(q\|\mathrm{Softmax}(\mu))$ \citep{Blondel2019LearningWF} and Proposition~\ref{prop:gen_prop_scale}, $\mathrm{CrossEntropy}(q,\mathrm{Softmax}(\mu))$ is also 
$\mathcal{H}(\psi^*)$-smooth and $\mathcal{H}(\Psi^*)$-convex. 

\end{proof}

\subsection{Proof of Proposition~\ref{prop:gen_prop}}
\label{appendix:proof_gen_prop}

\begin{proposition}[Duality of Generalized Smoothness and Convexity]
\begin{enumerate}
 \item Let $G$ and its convex conjugate $G^*$ be differentiable on their domains. Then the following statements are equivalent: (i)~$G$ is $\mathcal{H}(\Psi)$-smooth; (ii)~$G^*$ is $\mathcal{H}(\Psi^*)$-convex.
 \item For constants $a>0$, $b>0$, and $c\in\mathbb{R}$, define $F(x) := a G(b x) + c$ on the domain $\{x: bx\in\mathrm{dom} G\}$. If $G$ is $\mathcal{H}(\Psi)$-smooth, then $F$ is $\mathcal{H}(a\Psi(b\,\cdot))$-smooth. If $G$ is $\mathcal{H}(\psi)$-convex, then $F$ is $\mathcal{H}(a\psi(b\,\cdot))$-convex.
 \item Let the random variable $Z$ take values in $\mathcal{Z}$. If $G(\theta, z)$ has the same generalized convexity and smoothness properties with respect to $\theta$ for all $z \in \mathcal{Z}$, then $G(\theta, s)$ also shares the same properties with respect to $\theta$.
 \item Let $s$ be a fixed constant vector. If $G(\cdot)$ is $\mathcal{H}(\psi)$-convex, then $d_G(\cdot, s)$ has the same generalized convexity and smoothness properties as $G(\cdot)$, and $d_G(s, \cdot)$ has the same generalized convexity and smoothness properties as $G^*(\cdot)$.
 \end{enumerate}
\end{proposition}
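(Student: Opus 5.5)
The four items are handled separately. The main tool is the identity linking the Taylor remainder of $G$ to a Fenchel--Young gap. If $G$ is differentiable with $\nu^*=\nabla G(\nu)$, then $G(\nu)+G^*(\nu^*)=\langle \nu,\nu^*\rangle$, and therefore
\[
S_G(\mu,\nu)=G(\mu)+G^*(\nu^*)-\langle \mu,\nu^*\rangle=d_G(\mu,\nu^*).
\]
Symmetrically, $S_{G^*}(\nu^*,\mu^*)=d_G(\mu,\nu^*)$ with $\mu^*=\nabla G(\mu)$. The remainders of $G$ and $G^*$ are thus the same Fenchel--Young quantity read in different coordinates.

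\textbf{Item 1 (duality).} I will show that $S_G(\cdot,\nu)\le \Psi(\cdot-\nu)$ for every $\nu$ holds if and only if $S_{G^*}(\cdot,\nu^*)\ge \Psi^*(\cdot-\nu^*)$ for every $\nu^*$. Fix $\nu$ and set $\nu^*=\nabla G(\nu)$. Then $S_G(\mu,\nu)=G(\mu)-\ell(\mu)$, where $\ell$ is affine. Its conjugate in $\mu$ is therefore a translate of $G^*$: the conjugate of $G(\cdot)-\langle\nu^*,\cdot\rangle+c$ is $G^*(\cdot+\nu^*)-c$.

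Applying order reversal (Lemma~\ref{lem:fenchel_duality}, item~\ref{lem:order_reverse}) to $S_G(\mu,\nu)\le\Psi(\mu-\nu)$ gives
\[
G^*(\xi+\nu^*)-G^*(\nu^*)-\langle \nu,\xi\rangle\ \ge\ \Psi^*(\xi).
\]
Here I use the shift rule, under which the conjugate of $\Psi(\cdot-\nu)$ is $\Psi^*(\xi)+\langle\nu,\xi\rangle$. Since $\nabla G^*(\nu^*)=\nu$, the left side is exactly $S_{G^*}(\nu^*+\xi,\nu^*)$. Ranging over $\nu$ covers all $\nu^*$ in the image of $\nabla G$, which I take to be $\mathrm{dom}\,G^*$ under the differentiability assumption.

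For the converse, I apply the same argument to $G^*$ with the convex function $\psi=\Psi^*$ and conjugate again. This uses $G^{**}=G$ and $\Psi^{**}=\Psi$, both valid because $\Psi$ is Legendre (Proposition~\ref{prop:legendre_dual}) and $G$ is closed convex. \textbf{This is the step I expect to be the main obstacle.} Order reversal only yields one direction cleanly. Going back requires conjugating an inequality $S\ge\Psi^*$, which needs the biconjugate of $S_{G^*}(\cdot,\nu^*)$ to be well behaved, hence convexity and closedness of $G$. I would state explicitly that $G$ is taken proper closed convex so that $G^*$ is meaningful. If that fails, I would prove only the implication (i)$\Rightarrow$(ii) directly, and obtain the reverse by conjugating the inequality with the roles of $\Psi$ and $\Psi^*$ interchanged.

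\textbf{Items 2 and 3} are direct computations. For $F(x)=aG(bx)+c$ we have $\nabla F(x)=ab\nabla G(bx)$, so
\[
S_F(x,y)=a\,S_G(bx,by),
\]
and $a\Psi(b(x-y))$ bounds it from the appropriate side. It remains to check that $a\Psi(b\,\cdot)$ is still an energy function: it is Legendre, radial, and minimized at $0$. For item 3, $S$ is linear in $G$, so $S_{G(\cdot,s)}=\mathbb{E}_{\hat q}S_{G(\cdot,Z)}$, and averaging a pointwise inequality against a fixed energy function preserves it.

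\textbf{Item 4.} The function $d_G(\cdot,s)=G(\cdot)-\langle\cdot,s\rangle+G^*(s)$ differs from $G$ by an affine term, which leaves $S$ unchanged. Likewise $d_G(s,\cdot)$ differs from $G^*$ by an affine term, so its remainder equals $S_{G^*}$. The convexity hypothesis on $G$ is only used to guarantee that $G^*$ is well defined and differentiable.
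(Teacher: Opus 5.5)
Your proposal is correct and follows essentially the same route as the paper. For item~1, the paper also conjugates the majorant $G(\nu)+\langle\nabla G(\nu),\cdot-\nu\rangle+\Psi(\cdot-\nu)$ via order reversal, which is equivalent to your conjugation of $S_G(\cdot,\nu)$. It handles the converse you flag as the main obstacle exactly as you anticipate, by taking $G$ closed convex so that $G^{**}=G$ and $\Psi^{**}=\Psi$. Items~2--4 likewise match the paper: $S_F(x,y)=a\,S_G(bx,by)$, the remainder is linear under averaging, and it is unchanged when affine terms are added to $G$ or $G^*$.
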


\begin{proof}
\noindent \textbf{Proof of Conclusion~\ref{prop:gen_prop_duality}}.

We prove the two implications separately.
\textbf{Part 1.}
Assume $G$ is $\mathcal{H}(\Psi)$-smooth. Then for all $\mu, \nu$,
\begin{equation}
\label{eq:dual_tmp1}
 G(\mu) \le G(\nu) + \langle \nabla G(\nu), \mu - \nu \rangle + \Psi(\mu - \nu).
\end{equation}
Fix $\nu$ and define an auxiliary function
\[
H(\mu) := G(\nu) + \langle \nabla G(\nu), \mu - \nu \rangle + \Psi(\mu - \nu).
\]
From inequality~\eqref{eq:dual_tmp1} we have $G(\mu) \le H(\mu)$ for all $\mu$.
By Lemma~\ref{lem:order_reverse}, since the Fenchel conjugate is order-reversing, it follows that
\begin{equation}
\label{eq:dual_tmp2}
 G^*(\theta) \ge H^*(\theta), \quad \forall \theta.
\end{equation}
Now compute $H^*$. Let $\eta = \nabla G(\nu)$; then by Legendre involution, $\nu = \nabla G^*(\eta)$. We have
\[
\begin{aligned}
H^*(\theta)
&= \sup_{\mu} \left\{ \langle \theta, \mu \rangle - G(\nu) - \langle \eta, \mu - \nu \rangle - \Psi(\mu - \nu) \right\} \\
&= \sup_{z} \left\{ \langle \theta, z + \nu \rangle - G(\nu) - \langle \eta, z \rangle - \Psi(z) \right\}, \quad (z := \mu - \nu) \\
&= \langle \theta, \nu \rangle - G(\nu) + \sup_{z} \left\{ \langle \theta - \eta, z \rangle - \Psi(z) \right\} \\
&= \langle \theta, \nu \rangle - G(\nu) + \Psi^*(\theta - \eta).
\end{aligned}
\]
Using $G^*(\eta) = \langle \eta, \nu \rangle - G(\nu)$ and $\nu = \nabla G^*(\eta)$, we get
\[
\langle \theta, \nu \rangle - G(\nu) = G^*(\eta) + \langle \theta - \eta, \nu \rangle = G^*(\eta) + \langle \nabla G^*(\eta), \theta - \eta \rangle.
\]
Thus
\[
H^*(\theta) = G^*(\eta) + \langle \nabla G^*(\eta), \theta - \eta \rangle + \Psi^*(\theta - \eta).
\]
Combining this with inequality~\eqref{eq:dual_tmp2} and rearranging terms yields
\[
G^*(\theta) - G^*(\eta) - \langle \nabla G^*(\eta), \theta - \eta \rangle \ge \Psi^*(\theta - \eta),
\]
which is exactly the $\mathcal{H}(\Psi^*)$-convexity condition.

\textbf{Part 2.}
Assume $G^*$ is $\mathcal{H}(\Psi^*)$-convex. Then for all $\theta, \eta$,
\begin{equation}
 \label{eq:dual_tmp3}
 G^*(\theta) \ge G^*(\eta) + \langle \nabla G^*(\eta), \theta - \eta \rangle + \Psi^*(\theta - \eta).
\end{equation}
Fix $\eta$ and define
\[
K(\theta) := G^*(\eta) + \langle \nabla G^*(\eta), \theta - \eta \rangle + \Psi^*(\theta - \eta).
\]
Then $G^* \ge K$. Taking conjugates (and reversing the inequality) gives
\[
G^{**}(\mu) \le K^*(\mu), \quad \forall \mu.
\]
Since $G$ is closed convex, $G^{**} = G$, hence
\begin{equation}
 \label{eq:dual_tmp4}
 G(\mu) \le K^*(\mu).
\end{equation}
Now compute $K^*$. Let $\eta = \nabla G(\nu)$ (equivalently, $\nu = \nabla G^*(\eta)$). Then
\[
\begin{aligned}
K^*(\mu)
&= \sup_{\theta} \left\{ \langle \mu, \theta \rangle - G^*(\eta) - \langle \nabla G^*(\eta), \theta - \eta \rangle - \Psi^*(\theta - \eta) \right\} \\
&= \sup_{z} \left\{ \langle \mu, z + \eta \rangle - G^*(\eta) - \langle \nu, z \rangle - \Psi^*(z) \right\}, \quad (z := \theta - \eta) \\
&= \langle \mu, \eta \rangle - G^*(\eta) + \sup_{z} \left\{ \langle \mu - \nu, z \rangle - \Psi^*(z) \right\} \\
&= \langle \mu, \eta \rangle - G^*(\eta) + \Psi^{**}(\mu - \nu).
\end{aligned}
\]
Since $\Psi$ is closed convex, $\Psi^{**} = \Psi$. Also, $\langle \mu, \eta \rangle - G^*(\eta) = G(\nu) + \langle \mu - \nu, \eta \rangle$ because $G^*(\eta) = \langle \eta, \nu \rangle - G(\nu)$ and $\eta = \nabla G(\nu)$. Therefore
\[
K^*(\mu) = G(\nu) + \langle \nabla G(\nu), \mu - \nu \rangle + \Psi(\mu - \nu).
\]
Substituting into inequality~\eqref{eq:dual_tmp4} yields
\[
G(\mu) \le G(\nu) + \langle \nabla G(\nu), \mu - \nu \rangle + \Psi(\mu - \nu),
\]
which is precisely the $\mathcal{H}(\Psi)$-smoothness condition for $G$.

\noindent \textbf{Proof of Conclusion~\ref{prop:gen_prop_scale}}.
We prove each part separately.

\textbf{Smoothness scaling.}
Since $\nabla F(x) = ab\,\nabla G(bx)$, we have
\[
\begin{aligned}
S_F(x, y)
&= F(x) - F(y) - \langle \nabla F(y), x-y \rangle \\
&= aG(bx) - aG(by) - \langle ab\nabla G(by), x-y \rangle \\
&= a\left[ G(bx) - G(by) - \langle \nabla G(by), b(x-y) \rangle \right] \\
&= a \cdot S_G(bx, by).
\end{aligned}
\]
If $G$ is $\mathcal{H}(\Psi)$-smooth, then $S_G(bx, by) \le \Psi(b(x-y))$. Hence
\[
S_F(x, y) \le a\Psi(b(x-y)),
\]
which proves $F$ is $\mathcal{H}(a\Psi(b\,\cdot))$-smooth. For $0 < b \le 1$, since $\Psi$ is an energy function (radial and non-decreasing), we have $\Psi(bz) \le \Psi(z)$, so $S_F(x,y) \le a\Psi(x-y)$, i.e., $F$ is $\mathcal{H}(a\Psi)$-smooth.

\textbf{Convexity scaling.}
If $G$ is $\mathcal{H}(\psi)$-convex, then $S_G(bx, by) \ge \psi(b(x-y))$. Multiplying by $a>0$ yields
\[
S_F(x, y) \ge a\psi(b(x-y)),
\]
so $F$ is $\mathcal{H}(a\psi(b\,\cdot))$-convex. The special case $0 < b \le 1$ follows from $\psi(bz) \le \psi(z)$ and the same monotonicity argument.

\noindent \textbf{Proof of Conclusion~\ref{prop:gen_prop_dataset}}.
For convenience, define $F(\theta) = G(\theta,s)$. Then,

$$
\nabla F(\eta) = \mathbb{E}_Z[\nabla_\eta G(\eta, Z)] = \nabla_\eta G(\eta, s),
$$
and
\begin{equation*}
\begin{aligned}
S_F(\theta, \eta)
&=G(\theta,s)- G(\eta,s) - \langle \nabla_\eta G(\eta,s), \theta - \eta \rangle \\
&= \mathbb{E}_Z \left[ G(\theta,Z) - G(\eta,Z) - \langle \nabla_\eta G(\eta,Z), \theta - \eta \rangle \right] \\
&= \mathbb{E}_Z S_{G(Z)}(\theta, \eta).
\end{aligned}
\end{equation*}

Since $G(\theta, z)$ has the same generalized convexity and smoothness properties with respect to $\theta$ for all $z \in \mathcal{Z}$, it follows that $G(\theta,s)$ inherits these same properties.

This completes the proof.

\noindent \textbf{Proof of Conclusion~\ref{prop:gen_prop_transform_keep}}.
Since $F(\mu) = d_G(\mu,s)$, by the definition of the Fenchel--Young loss, we have:
\begin{equation}
d_{F}(\mu, \nu_{F}^*) = F(\mu) + F^*(\nu_F^*) - \langle \mu, \nu_F^* \rangle.
\end{equation}

Since $G$ is strictly convex, expanding this expression step by step:
\begin{equation}
\begin{aligned}
d_{F}(\mu, \nu_F^*) &= F(\mu) + F^*(\nu_F^*) - \langle \mu, \nu_F^* \rangle \\
&= F(\mu) + F^*(\nu_F^*) - \langle \nu, \nu_F^* \rangle + \langle \nu, \nu_F^* \rangle - \langle \mu, \nu_F^* \rangle \\
&= F(\mu) - F(\nu) - \langle \nu_F^*, \mu - \nu \rangle.
\end{aligned}
\end{equation}

Substituting $F(\mu) = d_G(\mu,s)$, we get:
\begin{equation}
\begin{aligned}
d_{F}(\mu, \nu_{F}^*) &= d_G(\mu,s) - d_G(\nu,s) - \langle \nabla G(\nu) - s, \mu - \nu \rangle \\
&= G(\mu) - \langle s, \mu - \nu \rangle - G(\nu) - \langle \nabla G(\nu) - s, \mu - \nu \rangle \\
&= G(\mu) - G(\nu) - \langle \nabla G(\nu), \mu - \nu \rangle \\
&= d_G(\mu, \nabla G(\nu)).
\end{aligned}
\end{equation}

Since $G$ is a strictly convex function, it follows that $d_G(s,\mu) = d_{G^*}(\mu,s)$.
Given that $F(\mu) = d_G(s,\mu)$, we obtain $F(\mu) = d_{G^*}(\mu,s)$ and $d_F(\mu,\nu_F^*) = d_{G^*}(\mu,\nu_{G^*}^*)$.

Therefore, $d_G(\mu, s)$ has the same generalized convexity and smoothness properties as $G(\mu)$, and $d_G(s, \mu)$ has the same generalized convexity and smoothness properties as $G^*(\mu)$.
\end{proof}

\subsection{Proof of Theorem~\ref{thm:h_bound}}
\label{appendix:proof_h_bound}

\begin{theorem}
Let $G_* = \min_{\mu} G(\mu)$, $\mathcal{G}_* = \{\mu|G(\mu)=G_*\}$, and $\mu_* \in \mathcal{G}_*$.
The following results hold:
\begin{enumerate}
\item If $G(\mu)$ is $\mathcal{H}(\Psi)$-smooth, then:
\begin{equation}
\begin{aligned}
 \Psi^*(\nabla G(\mu)) &\leq G(\mu) - G_* \leq \Psi(\mu - \mu_*).
 \end{aligned}
\end{equation}

\item If $G(\mu)$ is $\mathcal{H}(\psi)$-convex, then:
\begin{equation}
\begin{aligned}
 \psi(\mu - \mu_*) &\leq G(\mu) - G_* \leq \psi^*(\nabla G(\mu)).
 \end{aligned}
\end{equation}

\item If $G(\mu)$ is both $\mathcal{H}(\Psi)$-smooth and $\mathcal{H}(\psi)$-convex, then:
\begin{equation}
 \begin{aligned}
 \Psi^*(\nabla G(\mu)) &\leq G(\mu) - G_* \leq \psi^*(\nabla G(\mu)).
 \end{aligned}
\end{equation}

\end{enumerate}
\end{theorem}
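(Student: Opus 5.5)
The plan is to work directly with the first-order remainder $S_G$, evaluated at two choices of base point, and to convert the pointwise inequalities into gradient bounds by optimizing over the free argument. Fenchel conjugation of the energy function then appears automatically.

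For item~\ref{thm:h_bound:1}, take any $\mu$. For the upper bound, apply $\mathcal{H}(\Psi)$-smoothness with base point $\nu=\mu_*$. Since $\mu_*$ is a global minimizer, $\nabla G(\mu_*)=0$; this holds either classically or by the surrogate-gradient convention at non-differentiable points. Hence $S_G(\mu,\mu_*)=G(\mu)-G_*\le\Psi(\mu-\mu_*)$.

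For the lower bound, apply smoothness with base point $\mu$ at an arbitrary point $\mu'$:
\[
G_*\le G(\mu')\le G(\mu)+\langle\nabla G(\mu),\mu'-\mu\rangle+\Psi(\mu'-\mu).
\]
Set $z=\mu'-\mu$ and take the infimum over $z$. This gives
\[
G_*\le G(\mu)-\sup_z\bigl\{\langle-\nabla G(\mu),z\rangle-\Psi(z)\bigr\}=G(\mu)-\Psi^*(-\nabla G(\mu)).
\]
By the radial representation in Proposition~\ref{prop:energy_fun}, $\Psi^*(\nu)=\Psi_\circ^*(\|\nu\|_*)$ is even, so $\Psi^*(-\nabla G(\mu))=\Psi^*(\nabla G(\mu))$, and the lower bound follows.

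Item~\ref{thm:h_bound:2} is the mirror image. The lower bound comes from convexity at base point $\mu_*$, again using $\nabla G(\mu_*)=0$: $G(\mu)-G_*=S_G(\mu,\mu_*)\ge\psi(\mu-\mu_*)$. For the upper bound, apply convexity with base point $\mu$ evaluated at $\mu'=\mu_*$:
\[
G_*\ge G(\mu)+\langle\nabla G(\mu),\mu_*-\mu\rangle+\psi(\mu_*-\mu).
\]
This gives $G(\mu)-G_*\le\langle\nabla G(\mu),\mu-\mu_*\rangle-\psi(\mu-\mu_*)$, using evenness of $\psi$. The right-hand side is at most $\sup_z\{\langle\nabla G(\mu),z\rangle-\psi(z)\}=\psi^*(\nabla G(\mu))$; equivalently, this is the Fenchel--Young inequality of Lemma~\ref{lem:fy_inequality}. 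Item~\ref{thm:h_bound:3} then follows by combining the lower bound of item~\ref{thm:h_bound:1} with the upper bound of item~\ref{thm:h_bound:2}.

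The main point needing care is the justification of $\nabla G(\mu_*)=0$ and the domain issues. If $\mu_*$ is an interior differentiable point, $\nabla G(\mu_*)=0$ is the standard first-order condition. If $G$ is not differentiable at $\mu_*$, the paper's convention sets the gradient to $0$. The domain is assumed convex, so $\mu'$ ranges over $\mathrm{dom}(G)$. When restricting the infimum in item~\ref{thm:h_bound:1} to $\mu'\in\mathrm{dom}(G)$, I expect either to assume the domain is all of $\mathbb{R}^m$, as for DNN parameters, or to note that the supremum is taken over a smaller set. In the latter case a restricted conjugate appears, which is dominated by $\Psi^*$ in the wrong direction. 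I will therefore state the result for full-dimensional domains.
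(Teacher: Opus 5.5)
Your proposal is correct and follows essentially the paper's own argument. The outer bounds come from the remainder at base point $\mu_*$ with $\nabla G(\mu_*)=0$. The item-1 lower bound comes from minimizing the smoothness majorant over $z$; the paper writes this as $\min_z d_\Psi(z,-\nabla G(\mu))=0$, which is your conjugate identification. The item-2 upper bound is the Fenchel--Young inequality at $z=\mu_*-\mu$.

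Your explicit appeal to the evenness of $\Psi^*$ and your domain caveat are refinements the paper leaves implicit. One wording fix: the item-1 argument needs $\mathrm{dom}(G)=\mathbb{R}^m$, or at least that restricting $z$ to $\mathrm{dom}(G)-\mu$ does not change the infimum. A merely full-dimensional domain is not enough.
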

\begin{proof}
\noindent \textbf{Proof of Conclusion~\ref{thm:h_bound:1}}.

According to the definition of $S_G(\nu,\mu)$, we have
\begin{equation}\label{eq:def_base}
\begin{aligned}
 G_*&= \min_{\nu} G(\nu)\\
 &= \min_{\nu} \{G(\mu)+(\nabla G(\mu))^\top (\nu-\mu) +S_G(\nu,\mu)\}\\
 &=\min_{z} \{G(\mu)+(\nabla G(\mu))^\top z+S_G(\nu,\mu)\},
\end{aligned}
\end{equation} where $z=\nu-\mu$.

Since $\mu_* \in \mathcal{G}_*$ is a global minimum, it must also be a stationary point; hence $\nabla G(\mu_*) = 0$.
Since $G$ is $\mathcal{H}(\Psi)$-smooth, we have
\begin{equation}
 \begin{aligned}
 G(\mu)&= G(\mu_*)+\nabla G(\mu_*)^\top (\mu-\mu_*) +S_G(\mu,\mu_*)\\
 &\le G_*+\Psi(\mu-\mu_*).
 \end{aligned}
\end{equation}
Thus, the upper bound of the first conclusion is proved.

Since $G$ is $\mathcal{H}(\Psi)$-smooth, we have
\begin{equation}
 S_G(\mu_*,\mu)\le \Psi(z),
\end{equation} where $z=\mu_*-\mu$.

By substituting the above inequality into equation~\eqref{eq:def_base}, it follows that
\begin{equation}\label{eq_nonconvex_1}
\begin{aligned}
 G_*&\le\min_z \{G(\mu)+ (\nabla G(\mu))^\top z+\Psi(z)\}\\
 &=G(\mu)+ \min_z d_\Psi(z,-\nabla G(\mu))-\Psi^*(-\nabla G(\mu))\\
&=G(\mu)-\Psi^*(-\nabla G(\mu))\\
&=G(\mu)-\Psi^*(\nabla G(\mu)),
\end{aligned}
\end{equation}

The lower bound of the first conclusion is thus proved.

\noindent \textbf{Proof of Conclusion~\ref{thm:h_bound:2}}.
Since $ \mu_* \in \mathcal{G}_* $ represents a global minimum, which must also be a stationary point, it follows that $ \nabla G(\mu_*) = 0 $.
Since $G$ is $\mathcal{H}(\psi)$-convex, we have
\begin{equation}
 \begin{aligned}
 G(\mu)&=G(\mu_*)+\nabla G(\mu_*)^\top(\mu-\mu_*) +S_G(\mu,\mu_*)\\
 &\ge G_*+\psi(\mu-\mu_*).
 \end{aligned}
\end{equation}

Thus, the lower bound of the second conclusion is proved.

Since $G$ is $\mathcal{H}(\psi)$-convex, we have
\begin{equation}
\begin{aligned}
 G(\mu_*)&= G(\mu)+\nabla G(\mu)^\top(\mu_*-\mu) +S_G(\mu_*,\mu)\\
 &\ge G(\mu)+(\nabla G(\mu))^\top z+\psi(z) \\
&= G(\mu)+d_\psi(z,-\nabla G(\mu))-\psi^*(-\nabla G(\mu))\\
&\ge G(\mu)-\psi^*(\nabla G(\mu)),
\end{aligned}
\end{equation}
where $z=\mu_*-\mu$.
Since $d_\psi(z,-\nabla G(\mu))\ge 0$, we have
\begin{equation}\label{eq_nonconvex_2}
G(\mu)\le G_*+\psi^*(\nabla G(\mu)).
\end{equation}

\noindent \textbf{Proof of Conclusion~\ref{thm:h_bound:3}}.
Given that $G(\mu)$ is $\mathcal{H}(\psi)$-convex and $\mathcal{H}(\Psi)$-smooth, by combining equation~\eqref{eq_nonconvex_1} and equation~\eqref{eq_nonconvex_2}, we obtain
$$
\begin{aligned}
 \Psi^*(\nabla G(\mu)) &\le G(\mu) - G_* \le \psi^*(\nabla G(\mu)).
\end{aligned}
$$

Therefore, the proof of the theorem is complete.

\end{proof}

\subsection{Proof of Proposition~\ref{prop:l2_unique}}
\label{appendix:proof_l2_unique}

\begin{proposition}
If for any nonzero vector $x$, the gradient $\nabla\|x\|$ is collinear with the original vector $x$, i.e., there exists a scalar-valued function $\lambda(x)$ such that
\begin{equation}\label{eq:collinear_assumption_proof}
\nabla\|x\| = \lambda(x) \cdot x,
\end{equation}
then there exists a positive constant $c>0$ such that for all $x\in\mathbb{R}^n$,
$$\|x\| = c \cdot \|x\|_2,$$
i.e., the norm must be a positive scalar multiple of the Euclidean ($L_2$) norm. Conversely, all positive multiples of the Euclidean norm satisfy the property that the gradient is collinear with the original vector.
\end{proposition}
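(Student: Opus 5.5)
The plan is to show that collinearity of the gradient forces the norm to be constant on Euclidean spheres, and then to use positive homogeneity to conclude. Let $N(x):=\|x\|$, which by hypothesis is differentiable at every nonzero $x$. Fix any nonzero $x_0$ and any other vector $x_1$ with $\|x_1\|_2=\|x_0\|_2$. When $n\ge 2$, the Euclidean sphere $S_r=\{x:\|x\|_2=r\}$ is path-connected, so I join $x_0$ to $x_1$ by a $C^1$ curve $\gamma(t)$ lying in $S_r$, with $r=\|x_0\|_2>0$.

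Along this curve, $\langle \gamma(t),\gamma'(t)\rangle=\tfrac12\tfrac{d}{dt}\|\gamma(t)\|_2^2=0$. By the chain rule and the collinearity assumption,
\[
\frac{d}{dt}N(\gamma(t))=\langle \nabla N(\gamma(t)),\gamma'(t)\rangle=\lambda(\gamma(t))\langle\gamma(t),\gamma'(t)\rangle=0 .
\]
Hence $N$ is constant on $S_r$. So $N(x)=g(\|x\|_2)$ for some function $g$ of the Euclidean norm alone.

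Positive homogeneity then gives $g(r)=N(r e)=rN(e)=c\,r$ for any fixed unit vector $e$, where $c:=N(e)>0$ by positive definiteness. This yields $\|x\|=c\|x\|_2$ for every $x$, and the case $x=0$ is trivial.

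The case $n=1$ needs separate handling, since there the sphere $\{\pm r\}$ is disconnected. Every norm on $\mathbb{R}$ satisfies $\|x\|=|x|\,\|1\|$ directly, so the claim holds with $c=\|1\|$.

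For the converse, take $\|x\|=c\|x\|_2$. Then $\nabla\|x\|=c\,x/\|x\|_2$, so we may take $\lambda(x)=c/\|x\|_2$.

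The main obstacle is regularity. The hypothesis only gives differentiability of $N$, not $C^1$, so I should check that the chain rule along the curve still applies. It does: the composition of a differentiable $N$ with a $C^1$ curve is differentiable at each $t$, with derivative given by the chain rule. A function on an interval whose derivative is identically zero is constant by the mean value theorem. Continuity of $\lambda$ is never needed. The remaining detail is that any two points of $S_r$ can be joined by a smooth path when $n\ge2$; a great-circle arc works, and for antipodal points I route through an intermediate point.
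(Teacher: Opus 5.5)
Your proof is correct, and it takes a somewhat different route from the paper's.

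The paper first uses Euler's identity $\nabla\|x\|^\top x=\|x\|$ for degree-one homogeneous functions to pin down $\lambda(x)=\|x\|/\|x\|_2^2$. It then shows by a quotient-rule computation that $\varphi(x)=\|x\|/\|x\|_2$ has identically zero gradient on $\mathbb{R}^n\setminus\{0\}$. Connectedness of the punctured space for $n\ge 2$ then makes $\varphi$ constant. In that single computation, collinearity removes the tangential part of $\nabla\varphi$ and the Euler-derived value of $\lambda$ removes the radial part.

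You separate the two directions instead. Collinearity alone makes the derivative vanish along curves on Euclidean spheres; you only need $\langle\gamma,\gamma'\rangle=0$, never the value of $\lambda$. Connectedness of each $S_r$ gives constancy there, and positive homogeneity handles the radial direction with no differentiation.

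\textbf{What your route gains.} Euler's identity becomes unnecessary. The regularity bookkeeping is explicit, since the paper's appeal to ``zero gradient on a connected open set implies constant'' is itself the path-plus-mean-value argument you spell out. You also treat $n=1$, which the paper's proof omits because it only invokes connectedness for $n\ge 2$.

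\textbf{What the paper's route gains.} It produces the explicit multiplier $\lambda(x)=\|x\|/\|x\|_2^2$ and fits everything into one gradient identity.

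A side remark: your concern about $C^1$ regularity does not arise. A norm is convex, and a convex function differentiable on an open set is automatically continuously differentiable there. Your argument correctly does not depend on this.
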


\begin{proof}
We prove the necessity and sufficiency in four steps.

\textbf{Step 1: Preliminary Step --- Euler's Identity for Homogeneous Functions of Degree 1}
By the positive homogeneity of norms, for any $t>0$ and any $x\neq0$, we have $\|tx\| = t\|x\|$, i.e., the norm is a positively homogeneous function of degree 1.
We first prove Euler's Homogeneous Function Theorem: for any positively homogeneous function $f$ of degree 1 that is differentiable on $\mathbb{R}^n\setminus\{0\}$, we have
\begin{equation}\label{eq:euler_theorem}
\nabla f(x)^\top x = f(x).
\end{equation}

Fix any $x\neq0$ and construct the univariate function $g(t) = f(tx)$, where $t>0$.
By homogeneity, we directly obtain $g(t) = t f(x)$. Differentiating both sides with respect to $t$:
$$g'(t) = f(x).$$
On the other hand, by the chain rule for multivariate functions:
$$g'(t) = \nabla f(tx)^\top \cdot \frac{d(tx)}{dt} = \nabla f(tx)^\top x.$$
Setting $t=1$ and equating the two expressions yields $\nabla f(x)^\top x = f(x)$.

Applying Euler's theorem to the norm $\|x\|$, we directly obtain the key identity:
\begin{equation}\label{eq:euler_norm}
\nabla\|x\|^\top x = \|x\|.
\end{equation}

\textbf{Step 2: Substituting the Collinearity Condition to Determine the Unique Form of $\lambda(x)$}
Substituting the collinearity assumption \eqref{eq:collinear_assumption_proof} into Euler's identity \eqref{eq:euler_norm}:
$$\bigl(\lambda(x) \, x\bigr)^\top x = \|x\|.$$
Expanding the inner product $x^\top x = \|x\|_2^2$ and rearranging:
$$\lambda(x) \cdot \|x\|_2^2 = \|x\|
\quad \Longrightarrow \quad
\lambda(x) = \frac{\|x\|}{\|x\|_2^2}.$$

Therefore, the assumption that the gradient is collinear with the original vector is equivalent to the following first-order partial differential equation:
\begin{equation}\label{eq:gradient_pde}
\nabla\|x\| = \frac{\|x\|}{\|x\|_2^2} \cdot x, \qquad \forall x\neq0.
\end{equation}

\textbf{Step 3: Constructing an Auxiliary Function to Prove the Norm is Proportional to the $L_2$ Norm}
We construct the auxiliary function:
\begin{equation}\label{eq:phi_def}
\varphi(x) = \frac{\|x\|}{\|x\|_2}, \qquad x\neq0.
\end{equation}
We show by computing partial derivatives that $\varphi(x)$ has identically zero gradient on $\mathbb{R}^n\setminus\{0\}$, and is therefore constant.

Taking the partial derivative of $\varphi(x)$ with respect to the $i$-th component, by the quotient rule:
$$
\frac{\partial \varphi}{\partial x_i}
= \frac{\displaystyle \frac{\partial \|x\|}{\partial x_i} \cdot \|x\|_2 \;-\; \|x\| \cdot \frac{\partial \|x\|_2}{\partial x_i}}{\|x\|_2^2}.
$$

Substituting the two known partial derivative results:
1. Partial derivative of the $L_2$ norm: $\displaystyle \frac{\partial \|x\|_2}{\partial x_i} = \frac{x_i}{\|x\|_2}$;
2. From the collinearity condition \eqref{eq:gradient_pde}: $\displaystyle \frac{\partial \|x\|}{\partial x_i} = \frac{\|x\|}{\|x\|_2^2} \, x_i$.

Substituting both into the partial derivative expression and simplifying term by term:
$$
\begin{aligned}
\frac{\partial \varphi}{\partial x_i}
&= \frac{\displaystyle \left( \frac{\|x\|}{\|x\|_2^2} \, x_i \right) \cdot \|x\|_2 \;-\; \|x\| \cdot \frac{x_i}{\|x\|_2}}{\|x\|_2^2} \\[8pt]
&= \frac{\displaystyle \frac{\|x\| \cdot x_i}{\|x\|_2} \;-\; \frac{\|x\| \cdot x_i}{\|x\|_2}}{\|x\|_2^2} \\[8pt]
&= 0.
\end{aligned}
$$

The above holds for all components $i=1,2,\dots,n$, hence $\nabla \varphi(x) = 0$ for all $x\neq0$.
Since $\mathbb{R}^n\setminus\{0\}$ is connected for $n\ge 2$, any differentiable function with identically zero gradient on it is constant. That is, there exists a positive constant $c>0$ such that
$$\varphi(x) \equiv c, \qquad \forall x\neq0.$$
Substituting back into the definition \eqref{eq:phi_def}:
$$\|x\| = c \cdot \|x\|_2.$$
When $x=0$, the equality holds trivially, so the formula holds for all $x\in\mathbb{R}^n$.

\textbf{Step 4: Sufficiency --- Reverse Verification}
Conversely, if $\|x\| = c\|x\|_2$ ($c>0$), we directly compute the gradient:
$$\nabla\|x\| = c \cdot \nabla\|x\|_2 = c \cdot \frac{x}{\|x\|_2} = \frac{c}{\|x\|_2} \, x.$$
Clearly, the gradient is a scalar multiple of the original vector $x$, satisfying the collinearity condition.

In summary, among everywhere-differentiable norms, those whose gradient is everywhere collinear with the argument are exactly the positive scalar multiples of the Euclidean norm. 
\end{proof}

\subsection{Proof of Theorem~\ref{thm:general_gd_convergence}}
\label{appendix:proof_general_gd_convergence}

\begin{theorem}
Consider the deterministic optimization problem in Definition~\ref{def:general_gd}.
Suppose $G(\theta)$ is $\mathcal{H}(\Psi)$-smooth. Then the generalized GD algorithm satisfies the following properties: 

\begin{enumerate}

 \item The optimal learning rate is given by $\alpha = 1$. Substituting this value into the update rule yields \begin{equation}
 G(\theta_{k+1}) \le G(\theta_k) - \Psi^*(\nabla G(\theta_k)).
 \end{equation}

 \item The number of iterations required to satisfy the convergence criterion
 \begin{equation*}
 \Psi^*(0)\le \Psi^*\bigl(\nabla G(\theta_k)\bigr) \le \Psi^*(\varepsilon),
 \end{equation*}
 is bounded by $T = \mathcal{O}\!\bigl(1 / \Psi^*(\varepsilon)\bigr)$.

 \item If $G(\theta)$ is both $\mathcal{H}(\psi)$-convex and $\mathcal{H}(\Psi)$-smooth, the number of iterations required for the generalized GD algorithm to achieve the error bound
 \begin{equation*}
 G(\theta_{k}) - G_*\le \psi^*(\varepsilon)
 \end{equation*}
 is bounded by $T=\mathcal{O}(1/\Psi^*(\varepsilon))$.
 \item Assume that $G(\theta)$ is both $\mathcal{H}(\psi)$-convex and $\mathcal{H}(\Psi)$-smooth, where $\Psi$ and $\psi$ satisfy $\Psi^*(\mu)\ge t\psi^*(\mu)+b$ for some $0<t<1$ and $b\in\mathbb{R}$. The number of iterations for the generalized GD algorithm to reach
 \begin{equation}
G(\theta_{k}) - G_*
 \le \psi^*(\varepsilon)-\psi^*(0)-\frac{b}{t}
 \end{equation}
 is bounded by $T = \mathcal{O}\left(\kappa \log \frac{1}{\psi^*(\varepsilon)-\psi^*(0)}\right)$, where the condition number is defined as $\kappa = -{1}/{\log(1-t)}$.
\end{enumerate}
\end{theorem}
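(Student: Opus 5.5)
The whole argument rests on one descent inequality for a general step size $\alpha$, which I then specialize. Write $g_k=\nabla G(\theta_k)$ and $d_k=\nabla\Psi^*(g_k)$, so that $\theta_{k+1}=\theta_k-\alpha d_k$. Applying $\mathcal{H}(\Psi)$-smoothness with $\mu=\theta_{k+1}$ and $\nu=\theta_k$ gives
\[
G(\theta_{k+1})\le G(\theta_k)-\bigl(\alpha\langle g_k,d_k\rangle-\Psi(\alpha d_k)\bigr).
\]
By the Fenchel--Young inequality (Lemma~\ref{lem:fenchel_duality}), the bracket is at most $\Psi^*(g_k)$, with equality if and only if $g_k=\nabla\Psi(\alpha d_k)$. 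Since $\Psi$ is Legendre, $\nabla\Psi$ and $\nabla\Psi^*$ are mutually inverse (Proposition~\ref{prop:legendre_dual}). Hence the guaranteed decrease $\alpha\langle g_k,d_k\rangle-\Psi(\alpha d_k)$ is maximized over $\alpha$ exactly when $\alpha d_k=\nabla\Psi^*(g_k)=d_k$, that is, at $\alpha=1$. This choice yields $G(\theta_{k+1})\le G(\theta_k)-\Psi^*(g_k)$ and proves item~1. I would interpret ``optimal'' as optimality of this upper bound, which is the sense in which the classical rate $1/L$ is optimal.

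For item~2, I telescope the descent inequality over $k=0,\dots,T-1$ to get
\[
\sum_{k<T}\Psi^*(g_k)\le G(\theta_0)-G_*.
\]
Consequently, $\min_k\Psi^*(g_k)\le (G(\theta_0)-G_*)/T$. Requiring the right-hand side to be at most $\Psi^*(\varepsilon)$ gives $T=\mathcal{O}(1/\Psi^*(\varepsilon))$. The lower bound $\Psi^*(0)\le\Psi^*(g_k)$ is automatic, because $0$ minimizes the energy function $\Psi^*$ (Proposition~\ref{prop:energy_fun}). I expect the main obstacle to be that $\Psi^*(\varepsilon)$ may be nonpositive when $\Psi(0)>0$, since $\Psi^*(0)=-\Psi(0)$. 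I would therefore assume $\Psi^*(\varepsilon)>0$, or equivalently shift the telescoping by $\Psi^*(0)$, and state the bound only in that regime.

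Item~3 follows from item~2 combined with Theorem~\ref{thm:h_bound}. Once $\Psi^*(g_k)\le\Psi^*(\varepsilon)$, Proposition~\ref{prop:energy_fun}(3) makes $\Psi^*$ radially increasing, so $\|g_k\|\le\|\varepsilon\|$. Then $G(\theta_k)-G_*\le\psi^*(g_k)\le\psi^*(\varepsilon)$, again using radial monotonicity, now of $\psi^*$. Because the iterates decrease $G$ monotonically, this bound persists from that iteration on.

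For item~4, I combine the descent step with the hypothesis $\Psi^*\ge t\psi^*+b$ and with Theorem~\ref{thm:h_bound}(2), which gives $\psi^*(g_k)\ge G(\theta_k)-G_*$. Writing $\Delta_k=G(\theta_k)-G_*$, this yields the recursion
\[
\Delta_{k+1}\le(1-t)\Delta_k-b.
\]
I would then unroll it toward its fixed point: setting $\Delta_k+b/t=e_k$ gives $e_{k+1}\le(1-t)e_k$, so $e_k\le(1-t)^k e_0$. To reach the stated target I need $(1-t)^k e_0\le\psi^*(\varepsilon)-\psi^*(0)$. This holds for $k\ge\kappa\log\bigl(e_0/(\psi^*(\varepsilon)-\psi^*(0))\bigr)$ with $\kappa=-1/\log(1-t)$. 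The $\psi^*(0)$ offset is harmless: since $\psi^*(0)=-\psi(0)\ge0$, the target is only strengthened. The remaining care is in tracking the sign of $b$ and the constant $e_0$, which I absorb into the $\mathcal{O}$.
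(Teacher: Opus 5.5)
Your proposal is correct and follows the paper's proof essentially step for step: the descent bound at $\alpha=1$, telescoping for item~2, Theorem~\ref{thm:h_bound} with radial monotonicity for item~3, and the shifted linear recursion with factor $1-t$ for item~4. The one real variation is that you obtain optimality of $\alpha=1$ from the equality case of the Fenchel--Young inequality rather than from $F'(1)=0$ and convexity of $F(\alpha)$; this is slightly cleaner and shows the unit step is optimal among all update vectors for that surrogate bound. Your caveat that item~2 only makes sense when $\Psi^*(\varepsilon)>0$ is a fair point the paper leaves implicit, but your side remark in item~3 overreaches when $\Psi(0)>0$: $G(\theta_k)$ need not decrease monotonically, since $\Psi^*(\nabla G(\theta_k))$ can then be negative, so the bound need not persist (the statement does not need it).
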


\begin{proof}
\noindent \textbf{Proof of Conclusion 1.}

By Theorem~\ref{thm:h_bound}, the function $G$ satisfies
\begin{equation}
\begin{aligned}
G(\eta)
&= G(\theta) + (\nabla G(\theta))^\top (\eta-\theta) + S_G(\eta,\theta) \\
&\le G(\theta) + (\nabla G(\theta))^\top (\eta-\theta) + \Psi(\theta-\eta).
\end{aligned}
\end{equation}
We analyze the range of learning rates that ensure descent along the direction
$-(\nabla G(\theta))_{\Psi^*}^*$.

Let the update be given by $\eta = \theta - \alpha (\nabla G(\theta))_{\Psi^*}^*$ with $\alpha \ge 0$.
Substituting the update rule and applying Lemma~\ref{lem:mul_factor}, we obtain

\begin{equation*}
\begin{aligned}
G(\theta-\alpha(\nabla G(\theta))_{\Psi^*}^*)
&\le G(\theta) - \alpha (\nabla G(\theta))^\top (\nabla G(\theta))_{\Psi^*}^*
+ \Psi(\alpha(\nabla G(\theta))_{\Psi^*}^*)
\end{aligned}
\end{equation*}

Let $$F(\alpha) = G(\theta) - \alpha (\nabla G(\theta))^\top (\nabla G(\theta))_{\Psi^*}^*
+ \Psi(\alpha(\nabla G(\theta))_{\Psi^*}^*)$$

By Proposition~\ref{prop:energy_fun}, $F(\alpha)$ is strictly convex and
$$
\begin{aligned}
&F'(\alpha)=\Big(\nabla \Psi\big(\alpha \nabla \Psi^*(\nabla G(\theta))\big)\Big)^\top
\Big(\nabla \Psi^*(\nabla G(\theta))\Big) - \nabla G(\theta)^\top (\nabla G(\theta))_{\Psi^*}^*\\
&\nabla G(\theta)^\top (\nabla G(\theta))_{\Psi^*}^* \ge 0.
\end{aligned}
$$
Observe that when $\alpha=1$, $F'(\alpha)=0$, so $F(\alpha)$ attains its minimum. Substituting $\alpha=1$ and using the Fenchel--Young identity
\begin{equation*}
\begin{aligned}
(\nabla G(\theta))^\top (\nabla G(\theta))_{\Psi^*}^*
&= \Psi^*(\nabla G(\theta))+\Psi((\nabla G(\theta))_{\Psi^*}^*),
\end{aligned}
\end{equation*}
we obtain
\begin{equation}
G(\theta_{k+1}) \le G(\theta_k) - \Psi^*(\nabla G(\theta_k)).
\end{equation}

\noindent \textbf{Proof of Conclusion 2.}
We now use the descent inequality established above to bound the iteration complexity. 
From Conclusion 1,
\begin{equation}
G(\theta_{k+1}) \le G(\theta_k) - \Psi^*(\nabla G(\theta_k)).
\end{equation}
Summing over $T$ iterations leads to
\begin{equation}
\begin{aligned}
G(\theta_{T}) \le G(\theta_0) - \sum_{k=0}^{T-1}\Psi^*(\nabla G(\theta_k)).
\end{aligned}
\end{equation}

Since $G_* = \inf_\theta G(\theta)$, rearranging gives
\begin{equation}
\begin{aligned}
G(\theta_0) - G_*
&\ge G(\theta_0) - G(\theta_T) \\
&\ge \sum_{k=0}^{T-1}\Psi^*(\nabla G(\theta_k)) \\
&\ge T \min_{0 \le k \le T-1} \Psi^*(\nabla G(\theta_k)).
\end{aligned}
\end{equation}
Since $\Psi^*(\mu)$ is strictly increasing in $\|\mu\|_2$, the condition $\|\nabla G(\theta_k)\|_2 \le \|\varepsilon\|_2$ is equivalent to $\Psi^*(\nabla G(\theta_k)) \le \Psi^*(\varepsilon)$. Therefore, to guarantee $\min_k \Psi^*(\nabla G(\theta_k)) \le \Psi^*(\varepsilon)$, it suffices to require
\begin{equation*}
\frac{G(\theta_0)-G_*}{T} \le \Psi^*(\varepsilon).
\end{equation*}
Thus the iteration complexity is bounded by $T = \mathcal{O}(1/\Psi^*(\varepsilon))$.

\noindent \textbf{Proof of Conclusion 3.}

Since $G(\theta)$ is $\mathcal{H}(\psi)$-convex, Theorem~\ref{thm:h_bound} implies
\begin{equation}
G(\theta) - G_* \le \psi^*(\nabla G(\theta)),
\end{equation}
where $G_* = G(\theta_*)$ is the global minimum.
Since both $\psi^*$ and $\Psi^*$ are increasing in the norm of their arguments, the condition $G(\theta)-G_* \le \psi^*(\varepsilon)$ is implied by $\psi^*(\nabla G(\theta_k)) \le \psi^*(\varepsilon)$. 
Moreover, the latter is implied by $\|\nabla G(\theta_k)\|_2 \le \|\varepsilon\|_2$, which is equivalent to $\Psi^*(\nabla G(\theta_k)) \le \Psi^*(\varepsilon)$. 
From Conclusion 2, the number of iterations is $T = \mathcal{O}(1/\Psi^*(\varepsilon))$.

\noindent \textbf{Proof of Conclusion 4.}
Since $G(\theta)$ is $\mathcal{H}(\psi)$-convex, Theorem~\ref{thm:h_bound} implies
\[
G(\theta_k) - G_* \le \psi^*(\nabla G(\theta_k)).
\]
Since $G(\theta)$ is $\mathcal{H}(\Psi)$-smooth, the descent guarantee gives $G(\theta_k)-G(\theta_{k+1}) \ge \Psi^*(\nabla G(\theta_k))$.
Combining this with $\Psi^*(\mu)\ge t\psi^*(\mu)+b$ and the convexity bound $\psi^*(\nabla G(\theta_k)) \ge G(\theta_k) - G_*$, we obtain
\begin{equation}
 G(\theta_k)-G(\theta_{k+1})
 \ge t\big(G(\theta_k) - G_*\big)+b.
\end{equation}
Rearranging,
\begin{equation}
 \label{eq:convergence4_tmp1}
 G(\theta_{k+1})\le (1-t)G(\theta_k)+tG_*-b.
\end{equation}
Define the auxiliary variable
\[
\begin{aligned}
 H(\theta_{k})&= G(\theta_k)-G_* +\frac{b}{t}
\end{aligned}
\]
Substituting $H(\theta_k)$ into \eqref{eq:convergence4_tmp1} yields
\[
H(\theta_{k+1})\le \left(1-t\right)H(\theta_k).
\]

Since $0 < t < 1$, iterating the recursion over $T$ steps gives
\begin{equation}
 \begin{aligned}
 H(\theta_{T})
 &\le H(\theta_{T-1}) \left( 1 - t \right) \\
 &\;\;\vdots \\
 &\le H(\theta_{0})\left( 1 - t \right)^\top.
 \end{aligned}
\end{equation}

To ensure $\min_k H(\theta_k)\le \psi^*(\varepsilon)-\psi^*(0)$, it suffices to require
\begin{equation}
H(\theta_{0}) \left( 1 - t\right)^\top \le \psi^*(\varepsilon)-\psi^*(0).
\end{equation}
Taking the natural logarithm on both sides, we obtain
\[
\log \frac{H(\theta_{0})}{\psi^*(\varepsilon)-\psi^*(0)}
\le T \cdot \log \left( \frac{1}{1 - t} \right).
\]
Rearranging yields the iteration complexity
\[
T = \mathcal{O}\left(\kappa \log \frac{1}{\psi^*(\varepsilon)-\psi^*(0)}\right),
\]
where the condition number is defined as
\[
\kappa = -\frac{1}{\log ( 1 - t)}.
\]

By the definition of $H(\theta)$, the condition $\min_k H(\theta_{k}) \le \psi^*(\varepsilon)-\psi^*(0)$ is equivalent to the existence of some $\theta_k$ such that
\begin{equation*}
 G(\theta_{k})- G_* \le \psi^*(\varepsilon)-\psi^*(0) -\frac{b}{t}.
\end{equation*}
This confirms the logarithmic convergence rate under the stated assumptions.
\end{proof}

\subsection{Proof of Proposition~\ref{prop:gd_convergence}}
\label{appendix:proof_gd_convergence}

\begin{proposition}
Given that $G(\theta)$ is $\mathcal{H}(\Psi)$-smooth but not Lipschitz smooth, classical GD as described in Equation~\eqref{eq:gd_update} is adopted to optimize $G(\theta)$. It follows that:
\begin{enumerate}
 \item There exists no constant step size that is universally optimal for this optimization process.
 \item If a time-varying learning rate is employed, the classical GD is equivalent to the Generalized GD.
\end{enumerate}
\end{proposition}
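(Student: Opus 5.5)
The plan is to compare the classical update $\theta_{k+1}=\theta_k-\alpha\nabla G(\theta_k)$ with the generalized update of Definition~\ref{def:general_gd}. We will use the descent-model argument from the proof of Theorem~\ref{thm:general_gd_convergence}. By $\mathcal{H}(\Psi)$-smoothness, for any step $\alpha\ge 0$,
\[
G(\theta_k-\alpha\nabla G(\theta_k))\le G(\theta_k)-\alpha\|\nabla G(\theta_k)\|_2^2+\Psi(\alpha\nabla G(\theta_k)).
\]
Write $F_k(\alpha)$ for the right-hand side. By Proposition~\ref{prop:energy_fun}(3), $\Psi(a\mu)$ is strictly increasing and convex in $a$ for $\mu\neq 0$. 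Hence $F_k$ is strictly convex, and its minimizer $\alpha_k$ is characterized by
\[
F_k'(\alpha)=\langle\nabla\Psi(\alpha\nabla G(\theta_k)),\nabla G(\theta_k)\rangle-\|\nabla G(\theta_k)\|_2^2=0.
\]
Following the paper's convention (Section on energy functions), we take $\Psi$ to be an $L_2$-norm energy function, $\Psi(\mu)=\Psi_\circ(\|\mu\|_2)$. Then $\nabla\Psi(\alpha g)=\Psi_\circ'(\alpha\|g\|_2)\,g/\|g\|_2$, and the condition reduces to the scalar equation
\[
\Psi_\circ'(\alpha_k\|g_k\|_2)=\|g_k\|_2,\qquad g_k=\nabla G(\theta_k).
\]
This gives $\alpha_k=(\Psi_\circ')^{-1}(\|g_k\|_2)/\|g_k\|_2$, which is well defined because $\Psi_\circ'$ is strictly increasing (Legendre).

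For part 1, we show that $\alpha_k$ cannot be constant in $\|g_k\|_2$ unless $\Psi_\circ$ is quadratic. If $\alpha_k\equiv\alpha$ for all gradient magnitudes $r>0$, then $(\Psi_\circ')^{-1}(r)=\alpha r$. This gives $\Psi_\circ'(t)=t/\alpha$, so $\Psi(\mu)=\|\mu\|_2^2/(2\alpha)+c$. Then $S_G(\mu,\nu)\le \|\mu-\nu\|_2^2/(2\alpha)+c$, and the case $c=0$ is exactly the quadratic upper bound equivalent to Lipschitz smoothness, which contradicts the hypothesis. The main obstacle is this step: handling a nonzero constant $c=\Psi(0)$, and arguing that along the trajectory $\|g_k\|_2$ really takes distinct values, so that a non-constant $\alpha(r)$ yields distinct optimal steps. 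We will interpret ``universally optimal'' as optimal for every attainable gradient magnitude, and note that a quadratic $\Psi$ makes the step choice identical to the $L$-smooth case. The statement should therefore be read as saying the relevant tightest energy is not quadratic. We will spell out this interpretation rather than try to prove more.

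For part 2, we use Lemma~\ref{lem:radial_fun}: $\Psi^*(\nu)=\Psi_\circ^*(\|\nu\|_2)$, so
\[
\nabla\Psi^*(g_k)=(\Psi_\circ^*)'(\|g_k\|_2)\,g_k/\|g_k\|_2=(\Psi_\circ')^{-1}(\|g_k\|_2)\,g_k/\|g_k\|_2.
\]
This equals $\alpha_k g_k$. Hence classical GD with the time-varying rate $\alpha_k$ reproduces the generalized GD step with $\alpha=1$ exactly; this is consistent with Proposition~\ref{prop:l2_unique}, since the directions are collinear. Conversely, any generalized GD iterate arises this way. When $g_k=0$, both updates are trivial.
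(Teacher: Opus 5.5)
Your argument is correct under the same reading of the hypothesis that the paper itself relies on, and its skeleton matches the paper's proof. You insert the classical step into the $\mathcal{H}(\Psi)$-smoothness bound, minimize the resulting one-dimensional model over $\alpha$, observe that the minimizer depends on $\|\nabla G(\theta_k)\|_2$, and check that the minimizing step reproduces $\nabla\Psi^*(\nabla G(\theta_k))$.

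The difference is generality. The paper tacitly restricts to norm-power energies $\Psi(\mu)=\|a_\Psi\mu\|_2^{r_\Psi}/r_\Psi+c_\Psi$. It uses the homogeneity $\Psi_0(\alpha g)=\alpha^{r_\Psi}\Psi_0(g)$ to write the minimizer in closed form, simply asserts that ``not Lipschitz smooth'' means $r_\Psi\neq 2$, and verifies Part 2 by explicit algebra with the conjugate parameters of Proposition~\ref{prop:energy_fun}. You handle an arbitrary $L_2$-radial energy and replace that algebra with $(\Psi_\circ^*)'=(\Psi_\circ')^{-1}$ (equivalently, $\nabla\Psi(\alpha_k g_k)=g_k$). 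This isolates the exact condition under which a constant step is model-optimal, namely $\Psi_\circ(t)=t^2/(2\alpha)+c$.

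The loophole you flag is real, and the paper's proof does not close it either. The function $G(x)=|x|^{3/2}$ is not Lipschitz smooth. Yet by Lemma~\ref{lem:nonlipsmooth} and $t^{3/2}\le t^2+1$ it is $\mathcal{H}(\sqrt{2}\|\cdot\|_2^2+\sqrt{2})$-smooth, and for that energy the model-optimal step is the constant $1/(2\sqrt{2})$. Even the $c=0$ case gives a contradiction only if, as the paper does, you identify ``Lipschitz smooth'' with the one-sided quadratic upper bound. For a nonconvex $G$ such as $-x^4$, that bound holds for every $L>0$, but the gradient is not Lipschitz. So Part 1 really is a statement about the chosen energy, and making that reading explicit is the right call.

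One correction: strict monotonicity of $\Psi_\circ'$ gives injectivity. It does not guarantee that $\|g_k\|_2$ lies in the range of $\Psi_\circ'$. Take the valid energy $\Psi(\mu)=\sqrt{1+\|\mu\|_2^2}-1$. Then $\Psi_\circ'(t)=t/\sqrt{1+t^2}<1$, so for $\|g_k\|_2\ge 1$ the model $F_k$ is strictly decreasing with no minimizer, and $\nabla\Psi^*(g_k)$ is undefined. You should state $g_k\in\mathrm{int}\,\mathrm{dom}\,\Psi^*$ as a standing assumption. Generalized GD needs this assumption anyway, and the paper's power energies satisfy it automatically.
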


\begin{proof}
Let $\Psi_0(\cdot)=\Psi(\cdot)-\Psi(0)$.

Given that $S_G(\theta_{k+1},\theta_k)\le \Psi(\theta_{k+1}-\theta_k)$, it follows that
\begin{equation}
 G(\theta_{k+1})\le G(\theta_k)+ (\nabla G(\theta_k))^\top (\theta_{k+1}-\theta_k)+\Psi(\theta_{k+1}-\theta_k).
\end{equation}
According to the definition of GD, $\theta_{k+1}=\theta_k-\alpha\nabla G(\theta_k)$ with $\alpha \ge 0$.

Subsequently, we obtain
\begin{equation}
\begin{aligned}
 G(\theta_{k+1})&\le G(\theta_k)-\alpha\|\nabla G(\theta_k)\|_2^2+\Psi_0(\alpha\nabla G(\theta_k))+c_\Psi\\
 &=G(\theta_k)-\alpha\|\nabla G(\theta_k)\|_2^2+\alpha^{r_\Psi}\Psi_0(\nabla G(\theta_k))+c_\Psi\\
 &=G(\theta_k)+K(\alpha)+c_\Psi,
\end{aligned}
\end{equation}
where $K(\alpha)=\alpha^{r_\Psi}\Psi_0(\nabla G(\theta_k))-\alpha\|\nabla G(\theta_k)\|_2^2$.

Minimizing $K(\alpha)$ with respect to $\alpha$, we find that the minimum of $K(\alpha)$ is attained at
$\alpha=\left(\frac{\|\nabla G(\theta_k)\|_2^2}{r_\Psi\Psi_0(\nabla G(\theta_k))}\right)^{1/(r_\Psi-1)}$, and the corresponding minimum value is given by
\begin{equation}
 \begin{aligned}
 \min_\alpha K(\alpha)=-\frac{r_\Psi-1}{r_\Psi}\left(\|\nabla G(\theta_k)\|_2^2\right)^{r_\Psi/(r_\Psi-1)}\left(r_\Psi\Psi_0(\nabla G(\theta_k))\right)^{-1/(r_\Psi-1)}.
 \end{aligned}
\end{equation}
For $0<\alpha< \left(\frac{\|\nabla G(\theta_k)\|_2^2}{\Psi_0(\nabla G(\theta_k))}\right)^{1/(r_\Psi-1)}$, it also holds that $K(\alpha)\le 0$.

Since $G(\theta)$ is not Lipschitz smooth (i.e., $r_\Psi \neq 2$), no constant learning rate can achieve optimal convergence.

When a time-varying learning rate is used and set to its optimal value, i.e.,
$\alpha=\left(\frac{\|\nabla G(\theta_k)\|_2^2}{r_\Psi\Psi_0(\nabla G(\theta_k))}\right)^{1/(r_\Psi-1)}$, we have
\begin{equation*}
 \begin{aligned}
 \alpha\cdot\nabla G(\theta_k)&=\left(\frac{\|\nabla G(\theta_k)\|_2^2}{r_\Psi\Psi_0(\nabla G(\theta_k))}\right)^{1/(r_\Psi-1)}\nabla G(\theta_k)\\
 &=\left(\frac{\|\nabla G(\theta_k)\|_2^2}{a_\Psi^{r_\Psi}\|\nabla G(\theta_k)\|_2^{r_\Psi}}\right)^{1/(r_\Psi-1)}\nabla G(\theta_k)\\
 &=a_{\Psi^*}^{r_{\Psi^*}}\|\nabla G(\theta_k)\|_2^{(2-r_\Psi)/(r_\Psi-1)}\nabla G(\theta_k)\\
 &=a_{\Psi^*}^{r_{\Psi^*}}\|\nabla G(\theta_k)\|_2^{1/(r_\Psi-1)-1}\nabla G(\theta_k)\\
 &=a_{\Psi^*}^{r_{\Psi^*}}\|\nabla G(\theta_k)\|_2^{r_{\Psi^*}-2}\nabla G(\theta_k)\\
 &=(\nabla G(\theta_k))_{\Psi^*}^*.
 \end{aligned}
\end{equation*}
This indicates that classical GD is transformed into Generalized GD when a time-varying learning rate is adopted and set to its optimal value.
\end{proof}

\subsection{Proof of Proposition~\ref{prop:opt_batch_size}}
\label{appendix:proof_opt_batch_size}

\begin{proposition}
For any fixed value \(a\), define \(
p_a := \frac{1}{n}\sum_{i=1}^n \mathbf{1}\{G(\theta_k,z_i)=a\}
\), which represents the true proportion of the population taking the value \(a\). The empirical probability mass function (PMF) values at \(a\) for the two groups are
\[
\hat{p}_{a,m} := \frac{1}{m}\sum_{z_i\in s_k} \mathbf{1}\{G(\theta_k,z_i)=a\}, \qquad
\hat{p}_{a,n-m} := \frac{1}{n-m}\sum_{z_i\in s\setminus s_k} \mathbf{1}\{G(\theta_k,z_i)=a\}.
\]
Then the variance of their difference is
\begin{equation}
\mathrm{Var}\left(\hat{p}_{a,m} - \hat{p}_{a,n-m}\right)
= p_a(1-p_a) \cdot \frac{n^2}{m(n-m)(n-1)}. 
\end{equation}
Consequently, the variance is minimized when \(m(n-m)\) is maximized, i.e., at \(m = \lfloor n/2 \rfloor\) (or \(m=n/2\) when \(n\) is even).
\end{proposition}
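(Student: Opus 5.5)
We model the draw of the mini-batch $s_k$ as sampling $m$ indices uniformly without replacement from the $n$ samples of $s$. The values $G(\theta_k,z_i)$ are fixed given $\theta_k$, so the only randomness is in the partition. Set $x_i := \mathbf{1}\{G(\theta_k,z_i)=a\}$. Then $\sum_i x_i = np_a$ is deterministic, and $\hat p_{a,m}$ is the sample mean of a simple random sample of size $m$ drawn from the finite population $\{x_i\}$. The first step is to write the complement mean in terms of $\hat p_{a,m}$:
\begin{equation*}
\hat p_{a,n-m} = \frac{np_a - m\hat p_{a,m}}{n-m}.
\end{equation*}
The difference then becomes
\begin{equation*}
\hat p_{a,m}-\hat p_{a,n-m} = \frac{n\,\hat p_{a,m} - np_a}{n-m}.
\end{equation*}
This is an affine function of $\hat p_{a,m}$ alone, so
\begin{equation*}
\mathrm{Var}(\hat p_{a,m}-\hat p_{a,n-m})=\frac{n^2}{(n-m)^2}\,\mathrm{Var}(\hat p_{a,m}).
\end{equation*}

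The second step is to compute $\mathrm{Var}(\hat p_{a,m})$ using the standard finite-population formula for sampling without replacement. It can be derived directly from indicator covariances. Let $\xi_i$ indicate that $i\in s_k$. Then $\Pr(\xi_i=1)=m/n$, and for $i\neq j$ we have $\mathrm{Cov}(\xi_i,\xi_j)=\frac{m(m-1)}{n(n-1)}-\frac{m^2}{n^2}=-\frac{m(n-m)}{n^2(n-1)}$. Expanding $\mathrm{Var}\big(\sum_i x_i\xi_i\big)$ and using $x_i^2=x_i$ gives
\begin{equation*}
\mathrm{Var}\Big(\sum_i x_i\xi_i\Big)=\frac{m(n-m)}{n^2}\Big(np_a-\frac{(np_a)^2-np_a}{n-1}\Big)=\frac{m(n-m)}{n-1}\,p_a(1-p_a).
\end{equation*}
Dividing by $m^2$ yields
\begin{equation*}
\mathrm{Var}(\hat p_{a,m})=\frac{(n-m)\,p_a(1-p_a)}{m(n-1)}.
\end{equation*}
Multiplying by $n^2/(n-m)^2$ gives exactly
\begin{equation*}
\mathrm{Var}(\hat p_{a,m}-\hat p_{a,n-m})=p_a(1-p_a)\,\frac{n^2}{m(n-m)(n-1)},
\end{equation*}
which is the claimed identity.

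Finally, with $n$ and $p_a$ fixed, the variance is inversely proportional to $m(n-m)$. This quadratic in $m$ is symmetric about $n/2$ and concave, so over integers $1\le m\le n-1$ it is maximized at $m=\lfloor n/2\rfloor$, which equals $n/2$ when $n$ is even. If $p_a\in\{0,1\}$ the variance is zero for every $m$, and the statement holds trivially.

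The main obstacle is the covariance bookkeeping in the second step. In particular, the $1/(n-1)$ finite-population correction must be handled correctly, and the model must explicitly be uniform sampling without replacement; under with-replacement sampling the stated formula does not hold. I would therefore state this sampling assumption at the start of the proof.
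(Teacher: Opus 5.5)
Your proof is correct and follows the same route as the paper. Both treat the indicators $\mathbf{1}\{G(\theta_k,z_i)=a\}$ as a finite binary population with variance $p_a(1-p_a)$ and apply the without-replacement variance formula for the difference of the two complementary group means. The paper only invokes that formula, calling it ``derived earlier'' although no derivation appears anywhere in the paper, whereas your affine reduction $\hat p_{a,m}-\hat p_{a,n-m}=\frac{n(\hat p_{a,m}-p_a)}{n-m}$ together with the inclusion-indicator covariance computation actually proves it; stating the without-replacement sampling model explicitly also makes precise an assumption the proposition leaves implicit.
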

\begin{proof}
Fix \(a\) and define the indicator variables
\[
u_i := \mathbf{1}\{G(\theta_k,z_i)=a\}, \qquad i=1,\dots,n.
\]
The population mean and variance of \(\{u_i\}_{i=1}^n\) are
\[
\mu_u = \frac{1}{n}\sum_{i=1}^n u_i = p_a,
\qquad
\sigma_u^2 = \frac{1}{n}\sum_{i=1}^n (u_i - p_a)^2 = p_a(1-p_a).
\]
Now \(\hat{p}_{a,m}\) and \(\hat{p}_{a,n-m}\) are precisely the sample means of the two groups from this binary population. Applying the general variance formula for the difference of two sample means under simple random sampling without replacement (derived earlier):
\[
\mathrm{Var}(\bar{X}_m - \bar{Y}_{n-m})
= \sigma^2 \cdot \frac{n^2}{m(n-m)(n-1)},
\]
with \(\sigma^2 = \sigma_u^2 = p_a(1-p_a)\), yields
\[
\mathrm{Var}\left(\hat{p}_{a,m} - \hat{p}_{a,n-m}\right)
= p_a(1-p_a) \cdot \frac{n^2}{m(n-m)(n-1)}.
\]
This proves (1). Since the factor \(p_a(1-p_a)\) is independent of \(m\), the variance is minimized exactly when \(m(n-m)\) is maximized, which occurs at \(m=\lfloor n/2 \rfloor\).
\end{proof}

\subsection{Proof of Proposition~\ref{prop:model_risk_prop}}
\label{appendix:proof_model_risk_prop}

\begin{proposition}
$R_G(s,m)$ is monotonically non-increasing in batch size $m$:
\begin{equation}
 R_G(s,m+1)\le R_G(s,m).
\end{equation}
Consequently, $R_G(s,m)$ attains its maximum at $m=1$ and minimum at $m=|s|$.
\end{proposition}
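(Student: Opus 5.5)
Since $G(s)_*$ does not depend on $m$, it suffices to show that $m\mapsto \mathbb{E}\bigl[G(s^m)_*\bigr]$ is non-decreasing. The idea is that the minimum of an average over $m+1$ samples is at least the average of the minima over its $m$-element sub-batches. Once this is proved for each fixed batch, taking expectations under uniform sampling without replacement gives $\mathbb{E}[G(s^{m+1})_*]\ge \mathbb{E}[G(s^m)_*]$, and hence $R_G(s,m+1)\le R_G(s,m)$.

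\textbf{Step 1 (leave-one-out identity).} Fix a batch $s^{m+1}=\{z_1,\dots,z_{m+1}\}$ and let $s^{(-i)}$ be the sub-batch with $z_i$ removed. A direct count gives, for every $\theta$,
\[
G(\theta,s^{m+1})=\frac{1}{m+1}\sum_{j=1}^{m+1}G(\theta,z_j)=\frac{1}{m+1}\sum_{i=1}^{m+1}G(\theta,s^{(-i)}),
\]
because each $z_j$ appears in exactly $m$ of the $m+1$ sub-batches, each weighted by $1/m$.

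\textbf{Step 2 (minimum of an average is at least the average of minima).} Bound each term on the right by its own minimum, $G(\theta,s^{(-i)})\ge G(s^{(-i)})_*$, and then minimize over $\theta$. This gives
\[
G(s^{m+1})_*\ge \frac{1}{m+1}\sum_{i=1}^{m+1}G(s^{(-i)})_*.
\]

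\textbf{Step 3 (take expectations).} If $s^{m+1}$ is a uniformly random $(m+1)$-subset of $s$ and $i$ is an independent uniform index, then $s^{(-i)}$ is a uniformly random $m$-subset of $s$. This symmetry of sampling without replacement is the point that needs care: one must confirm that Definition~\ref{def:model_risk} refers to uniform subsets of the stated size. Taking expectations in Step 2 then yields $\mathbb{E}[G(s^{m+1})_*]\ge\mathbb{E}[G(s^m)_*]$, which is the desired monotonicity.

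\textbf{Consequence.} Iterating the inequality from $m=1$ to $m=|s|$ gives the extremes. $R_G(s,m)$ is largest at $m=1$, and at $m=|s|$ the batch is $s$ itself, so $R_G(s,|s|)=0$. I expect no real obstacle; the only subtleties are that the minima must exist (or one works with infima, and the same argument goes through) and that the sampling scheme must have the symmetry used in Step 3.
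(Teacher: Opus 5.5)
Your proof is correct and follows the paper's argument: the leave-one-out identity $G(\theta,s^{m+1})=\frac{1}{m+1}\sum_{i}G(\theta,s^{(-i)})$, the bound that the minimum of an average is at least the average of the minima, and a symmetry argument that identifies each $\mathbb{E}[G(s^{(-i)})_*]$ with $\mathbb{E}[G(s^m)_*]$. The only difference is the sampling model: the paper takes $s^m=(Z_1,\dots,Z_m)$ with i.i.d.\ $Z_j$, whereas you draw uniform subsets of $s$ without replacement, which matches the wording of Definition~\ref{def:model_risk}, additionally gives $R_G(s,|s|)=0$, and, like the paper's model, supplies the exchangeability your Step 3 needs.
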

\begin{proof}
Let \(Z_1,Z_2,\dots\) be i.i.d. random variables following distribution \(P\).
Denote:
\[
s^m = (Z_1,\dots,Z_m),
\]
Given \(s^{m+1}=(Z_1,\dots,Z_{m+1})\), for any \(k=1,\dots,m+1\), denote:
\[
s^{m}_{(k)} = s^{m+1}\setminus\{Z_k\},
\]
i.e., the subset of size \(m\) obtained by removing the \(k\)-th sample.

For any \(\theta\):
\[
\begin{aligned}
 G(\theta,s^{m+1})
&= \frac1{m+1}\sum_{i=1}^{m+1} G(\theta,Z_i)\\
&= \frac1{m+1}\sum_{k=1}^{m+1}
 \frac1m\sum_{i\neq k}G(\theta,Z_i)\\
 &=\frac1{m+1}\sum_{k=1}^{m+1} G(\theta,s_{(k)}^{m}),
\end{aligned}
\]
where \(G(\theta,s_{(k)}^{m})\) is the empirical loss on \(s^m_{(k)}\).

For each \(k\), denote:
\[
\hat\theta_k = \arg \min_\theta G(\theta,s_{(k)}^{m}).
\]
Since the minimum of a function average is greater than or equal to the average of the function minima, we have
\[
\min_\theta G(\theta,s^{m+1})
\;\ge\;
\frac1{m+1}\sum_{k=1}^{m+1} G(\hat\theta_k,s_{(k)}^{m}).
\]
Taking expectations on both sides with respect to \(s^{m+1}\):
\[
\mathbb{E}\min_\theta G(\theta,s^{m+1})
\;\ge\;
\frac1{m+1}\sum_{k=1}^{m+1}
\mathbb{E}\Bigl[\min_\theta G(\theta,s_{(k)}^{m})\Bigr].
\]

By symmetry, all \(\mathbb{E}\min_\theta G(\theta,s_{(k)}^{m})\) are equal to \(\mathbb{E}\min_\theta G(\theta,s^{m})\). Therefore,
\[
\mathbb{E}\min_\theta G(\theta,s^{m+1})
\;\ge\;
\mathbb{E}\min_\theta G(\theta,s^{m}).
\]
Consequently, $R_G(s,m)$ attains its maximum at $m=1$ and minimum at $m=|s|$.
The proposition is proved.
\end{proof}

\subsection{Proof of Theorem~\ref{thm:general_sgd_convergence}}
\label{appendix:proof_general_sgd_convergence}

\begin{theorem}[Generalized SGD Convergence]
Let $n = |s|$, $m = |s_k|$, and $M$ be the gradient correlation factor. Applying generalized SGD yields the following convergence properties: 

\begin{enumerate}

 \item With the learning rate set to 1, the number of iterations $T$ required for generalized SGD to achieve
 \begin{equation}
 \max_{s_k\subseteq s}\Psi^*(\nabla_\theta G(\theta_k, s_k)) \le \Psi^*(\varepsilon)
 \end{equation}
 is $\mathcal{O}(\frac{n}{m\Psi^*(\varepsilon)-(n-m)M})$, provided $m\Psi^*(\varepsilon)>(n-m)M$.
 \item Assume that $G(\theta,z)$ is both $\mathcal{H}(\psi)$-convex and $\mathcal{H}(\Psi)$-smooth. With the learning rate set to 1, the number of iterations required to satisfy
\[
 G(\theta,s)-G(s)_*
\le \psi^*(\varepsilon)-R_G(s,m)
\]
is $\mathcal{O}(\frac{n}{m\Psi^*(\varepsilon)-(n-m)M})$, provided $m\Psi^*(\varepsilon)>(n-m)M$.
\item Assume that $G(\theta,s_k)$ is both $\mathcal{H}(\psi)$-convex and $\mathcal{H}(\Psi)$-smooth, where $\Psi$ and $\psi$ satisfy $\Psi^*(\mu)\ge t\psi^*(\mu)+b$, $0 < t < 1$, and $b\in \mathbb{R}$. With the learning rate set to 1, the number of iterations for the generalized SGD algorithm to reach
\begin{equation}
 \mathbb{E}_{s_k}G(\theta_{k}, s)- G(s)_* \le \psi^*(\varepsilon) -R_G(s,m)-\frac{b}{t}-\frac{n-m}{tm}M
\end{equation}
 is bounded by $T = \mathcal{O}\left(\kappa \log \frac{1}{\psi^*(\varepsilon)-\psi^*(0)}\right)$, where the condition number is defined as $\kappa = -1/\log(1 - mt/n)$, which is well defined since $0<mt/n<1$. For small $t$, $\kappa\approx 1/t$, analogous to the classical condition number $L/\sigma$. 

\end{enumerate}
\end{theorem}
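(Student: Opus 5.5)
The approach is to reduce each statement to the one-step descent inequality of Theorem~\ref{thm:general_gd_convergence}, Conclusion~1, applied to the mini-batch objective. Then the effect of the step on the held-out samples is absorbed through the gradient correlation factor $M$.

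\textbf{Step 1: one-step decrease on the full objective.} By Proposition~\ref{prop:gen_prop}, Conclusion~\ref{prop:gen_prop_dataset}, $G(\cdot,s_k)$ is $\mathcal{H}(\Psi)$-smooth. The update with $\alpha=1$ is exactly a generalized GD step on $G(\cdot,s_k)$, so
\begin{equation*}
G(\theta_{k+1},s_k)\le G(\theta_k,s_k)-\Psi^*(\nabla_\theta G(\theta_k,s_k)).
\end{equation*}
By Definition~\ref{def:gradient_coor_factor}, $G(\theta_{k+1},s\setminus s_k)\le G(\theta_k,s\setminus s_k)+M$. The full objective is the weighted split $G(\theta,s)=\frac{m}{n}G(\theta,s_k)+\frac{n-m}{n}G(\theta,s\setminus s_k)$. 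Combining the two bounds gives
\begin{equation*}
G(\theta_{k+1},s)\le G(\theta_k,s)-\frac{m}{n}\Psi^*(\nabla_\theta G(\theta_k,s_k))+\frac{n-m}{n}M .
\end{equation*}

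\textbf{Step 2: Conclusion 1.} Suppose that at every iteration before stopping some batch satisfies $\Psi^*(\nabla_\theta G(\theta_k,s_k))>\Psi^*(\varepsilon)$. I would use the maximum over batches and take the worst case. Telescoping Step~1 then gives
\begin{equation*}
G(\theta_0,s)-G(s)_*\ge \frac{T}{n}\bigl(m\Psi^*(\varepsilon)-(n-m)M\bigr),
\end{equation*}
hence $T=\mathcal{O}\bigl(n/(m\Psi^*(\varepsilon)-(n-m)M)\bigr)$.

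\textbf{Step 3: Conclusion 2.} For Conclusion~2, apply Theorem~\ref{thm:h_bound}, Conclusion~\ref{thm:h_bound:2}, to the batch objective, which is $\mathcal{H}(\psi)$-convex by Conclusion~\ref{prop:gen_prop_dataset}. This gives $G(\theta,s_k)-G(s_k)_*\le\psi^*(\nabla_\theta G(\theta,s_k))$. Average over $s_k$; the average of $G(\theta,s_k)$ equals $G(\theta,s)$ under uniform batch sampling. Using $\mathbb{E}G(s_k)_*=G(s)_*-R_G(s,m)$ from Definition~\ref{def:model_risk}, we get
\begin{equation*}
G(\theta,s)-G(s)_*\le \mathbb{E}\psi^*(\nabla G(\theta,s_k))-R_G(s,m).
\end{equation*}
Once all batch gradient energies are at most $\Psi^*(\varepsilon)$, monotonicity in the norm (Proposition~\ref{prop:energy_fun}) gives $\psi^*(\nabla G)\le\psi^*(\varepsilon)$. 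Step~2 supplies the iteration count.

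\textbf{Step 4: Conclusion 3.} For Conclusion~3, insert $\Psi^*\ge t\psi^*+b$ into Step~1 and take expectation over $s_k$. By the bound from Step~3,
\begin{equation*}
\mathbb{E}G(\theta_{k+1},s)\le G(\theta_k,s)-\frac{mt}{n}\bigl(G(\theta_k,s)-G(s)_*+R_G(s,m)\bigr)-\frac{m}{n}b+\frac{n-m}{n}M .
\end{equation*}
Define
\begin{equation*}
H_k=\mathbb{E}G(\theta_k,s)-G(s)_*+R_G(s,m)+\frac{b}{t}-\frac{n-m}{tm}M ,
\end{equation*}
whose constant is chosen to absorb the affine terms. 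The recursion becomes $H_{k+1}\le(1-mt/n)H_k$. Iterating and taking logarithms, exactly as in the proof of Theorem~\ref{thm:general_gd_convergence}, Conclusion~4, yields $\kappa=-1/\log(1-mt/n)$, with the target level $\psi^*(\varepsilon)-\psi^*(0)$.

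\textbf{Main obstacle.} The delicate point is the bookkeeping of signs and constants in Step~4, in particular the sign of the $M$ term. In $H_k$ it must appear with the sign that makes the recursion close, and this has to be reconciled with the stated threshold containing $-\frac{n-m}{tm}M$. A second point is that the averaging over $s_k$ in Step~3 must be taken with respect to the same sampling distribution that defines $R_G(s,m)$. If the constants do not match exactly, I would rearrange the threshold accordingly and check it against the $\psi^*(0)$ normalisation used in the deterministic case.
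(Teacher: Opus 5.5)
Your route is the paper's route. Both use the split of $G(\cdot,s)$ over $s_k$ and $s\setminus s_k$ with $M$ absorbing the held-out change, Theorem~\ref{thm:h_bound} together with $\mathbb{E}G(s_k)_*=G(s)_*-R_G(s,m)$ for Conclusion~2, and an affine recursion for Conclusion~3. The gap is the one you flagged in Step~4, and it cannot be closed by rearranging, because your bookkeeping is right and the statement is not. Your $H_k=\mathbb{E}G(\theta_k,s)-G(s)_*+R_G(s,m)+\frac{b}{t}-\frac{n-m}{tm}M$ is the correct one. The target $H_k\le\psi^*(\varepsilon)-\psi^*(0)$ unpacks to $\mathbb{E}G(\theta_k,s)-G(s)_*\le\psi^*(\varepsilon)-\psi^*(0)-R_G(s,m)-\frac{b}{t}+\frac{n-m}{tm}M$. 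You may drop $-\psi^*(0)$, since $\psi^*(0)=-\psi(0)\ge 0$. The $M$ term, however, has the opposite sign from the statement. Your bound implies the stated one only if $M\le\frac{tm\,\psi^*(0)}{2(n-m)}$, for example when $M\le 0$. The plus sign is also consistent with Conclusion~1, where positive $M$ likewise worsens the attainable level.

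The paper gets $-\frac{n-m}{tm}M$ through a sign slip. From the split inequality it writes $\frac{n}{m}\bigl(\mathbb{E}G(\theta_k,s)-\mathbb{E}G(\theta_{k+1},s)\bigr)-\frac{n-m}{m}M\ge\mathbb{E}G(\theta_k,s_k)-\mathbb{E}G(\theta_{k+1},s_k)$. The correct rearrangement carries $+\frac{n-m}{m}M$, and propagating that sign gives exactly your recursion. For $M>0$ the statement as written can fail. Take $n=2$, $m=1$, $G(\theta,z_{1,2})=\frac12(\theta\mp 1)^2$, $\psi=\Psi=\frac12|\cdot|^2$, $t=\frac12$, $b=0$, $\theta_0=1$. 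With $\alpha=1$ every step jumps to the sampled minimizer $\pm1$, so $G(\theta_k,s)-G(s)_*=\frac12$ for all $k$, while $R_G(s,1)=\frac12$ and $M=2$. The stated threshold is $\frac12\varepsilon^2-\frac92$, which is negative for $\varepsilon=1$ and so never reached, although $\kappa\log 2$ is finite. Your threshold $\frac12\varepsilon^2+\frac72$ holds from the start. So prove and state the $+\frac{n-m}{tm}M$ version rather than trying to reconcile.

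A second weakness, shared with the paper, is in Step~2. Step~1 credits only $\frac{m}{n}\Psi^*(\nabla_\theta G(\theta_k,s_k))$ for the batch actually drawn. Knowing that \emph{some} batch has energy above $\Psi^*(\varepsilon)$ therefore gives no decrease when a low-energy batch is sampled. Telescoping controls $\min_k\Psi^*(\nabla_\theta G(\theta_k,s_k))\le\frac{n-m}{m}M+\frac{n}{mT}\bigl(G(\theta_0,s)-G(s)_*\bigr)$ along the realized batches, not the criterion with $\max_{s_k\subseteq s}$. The paper makes the identical move: it sums $\max_{s_k}$ on the left of an inequality valid only for the realized batch. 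So you are not deviating from it. Still, as written, Conclusions~1 and~2 follow from this argument only if each $s_k$ is chosen as the maximizing batch; for randomly sampled batches the guarantee is about the sampled batch energies.
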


\begin{proof}

\noindent \textbf{Proof of Conclusion 1.}

Since $G(\theta, z)$ is $\mathcal{H}(\Psi)$-smooth in $\theta$ for all $z \in s_k$, by Proposition~\ref{prop:gen_prop_dataset}, $G(\theta, s_k)$ is also $\mathcal{H}(\Psi)$-smooth.

Applying Theorem~\ref{thm:general_gd_convergence}, the minimum is attained at $\alpha = 1$, which gives
\begin{equation}
 \min_\alpha \big( G(\theta_{k+1}, s_k) - G(\theta_k, s_k) \big)
 = -\Psi^*(\nabla_\theta G(\theta_k, s_k)).
\end{equation}
Thus,
\begin{equation}
 \label{eq:batch_tmp_2}
 G(\theta_{k+1}, s_k) - G(\theta_k, s_k)
 \le -\Psi^*(\nabla_\theta G(\theta_k, s_k)).
\end{equation}

For any batch \( s_k \), after updating with the gradient $ \nabla_\theta G(\theta_k,s_k) $, we have
\begin{equation}\label{eq:tmp_fang}
 G(\theta_{k+1}, s \setminus s_k) - G(\theta_k, s \setminus s_k) \le M.
\end{equation}

For the full-set loss \( G(\theta_{k+1}, s) \), we have
\begin{equation}
 \begin{aligned}
 G(\theta_{k+1}, s) &- G(\theta_k, s) = \frac{n-m}{n}\big(G(\theta_{k+1}, s \setminus s_k) - G(\theta_k, s \setminus s_k)\big) \\
 &+ \frac{m}{n}\big(G(\theta_{k+1}, s_k) - G(\theta_k, s_k)\big) \\
 &\le \frac{n-m}{n}M + \frac{m}{n}\big(G(\theta_{k+1}, s_k) - G(\theta_k, s_k)\big).
 \end{aligned}
\end{equation}

Substituting Equation~\eqref{eq:batch_tmp_2} into the preceding inequality, we obtain
\[
\begin{aligned}
 G(\theta_{k+1}, s) &- G(\theta_k, s)\le \frac{n-m}{n}M - \frac{m}{n}\Psi^*(\nabla_\theta G(\theta_k, s_k)).
\end{aligned}
\]
Rearranging yields
\[
 \Psi^*(\nabla_\theta G(\theta_k, s_k))
\le \frac{n-m}{m}M+ \frac{n}{m} [G(\theta_k, s) - G(\theta_{k+1}, s)].
\]

Summing over $k = 0$ to $T-1$:
\[
\begin{aligned}
 \sum_{k=0}^{T-1}\max_{s_k\subseteq s} \Psi^*(\nabla_\theta G(\theta_k, s_k))
&\le \frac{n-m}{m}MT+ \frac{n}{m} \sum_{k=0}^{T-1}[G(\theta_k, s) - G(\theta_{k+1}, s)]\\
&= \frac{n-m}{m}MT+ \frac{n}{m} [G(\theta_0, s) - G(s)_*].
\end{aligned}
\]

Note that $G(\theta_{k+1}, s\setminus s_k) - G(\theta_{k}, s\setminus s_k)$ varies with $s_k$ across batches, preventing telescoping cancellation. The gradient correlation factor $M$ is introduced to control this term.

Dividing both sides by $T$ and applying the definition of model capacity risk,
\[
\begin{aligned}
\min_{k=0,\ldots,T-1} \max_{s_k\subseteq s}\Psi^*(\nabla_\theta G(\theta_k, s_k))
&\le \frac{1}{T} \sum_{k=0}^{T-1} \Psi^*(\nabla_\theta G(\theta_k, s_k)) \\
&\le \frac{n-m}{m}M + \frac{n}{m}\frac{ G(\theta_0, s) - G(s)_*}{ T}.
\end{aligned}
\]

To ensure
\[
\begin{aligned}
 \max_{s_k\subseteq s}\Psi^*(\nabla_\theta G(\theta_k, s_k))
&\le \Psi^*(\varepsilon),
\end{aligned}
\]
it suffices to set
\[
\frac{n-m}{m}M + \frac{n}{m}\frac{ G(\theta_0, s) - G(s)_*}{ T}\le \Psi^*(\varepsilon),
\]

which implies
\[
T \ge \frac{n}{m}\frac{ G(\theta_0, s) - G(s)_*}{\Psi^*(\varepsilon)-\frac{n-m}{m}M} = \mathcal{O}(\frac{n}{m\Psi^*(\varepsilon)-(n-m)M}).
\]

\noindent \textbf{Proof of Conclusion 2.}

Since $G(s_k)_* = \min_\theta G(\theta,s_k)$ is the global minimum of $G(\theta,s_k)$, by Theorem~\ref{thm:h_bound} we have
\[
G(\theta,s_k)-G(s_k)_* \le \psi^*(\nabla_\theta G(\theta, s_k)).
\]
Taking expectations over $s_k$ and using the definition of model capacity risk,
\[
G(\theta,s)-G(s)_* = \mathbb{E}\big[G(\theta,s_k)-G(s_k)_*\big]-R_G(s,m)
\le \mathbb{E}\,\psi^*(\nabla_\theta G(\theta, s_k))-R_G(s,m).
\]
Since $G(\theta,s)-G(s)_*\ge 0$, it follows that $\mathbb{E}\,\psi^*(\nabla_\theta G(\theta,s_k))\ge R_G(s,m)$.

Thus, the condition $G(\theta,s)-G(s)_* \le \psi^*(\varepsilon)-R_G(s,m)$ is equivalent to
\[
\max_{s_k}\psi^*(\nabla_\theta G(\theta,s_k)) \le \psi^*(\varepsilon).
\]
By Conclusion 1, this is achieved within
\[
T = \mathcal{O}\!\left(\frac{n}{m\Psi^*(\varepsilon)-(n-m)M}\right)
\]
iterations, completing the proof.

\noindent \textbf{Proof of Conclusion 3.}

Since $G(\theta,s_k)$ is $\mathcal{H}(\psi)$-convex, Theorem~\ref{thm:h_bound} implies
\[
 G(\theta,s_k) - G(s_k)_* \le \psi^*(\nabla_\theta G(\theta_k, s_k)).
\]
Since $G(\theta,s_k)$ is $\mathcal{H}(\Psi)$-smooth, the descent guarantee gives $G(\theta_k,s_k)-G(\theta_{k+1},s_k) \ge \Psi^*(\nabla_\theta G(\theta_k, s_k))$.
Combining this with $\Psi^*(\mu)\ge t\psi^*(\mu)+b$ and the convexity bound $\psi^*(\nabla_\theta G(\theta_k, s_k)) \ge G(\theta_k,s_k) - G(s_k)_*$ from above, we obtain
\begin{equation}
 G(\theta_k,s_k)-G(\theta_{k+1},s_k)
 \ge t\big(G(\theta_k,s_k) - G(s_k)_*\big)+b.
\end{equation}
Rearranging,
\begin{equation}
 G(\theta_{k+1},s_k)\le (1-t)G(\theta_k,s_k)+tG(s_k)_*-b,
\end{equation}

taking expectations with respect to $s_k$ on both sides,
\begin{equation} \label{eq:sgd_convergence3_tmp1}
 \mathbb{E}_{s_k}[ G(\theta_k,s_k)-G(\theta_{k+1},s_k) ]\ge t\big( \mathbb{E}_{s_k}G(\theta_k,s_k) -\mathbb{E}_{s_k}G(s_k)_*\big)+b.
\end{equation}

Since
\begin{equation}
 \begin{aligned}
 G(\theta_{k+1}, s) - G(\theta_k, s) &\le \frac{n-m}{n}M + \frac{m}{n}\big(G(\theta_{k+1}, s_k) - G(\theta_k, s_k)\big),
 \end{aligned}
\end{equation}
taking expectations with respect to $s_k$ on both sides of the preceding inequality,
\begin{equation}
 \begin{aligned}
 \mathbb{E}_{s_k}G(\theta_{k+1}, s) - G(\theta_k, s) &\le \frac{n-m}{n}M + \frac{m}{n}\big(\mathbb{E}_{s_k}G(\theta_{k+1}, s_k) - \mathbb{E}_{s_k}G(\theta_k, s_k)\big),
 \end{aligned}
\end{equation}
i.e.,
\begin{equation}
 \begin{aligned}
 \frac{n}{m}\big(\mathbb{E}_{s_k} G(\theta_k, s)-\mathbb{E}_{s_k} G(\theta_{k+1}, s)\big)-\frac{n-m}{m}M &\ge \big(\mathbb{E}_{s_k}G(\theta_k, s_k)-\mathbb{E}_{s_k}G(\theta_{k+1}, s_k)\big).
 \end{aligned}
\end{equation}
Substituting into~\eqref{eq:sgd_convergence3_tmp1}, we obtain
\begin{equation}
 \begin{aligned}
 \frac{n}{m}\big(\mathbb{E}_{s_k}G(\theta_k, s)- \mathbb{E}_{s_k} G(\theta_{k+1}, s)\big)-\frac{n-m}{m}M &\ge t\big( \mathbb{E}_{s_k}G(\theta_k,s_k) - \mathbb{E}_{s_k}G(s_k)_* \big)+b.
 \end{aligned}
\end{equation}
Rearranging, we obtain
\begin{equation}
 \begin{aligned}
 \mathbb{E}_{s_k}G(\theta_k, s)-\mathbb{E}_{s_k} G(\theta_{k+1}, s) -\frac{n-m}{n}M &\ge \frac{mt}{n}\big( \mathbb{E}_{s_k}G(\theta_k,s_k) - \mathbb{E}_{s_k}G(s_k)_* \big)+\frac{mb}{n},
 \end{aligned}
\end{equation}
which gives
\begin{equation}
 \begin{aligned}
 \mathbb{E}_{s_k} G(\theta_{k+1}, s) &\le (1-\frac{mt}{n}) \mathbb{E}_{s_k} G(\theta_k, s) -\frac{n-m}{n}M + \frac{mt}{n}\mathbb{E}_{s_k}G(s_k)_* -\frac{mb}{n} \\
 &=(1-\frac{mt}{n}) \mathbb{E}_{s_k} G(\theta_k, s) -\frac{n-m}{n}M + \frac{mt}{n}(G(s)_*-R_G(s,m)) -\frac{mb}{n}.
 \end{aligned}
\end{equation}
Let $t'=\frac{mt}{n}$ and $b'=t'(G(s)_*-R_G(s,m)) -\frac{mb}{n}-\frac{n-m}{n}M$. 
Then
\begin{equation}\label{eq:sgd_convergence3_tmp2}
 \begin{aligned}
 \mathbb{E}_{s_k} G(\theta_{k+1}, s) &\le (1-t') \mathbb{E}_{s_k} G(\theta_k, s) +b'.
 \end{aligned}
\end{equation}
Define the auxiliary variable
\[
\begin{aligned}
 H(k)&= \mathbb{E}_{s_k}G(\theta_k, s)-\frac{b'}{t'}.
\end{aligned}
\]
Substituting $H(k)$ into \eqref{eq:sgd_convergence3_tmp2} yields
\[
H(k+1)\le \left(1-t'\right)H(k).
\]

Since $0 < t < 1$, iterating the recursion over $T$ steps gives
\begin{equation}
 \begin{aligned}
 H(T)
 &\le H(T-1) \left( 1 - t' \right) \\
 &\;\;\vdots \\
 &\le H(0)\left( 1 - t' \right)^\top.
 \end{aligned}
\end{equation}

To ensure $\min_k H(k)\le \psi^*(\varepsilon)-\psi^*(0)$, it suffices to require
\begin{equation}
H(0) \left( 1 - t'\right)^\top \le \psi^*(\varepsilon)-\psi^*(0).
\end{equation}
Taking the natural logarithm on both sides, we obtain
\[
\log \frac{H(0)}{\psi^*(\varepsilon)-\psi^*(0)}
\le T \cdot \log \left( \frac{1}{1 - t'} \right).
\]
Rearranging yields the iteration complexity
\[
T = \mathcal{O}\left(\kappa \log \frac{1}{\psi^*(\varepsilon)-\psi^*(0)}\right),
\]
where the condition number is defined as
\[
\kappa = -\frac{1}{\log ( 1 - mt/n)}.
\]

By the definition of $H(k)$, the condition $\min_k H(k) \le \psi^*(\varepsilon)-\psi^*(0)$ is equivalent to the existence of some $\theta_k$ such that
\begin{equation*}
 \begin{aligned}
 \mathbb{E}_{s_k}G(\theta_k, s)-\frac{b'}{t'}
\le \psi^*(\varepsilon)-\psi^*(0),
 \end{aligned}
\end{equation*}
i.e.,
\begin{equation*}
 \begin{aligned}
 \mathbb{E}_{s_k}G(\theta_{k}, s) &\le \psi^*(\varepsilon)-\psi^*(0)+G(s)_*-R_G(s,m)-\frac{b}{t}-\frac{n-m}{tm}M.
 \end{aligned}
\end{equation*}
Therefore,
\begin{equation*}
 \begin{aligned}
 \mathbb{E}_{s_k}G(\theta_{k}, s)- G(s)_* &\le \psi^*(\varepsilon) -R_G(s,m)-\frac{b}{t}-\frac{n-m}{tm}M-\psi^*(0).
 \end{aligned}
\end{equation*}
The required iteration complexity is
\[
T = \mathcal{O}\left(\kappa \log \frac{1}{\psi^*(\varepsilon)-\psi^*(0)}\right),
\]
with $\kappa = -1/\log(1 - mt/n)$. 

This confirms the logarithmic convergence rate under the stated assumptions.

\end{proof}

\subsection{Proof of Theorem~\ref{thm:com_general_bound}}
\label{appendix:proof_com_general_bound}

\begin{theorem}
For the stochastic composite optimization problem $G(\theta,s)$ (Definition~\ref{def:d_s_com_opt}), the following statements hold:
\begin{enumerate}
 \item For any sample $z$:
 \begin{equation*}
 \Phi^*(\frac{ \nabla_\theta G(\theta, z)}{ \|\mathbf{J}_\theta h_\theta(z)\|}) \le G(\theta, z) - G(z)_*\le \phi^*(\nabla_\theta G(\theta, z) \|\mathbf{J}_\theta h_\theta(z)^+\|).
 \end{equation*}
 \item For the dataset $s$:
 \begin{equation*}
 \mathbb{E}\Phi^*(\frac{ \nabla_\theta G(\theta, Z)}{ \|\mathbf{J}_\theta h_\theta(Z)\|}) \le G(\theta, s) - G(s)_* + R_G(s,1) \le \mathbb{E}\phi^*( \nabla_\theta G(\theta, Z)\|\mathbf{J}_\theta h_\theta(Z)^+\|).
 \end{equation*}
\end{enumerate}
\end{theorem}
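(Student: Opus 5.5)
The plan is to reduce the per-sample claim to Theorem~\ref{thm:h_bound} applied to the outer function $F$, and then transfer gradients from output space to parameter space through the Jacobian. Fix a sample $z$ and write $u=h_\theta(z)$, $g_u=\nabla_u F(u)$. By the chain rule in the paper's layout, $\nabla_\theta G(\theta,z)=\mathbf{J}_\theta h_\theta(z)^\top g_u$ up to the transpose convention. We identify $G(z)_*$ with the minimum of $F$ over the reachable outputs, i.e.\ we take $F_*=G(z)_*$; this requires the model to reach the outer minimizer, which is implicit in the statement. Since $F$ is $\mathcal{H}(\Phi)$-smooth and $\mathcal{H}(\phi)$-convex, item~\ref{thm:h_bound:3} of Theorem~\ref{thm:h_bound} gives
\[
\Phi^*(g_u)\le G(\theta,z)-G(z)_*\le \phi^*(g_u).
\]

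Next, the output-space gradient $g_u$ is compared with the parameter-space gradient using the induced norms. From the definition of the upper induced norm, $\|\nabla_\theta G\|\le \|\mathbf{J}_\theta h_\theta(z)\|\,\|g_u\|$, so $\|g_u\|\ge \|\nabla_\theta G\|/\|\mathbf{J}_\theta h_\theta(z)\|$. From the lower induced norm together with Lemma~\ref{lem:induced_mat_norm}, $\|\nabla_\theta G\|\ge m(\mathbf{J})\|g_u\|$, so $\|g_u\|\le \|\nabla_\theta G\|\,\|\mathbf{J}_\theta h_\theta(z)^+\|$. Here we assume the relevant (transposed) Jacobian is injective; otherwise $\|\mathbf{J}^+\|=\infty$ and the upper bound is trivial.

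Because $\Phi^*$ and $\phi^*$ are energy functions (Proposition~\ref{prop:energy_fun}), they are radial and nondecreasing in the norm of their argument (item~\ref{prop:energy_fun3}). Applying this monotonicity to the two norm comparisons yields
\[
\Phi^*\bigl(\nabla_\theta G/\|\mathbf{J}\|\bigr)\le \Phi^*(g_u),\qquad \phi^*(g_u)\le \phi^*\bigl(\nabla_\theta G\,\|\mathbf{J}^+\|\bigr),
\]
which proves item~1. This step uses that radial functions depend only on the norm. A subtlety is that the energy functions use the dual norm for the conjugate, so I would state everything with the $L_2$ norm, which is self-dual, or else choose the norms consistently.

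For item~2, average the per-sample inequality over $Z\sim\hat q$. This gives $\mathbb{E}[G(\theta,Z)-G(Z)_*]=G(\theta,s)-\mathbb{E}\,G(Z)_*$. By Definition~\ref{def:model_risk} with $m=1$, we have $\mathbb{E}\,G(Z)_*=G(s)_*-R_G(s,1)$, so the middle term becomes $G(\theta,s)-G(s)_*+R_G(s,1)$. This is exactly the claimed form.

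The main obstacle is the gradient transfer step, in two respects. First, the dimensions must be handled correctly: the full-column-rank/pseudoinverse condition has to apply to the matrix that actually maps $g_u$ to $\nabla_\theta G$. Second, one must justify $F_*=G(z)_*$, that is, that the minimum of $F$ over outputs is attained by some $\theta$. I would make both explicit as standing assumptions, as the definition's wording suggests.
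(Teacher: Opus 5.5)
Your proposal is correct and follows the paper's proof essentially step for step. The shared chain of steps is:
\begin{itemize}
\item the chain rule $\nabla_\theta G=\mathbf{J}\,\nabla_h F$;
\item the two induced-norm comparisons $\|\nabla_\theta G\|/\|\mathbf{J}\|\le\|\nabla_h F\|\le\|\nabla_\theta G\|\,\|\mathbf{J}^+\|$;
\item monotonicity of the radial energy functions;
\item item~3 of Theorem~\ref{thm:h_bound} applied to the outer function $F$;
\item averaging with $R_G(s,1)=G(s)_*-\mathbb{E}\,G(Z)_*$.
\end{itemize}
The only difference is that you invoke the theorem before the norm transfer rather than after.

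The caveats you make explicit are all used silently in the paper's argument. First, the lower bound needs $\min_u F(u)=G(z)_*$; the upper bound does not, since $G(z)_*\ge\min_u F$. Second, injectivity is required of the matrix that actually maps $\nabla_h F$ to $\nabla_\theta G$. Third, working in the self-dual $L_2$ norm is what lets $\Phi^*$ and $\phi^*$ be evaluated on parameter-space vectors.
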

\begin{proof}
By applying the chain rule for differentiation, we have
\begin{equation*}
\begin{aligned}
 \nabla_\theta G(\theta, z)&=\frac{\partial h_\theta(z)}{\partial \theta} \nabla_h G(\theta, z)\\
 &=\mathbf{J}_\theta h_\theta(z) \nabla_h G(\theta, z).
\end{aligned}
\end{equation*}
According to the definitions of the induced norm and the lower induced norm, we have
\[
\begin{aligned}
 \frac{1}{\|\mathbf{J}_\theta h_\theta(z)^+\|} \| \nabla_h G(\theta, z)\|\le \|\nabla_\theta G(\theta, z)\|\le \|\mathbf{J}_\theta h_\theta(z)\| \|\nabla_h G(\theta, z)\|.
\end{aligned}
\]
It follows that
\[
\begin{aligned}
 \frac{ \|\nabla_\theta G(\theta, z)\|}{\|\mathbf{J}_\theta h_\theta(z)\|} \le \| \nabla_h G(\theta, z)\|\le \|\nabla_\theta G(\theta, z)\|\|\mathbf{J}_\theta h_\theta(z)^+\|.
\end{aligned}
\]

According to the definition of energy functions and the fourth conclusion of Proposition~\ref{prop:energy_fun}, we have
\begin{equation}\label{eq:uu_ll_1}
 \begin{aligned}
 &\phi^*(\nabla_h F(h_\theta(z)))\le \phi^*(\nabla_\theta G(\theta, z) \|\mathbf{J}_\theta h_\theta(z)^+\|),\\
 &\Phi^*( \frac{ \nabla_\theta G(\theta, z)}{\|\mathbf{J}_\theta h_\theta(z)\|})\le \Phi^*(\nabla_h F(h_\theta(z))).
 \end{aligned}
\end{equation}
According to Theorem~\ref{thm:h_bound:3}, we have
\begin{equation}
 \Phi^*(\nabla_h F(h_\theta(z))) \le G(\theta, z) - G(z)_* \le \phi^*(\nabla_h F(h_\theta(z))).
\end{equation}
Therefore, we have
\begin{equation*}
\begin{aligned}
 & G(\theta, z) - G(z)_* \le \phi^*(\nabla_h F(h_\theta(z)))\le \phi^*(\nabla_\theta G(\theta, z) \|\mathbf{J}_\theta h_\theta(z)^+\|),\\
 & G(\theta, z) - G(z)_* \ge \Phi^*(\nabla_h F(h_\theta(z)))\ge \Phi^*( \frac{ \nabla_\theta G(\theta, z)}{\|\mathbf{J}_\theta h_\theta(z)\|}).
\end{aligned}
\end{equation*}
Combining the above two inequalities, we obtain
\begin{equation}
\Phi^*( \frac{ \nabla_\theta G(\theta, z)}{\|\mathbf{J}_\theta h_\theta(z)\|}) \le G(\theta, z) - G(z)_*\le \phi^*(\nabla_\theta G(\theta, z) \|\mathbf{J}_\theta h_\theta(z)^+\|).
\end{equation}

Taking expectations with respect to $Z$ in the above inequality, we obtain
\begin{equation*}
\begin{aligned}
 \mathbb{E}\Phi^*( \frac{ \nabla_\theta G(\theta, Z)}{\|\mathbf{J}_\theta h_\theta(Z)\|}) \le G(\theta, s) - \mathbb{E}G(Z)_* \le \mathbb{E}\phi^*( \nabla_\theta G(\theta, Z)\|\mathbf{J}_\theta h_\theta(Z)^+\|).
\end{aligned}
\end{equation*}
By the definition of model capacity risk, $R_G(s,1)=G(s)_*-\mathbb{E}G(Z)_*$, so we have
 \begin{equation*}
\begin{aligned}
 \mathbb{E}\Phi^*( \frac{ \nabla_\theta G(\theta, Z)}{\|\mathbf{J}_\theta h_\theta(Z)\|}) \le G(\theta, s) - G(s)_* + R_G(s,1) \le \mathbb{E}\phi^*( \nabla_\theta G(\theta, Z)\|\mathbf{J}_\theta h_\theta(Z)^+\|).
\end{aligned}
\end{equation*}

\end{proof}

\subsection{Proof of Corollary~\ref{cor:kl_mse_bound}}
\label{appendix:proof_kl_mse_bound}
\begin{corollary}
Assume that the model $f_\theta(x)$ can fit any single sample, i.e., $\mathcal{L}(z)_*=0$ for all $z\in \mathcal{Z}$. Then the following conclusions hold:
\begin{enumerate}
 \item For the MSE loss $\mathcal{L}(\theta,z) = \frac{1}{2}\|y-f_\theta(x)\|_2^2$, we have
 \begin{equation*}
\begin{aligned}
 \frac{\|\nabla_\theta \mathcal{L}(\theta, z)\|_2^2}{2\sigma^2_{\max}(\mathbf{J}_\theta f_\theta(z))}&\le \mathcal{L}(\theta, z) \le \frac{\|\nabla_\theta \mathcal{L}(\theta, z)\|_2^2}{2\sigma^2_{\min}(\mathbf{J}_\theta f_\theta(z))},\\
 \frac{ \mathbb{E} \|\nabla_\theta \mathcal{L}(\theta, Z)\|_2^2}{2\sigma^2_{\max}(\mathbf{J}_\theta f_\theta(s))} &\le \mathcal{L}(\theta, s) \le \frac{\mathbb{E}\|\nabla_\theta \mathcal{L}(\theta, Z)\|_2^2}{2\sigma^2_{\min}(\mathbf{J}_\theta f_\theta(s))}.
\end{aligned}
 \end{equation*}

 \item For the Softmax CrossEntropy loss $\mathcal{L}(\theta,z)=\mathrm{CrossEntropy}(y,\mathrm{Softmax}(f_\theta(x)))$, where $y$ is a one-hot label, we have
 \begin{equation*}
\begin{aligned}
 \frac{\|\nabla_\theta \mathcal{L}(\theta, z)\|_2^2}{8 \ln 2 \,\sigma^2_{\max}(\mathbf{J}_\theta f_\theta(z))}&\le \mathcal{L}(\theta, z) \le \frac{\|\nabla_\theta \mathcal{L}(\theta, z)\|_2^2}{4\min_i p_i(x)\sigma^2_{\min}(\mathbf{J}_\theta f_\theta(z))},\\
 \frac{ \mathbb{E} \|\nabla_\theta \mathcal{L}(\theta, Z)\|_2^2}{8 \ln 2 \,\sigma^2_{\max}(\mathbf{J}_\theta f_\theta(s))} &\le \mathcal{L}(\theta, s) \le \frac{\mathbb{E}\|\nabla_\theta \mathcal{L}(\theta, Z)\|_2^2}{4\min_{i} p_i(s)\sigma^2_{\min}(\mathbf{J}_\theta f_\theta(s))},
\end{aligned}
 \end{equation*}
 where $p(x) = \mathrm{Softmax}(f_\theta(x))$ is the Softmax output and $\min_{i} p_i(s)=\min_{x\in s}\min_{i}p_i(x)$ is the minimum value of $p$.
 \item \textbf{Fenchel Loss Bound:} Suppose $\Omega$ is a function whose domain is a closed convex domain, which is continuously differentiable, twice differentiable on its domain, and whose Hessian matrix is positive definite. Let $\lambda_{\min}$ and $\lambda_{\max}$ denote the smallest and largest eigenvalues of the Hessian matrix $\nabla^2 \Omega(\mu)$ for all $\mu$ in the domain of $\Omega$. Then we have
 \begin{equation*}
 \begin{aligned}
 \frac{\|\nabla_\theta \mathcal{L}(\theta, z)\|_2^2}{2\lambda_{\min} \sigma^2_{\max}(\mathbf{J}_\theta f_\theta(z))}&\le \mathcal{L}(\theta, z) \le \frac{\|\nabla_\theta \mathcal{L}(\theta, z)\|_2^2}{2\lambda_{\max}\sigma^2_{\min}(\mathbf{J}_\theta f_\theta(z))},\\
 \frac{ \mathbb{E} \|\nabla_\theta \mathcal{L}(\theta, Z)\|_2^2}{2\lambda_{\min}\sigma^2_{\max}(\mathbf{J}_\theta f_\theta(s))} &\le \mathcal{L}(\theta, s) \le \frac{\mathbb{E}\|\nabla_\theta \mathcal{L}(\theta, Z)\|_2^2}{2\lambda_{\max}\sigma^2_{\min}(\mathbf{J}_\theta f_\theta(s))}.
 \end{aligned}
 \end{equation*}
\end{enumerate}
\end{corollary}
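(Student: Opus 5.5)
The plan is to specialize Corollary~\ref{cor:erm_bound} (equivalently Corollary~\ref{cor:l2_com_general_bound}) to quadratic energy functions, using $\mathcal{L}(z)_*=0$ to remove the per-sample optimum. In every case the outer loss, viewed as a function of the output $f_\theta(x)$, is shown to be $\mathcal{H}(\phi)$-convex and $\mathcal{H}(\Phi)$-smooth with $\phi(\cdot)=\frac{k_\phi}{2}\|\cdot\|_2^2$ and $\Phi(\cdot)=\frac{k_\Phi}{2}\|\cdot\|_2^2$. Conclusion~2 of Proposition~\ref{prop:energy_fun}, with $r=2$, then gives the conjugates $\phi^*(\nu)=\frac{1}{2k_\phi}\|\nu\|_2^2$ and $\Phi^*(\nu)=\frac{1}{2k_\Phi}\|\nu\|_2^2$.

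Substituting these into the per-sample bound of Corollary~\ref{cor:erm_bound} gives
\[
\frac{\|\nabla_\theta\mathcal{L}(\theta,z)\|_2^2}{2k_\Phi\sigma_{\max}^2(\mathbf{J}_\theta f_\theta(z))}\le \mathcal{L}(\theta,z)\le \frac{\|\nabla_\theta\mathcal{L}(\theta,z)\|_2^2}{2k_\phi\sigma_{\min}^2(\mathbf{J}_\theta f_\theta(z))}.
\]
The constants for each loss are read off from earlier results.

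\textbf{MSE.} Conclusion~\ref{prop:convex_smooth_case_mse} of Proposition~\ref{prop:convex_smooth_case} gives $k_\phi=k_\Phi=1$.

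\textbf{Softmax CrossEntropy.} Conclusion~\ref{prop:convex_smooth_case_ce} gives $\phi=\min_i p_i\|\cdot\|_2^2$ and $\Phi=2\ln2\|\cdot\|_2^2$, so $k_\phi=2\min_i p_i$ and $k_\Phi=4\ln 2$. These produce exactly the denominators $4\min_i p_i(x)$ and $8\ln2$.

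\textbf{Fenchel--Young case.} Conclusion~\ref{prop:convex_smooth_case_hessian} gives $k_\phi=\lambda_{\min}$ and $k_\Phi=\lambda_{\max}$ for $\Omega$. Conclusions~\ref{prop:gen_prop_duality} and~\ref{prop:gen_prop_transform_keep} of Proposition~\ref{prop:gen_prop} show that $d_\Omega(y,\cdot)$ inherits the properties of $\Omega^*$. That function is $\mathcal{H}(\frac{1}{2\lambda_{\min}}\|\cdot\|^2)$-smooth and $\mathcal{H}(\frac{1}{2\lambda_{\max}}\|\cdot\|^2)$-convex, so the roles of the two eigenvalues are swapped. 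I would track this carefully so the stated denominators come out as $2\lambda_{\min}$ in the lower bound and $2\lambda_{\max}$ in the upper bound.

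For the dataset bounds, I would average the per-sample inequalities over $Z\sim\hat q$. With $\mathcal{L}(z)_*=0$, the middle term is simply $\mathcal{L}(\theta,s)$; this is also what part~2 of Corollary~\ref{cor:erm_bound} reduces to, since $\mathbb{E}\mathcal{L}(Z)_*=0$. I would then replace each $\sigma_{\max}(\mathbf{J}_\theta f_\theta(z))$ by the surrogate $\sigma_{\max}(\mathbf{J}_\theta f_\theta(s))=\max_{z\in s}\sigma_{\max}(\cdot)$, which only decreases the lower bound. Likewise each $\sigma_{\min}$ becomes $\min_{z\in s}\sigma_{\min}$, and $\min_i p_i(x)$ becomes $\min_i p_i(s)$, which only increases the upper bound. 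The constants then factor out of the expectation.

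The main obstacle is the CrossEntropy and Fenchel--Young bookkeeping. There the convexity parameter $\min_i p_i$ depends on the current prediction, so it is not a uniform energy function, and the duality of Proposition~\ref{prop:gen_prop} swaps the convex and smooth roles. I would first treat $\min_i p_i(x)$ as fixed at the current point, which is legitimate for the upper bound at that point because only $\psi^*(\nabla G(\mu))$ is used. I would then verify that the direction of each inequality is preserved after conjugation, using the order-reversal property, Lemma~\ref{lem:order_reverse}.
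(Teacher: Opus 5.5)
Your MSE argument and your passage to the dataset bounds match the paper's proof and are fine. That passage consists of averaging, using $\mathbb{E}\mathcal{L}(Z)_*=0$, and replacing $\sigma_{\max}$, $\sigma_{\min}$ and $\min_i p_i(x)$ by their extremal surrogates over $s$.

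\textbf{Part~3 cannot be closed, because the stated inequality is false.} Your own bookkeeping is correct: $d_\Omega(y,\cdot)$ is $\mathcal{H}(\frac{1}{2\lambda_{\min}}\|\cdot\|_2^2)$-smooth and $\mathcal{H}(\frac{1}{2\lambda_{\max}}\|\cdot\|_2^2)$-convex. Feeding $k_\Phi=1/\lambda_{\min}$ and $k_\phi=1/\lambda_{\max}$ into your general formula gives $\lambda_{\min}\|\nabla_\theta\mathcal{L}\|_2^2/(2\sigma_{\max}^2)\le\mathcal{L}\le\lambda_{\max}\|\nabla_\theta\mathcal{L}\|_2^2/(2\sigma_{\min}^2)$. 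The eigenvalues sit in the numerators, and no ``careful tracking'' moves them into the stated denominators.

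A counterexample: take $\Omega=\frac{\lambda}{2}\|\cdot\|_2^2$ and $f_\theta(x)=\theta$, so $\mathbf{J}_\theta f_\theta=I$. Then $\mathcal{L}=d_\Omega(y,\theta)=\frac{1}{2\lambda}\|\theta-\lambda y\|_2^2$ and $\nabla_\theta\mathcal{L}=(\theta-\lambda y)/\lambda$. Hence $\mathcal{L}=\frac{\lambda}{2}\|\nabla_\theta\mathcal{L}\|_2^2$, whereas the statement forces $\mathcal{L}=\|\nabla_\theta\mathcal{L}\|_2^2/(2\lambda)$; these disagree for $\lambda\neq 1$. For anisotropic $\Omega$ with $\mathbf{J}=I$, the stated lower bound even exceeds the stated upper bound whenever $\nabla_\theta\mathcal{L}\neq 0$.

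Putting the output in the primal slot does not rescue it. With $d_\Omega(f_\theta(x),y)$ one gets $\|\nabla_\theta\mathcal{L}\|_2^2/(2\lambda_{\max}\sigma_{\max}^2)\le\mathcal{L}\le\|\nabla_\theta\mathcal{L}\|_2^2/(2\lambda_{\min}\sigma_{\min}^2)$, which also differs from the statement. The paper's own proof reaches the stated form only by skipping a conjugation. It plugs $\phi^*$ and $\Phi^*$, which are the smoothness and convexity energies of $d_\Omega(y,\cdot)$, into the slots of Corollary~\ref{cor:erm_bound} that require their conjugates $\phi$ and $\Phi$.

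\textbf{Part~2 inherits a defect from a cited result.} You obtain $8\ln 2$ and $4\min_i p_i$ only by taking Proposition~\ref{prop:convex_smooth_case}, item~3, at face value, and its proof conjugates wrongly. Since $(c\|\cdot\|_2^2)^*=\|\cdot\|_2^2/(4c)$, the conjugates of $\|\cdot\|_2^2/(2\ln 2)$ and $\|\cdot\|_2^2/\min_i p_i$ are $\frac{\ln 2}{2}\|\cdot\|_2^2$ and $\frac{\min_i p_i}{4}\|\cdot\|_2^2$. They are not $2\ln 2\|\cdot\|_2^2$ and $\min_i p_i\|\cdot\|_2^2$.

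Done correctly, the chain gives the upper bound $\|\nabla_\theta\mathcal{L}\|_2^2/(\min_i p_i\,\sigma_{\min}^2)$, four times weaker than claimed. You can also see this directly, since $\nabla_f\mathcal{L}=p-y$ and $\mathcal{L}=D_{\mathrm{KL}}(y\|p)$. The claimed bound fails: for $f_\theta(x)=\theta\in\mathbb{R}^2$, $y=(1,0)$ and $p=(0.9,0.1)$, one has $\mathcal{L}=-\ln 0.9\approx 0.105$, but $\|\nabla_\theta\mathcal{L}\|_2^2/(4\min_i p_i)=0.02/0.4=0.05$. The $8\ln 2$ lower bound survives only because it is weaker than the correct one.

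You were right to want to re-verify the conjugations and to worry that $\min_i p_i$ is point-dependent rather than a fixed energy. Carrying out that check shows that, as stated, only Part~1 and the lower bound of Part~2 can be proved.
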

\begin{proof}
\textbf{Part 1 (MSE).}
By Proposition~\ref{prop:convex_smooth_case_mse}, the MSE loss $\frac{1}{2}\|f_\theta(x)-y\|_2^2$ is both $\mathcal{H}(\phi)$-convex and $\mathcal{H}(\Phi)$-smooth with respect to $f_\theta(x)$, where $\phi(\cdot)=\Phi(\cdot)=\|\cdot\|_2^2/2$.
By Proposition~\ref{prop:energy_fun2}, $\phi^*(\cdot)=\Phi^*(\cdot)=\|\cdot\|_2^2/2$.
According to Corollary~\ref{cor:erm_bound}, we obtain the stated bounds:
 \begin{equation*}
\begin{aligned}
 \frac{\|\nabla_\theta \mathcal{L}(\theta, z)\|_2^2}{2\sigma^2_{\max}(\mathbf{J}_\theta f_\theta(z))}&\le \mathcal{L}(\theta, z) - \mathcal{L}(z)_* \le \frac{\|\nabla_\theta \mathcal{L}(\theta, z)\|_2^2}{2\sigma^2_{\min}(\mathbf{J}_\theta f_\theta(z))}, \\
 \mathbb{E} \frac{\|\nabla_\theta \mathcal{L}(\theta, Z)\|_2^2}{2\sigma^2_{\max}(\mathbf{J}_\theta f_\theta(Z))} &\le \mathcal{L}(\theta, s) - \mathcal{L}(s)_* + R_{\mathcal{L}}(s,1) \le \mathbb{E} \frac{\|\nabla_\theta \mathcal{L}(\theta, Z)\|_2^2}{2\sigma^2_{\min}(\mathbf{J}_\theta f_\theta(Z))}.
\end{aligned}
 \end{equation*}

 By Definition~\ref{def:induced_matrix_norm}, $\sigma_{\max}(\mathbf{J}_\theta f_\theta(s))=\max_{z\in s}\sigma_{\max}(\mathbf{J}_\theta f_\theta(z))$ and $\sigma_{\min}(\mathbf{J}_\theta f_\theta(s))=\min_{z\in s}\sigma_{\min}(\mathbf{J}_\theta f_\theta(z))$. Since $\mathcal{L}(z)_*=0$ for all $z\in \mathcal{Z}$, we have
 \begin{equation*}
\begin{aligned}
 \frac{\|\nabla_\theta \mathcal{L}(\theta, z)\|_2^2}{2\sigma^2_{\max}(\mathbf{J}_\theta f_\theta(z))}&\le \mathcal{L}(\theta, z) \le \frac{\|\nabla_\theta \mathcal{L}(\theta, z)\|_2^2}{2\sigma^2_{\min}(\mathbf{J}_\theta f_\theta(z))}, \\
 \frac{ \mathbb{E} \|\nabla_\theta \mathcal{L}(\theta, Z)\|_2^2}{2\sigma^2_{\max}(\mathbf{J}_\theta f_\theta(s))} &\le \mathcal{L}(\theta, s) \le \frac{\mathbb{E}\|\nabla_\theta \mathcal{L}(\theta, Z)\|_2^2}{2\sigma^2_{\min}(\mathbf{J}_\theta f_\theta(s))}.
\end{aligned}
 \end{equation*}

\noindent \textbf{Part 2 (Softmax CrossEntropy).}
By Proposition~\ref{prop:convex_smooth_case_ce}, the loss $\mathrm{CrossEntropy}(y,\mathrm{Softmax}(f_\theta(x)))$ is both $\mathcal{H}(\phi)$-convex and $\mathcal{H}(\Phi)$-smooth with respect to $f_\theta(x)$, where $\phi(\cdot)=\min_i p_i(x) \|\cdot\|_2^2$, $\Phi(\cdot)=2 \ln 2 \|\cdot\|_2^2$, and $p(x)=\mathrm{Softmax}(f_\theta(x))$.
By Proposition~\ref{prop:energy_fun2}, $\phi^*(\cdot)=\frac{\|\cdot\|_2^2}{4\min_i p_i(x)}$ and $\Phi^*(\cdot)=\frac{\|\cdot\|_2^2}{8 \ln 2}$.
According to Corollary~\ref{cor:erm_bound} and Definition~\ref{def:induced_matrix_norm}, and since $\mathcal{L}(z)_*=0$ for all $z\in \mathcal{Z}$, we obtain the stated bounds:
 \begin{equation*}
\begin{aligned}
 \frac{\|\nabla_\theta \mathcal{L}(\theta, z)\|_2^2}{8 \ln 2 \,\sigma^2_{\max}(\mathbf{J}_\theta f_\theta(z))}&\le \mathcal{L}(\theta, z) \le \frac{\|\nabla_\theta \mathcal{L}(\theta, z)\|_2^2}{4\min_i p_i(x)\sigma^2_{\min}(\mathbf{J}_\theta f_\theta(z))}, \\
 \frac{ \mathbb{E} \|\nabla_\theta \mathcal{L}(\theta, Z)\|_2^2}{8 \ln 2 \,\sigma^2_{\max}(\mathbf{J}_\theta f_\theta(s))} &\le \mathcal{L}(\theta, s) \le \frac{\mathbb{E}\|\nabla_\theta \mathcal{L}(\theta, Z)\|_2^2}{4\min_i p_i(s)\sigma^2_{\min}(\mathbf{J}_\theta f_\theta(s))},
\end{aligned}
 \end{equation*}
where $p(x)=\mathrm{Softmax}(f_\theta(x))$ and $\min_i p_i(s)=\min_{x\in s}\min_i p_i(x)$.

\noindent \textbf{Part 3 (Fenchel Loss Bound).}
According to Proposition~\ref{prop:convex_smooth_case_hessian}, $\Omega$ is
$\mathcal{H}(\Phi)$-smooth and $\mathcal{H}(\phi)$-convex, where $\Phi(\cdot)=\lambda_{\max}\|\cdot\|_2^2/2$ and $\phi(\cdot)=\lambda_{\min}\|\cdot\|_2^2/2$.
By Proposition~\ref{prop:gen_prop_duality} and Proposition~\ref{prop:energy_fun2}, $G^*$ is $\mathcal{H}(\phi^*)$-smooth and $\mathcal{H}(\Phi^*)$-convex, where $\Phi^*(\cdot)=\frac{\|\cdot\|_2^2}{2\lambda_{\max}}$ and $\phi^*(\cdot)=\frac{\|\cdot\|_2^2}{2\lambda_{\min}}$.

By Proposition~\ref{prop:gen_prop_transform_keep}, $d_\Omega(y,f_\theta(x))$ is $\mathcal{H}(\phi^*)$-smooth and $\mathcal{H}(\Phi^*)$-convex, where $\Phi^*(\cdot)=\frac{\|\cdot\|_2^2}{2\lambda_{\max}}$ and $\phi^*(\cdot)=\frac{\|\cdot\|_2^2}{2\lambda_{\min}}$.
According to Corollary~\ref{cor:erm_bound} and Definition~\ref{def:induced_matrix_norm}, and since $\mathcal{L}(z)_*=0$ for all $z\in \mathcal{Z}$, we obtain the stated bounds:
 \begin{equation*}
\begin{aligned}
 \frac{\|\nabla_\theta \mathcal{L}(\theta, z)\|_2^2}{2\lambda_{\min} \sigma^2_{\max}(\mathbf{J}_\theta f_\theta(z))}&\le \mathcal{L}(\theta, z) \le \frac{\|\nabla_\theta \mathcal{L}(\theta, z)\|_2^2}{2\lambda_{\max}\sigma^2_{\min}(\mathbf{J}_\theta f_\theta(z))}, \\
 \frac{ \mathbb{E} \|\nabla_\theta \mathcal{L}(\theta, Z)\|_2^2}{2\lambda_{\min}\sigma^2_{\max}(\mathbf{J}_\theta f_\theta(s))} &\le \mathcal{L}(\theta, s) \le \frac{\mathbb{E}\|\nabla_\theta \mathcal{L}(\theta, Z)\|_2^2}{2\lambda_{\max}\sigma^2_{\min}(\mathbf{J}_\theta f_\theta(s))}.
\end{aligned}
 \end{equation*}

\end{proof}

\subsection{Proof of Proposition~\ref{prop:number_para}}
\label{appendix:proof_number_para}

\begin{proposition}
Let $d$ denote the dimension of the model output and $m$ denote the number of model parameters, with $d \le m - 1$.
If each column of the Jacobian matrix $\mathbf{J}_\theta f(x)$ is drawn uniformly from the ball $\mathcal{B}^{m}_\epsilon = \{v\in\mathbb R^{m}:\|v\|_2 \le \epsilon\}$, then with probability at least $1 - O(1 / d)$, it follows that:
\begin{equation}
\begin{aligned}
 \sigma_{\min}(\mathbf{J}_\theta f(x))&\ge (1-\frac{2 \log d}{m})\epsilon,\\
 \frac{\sigma^2_{\max}(\mathbf{J}_\theta f(x))}{\sigma^2_{\min}(\mathbf{J}_\theta f(x))} &\le \zeta(m,d) + 1,
\end{aligned}
\end{equation}
where $\zeta(m,d) = \frac{2d\sqrt{6 \log d}}{\sqrt{m - 1}} \left(1 - \frac{2 \log d}{m}\right)^{-2}$. For fixed $d$ and sufficiently large $m$, $\zeta(m,d)$ is decreasing in $m$; for fixed $m$, it is increasing in $d$. 
\end{proposition}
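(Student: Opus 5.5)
The plan is to reduce everything to the Gram matrix $A := \mathbf{J}_\theta f(x)^\top \mathbf{J}_\theta f(x) \in \mathbb{R}^{d\times d}$. Its eigenvalues are the squared singular values of the Jacobian $\mathbf{J}_\theta f(x)\in\mathbb{R}^{m\times d}$, whose columns are denoted $v_1,\dots,v_d$. We will rescale by setting $x^{(i)} := v_i/\epsilon$, so that the $x^{(i)}$ are $d$ independent uniform points in the unit ball $\mathcal{B}^m_1$. Applying Lemma~\ref{lem:high_dim_2} with $n=d$ points in dimension $m$ (this needs $m\ge 2$, which holds since $d\le m-1$) gives, with probability $1-O(1/d)$, two facts simultaneously:
\[
\|x^{(i)}\|_2\ge 1-\tfrac{2\log d}{m}
\qquad\text{and}\qquad
|\langle x^{(i)},x^{(j)}\rangle|\le \tfrac{\sqrt{6\log d}}{\sqrt{m-1}}\quad (i\neq j).
\]
From here on we work on this event, so we have $A_{ii}=\epsilon^2\|x^{(i)}\|_2^2$ and $|A_{ij}|\le \epsilon^2\sqrt{6\log d}/\sqrt{m-1}$.

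Next we apply Gershgorin's theorem (Lemma~\ref{lem:gershgorin}) to the symmetric matrix $A$. The diagonal entries lie in $[\epsilon^2(1-\tfrac{2\log d}{m})^2,\ \epsilon^2]$, and each off-diagonal row sum satisfies $R_i\le (d-1)\epsilon^2\sqrt{6\log d}/\sqrt{m-1}$. This yields
\[
\lambda_{\max}(A)\le \epsilon^2+d\epsilon^2\delta,
\qquad
\lambda_{\min}(A)\ge \epsilon^2(1-\tfrac{2\log d}{m})^2-d\epsilon^2\delta,
\qquad \delta:=\tfrac{\sqrt{6\log d}}{\sqrt{m-1}}.
\]

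For the ratio bound, we divide the two Gershgorin estimates and rewrite the quotient in terms of $\zeta(m,d)$, with the factor $(1-\tfrac{2\log d}{m})^{-2}$ arising from normalizing by the diagonal lower bound. The factor $2$ in $\zeta$ is meant to absorb the off-diagonal contributions to both the numerator and the denominator, in the regime $d\delta\lesssim(1-\tfrac{2\log d}{m})^2/2$.

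\textbf{Main obstacle.} The delicate step is the claimed lower bound $\sigma_{\min}\ge(1-\tfrac{2\log d}{m})\epsilon$ with no off-diagonal correction. Gershgorin alone gives only
\[
\sigma_{\min}\ge\epsilon\sqrt{(1-\tfrac{2\log d}{m})^2-d\delta}.
\]
The first thing to try is a Weyl/Gershgorin perturbation argument that shows this correction is dominated for large $m$. Failing that, we would state the bound under the regime where $d\delta$ is small relative to the diagonal, or interpret the stated bound as the leading-order term. The $\epsilon$-scaling itself is immediate from homogeneity.

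The monotonicity claims then follow by elementary calculus. For fixed $d$, $(m-1)^{-1/2}$ decreases in $m$ and $(1-2\log d/m)^{-2}$ decreases toward $1$, so $\zeta$ decreases for $m>2\log d$. For fixed $m$, both $d\sqrt{\log d}$ and $(1-2\log d/m)^{-2}$ increase in $d$, so $\zeta$ increases in $d$.
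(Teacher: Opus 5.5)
You set things up exactly as the paper does: the Gram matrix $A=\mathbf{J}_\theta f(x)^\top\mathbf{J}_\theta f(x)$, Lemma~\ref{lem:high_dim_2} with $n=d$, and Gershgorin. The obstacle you flag is real, but no argument can remove it, because the first inequality is false in this model. Write $c:=1-\frac{2\log d}{m}$. For $i\neq j$, test $A$ on the unit vector $u=(e_i-s\,e_j)/\sqrt{2}$ with $s=\operatorname{sgn}(A_{ij})$:
\[
\sigma_{\min}^2(\mathbf{J}_\theta f(x))=\lambda_{\min}(A)\le u^\top A u=\tfrac12(A_{ii}+A_{jj})-|A_{ij}|\le \epsilon^2-|A_{ij}|.
\]
Hence $\sigma_{\min}\ge c\epsilon$ forces $|A_{ij}|\le (1-c^2)\epsilon^2\le \frac{4\log d}{m}\epsilon^2$ for every pair. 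But $\langle x^{(1)},x^{(2)}\rangle$ is approximately $N(0,1/m)$. So the event $\sigma_{\min}\ge c\epsilon$ has probability $O(\log d/\sqrt{m})$, which tends to $0$ as $d\to\infty$ (recall $m\ge d+1$), not $1-O(1/d)$. A Weyl or Gershgorin perturbation argument cannot make the off-diagonal correction disappear. A lower bound on the diagonal of $A$ controls $\lambda_{\max}$, not $\lambda_{\min}$, since $\lambda_{\min}(A)\le\min_i A_{ii}$. This is exactly where the paper's own proof fails. It records $\lambda_{\min}(A)\le A_{ii}\le\lambda_{\max}(A)$ and $A_{ii}\ge c^2\epsilon^2$, then asserts both $\sigma_{\max}\ge c\epsilon$ (valid) and $\sigma_{\min}\ge c\epsilon$ (invalid). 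Your fallback is the honest route, but it proves a different statement, and you should present it that way: $\sigma_{\min}\ge \epsilon\sqrt{c^2-(d-1)\delta}$ whenever $(d-1)\delta<c^2$.

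Your ratio step also fails as written. Your two estimates give the quotient $\frac{1+d\delta}{c^2-d\delta}$, and whenever its denominator is positive,
\[
\frac{1+d\delta}{c^2-d\delta}-\Bigl(1+\frac{2d\delta}{c^2}\Bigr)=\frac{c^2(1-c^2)+2d^2\delta^2}{c^2\,(c^2-d\delta)}>0 \qquad (d\ge 2).
\]
So the factor $2$ absorbs nothing, in any regime. The paper obtains the $2$ differently, by bounding the spread directly. With $k,k'$ the Gershgorin indices for $\lambda_{\max}$ and $\lambda_{\min}$, it shows $\lambda_{\max}-\lambda_{\min}\le R_k+R_{k'}+A_{kk}-A_{k'k'}\le 2(d-1)\delta\epsilon^2+\frac{4\log d}{m}\epsilon^2\le 2d\delta\epsilon^2$, using $\frac{4\log d}{m}\le 2\delta$. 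It then divides by the invalid bound $\lambda_{\min}\ge c^2\epsilon^2$. Pairing the spread bound with the valid Gershgorin lower bound gives what is actually provable:
\[
\frac{\sigma_{\max}^2(\mathbf{J}_\theta f(x))}{\sigma_{\min}^2(\mathbf{J}_\theta f(x))}\le 1+\frac{2d\delta}{c^2-(d-1)\delta}\qquad\text{when } (d-1)\delta<c^2 .
\]
This agrees with $1+\zeta(m,d)$ only to leading order when $d\delta\ll 1$, roughly $m\gg d^2\log d$. The hypothesis $d\le m-1$ alone is not enough. For $d=m-1$ the Jacobian is nearly square, $\sigma_{\min}$ is typically of order $\epsilon/m$, and the ratio is of order $m^2$, far above $\zeta+1=O(\sqrt{m\log m})$. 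Your monotonicity argument for $\zeta$ is fine.
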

\begin{proof}
Define $A=\mathbf{J}_\theta f(x)^\top \mathbf{J}_\theta f(x)$. Then $A$ is a symmetric positive semidefinite matrix, and we have $\lambda_{\min}(A)\le A_{ii}\le \lambda_{\max}(A)$.
Since each column of $\mathbf{J}_\theta f(x)$ is drawn independently and uniformly from the ball $\mathcal{B}^{m}_\epsilon = \{v\in\mathbb R^{m}:\|v\|_2 \le \epsilon\}$, by Lemma~\ref{lem:high_dim_2}, with probability at least $1-O(1 / d)$ the following holds:
 \begin{equation}\label{eq:bound_elements}
 \begin{aligned}
 &A_{ii}=\left\|\mathbf{J}_\theta f(x)_i \right\|^2_2 \ge \left(1-\frac{2 \log d}{m}\right)^2\epsilon^2,\text{ for } i=1, \ldots, d,\\
 &|A_{ij}|=|\langle \mathbf{J}_\theta f(x)_i, \mathbf{J}_\theta f(x)_j\rangle| \le \frac{\sqrt{6 \log d}}{\sqrt{m-1}} \epsilon^2\quad \text{ for all } i, j=1, \ldots,d, i \neq j.
 \end{aligned}
 \end{equation}

Then the following holds with probability at least $1-O(1 / d)$:
\begin{equation}\label{eq:ev_lower_bound}
\begin{aligned}
\sigma_{\min}(\mathbf{J}_\theta f(x))\ge (1-\frac{2 \log d}{m})\epsilon,\\
 \sigma_{\max}(\mathbf{J}_\theta f(x))\ge (1-\frac{2 \log d}{m})\epsilon.
\end{aligned}
\end{equation}

According to Lemma~\ref{lem:gershgorin}, there exist $k,k'\in \{1,\cdots,d\}$ such that
\begin{equation}
 \begin{aligned}
 \lambda_{\max}(A)-A_{kk}&\le R_k,\\
 A_{k'k'}-\lambda_{\min}(A)&\le R_{k'},
 \end{aligned}
\end{equation}
where $R_k=\sum_{1\le j\le d,j\neq k}|A_{kj}|$, $R_{k'}=\sum_{1\le j\le d,j\neq k'}|A_{k'j}|$.
Based on the inequalities~\eqref{eq:bound_elements}, we obtain 
\begin{equation}\label{eq:bound_R}
 R_k+R_{k'}\le 2(d-1)\frac{\sqrt{6 \log d}}{\sqrt{m-1}}\epsilon^2.
\end{equation}

Since $A_{kk}\le \epsilon^2$ and $d\le m-1$, we have
\begin{equation}
 \begin{aligned}
 \lambda_{\max}(A)-\lambda_{\min}(A)&\le R_k+R_{k'}+A_{kk}-A_{k'k'}\\
 &\le R_k+R_{k'}+\epsilon^2-A_{k'k'}\\
 &\le 2(d-1)\frac{\sqrt{6 \log d}}{\sqrt{m-1}}\epsilon^2+\epsilon^2-\left(1-\frac{2 \log d}{m}\right)^2\epsilon^2\\
 &\le 2(d-1)\epsilon^2\frac{\sqrt{6 \log d}}{\sqrt{m-1}}+\frac{4 \log d}{m}\epsilon^2\\
 &\le \frac{2d\epsilon^2\sqrt{6 \log d}}{\sqrt{m-1}}.
 \end{aligned}
\end{equation}
By dividing both sides of the aforementioned inequality by the corresponding sides of inequalities~\eqref{eq:ev_lower_bound}, we obtain:
\begin{equation}
 \begin{aligned}
 \frac{ \sigma^2_{\max}(\mathbf{J}_\theta f(x))}{\sigma^2_{\min}(\mathbf{J}_\theta f(x))}&\le \zeta(m,d)+1,
 \end{aligned}
\end{equation}
where $\zeta(m,d)=\frac{2d\sqrt{6 \log d}}{\sqrt{m-1}} (1-\frac{2 \log d}{m})^{-2}$.

\end{proof}

\subsection{Proof of Lemma~\ref{lem:nonlipsmooth}}
\label{proof:nonlipsmooth}

\begin{lemma}
Let $F(x) = \|x\|_2^r$ with $1 < r < 2$. Then $F$ satisfies $\mathcal{H}(\Phi)$-smoothness but is not globally Lipschitz smooth, where $\Phi(\cdot) = 2^{2-r}\|\cdot\|_2^r$. Moreover, $\Phi(y-x)$ provides a tight upper bound for the first-order Taylor remainder $S_F(y,x)$.
\end{lemma}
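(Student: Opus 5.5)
The plan is to reduce the smoothness claim to a H\"older estimate on the gradient and then integrate along the segment. Write $F(x)=\|x\|_2^r$, whose gradient away from the origin is $\nabla F(x)=r\|x\|_2^{r-2}x$; at $x=\mathbf 0$ it is $\mathbf 0$, which extends continuously because $r>1$. So $F$ is $C^1$ and the surrogate-gradient convention is never triggered. Then
\begin{equation*}
S_F(y,x)=\int_0^1\bigl\langle \nabla F(x+t(y-x))-\nabla F(x),\,y-x\bigr\rangle\,dt .
\end{equation*}
The key step is the classical inequality for the map $T(u)=\|u\|_2^{r-2}u$ with $1<r\le 2$:
\begin{equation*}
\|T(u)-T(v)\|_2\le 2^{2-r}\|u-v\|_2^{r-1}.
\end{equation*}
Granting it, Cauchy--Schwarz gives an integrand of at most $r\,2^{2-r}t^{r-1}\|y-x\|_2^{r}$. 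Since $\int_0^1 r t^{r-1}dt=1$, this yields $S_F(y,x)\le 2^{2-r}\|y-x\|_2^r=\Phi(y-x)$. Moreover $\Phi$ is an $L_2$-norm energy function in the sense of Definition~\ref{def:energy_fun}, of the form treated in Proposition~\ref{prop:energy_fun}, Conclusion~\ref{prop:energy_fun2}, with $r_\Phi=r$ and $c_\Phi=0$. Hence $F$ is $\mathcal{H}(\Phi)$-smooth.

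To prove the H\"older inequality for $T$, I would first reduce to the plane spanned by $u$ and $v$, and by symmetry assume $\|u\|_2\ge\|v\|_2>0$; the case $v=\mathbf 0$ is immediate, since then the claim reads $\|u\|_2^{r-1}\le 2^{2-r}\|u\|_2^{r-1}$. Because $T$ is $(r-1)$-homogeneous, I would normalize $\|u\|_2=1$. I would then split
\begin{equation*}
T(u)-T(v)=(u-v)+\bigl(1-\|v\|_2^{r-2}\bigr)v
\end{equation*}
and bound the two pieces using $\|u-v\|_2\le 2$ and the concavity of $s\mapsto s^{r-1}$. An alternative route is to interpolate between the trivial bound $\|T(u)-T(v)\|_2\le\|u\|_2^{r-1}+\|v\|_2^{r-1}$ and a Lipschitz-type bound on the region away from the origin. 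I expect this step, obtaining exactly the constant $2^{2-r}$ rather than a cruder one, to be the main obstacle. If the direct argument is messy, I would cite it as the standard $p$-Laplacian inequality, which is known with this constant.

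For the remaining claims, non-Lipschitz smoothness follows from taking $x=\mathbf 0$. Then $S_F(y,\mathbf 0)=\|y\|_2^r$, and since $r<2$ we have $\|y\|_2^r/\|y\|_2^2\to\infty$ as $y\to\mathbf 0$. So no quadratic $\tfrac L2\|\cdot\|_2^2$ can dominate the remainder, equivalently the Hessian blows up like $\|x\|_2^{r-2}$ near the origin.

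For tightness I would use the joint $r$-homogeneity $S_F(\lambda y,\lambda x)=\lambda^r S_F(y,x)$. It shows that the exponent $r$ in $\Phi$ cannot be changed: any energy function bounding $S_F$ must grow like $\|\cdot\|_2^r$, as the choice $x=\mathbf 0$ already exhibits. The constant is of the right order, as the antipodal pair $x=e$, $y=-e$ shows. That pair gives $S_F=2r$ against $\Phi(y-x)=2^{2-r}\cdot 2^r=4$, with ratio $r/2$, and the ratio tends to $1$ as $r\to 2$, where $\Phi$ recovers the Lipschitz bound $\|\cdot\|_2^2$. I would present tightness in this sense: the exponent is exact and the constant is sharp up to the factor $r/2$ at these extremal configurations.
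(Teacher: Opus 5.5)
Your proposal is correct and is essentially the paper's proof. It uses the same integral representation of $S_F(y,x)$ and the same H\"older bound $\|\nabla F(u)-\nabla F(v)\|_2\le r\,2^{2-r}\|u-v\|_2^{r-1}$, which the paper does not prove but cites from Rodomanov--Nesterov, Theorem~6.3, i.e.\ exactly your fallback. It then finishes with Cauchy--Schwarz and $\int_0^1 r t^{r-1}\,dt=1$, as the paper does. Your non-Lipschitz argument via $S_F(y,\mathbf 0)=\|y\|_2^r$ is an equally valid substitute for the paper's Hessian blow-up.

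The paper never actually argues the tightness claim, so your qualified reading is the right one; in fact $2^{2-r}$ is not the optimal constant, since the H\"older step cannot be an equality along a whole segment. However, the antipodal pair is not the extremal configuration. In one dimension with $x=1$, $y=-t$, $r=3/2$, the ratio $S_F(y,x)/\Phi(y-x)=2^{r-2}(t^r+rt+r-1)/(1+t)^r$ is about $0.92$ at $t=3+2\sqrt2$, versus $r/2=0.75$ at $t=1$.
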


\begin{proof}

We first show that $F$ is not Lipschitz smooth. For $x \neq 0$, the Hessian of $F(x) = \|x\|_2^r$ is given by
\[
\nabla^2 F(x) = r \|x\|^{r-4} \left[ (r-2) x x^\top + \|x\|^2 I \right].
\]
Letting $v = x/\|x\|$ be the unit vector aligned with $x$, the corresponding quadratic form satisfies
\[
v^\top \nabla^2 F(x) v = r \|x\|^{r-4} \left[ (r-2) \|x\|^2 + \|x\|^2 \right] = r(r-1) \|x\|^{r-2}.
\]
As $x \to 0$ and $1 < r < 2$, we have $r-2 < 0$, so $\|x\|^{r-2} \to +\infty$, implying unbounded eigenvalues of the Hessian. For a $C^2$ function, gradient Lipschitz continuity is equivalent to a uniformly bounded Hessian. Thus, the gradient of $F$ is not Lipschitz continuous, and $F$ is not Lipschitz smooth.

We now prove that $F$ satisfies $\mathcal{H}(\Phi)$-smoothness. By the fundamental theorem of calculus,
\begin{align*}
F(y) - F(x) &= \int_0^1 \frac{d}{dt} F(x + t(y-x)) \, dt \\
&= \int_0^1 \langle \nabla F(x + t(y-x)), y-x \rangle \, dt.
\end{align*}
Since $\langle \nabla F(x), y-x \rangle = \int_0^1 \langle \nabla F(x), y-x \rangle \, dt$, the remainder term
\[
S_F(y,x) = F(y) - F(x) - \langle \nabla F(x), y-x \rangle
\]
can be written as
\[
S_F(y,x) = \int_0^1 \langle \nabla F(x + t(y-x)) - \nabla F(x), y-x \rangle \, dt.
\]
Taking absolute values and applying the Cauchy--Schwarz inequality and the triangle inequality for integrals yields
\begin{align}
|S_F(y,x)| &\le \int_0^1 \left| \langle \nabla F(x + t(y-x)) - \nabla F(x), y-x \rangle \right| dt \nonumber \\
&\le \int_0^1 \|\nabla F(x + t(y-x)) - \nabla F(x)\|_2 \|y-x\|_2 dt.
\label{eq:intermediate}
\end{align}

For $1 < r < 2$, the gradient satisfies a H\"older continuity condition~\cite[Theorem~6.3]{RodomanovN20}: for all $x,y\in\mathbb{R}^d$,
\[
\|\nabla F(x) - \nabla F(y)\|_2 \le 2^{2-r} r \|x - y\|_2^{r-1}.
\]
Substituting $x + t(y-x)$ for $y$ in this bound, we obtain
\begin{align*}
\|\nabla F(x + t(y-x)) - \nabla F(x)\|_2
&\le 2^{2-r} r \, t^{r-1} \|y-x\|_2^{r-1}.
\end{align*}
Inserting this expression back into \eqref{eq:intermediate} gives
\begin{align*}
|S_F(y,x)|
&\le \int_0^1 2^{2-r} r \, t^{r-1} \|y-x\|_2^r \, dt \\
&= 2^{2-r} r \|y-x\|_2^r \int_0^1 t^{r-1} dt.
\end{align*}
Using $\int_0^1 t^{r-1} dt = 1/r$, we arrive at
\[
|S_F(y,x)| \le 2^{2-r} \|y-x\|_2^r.
\]
With $\Phi(\cdot) = 2^{2-r}\|\cdot\|_2^r$, the function $F$ satisfies $\mathcal{H}(\Phi)$-smoothness, which completes the proof.
\end{proof}

\vskip 0.2in
\bibliography{refs_abs}

\end{document}